\documentclass[10pt]{article}
\usepackage[margin=1in]{geometry}
\usepackage[round,compress]{natbib}
\usepackage{parskip}

\usepackage{parskip}
\usepackage{subcaption}


\usepackage{amsmath}
\usepackage{amsfonts,amscd,amssymb,bm,bbm,mathrsfs}
\usepackage{nicefrac}       
\usepackage{amsthm}
\usepackage{mathtools}


\newcommand\labelAndRemember[2]
  {\expandafter\gdef\csname labeled:#1\endcsname{#2}%
   \label{#1}#2}
\newcommand\recallLabel[1]
   {\csname labeled:#1\endcsname\tag{\ref{#1}}}
\newcommand\labelr[2]
  {\expandafter\gdef\csname labeled:#1\endcsname{#2}%
   \label{#1}#2}
\newcommand\recall[1]
   {\csname labeled:#1\endcsname}

\usepackage[algo2e,linesnumbered,vlined,ruled]{algorithm2e}
\usepackage{algorithm}
\usepackage[noend]{algorithmic}

\usepackage{url}
\usepackage[breaklinks=true]{hyperref}
\usepackage{breakcites}

\hypersetup{
  colorlinks,
  citecolor=blue!70!black,
  linkcolor=blue!70!black,
  urlcolor=blue!70!black
}

\usepackage[capitalise]{cleveref}
\usepackage[titletoc,title]{appendix}
\usepackage{cancel}
\usepackage{enumerate}
\usepackage{enumitem}
\usepackage{comment}
\crefformat{equation}{#2(#1)#3}
\Crefformat{equation}{#2(#1)#3}
\crefformat{figure}{Figure #2#1#3}
\Crefformat{figure}{Figure #2#1#3}

\usepackage{booktabs}       
\usepackage{multirow}
\usepackage{multicol}
\usepackage{makecell}
\usepackage{array}
\usepackage{arydshln}
\newcolumntype{H}{>{\setbox0=\hbox\bgroup}c<{\egroup}@{}}
\newcolumntype{Z}{>{\setbox0=\hbox\bgroup}c<{\egroup}@{\hspace*{-\tabcolsep}}}

\usepackage[normalem]{ulem}
\usepackage{exscale}
\usepackage{tikz}
\usepackage{float}
\usepackage{graphicx,graphics}
\graphicspath{ {./fig/} }
\allowdisplaybreaks



\newtheorem{theorem}{Theorem}
\newtheorem*{theorem*}{Theorem}
\newtheorem{lemma}[theorem]{Lemma}

\newtheorem*{remark*}{Remark}
\newtheorem*{lemma*}{Lemma}

\newtheorem{proposition}[theorem]{Proposition}
\newtheorem{definition}[theorem]{Definition}

\newtheorem{assumption}{Assumption}

\newenvironment{proof-sketch}{\noindent{\bf Proof Sketch}
  \hspace*{1em}}{\qed\bigskip\\}
\newenvironment{proof-idea}{\noindent{\bf Proof Idea}
  \hspace*{1em}}{\qed\bigskip\\}
\newenvironment{proof-of}[1][{}]{\noindent{\bf Proof of \cref{#1}}
  \hspace*{1em}}{\qed\bigskip\\}
\newenvironment{proof-of-lemma}[1][{}]{\noindent{\bf Proof of Lemma {#1}}
  \hspace*{1em}}{\qed\bigskip\\}
\newenvironment{proof-of-proposition}[1][{}]{\noindent{\bf
    Proof of Proposition {#1}}
  \hspace*{1em}}{\qed\bigskip\\}
\newenvironment{proof-of-theorem}[1][{}]{\noindent{\bf Proof of Theorem {#1}}
  \hspace*{1em}}{\qed\bigskip\\}
\newenvironment{inner-proof}{\noindent{\bf Proof}\hspace{1em}}{
  $\bigtriangledown$\medskip\\}
\newenvironment{proof-attempt}{\noindent{\bf Proof Attempt}
  \hspace*{1em}}{\qed\bigskip\\}


\renewcommand{\hat}{\widehat}
\renewcommand{\bar}{\overline}
\renewcommand{\epsilon}{\varepsilon}

\usepackage{pifont}
\newcommand{\id}{\bI}

\newcommand{\eps}{\varepsilon}


\newcounter{cnt}
\setcounter{cnt}{0}
\foreach \num in {1,2,...,26}{%
  \stepcounter{cnt}%
  \expandafter\xdef \csname c\Alph{cnt}\endcsname {\noexpand\mathcal{\Alph{cnt}}}%
  \expandafter\xdef \csname b\Alph{cnt}\endcsname {\noexpand\mathbb{\Alph{cnt}}}%
}


\DeclareMathOperator*{\argmin}{arg\,min}
\DeclareMathOperator*{\argmax}{arg\,max}

\DeclarePairedDelimiterX{\ddiv}[2]{(}{)}{%
  #1\;\delimsize\|\;#2%
}
\newcommand{\KL}{\operatorname{KL}\ddiv}



\newcommand{\norm}[1]{\left\|{#1}\right\|} 
\newcommand{\lone}[1]{\norm{#1}_1} 
\newcommand{\ltwo}[1]{\norm{#1}_2} 
\newcommand{\ltwop}[2]{\norm{#1}_{2,{#2}}} 

\newcommand{\lops}[1]{\|{#1}\|_{\mathrm{op}}}
\newcommand{\linf}[1]{\norm{#1}_\infty} 
\newcommand{\normbig}[1]{\big\|{#1}\big\|} 
\newcommand{\ltwopbig}[2]{\normbig{#1}_{2,{#2}}} 



\renewcommand{\cO}{\mathcal{O}}



\newcommand{\<}{\left\langle}
\renewcommand{\>}{\right\rangle}

\renewcommand{\bQ}{\mathbf{Q}}

\newcommand{\relu}{\mathrm{ReLU}}

\newenvironment{talign*}
 {\csname align*\endcsname}
 {\endalign}
 

\newcommand{\lth}{{(\ell)}}

\newcommand{\Attn}{{\rm Attn}}

\newcommand{\TF}{{\rm TF}}

\newcommand{\barsig}{\sigma}

\newcommand{\bzero}{{\mathbf 0}}
\newcommand{\bone}{{\mathbf 1}}

\newcommand{\clip}{\mathsf{clip}}

\renewcommand{\read}{{\sf read}}

\newcommand{\ridge}{{\rm ridge}}

\newcommand{\nrmp}[1]{{\left|\!\left|\!\left|{#1}\right|\!\right|\!\right|}}

\newcommand{\MLP}{\mathrm{MLP}}



\newcommand{\bAtt}{\btheta_{\tt attn}}
\newcommand{\bMAtt}{\btheta_{\tt mattn}}
\newcommand{\bthetamlp}{\btheta_{\tt mlp}}
\newcommand{\bmlp}{\bthetamlp}

\newcommand{\mlp}{{\tt mlp}}
\newcommand{\attn}{{\tt attn}}

\mathchardef\mhyphen="2D

\newcommand{\paren}[1]{{\left( #1 \right)}}
\newcommand{\brac}[1]{{\left[ #1 \right]}}
\newcommand{\set}[1]{{\left\{ #1 \right\}}}
\newcommand{\sets}[1]{{\{ #1 \}}}

\newcommand{\defeq}{\mathrel{\mathop:}=}

\newcommand{\mat}[1]{\ensuremath{\mathbf{#1}}}

\newcommand{\E}{\mathbb{E}}

\renewcommand{\P}{\mathbb{P}}
\newcommand{\Q}{\mathbb{Q}}



\newcommand{\R}{\mathbb{R}}


\newcommand{\B}{\mat{B}}

\newcommand{\pre}{{\mathrm{pre}}}
\newcommand{\posv}{{\mathbf{pos}}}

\newcommand{\post}{{\mathrm{post}}}
\newcommand{\GD}{{\mathrm{GD}}}
\newcommand{\AGD}{{\mathrm{AGD}}}
\newcommand{\parta}{{a}}
\newcommand{\partb}{{b}}
\newcommand{\partc}{{c}}
\newcommand{\partd}{{d}}

\newcommand{\lran}{{l_1}} 
\newcommand{\uran}{{l_2}}  
\newcommand{\appr}{{\mathrm{approx}}}  
\newcommand{\inst}{{\mathsf{M}}} 
 
\newcommand{\trajsp}{{\mathcal{T}}}

\newcommand{\HelDs}{{\mathrm{D}^2_{\mathrm{H}}}} 
\newcommand{\VarD}{{\mathrm{D}_{\mathrm{TV}}}} 
\newcommand{\HelD}{{\mathrm{D}_{\mathrm{H}}}}
\newcommand{\Covnum}{{n_{\mathrm{cov}}}} 
\newcommand{\LinUCB}{{\mathrm{LinUCB}}} 
\newcommand{\UCB}{{\mathrm{UCB}}} 
\newcommand{\unif}{{\mathrm{unif}}}

\newcommand{\sLinUCB}{{\mathrm{sLinUCB}}} 

\newcommand{\temp}{{\tau}} 

\newcommand{\cwid}{{\alpha}}  

\newcommand{\TS}{{\mathrm{TS}}}


\newcommand{\Tpsmean}{{{\mathbf\mu}}}
\newcommand{\Tpscov}{{{\mathbf\Sigma}}}
\newcommand{\Tpssam}{{\tilde\bw}}
\newcommand{\Tpspar}{{\lambda}}  
\newcommand{\Tpsparn}{{r}} 

\newcommand{\trunprob}{{\eta_1}}
\newcommand{\hpevent}{{\mathcal E}}
\newcommand{\prodeig}{{\mu}}
\newcommand{\Errmat}{{\mathbf {E}}}
\newcommand{\padecond}{{\tilde \kappa}}

\newcommand{\intvec}{{\mathbf {q}}}
\newcommand{\intmat}{{\mathbf {M}}}

\newcommand{\Trunreg}{{D}}
\newcommand{\Trunregpa}{{\eta_2}}  

\newcommand{\Trunregp}{{\eta}}
\newcommand{\bigvec}{{\mathbf{v}}}

\newcommand{\tcO}{{\tilde{\mathcal O}}}

\newcommand{\proj}{{\rm proj}}

\newcommand{\widebar}[1]{\overline{#1}}

\newcommand{\transmodel}{{\mathbb T}}
\newcommand{\rewmodel}{{\mathbb R}}
\newcommand{\state}{{s}}
\newcommand{\action}{{a}}
\newcommand{\eaction}{{\widebar{a}}}
\newcommand{\reward}{{r}}
\newcommand{\estreward}{{{r}}}

\newcommand{\totlen}{{T}} 

\newcommand{\cat}{{\tt cat}}
\newcommand{\extractmap}{{\tt A}}
\newcommand{\embedmap}{{\tt h}}
\newcommand{\softmax}{{\rm softmax}}
\newcommand{\ith}{{i}}
\newcommand{\sAlg}{{\mathsf{Alg}}}
\newcommand{\osAlg}{\overline{\mathsf{Alg}}}

\newcommand{\dset}{{D}}
\newcommand{\adset}{{\widebar{D}}}  
\newcommand{\Numobs}{{n}}   

\newcommand{\Par}{{\theta}}
\newcommand{\Parspace}{{\Theta}}
\newcommand{\appeps}{\eps_{{\mathsf{approx}}}}
\newcommand{\EstPar}{{\widehat{\theta}}}
\newcommand{\esttfpar}{{\widehat{\btheta}}}
\newcommand{\TruePar}{{\theta^*}}
\newcommand{\plc}{{\pi}}
\newcommand{\plcset}{{\Pi}}
\newcommand{\optplc}{{\pi^*}}

\newcommand{\prior}{{\Lambda}}
\newcommand{\aset}{{\sA}}
\newcommand{\TrueLBPar}{{\bw^*}}

\newcommand{\Noise}{{\eps}}
\newcommand{\Numepi}{{K}}
\newcommand{\horizon}{{H}}
\newcommand{\statesp}{{\mathcal{S}}}
\newcommand{\actionsp}{{\mathcal{A}}}
\newcommand{\rewardsp}{{\mathcal{R}}}
\renewcommand{\horizon}{{H}}
\newcommand{\Noisedist}{{\mathcal{E}}}

\newcommand{\transit}{{P}}
\newcommand{\bonus}{{b}}
\newcommand{\estbonus}{{b}}

\newcommand{\esttransit}{\widehat{P}}
\newcommand{\tresttransit}{\widehat{P}}

\newcommand{\rewardfun}{{R}}
\newcommand{\init}{{\mu_1}}
\newcommand{\valuefun}{{V}}
\newcommand{\expert}{{\mathrm{expert}}}
\newcommand{\shortexp}{{E}}
\newcommand{\geneps}{{\epsilon}_{\sf real}}

\newcommand{\tfpar}{{\btheta}}
\newcommand{\tfparspace}{{\Theta}}
\newcommand{\layer}{{L}}
\newcommand{\hidden}{{D'}}

\newcommand{\head}{{M}}
\newcommand{\normb}{{B}}
\newcommand{\clipval}{{\mathsf{R}}}

\newcommand{\tfmat}{{\bH}}  
\newcommand{\embd}{{D}}  
\newcommand{\actnum}{{K}}  
\newcommand{\totreward}{{\mathfrak{R}}}  

\newcommand{\distratio}{{\mathcal{R}}}
\newcommand{\tfthres}{{\mathsf{B}}}   

\newcommand{\roll}{{\mathrm{roll}}}

\newcommand{\Numst}{{S}}
\newcommand{\Numact}{{A}}
\newcommand{\Numvi}{{N}}
\newcommand{\Qfun}{{Q}}
\newcommand{\estQfun}{{\widehat{\Qfun}}}
\newcommand{\trestQfun}{{\widehat{\Qfun}}} 

\newcommand{\Vfun}{{\valuefun}}
\newcommand{\estVfun}{{\widehat{\Vfun}}}
\newcommand{\trestVfun}{{\widehat{\Vfun}}}

\newcommand{\oddeven}{{v}}

\newcommand{\sUCBVI}{{\mathrm{sUCBVI}}}

\newcommand{\ssm}{{\mathsf{m}}}

\newcommand{\s}{{\mathsf{sm}}}

\newcommand{\epstemp}{{\eps_{\mathrm{sfmax}}}}

\newcommand{\MD}{{\mathrm{MD}}}

\newcommand{\conO}{{\mathrm{O}}}

\newcommand{\sst}{{\mathsf{t}}}
\newcommand{\emp}{{\mathsf{emp}}}
\newcommand{\ssl}{{{\ell}}}
\newcommand{\neuron}{{\mathsf{M_0}}}
\newcommand{\weightn}{{{\mathsf{C_0}}}}












\def\sA{{\mathbb{A}}}

\def\sC{{\mathbb{C}}}
\def\sD{{\mathbb{D}}}

\def\sP{{\mathbb{P}}}
\def\sQ{{\mathbb{Q}}}

\def\sS{{\mathbb{S}}}




\def\bA{{\mathbf A}}

\def\bH{{\mathbf H}}
\def\bI{{\mathbf I}}
\def\bK{{\mathbf K}}

\def\bQ{{\mathbf Q}}

\def\bV{{\mathbf V}}
\def\bW{{\mathbf W}}



\def\btheta{{\boldsymbol \theta}}

\def\ba{{\mathbf a}}

\def\be{{\mathbf e}}

\def\bh{{\mathbf h}}

\def\bp{{\mathbf p}}

\def\bu{{\mathbf u}}
\def\bv{{\mathbf v}}
\def\bw{{\mathbf w}}
\def\bx{{\mathbf x}}

\def\bz{{\mathbf z}}

\def\si{{\mathsf{i}}}
\def\sj{{\mathsf{j}}}
\def\sk{{\mathsf{k}}}
\def\sh{{\mathsf{h}}}

\title{Transformers as Decision Makers: Provable In-Context Reinforcement Learning via Supervised Pretraining}

\date{\today}
\author{
  Licong Lin\thanks{UC Berkeley. Email: \texttt{liconglin@berkeley.edu}}\hspace{.35em}
  \and
  Yu Bai\thanks{Salesforce AI Research. Email: \texttt{yu.bai@salesforce.com}}\hspace{.35em}\footnotemark[4]
  \and
  Song Mei\thanks{UC Berkeley. Email: \texttt{songmei@berkeley.edu}}\hspace{.35em}\thanks{Equal contribution.}
}

\def\shownotes{0}  
\ifnum\shownotes=1
\newcommand{\authnote}[2]{{\scriptsize $\ll$\textsf{#1 notes: #2}$\gg$}}
\else
\newcommand{\authnote}[2]{}
\fi
\newcommand{\yub}[1]{{\color{red}\authnote{Yu}{#1}}}

\newcommand{\lc}[1]{{\color{blue}\authnote{Licong}{#1}}}

\makeatletter
\def\blfootnote{\gdef\@thefnmark{}\@footnotetext}
\makeatother

\begin{document}

\maketitle

\begin{abstract}

Large transformer models pretrained on offline reinforcement learning datasets have demonstrated remarkable in-context reinforcement learning (ICRL) capabilities, where they can make good decisions when prompted with interaction trajectories from unseen environments. However, when and how transformers can be trained to perform ICRL have not been theoretically well-understood. In particular, it is unclear which reinforcement-learning algorithms transformers can perform in context, and how distribution mismatch in offline training data affects the learned algorithms. 
This paper provides a theoretical framework that analyzes supervised pretraining for ICRL. This includes two recently proposed training methods --- algorithm distillation and decision-pretrained transformers. First, assuming model realizability, we prove the supervised-pretrained transformer will imitate the conditional expectation of the expert algorithm given the observed trajectory. The generalization error will scale with model capacity and a distribution divergence factor between the expert and offline algorithms. Second, we show transformers with ReLU attention can efficiently approximate near-optimal online reinforcement learning algorithms like LinUCB and Thompson sampling for stochastic linear bandits, and UCB-VI for tabular Markov decision processes. This provides the first quantitative analysis of the ICRL capabilities of transformers pretrained from offline trajectories.




\end{abstract}
\section{Introduction}


The transformer architecture~\citep{vaswani2017attention} for sequence modeling has become a key weapon for modern artificial intelligence, achieving success in language~\citep{devlin2018bert,brown2020language,openai2023gpt} and vision~\citep{dosovitskiy2020image}. Motivated by these advances, the research community has actively explored how to best harness transformers for reinforcement learning (RL)~\citep{chen2021decision, janner2021offline, lee2022multi, reed2022generalist, laskin2022context, lee2023supervised,yang2023foundation}. While promising empirical results have been demonstrated, the theoretical understanding of transformers for RL remains limited. 


This paper provides theoretical insights into in-context reinforcement learning (ICRL)---an emerging approach that utilizes sequence-to-sequence models like transformers to perform reinforcement learning in newly encountered environments. In ICRL, the model takes as input the current state and past interaction history with the environment (the \emph{context}), and outputs an action. The key hypothesis in ICRL is that pretrained transformers can act as \emph{RL algorithms}, progressively improving their policy based on past observations. Approaches such as Algorithm Distillation \citep{laskin2022context} and Decision-Pretrained Transformers \citep{lee2023supervised} have demonstrated early successes, finding that \emph{supervised pretraining} can produce good ICRL performance. However, many concrete theoretical questions remain open about the ICRL capabilities of transformers, including but not limited to (1) what RL algorithms can transformers implement in-context; (2) what performance guarantees (e.g. regret bounds) can such transformers achieve when used iteratively as an online RL algorithm; and (3) when can supervised pretraining find such a good transformer. Specifically, this paper investigates the following open question:
\begin{center}
\emph{How can supervised pretraining on Transformers learn in-context reinforcement learning?}
\end{center}

In this paper, we initiate a theoretical study of the ICRL capability of transformers under supervised pretraining to address the open questions outlined above. We show that (1) Transformers can implement prevalent RL algorithms, including LinUCB and Thompson sampling for stochastic linear bandits, and UCB-VI for tabular Markov decision processes; (2) The algorithms learned by transformers achieve near-optimal regret bounds in their respective settings; (3) Supervised pretraining find such algorithms as long as the sample size scales with the covering number of transformer class and distribution ratio between expert and offline algorithms. 

\paragraph{Summary of contributions and paper outline}
\begin{itemize}[leftmargin=1.5em]
\item We propose a general framework for supervised pretraining approaches to meta-reinforcement learning (Section~\ref{sec:framework}). This framework encompasses existing methods like Algorithm Distillation \citep{laskin2022context}, where the expert and context algorithms are identical, as well as Decision-Pretrained Transformers \citep{lee2023supervised}, where the expert generates optimal actions for the MDP. It also includes approximate DPT variants where the expert estimates optimal actions from full interaction trajectories. 
\item We prove that the supervised-pretrained transformer will imitate the conditional expectation of the expert algorithm given the observed trajectory (Section~\ref{sec:supervised-pretraining}). The generalization error scales with both model capacity and a distribution ratio measuring divergence between the expert algorithm and the algorithm that generated offline trajectories. 

\item We demonstrate that transformers can effectively approximate several near-optimal reinforcement learning algorithms by taking observed trajectories as context inputs (Section~\ref{sec:ICRL}). Specifically, we show transformers can approximate LinUCB (Section~\ref{sec:LinUCB-statement}) and Thompson sampling algorithms (Section~\ref{sec:TS-statement}) for stochastic linear bandit problems, and UCB-VI (Section~\ref{sec:Tabular-MDP-statement}) for tabular Markov decision processes. Combined with the generalization error bound from supervised pretraining and regret bounds of these RL algorithms, this provides regret bounds for supervised-pretrained transformers. 
\item Preliminary experiments validate that transformers can perform ICRL in our setup (Section~\ref{sec:experiments}).
\item Technically, we prove efficient approximation of LinUCB by showing transformers can implement accelerated gradient descent for solving ridge regression (\cref{sec:pf_thm:approx_smooth_linucb}), enabling fewer attention layers than the vanilla gradient descent approach in \cite{bai2023transformers}. To enable efficient Thompson sampling implementation, we prove transformers can compute matrix square roots through the Pade decomposition (\cref{sec:pf_thm:approx_thompson_linear-formal}). These approximation results are interesting in their own right. 
\end{itemize}


    
    



\subsection{Related work}\label{sec:related-work}

\paragraph{Meta-learning and meta-reinforcement learning} In-context reinforcement learning can be cast into the framework of meta-learning and meta-reinforcement learning \citep{schmidhuber1987evolutionary, schmidhuber1992learning, bengio1990learning, naik1992meta, ishii2002control, schaul2010metalearning, thrun2012learning}. More recently, a line of work focuses on meta-learn certain shared structures such as the dynamics of the shared tasks \citep{fu2016one, nagabandi2018learning}, a task context identifier \citep{rakelly2019efficient, humplik2019meta, zintgraf2019varibad}, exploration strategies \citep{gupta2018meta}, or the initialization of the network policy \citep{finn2017model, hochreiter2001learning, nichol2018first, rothfuss2018promp}. Theories for this last approach of model-agnostic meta-learning have been explored by \cite{wang2020global}. 

Our work focuses on a more agnostic approach to learning the learning algorithm itself \citep{wang2016learning, duan2016rl, dorfman2021offline, mitchell2021offline, li2020focal, pong2022offline, laskin2022context, lee2023supervised}. Among these works, \cite{wang2016learning, duan2016rl} focus on the online meta-RL setting with the training objective to be the total reward. Furthermore, \cite{dorfman2021offline, mitchell2021offline, li2020focal, pong2022offline} focus on offline meta-RL, but their training objectives differ from the cross entropy loss used here, requiring explicit handling of distribution shift. The supervised pretraining approach we consider is most similar to the algorithm distillation methods of \cite{laskin2022context} and the decision-pretrained transformers of \cite{lee2023supervised}. We provide quantitative sample complexity guarantees and transformer constructions absent from previous work.

\paragraph{In-context learning}

The in-context learning (ICL) capability of pretrained transformers has gained significant attention since being demonstrated on GPT-3 \cite{brown2020language}. Recent work investigates why and how pretrained transformers perform ICL \citep{garg2022can, li2023transformers, von2023transformers, akyurek2022learning, xie2021explanation, bai2023transformers, 
zhang2023trained, ahn2023transformers, raventos2023pretraining}. In particular, \cite{xie2021explanation} propose a Bayesian framework explaining how ICL works. \cite{garg2022can} show transformers can be trained from scratch to perform ICL of simple function classes.  \cite{von2023transformers, akyurek2022learning, bai2023transformers} demonstrate transformers can implement in-context learning algorithms via in-context gradient descent, with \cite{bai2023transformers} showing transformers can perform in-context algorithm selection. \cite{zhang2023trained} studied training dynamics of a single attention layer for in-context learning of linear functions. Our work focuses on the related but distinct capability of in-context decision-making for pretrained transformers. 

\paragraph{Transformers for decision making} Besides the ICRL approach, recent work has proposed goal-conditioned supervised learning (GCSL) for using transformers to make decisions \citep{chen2021decision, janner2021offline, lee2022multi, reed2022generalist, brohan2022rt, shafiullah2022behavior, yang2023foundation}. In particular, Decision Transformer (DT) \citep{chen2021decision, janner2021offline} uses transformers to autoregressively model action sequences from offline data, conditioned on the achieved return. During inference, one queries the model with a desired high return. Limitations and modifications of GCSL have been studied in \cite{yang2022dichotomy, paster2022you, vstrupl2022upside, brandfonbrener2022does}. A key distinction between GCSL and ICRL is that GCSL treats the transformer as a policy, whereas ICRL treats it as an algorithm for improving the policy based on observed trajectories.

\paragraph{Expressivity of transformers} The transformer architecture, introduced by \cite{vaswani2017attention}, has revolutionized natural language processing and is used in most recently developed large language models like BERT and GPT \citep{devlin2018bert, brown2020language}. The expressivity of transformers has been extensively studied~\citep{yun2019transformers, perez2019turing, hron2020infinite,yao2021self, bhattamishra2020computational, zhang2022unveiling, liu2022transformers, wei2022statistically, fu2023can, bai2023transformers, akyurek2022learning, von2023transformers}. Deep neural networks such as ResNets and transformers have been shown to efficiently approximate various algorithms, including automata \citep{liu2022transformers}, Turing machines \citep{wei2022statistically}, variational inference \citep{mei2023deep}, and gradient descent \citep{bai2023transformers, akyurek2022learning, von2023transformers}. Our work provides efficient transformer constructions that implement accelerated gradient descent and matrix square root algorithms, complementing existing expressivity results.

\paragraph{Statistical theories of imitation learning} Our generalization error analysis adapts classical analysis of maximum-likelihood estimator \citep{geer2000empirical}. The error compounding analysis for imitation learning appeared in early works \citep{ross2011reduction, ross2010efficient}. More recent theoretical analyses of imitation learning also appear in \cite{rajaraman2020toward, rajaraman2021provably, rashidinejad2021bridging}.





\section{Framework for In-Context Reinforcement Learning}\label{sec:framework}



Let $\cM$ be the space of decision-making environments, where each environment $\inst \in \cM$ shares the same number of rounds $\totlen$ and state-action-reward spaces $\{ \statesp_t,  \actionsp_t, \rewardsp_t \}_{t \in [\totlen]}$. Each $\inst = \{\transmodel_\inst^{t-1}, \rewmodel_\inst^t \}_{t \in [\totlen]}$ has its own transition model $\transmodel_\inst^t: \statesp_{t} \times \actionsp_{t} \to \Delta(\statesp_{t+1})$ (with $\statesp_0$, $\actionsp_0 = \{ \emptyset \}$ so $\transmodel_\inst^0(\cdot) \in \Delta(\statesp_1)$ gives the initial state distribution) and reward functions $\rewmodel_\inst^{t}: \statesp_{t} \times \actionsp_{t} \to \Delta(\rewardsp_t)$. We equip $\cM$ with a distribution $\prior \in \Delta(\cM)$, the environment prior. While this setting is general, we later give concrete examples taking $\cM$ as $\totlen$ rounds of bandits or $K$ episodes of $H$-step MDPs with $\totlen = K H$. 

\paragraph{Distributions of offline trajectories} We denote a partial interaction trajectory, consisting of observed state-action-reward tuples, by $\dset_t=\{(\state_1,\action_1,\reward_1),\ldots,(\state_t,\action_t,\reward_t)\} \in \trajsp_t = \prod_{s \le t} (\statesp_s \times \actionsp_s \times \rewardsp_s)$ and write $\dset = \dset_{\totlen}$ for short. An algorithm $\sAlg$ maps a partial trajectory $\dset_{t-1} \in \trajsp_{t-1}$ and state $\state_t \in \statesp_t$ to a distribution over the actions $\sAlg(\cdot | \dset_{t-1}, \state_t) \in \Delta(\actionsp_t)$. Given an environment $\inst$ and algorithm $\sAlg$, the distribution over a full trajectory $\dset_\totlen$ is fully specified: 
\begin{align*}
\textstyle \P_{\inst}^{\sAlg}(\dset_\totlen) =
\prod_{t=1}^{\totlen}\transmodel_{\inst}^{t-1}(\state_{t}|\state_{t-1},\action_{t-1}) \sAlg(\action_t|\dset_{t-1},\state_t)\rewmodel_{\inst}^t(\reward_t|\state_t,\action_t).
\end{align*}
In supervised pretraining, we use a \textit{context algorithm} $\sAlg_0$ (which we also refer to as the offline algorithm) to collect the offline trajectories $\dset_\totlen$. For each trajectory $\dset_\totlen$, we also assume access to expert actions $\eaction = ( \eaction_t \in \actionsp_t )_{t \in \totlen} \sim \sAlg_{\shortexp}(\cdot | \dset_\totlen, \inst)$, sampled from an expert algorithm $\sAlg_{\shortexp}: \trajsp_\totlen \times \inst \to \prod_{t \in [\totlen]} \Delta(\actionsp_t)$. This expert could omnisciently observe the full trajectory $\dset_\totlen$ and environment $\inst$ to recommend actions. Let $\adset_\totlen = \dset_\totlen \cup \{ \eaction \}$ be the augmented trajectory. Then we have
\begin{align*}
\textstyle \P^{\sAlg_0,\sAlg_{\shortexp}}_{\inst}(\adset_\totlen)=\P^{\sAlg_0}_{\inst}(\dset_\totlen)\prod_{t=1}^\totlen \sAlg_{\shortexp}^t (\eaction_t|\dset_{\totlen},\inst).
\end{align*}
We denote $\P^{\sAlg_0,\sAlg_\shortexp}_{\prior}$ as the joint distribution of $(\inst,\adset_\totlen)$ where $\inst \sim \prior$ and $\adset_\totlen \sim \P^{\sAlg_0,\sAlg_\shortexp}_{\inst}$, and $\P^{\sAlg_0}_{\prior}$ as the joint distribution of $(\inst,\dset_\totlen)$ where $\inst \sim \prior$ and $\dset_\totlen \sim \P^{\sAlg_0}_{\inst}$. 


\paragraph{Three special cases of expert algorithms} We consider three special cases of the expert algorithm $\sAlg_{\shortexp}$, corresponding to three supervised pretraining setups:
\begin{itemize}[leftmargin=1.5em]
\item[(a)] {\it Algorithm distillation \citep{laskin2022context}. } The algorithm depends only on the partial trajectory $\dset_{t-1}$ and current state $\state_t$: $\sAlg_{\shortexp}^t(\cdot|\dset_{\totlen},\inst) = \sAlg_{\shortexp}^t(\cdot|\dset_{t-1},\state_t)$. For example, $\sAlg_{\shortexp}$ could be a bandit algorithm like the Uniform Confidence Bound (UCB). 
\item[(b)] {\it Decision pretrained transformer (DPT) \citep{lee2023supervised}. } The algorithm depends on the environment $\inst$ and the current state $s_t$: $\sAlg_{\shortexp}^t(\cdot|\dset_\totlen, \inst) = \sAlg_{\shortexp}^t(\cdot|s_t, \inst)$. For example,  $\sAlg_{\shortexp}$ could output the optimal action $\action^*_t$ in state $\state_t$ for environment $\inst$. 
\item[(c)]{\it Approximate DPT. } The algorithm depends on the full trajectory $\dset_{\totlen}$ but not the environment $\inst$: $\sAlg_{\shortexp}^t(\cdot|\dset_\totlen, \inst) =\sAlg_{\shortexp}^t(\cdot|\dset_\totlen)$. For example, $\sAlg_{\shortexp}$ could estimate the optimal action $\widehat \action^*_t$ from the entire trajectory $\dset_\totlen$. 
\end{itemize}  
For any expert algorithm $\sAlg_{\shortexp}$, we define its reduced algorithm where the $t$-th step is $$\osAlg_{\shortexp}(\cdot|\dset_{t-1},\state_t) := \E_\prior^{\sAlg_0}[\sAlg_{\shortexp}^t(\cdot|\dset_\totlen,\inst)|\dset_{t-1},\state_t].$$ The expectation on the right is over $\P_{\prior}^{\sAlg_0} ( \dset_\totlen, \inst |\dset_{t-1},\state_t) =\prior(\inst) \cdot \P_\inst^{\sAlg_0}(\dset_\totlen) / \P_\inst^{\sAlg_0}(\dset_{t-1},\state_t).$ Note that the reduced expert algorithm $\osAlg_{\shortexp}$ generally depends on the context algorithm $\sAlg_0$. However, for cases (a) and (b), $\osAlg_{\shortexp}$ is independent of the context algorithm $\sAlg_0$. Furthermore, in case (a), we have $\osAlg_{\shortexp}^t = \sAlg_{\shortexp}^t$.

\paragraph{Transformer architecture} We consider a sequence of $N$ input vectors $\set{\bh_i}_{i=1}^N\subset \R^D$, compactly written as an input matrix $\bH=[\bh_1,\dots,\bh_N]\in \R^{D\times N}$, where each $\bh_i$ is a column of $\bH$ (also a \emph{token}). Throughout this paper, we define $\sigma(t)\defeq \relu(t)=\max\sets{t,0}$ as the standard relu activation function. 


\begin{definition}[Masked attention layer]
\label{def:masked-attention}
A masked attention layer with $M$ heads is denoted as $\Attn_{\btheta}(\cdot)$ with parameters $\btheta=\sets{ (\bV_m,\bQ_m,\bK_m)}_{m\in[M]}\subset \R^{D\times D}$. On any input sequence $\bH\in\R^{D\times N}$, we have $\bar{\bH} = \Attn_{\btheta}(\bH) = [\bar{\bh}_1, \ldots, \bar{\bh}_N] \in \R^{D \times N}$, where
\begin{align*}
\textstyle    \bar{\bh}_i = \brac{\Attn_{\btheta}(\bH)}_i = \bh_i + \sum_{m=1}^M \frac{1}{i}\sum_{j=1}^i \barsig\paren{ \<\bQ_m\bh_i, \bK_m\bh_j\> }\cdot \bV_m\bh_j \in \R^D.
\end{align*}
\end{definition}

We remark that the use of ReLU attention layers is for technical reasons. In practice, both ReLU attention and softmax attention layers should perform well. Indeed, several studies have shown that ReLU transformers achieve comparable performance to softmax transformers  across a variety of tasks \citep{wortsman2023replacing, shen2023study, bai2023transformers}.

\begin{definition}[MLP layer]
\label{def:mlp}
An MLP layer with hidden dimension $D'$ is denoted as $\MLP_{\btheta}(\cdot)$ with parameters $\btheta=(\bW_1,\bW_2)\in\R^{D'\times D}\times\R^{D\times D'}$. On any input sequence $\bH\in\R^{D\times N}$, we have $\bar{\bH} = \MLP_{\btheta}(\bH) = [\bar{\bh}_1, \ldots, \bar{\bh}_N] \in \R^{D \times N}$, where
\[
\bar{\bh}_i=\bh_i+\bW_2 \cdot \sigma(\bW_1\bh_i) \in \R^D.
\]
\end{definition}
We next define $L$-layer decoder-based transformers. Each layer consists of a masked attention layer (see Definition \ref{def:masked-attention}) followed by an MLP layer (see Definition \ref{def:mlp}) and a clip operation.


\begin{definition}[Decoder-based Transformer]
\label{def:decoder-tf}
An $L$-layer decoder-based transformer, denoted as $\TF_\btheta^{\clipval}(\cdot)$, is a composition of $L$ masked attention layers, each followed by an MLP layer and a clip operation: $\TF_{\btheta}^{\clipval}(\bH) = \bH^{(L)} \in \R^{D \times N}$, where $\bH^{(L)}$ is defined iteratively by taking $\bH^{(0)} = \clip_{\clipval}(\bH) \in\R^{D\times N}$, and for $\ell\in [L]$, 
\begin{talign*}
\bH^{(\ell)} =\clip_{\clipval}\Big( \MLP_{\bthetamlp^{(\ell)}}\paren{ \Attn_{\bMAtt^{(\ell)}}\paren{\bH^{(\ell-1)}} } \Big) \in \R^{D \times N},~~~~~ \clip_{\clipval}(\bH) = [\proj_{\| \bh \|_2 \le \clipval}(\bh_i)]_i. 
\end{talign*}
Above, the parameter $\btheta=(\bMAtt^{(1:L)},\bthetamlp^{(1:L)})$ consists of  $\bMAtt^{(\ell)}=\sets{ (\bV^{(\ell)}_m,\bQ^{(\ell)}_m,\bK^{(\ell)}_m)}_{m\in[M]} \subset \R^{D\times D}$ and  $\bthetamlp^{(\ell)}=(\bW^{(\ell)}_1,\bW^{(\ell)}_2)\in\R^{D' \times D}\times \R^{D\times D'}$. We define the parameter class of transformers as $\Theta_{D, L, M, \hidden, B} \defeq \{ \btheta=(\bAtt^{(1:L)}, \bmlp^{(1:L)}): \nrmp{\btheta}\le B \}$, where the norm of a transformer $\TF_\btheta^{\clipval}$ is denoted as 
\begin{align}
\label{eqn:tf-norm}
    \nrmp{\btheta}\defeq \max_{\ell\in[L]} \Big\{  
    \max_{m\in[M]} \set{\lops{\bQ_m^\lth}, \lops{\bK_m^\lth} } + \sum_{m=1}^M \lops{\bV_m^\lth} +
    \lops{\bW_1^\lth} + \lops{\bW_2^\lth}
    \Big\}.
\end{align}
\end{definition}
We introduced clipped operations in transformers for technical reasons. For brevity, we will write $\TF_\btheta = \TF_\btheta^{\clipval}$ when there is no ambiguity. We will set the clipping value $\clipval$ to be sufficiently large so that the clip operator does not take effect in any of our approximation results.

\paragraph{Algorithm induced by Transformers} We equip the transformer with an embedding mapping $\embedmap: \cup_{t \in [\totlen]} \statesp_t \cup \cup_{t \in [\totlen]} (\actionsp_t \times \rewardsp_t) \to \R^D$.  This assigns any state $\state_t \in \statesp_t$ a $D$-dimensional embedding vector $\embedmap(\state_t) \in \R^D$, and any action-reward pair $(\action_t, \reward_t) \in \actionsp_t \times \rewardsp_t$ a $D$-dimensional embedding $\embedmap(\action_t, \reward_t) \in \R^D$. The embedding function $\embedmap$ should encode the time step $t$ of the state, action, and reward. With abuse of notation, we denote $\embedmap(\dset_{t-1}, \state_t) = [\embedmap(\state_1), \embedmap(\action_1, \reward_1), \ldots, \embedmap(\action_{t-1}, \reward_{t-1}), \embedmap(\state_t)]$. We define a concatenation operator $\cat: \R^{D \times *} \to \R^{D \times *}$ that concatenates its inputs $\cat(\bh_1, \ldots, \bh_N) = [\bh_1, \ldots, \bh_N]$ in most examples, but it could also insert special tokens at certain positions (in MDPs we add an additional token at the end of each episode). For a partial trajectory and current state $(\dset_{t-1}, \state_t)$, we input $\bH = \cat(\embedmap(\state_1), \embedmap(\action_1, \reward_1), \ldots, \embedmap(\action_{t-1}, \reward_{t-1}), \embedmap(\state_t)) \in \R^{D \times *}$ into the transformer. This produces output $\bar{\bH} = \TF_{\btheta}^{\clipval}(\bH) = [\bar{\bh}_1, \bar{\bh}_2 \ldots, \bar{\bh}_{-2},\bar{\bh}_{-1}]$ with the same shape as $\bH$. To extract a distribution over the action space $\actionsp_t$ with $| \actionsp_t | = \Numact$ actions, we assume a fixed linear extraction mapping $\extractmap \in \R^{\Numact \times D}$. The induced algorithm is then defined as: $\sAlg_\btheta(\cdot | \dset_{t-1}, \state_t) = \softmax(\extractmap \cdot \bar{\bh}_{-1})$. The overall algorithm induced by the transformer is: 
\begin{equation}\label{eqn:transformer-algorithm}
\sAlg_\btheta(\cdot | \dset_{t-1}, \state_t) = \softmax ( {\extractmap} \cdot {\TF_\btheta^{\clipval}} ( {\cat}( {\embedmap} ( \dset_{t-1}, \state_t)))_{-1}). 
\end{equation}
We will always choose a proper concatenation operator $\cat$ in examples, so that in the pretraining phase, all the algorithm outputs $\{ \sAlg_\btheta(\cdot | \dset_{t-1}, \state_t) \}_{t \le \totlen}$ along the trajectory can be computed in a single forward propagation. 
\section{Statistical analysis of supervised pretraining}\label{sec:supervised-pretraining}

In supervised pretraining, we are given $\Numobs$ i.i.d offline trajectories $\{\dset^\ith_\totlen =  (\state^\ith_1,\action^\ith_1, \reward^\ith_1, \ldots, \state^\ith_\totlen, \allowbreak \action^\ith_\totlen, \allowbreak\reward^\ith_\totlen) \}_{i=1}^\Numobs \sim_{iid} \P_\prior^{\sAlg_0}$ from the interaction of $\inst^\ith \sim_{iid} \prior$ with an offline algorithm $\sAlg_0$. Given an expert algorithm $\sAlg_{\shortexp}$, we augment each trajectory $\dset_{\totlen}^i$ by $\{ \eaction_t^i \sim_{iid} \sAlg_{\shortexp}( \cdot |\dset_{t-1}^i, \state_t^i)\}_{t \in [\totlen]}$. Supervised pretraining maximizes the log-likelihood over the algorithm class $\{ \sAlg_\Par\}_{\Par\in\Parspace}$
\begin{align}
\EstPar=\argmax_{\Par\in\Parspace}  \frac{1}{\Numobs}\sum_{i=1}^\Numobs\sum_{t=1}^\totlen\log \sAlg_\Par(\eaction^\ith_{t}|\dset_{t-1}^\ith,\state^\ith_t). \label{eq:general_mle}
\end{align}
This section discusses the statistical properties of the algorithm learned via supervised pretraining. 


\subsection{Main result}



Our main result demonstrates that the algorithm maximizing the supervised pretraining loss will imitate $\osAlg_{\shortexp}(\cdot|\dset_{t-1},\state_t) = \E_{\inst\sim \prior,  \dset_{\totlen} \sim \sAlg_0}[\sAlg_{\shortexp}^t(\cdot|\dset_\totlen,\inst)|\dset_{t-1},\state_t]$, the conditional expectation of the expert algorithm $\sAlg_{\shortexp}$ given the observed trajectory. The imitation error bound will scale with the covering number of the algorithm class and a  distribution ratio factor, defined as follows.

\begin{definition}[Covering number]\label{def:cover_number_general} For a class of algorithms $\{\sAlg_\Par,\Par\in\Parspace\}$, 
we say $\Parspace_0 \subseteq\Parspace$ is an  $\rho$-cover of $\Parspace$, if $\Parspace_0$ is a finite set such that for any $\Par\in\Parspace$, there exists $\Par_0\in\Parspace_0$ such that 
\[
\|\log \sAlg_{\Par_0}(\cdot|\dset_{t-1},\state_t)-\log \sAlg_{\Par}(\cdot|\dset_{t-1},\state_t)\|_{\infty}\leq\rho,~~~ \text{for all } \dset_{t-1},\state_t, t\in[\totlen].
\]
The covering number $\cN_{\Parspace}(\rho)$ is the minimal cardinality of $\Parspace_0$ such that $\Parspace_0$ is a $\rho$-cover of $\Parspace$.
\end{definition}



\begin{definition}[Distribution ratio]\label{def:dist_ratio}
\label{def:distribution-ratio}
We define the distribution ratio of two algorithms $\sAlg_1,\sAlg_2$ by
\begin{align*}\distratio_{\sAlg_1,\sAlg_2}
:=
\E_{\inst\sim\prior,\dset_\totlen\sim\P_\inst^{\sAlg_1}}
\Big[\prod_{s=1}^{\totlen}\frac{\sAlg_1(\action_s|\dset_{s-1},\state_s)}{\sAlg_2(\action_s|\dset_{s-1},\state_s)}\Big] = 1 + \chi^2\Big( \P_\prior^{\sAlg_1};\P_\prior^{\sAlg_2} \Big).
\end{align*}
\end{definition}


Our main result requires the realizability assumption of algorithm class $\{ \sAlg_\Par\}_{\Par \in \Parspace}$ with respect to the conditional expectation of the expert algorithm. 

\begin{assumption}[Approximate realizability]
\label{asp:realizability}
There exists $\TruePar\in\Parspace$ and $\geneps > 0$ such that for all $t\in[\totlen]$, 
\begin{align}
\label{eqn:plc_approx_general}
\log\E_{\inst \sim \prior, \adset_\totlen \sim \P_{\inst}^{\sAlg_0,\sAlg_\shortexp}}\Big[\frac{\osAlg_{\shortexp}(\eaction_t|\dset_{t-1},\state_t )}{\sAlg_\TruePar(\eaction_t|\dset_{t-1},\state_t )}\Big] \le \geneps. 
\end{align}
\end{assumption}


We aim to bound the performance gap between $\sAlg_{\EstPar}$ and $\sAlg_\shortexp$ in terms of expected cumulative rewards, where the expected cumulative reward is defined as 
\begin{align*}
\textstyle \totreward_{\prior,\sAlg}(\totlen)
:= \E_{\inst\sim\prior}\big[\totreward_{\inst,\sAlg}(\totlen) \big],~~~~~~~~~ \totreward_{\inst,\sAlg}(\totlen) = \E_{\dset_{\totlen} \sim\P^{\sAlg}_\inst}[\sum_{t=1}^\totlen \reward_t].
\end{align*}
An intermediate step of the result is controlling the expected Hellinger distance between two algorithms, where for distributions $p, q$, we have $\HelDs(p, q) = \int (\,\sqrt{p(x)} - \sqrt{q(x)} \,)^2 d x$. 

\begin{theorem}[Performance gap between expected cumulative rewards]\label{thm:diff_reward} Let Assumption~\ref{asp:realizability} hold and let $\EstPar$ be a solution to Eq.~\eqref{eq:general_mle}. Take $\distratio = \distratio_{\osAlg_\shortexp,\sAlg_0}$ as defined in Definition~\ref{def:dist_ratio}, and $\cN_{\Parspace} = \cN_{\Parspace}((\Numobs\totlen)^{-2})$  as defined in Definition~\ref{def:cover_number_general}. Then for some universal constant $c>0$, with probability at least $1-\delta$, we have 
\begin{align}\label{eqn:Hellinger-bound-main-theorem}
&~ \E_{\dset_\totlen\sim \P^{\sAlg_\shortexp}_\prior}\Big[ \sum_{t=1}^\totlen \HelD \paren{  \sAlg_{{\EstPar}}(\cdot|\dset_{t-1},\state_t ),\osAlg_{\shortexp}(\cdot|\dset_{t-1},\state_t )} \Big] 
\le c {\totlen} \sqrt{\distratio}
\bigg(\sqrt{\frac{\log \brac{ \cN_{\Parspace} \cdot 
 \totlen/\delta } }{n}} +  \sqrt{\geneps}\bigg).
\end{align} 
Further assume that $|\reward_t| \leq 1$ almost surely. Then with probability at least $1-\delta$, the difference of the expected cumulative rewards between $\sAlg_\EstPar$ and $\osAlg_\shortexp$ satisfies
\begin{align}\label{eqn:reward-bound-main-theorem}
\Big|\totreward_{\prior,\sAlg_\EstPar}(\totlen)-\totreward_{\prior,\osAlg_\shortexp}(\totlen)\Big|
&\leq 
c \totlen^2 \sqrt{\distratio} \bigg(\sqrt{\frac{\log \brac{ \cN_{\Parspace} \cdot 
 \totlen/\delta } }{n}} +  \sqrt{\geneps}\bigg).
\end{align}
\end{theorem}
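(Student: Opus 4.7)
The plan is to derive the Hellinger bound \eqref{eqn:Hellinger-bound-main-theorem} via a standard maximum--likelihood concentration argument carried out in-sample (under $\P^{\sAlg_0}_\prior$) and then transported to the on-policy measure $\P^{\osAlg_\shortexp}_\prior$ by a Cauchy--Schwarz change of measure, and from there deduce the reward bound \eqref{eqn:reward-bound-main-theorem} via a simulation-style argument plus the chain rule for total variation.

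First I would observe the key reduction: under the joint law $\P^{\sAlg_0,\sAlg_\shortexp}_\prior$, the conditional distribution of the expert label $\eaction_t$ given the observed history $(\dset_{t-1},\state_t)$ equals $\osAlg_\shortexp(\cdot\mid \dset_{t-1},\state_t)$ by the tower property and the very definition of $\osAlg_\shortexp$. Consequently, the pretraining objective \eqref{eq:general_mle} is a sequential conditional MLE whose population target is $\osAlg_\shortexp$, not the full expert $\sAlg_\shortexp$. A standard chaining/Bernstein-style analysis on a $\rho$-cover of $\Parspace$ with $\rho=(\Numobs\totlen)^{-2}$ (using the log-density sup-norm of Definition~\ref{def:cover_number_general}), combined with the identity $-\log\E[\sqrt{p/q}]\ge \tfrac12\HelDs(p,q)$ to convert log-likelihood ratios into squared Hellinger distances, and with Assumption~\ref{asp:realizability} applied at each $t$ to absorb the approximation bias, would yield with probability at least $1-\delta$
\[
\E_{\dset_\totlen\sim \P^{\sAlg_0}_\prior}\Big[\sum_{t=1}^\totlen \HelDs\big(\sAlg_{\EstPar}(\cdot\mid \dset_{t-1},\state_t),\,\osAlg_\shortexp(\cdot\mid \dset_{t-1},\state_t)\big)\Big]\;\lesssim\;\frac{\log(\cN_\Parspace\cdot \totlen/\delta)}{\Numobs}+\geneps,
\]
after a union bound over the $\totlen$ time indices.

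Second I would pass from $\P^{\sAlg_0}_\prior$ to $\P^{\osAlg_\shortexp}_\prior$. The Radon--Nikodym derivative $\dd\P^{\osAlg_\shortexp}_\prior/\dd\P^{\sAlg_0}_\prior$ telescopes into $\prod_{s=1}^\totlen \osAlg_\shortexp(\action_s\mid\dset_{s-1},\state_s)/\sAlg_0(\action_s\mid\dset_{s-1},\state_s)$, whose second moment under $\P^{\sAlg_0}_\prior$ is precisely $\distratio_{\osAlg_\shortexp,\sAlg_0}$ by Definition~\ref{def:dist_ratio}. Applying Cauchy--Schwarz to $f=\sum_t\HelD(\sAlg_{\EstPar},\osAlg_\shortexp)$ gives
\[
\E_{\P^{\osAlg_\shortexp}_\prior}[f]\;\le\;\sqrt{\distratio}\cdot\sqrt{\E_{\P^{\sAlg_0}_\prior}[f^2]},\qquad \E_{\P^{\sAlg_0}_\prior}[f^2]\;\le\;\totlen\cdot\E_{\P^{\sAlg_0}_\prior}\Big[\sum_t \HelDs\Big],
\]
where the second inequality is Cauchy--Schwarz in the $t$-sum. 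Substituting the MLE bound from Step 1 yields the desired $\totlen\sqrt{\distratio}(\sqrt{\log(\cN_\Parspace\totlen/\delta)/\Numobs}+\sqrt{\geneps})$ scaling in \eqref{eqn:Hellinger-bound-main-theorem}. For \eqref{eqn:reward-bound-main-theorem}, I would use that under $|\reward_t|\le 1$ a coupling/simulation argument gives $|\totreward_{\prior,\sAlg_\EstPar}(\totlen)-\totreward_{\prior,\osAlg_\shortexp}(\totlen)|\le \totlen\cdot\TV(\P^{\sAlg_\EstPar}_\prior,\P^{\osAlg_\shortexp}_\prior)$, and then the chain rule for total variation on sequential models together with $\TV\le \HelD$ gives
\[
\TV(\P^{\sAlg_\EstPar}_\prior,\P^{\osAlg_\shortexp}_\prior)\;\le\;\E_{\P^{\osAlg_\shortexp}_\prior}\Big[\sum_t \TV(\sAlg_\EstPar,\osAlg_\shortexp)\Big]\;\le\;\E_{\P^{\osAlg_\shortexp}_\prior}\Big[\sum_t \HelD(\sAlg_\EstPar,\osAlg_\shortexp)\Big].
\]
Plugging in \eqref{eqn:Hellinger-bound-main-theorem} produces the additional factor of $\totlen$ and hence the $\totlen^2\sqrt{\distratio}$ prefactor of \eqref{eqn:reward-bound-main-theorem}.

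The main obstacle will be the MLE step: the usual i.i.d.\ MLE machinery has to be adapted to the \emph{sequential conditional} likelihood (a product of $\totlen$ conditionals per trajectory, whose conditioning variables themselves depend on $\sAlg_0$) and the approximate realizability in Assumption~\ref{asp:realizability} is stated in a $\log$-expected-ratio form rather than the more familiar sup-KL form, so one must be careful to translate the per-$t$ bias into the squared-Hellinger bound cleanly. The covering in Definition~\ref{def:cover_number_general} is in the uniform log-density metric, which is precisely what lets a single cover control all $\totlen$ conditional likelihoods simultaneously; closing the argument amounts to verifying that the usual empirical-process bounds go through uniformly in $\theta\in\Parspace$ and summing the per-$t$ contributions.
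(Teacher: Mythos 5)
Your proposal is correct and follows essentially the same three-step structure as the paper: (i) an in-sample MLE concentration argument under $\P^{\sAlg_0}_\prior$ giving $\sum_t\E_{\sAlg_0}\HelDs \lesssim \totlen\log(\cN_\Parspace\totlen/\delta)/\Numobs+\totlen\geneps$ (the paper's Lemma~\ref{lm:general_imit}); (ii) a change-of-measure via Cauchy--Schwarz with the $\chi^2$-factor $\distratio_{\osAlg_\shortexp,\sAlg_0}$ to transfer that bound to $\P^{\osAlg_\shortexp}_\prior$; and (iii) a telescoping/performance-difference decomposition plus $\VarD\le\HelD$ to pass from Hellinger to the reward gap. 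The one stylistic divergence worth noting: you apply the change of measure once to the whole sum $f=\sum_t\HelD_t$ using the full trajectory Radon--Nikodym derivative and then invoke Cauchy--Schwarz in $t$ on $\E_{\sAlg_0}[f^2]$, whereas the paper applies the measure change \emph{term by term} with the partial ratio $\prod_{s<t}\osAlg_\shortexp/\sAlg_0$ (since $\HelD_t$ only depends on the prefix) and then bounds each partial second moment by the full one, $\E_{\sAlg_0}\big[(\prod_{s<t}\cdot)^2\big]\le\E_{\sAlg_0}\big[(\prod_{s\le\totlen}\cdot)^2\big]=\distratio$, using the inequality on conditional $\chi^2$-divergences shown in the proof. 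The two routes yield the identical $\totlen\sqrt{\distratio}$ prefactor; your global version is arguably cleaner, while the paper's per-$t$ version makes explicit that only the prefix ratio is needed and isolates the monotonicity-of-$\chi^2$ lemma. Your step (iii) is also equivalent to the paper's: the paper phrases it via the performance-difference decomposition $\sum_t\big(\E_{\action_{1:t}\sim\osAlg_\shortexp,\action_{t+1:\totlen}\sim\sAlg_\EstPar}-\E_{\action_{1:t-1}\sim\osAlg_\shortexp,\action_{t:\totlen}\sim\sAlg_\EstPar}\big)$ and the variational form of TV, which is the TV chain rule you invoke (your constant $\totlen\cdot\TV$ should be $2\totlen\cdot\TV$ under the usual $\tfrac12$-normalization, but this is absorbed into $c$). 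The MLE step you defer is indeed where most of the work lives; the paper's Lemma~\ref{lm:general_imit} carries it out via the exponential supermartingale bound (Lemma~\ref{lm:exp_concen}), the identity $-\log\E\big[e^{-\ell/2}\big]\ge 1-\int\sqrt{pq}=\tfrac12\HelDs(p,q)$, and a union bound over the $(\Numobs\totlen)^{-2}$-cover and $t\in[\totlen]$ --- exactly the ingredients you sketch.
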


The proof of Theorem~\ref{thm:diff_reward} is contained in Section~\ref{sec:pf_thm:diff_reward}.

We remark that when the expectation on the left-hand-side of (\ref{eqn:Hellinger-bound-main-theorem}) is with respect to the measure $\P_\prior^{\sAlg_0}$, standard MLE analysis will provide a bound without the distribution ratio factor $\distratio = \distratio_{\osAlg_\shortexp,\sAlg_0}$ in the right-hand side. The distribution ratio factor arises from the distribution shift between trajectories generated by the expert algorithm $\sAlg_\shortexp$ versus the context algorithm $\sAlg_0$.  In addition, it should be noted that the result in Theorem~\ref{thm:diff_reward} holds generally provided Assumption~\ref{asp:realizability} is satisfied, which does not require that the algorithm class is induced by transformers.




\subsection{Implications in special cases}

\paragraph{Algorithm Distillation} When we set $\sAlg_\shortexp = \sAlg_0$, the supervised pretraining approach corresponds to the Algorithm Distillation method introduced in \cite{laskin2022context}. In this case, it suffices to set $\eaction^\ith = \action^\ith$ for every pretraining trajectory, eliminating the need to sample additional expert actions. The conditional expectation of the expert algorithm is given by $\osAlg_\shortexp = \sAlg_0$, and the distribution ratio $\distratio_{\sAlg_\shortexp,\sAlg_0}=1$. Under these conditions, Theorem~\ref{thm:diff_reward} ensures that $\sAlg_\EstPar$ imitates $\sAlg_0$ with a reward difference bounded by
\begin{align*}
\Big|\totreward_{\prior,\sAlg_\EstPar}(\totlen)-\totreward_{\prior,\sAlg_0}(\totlen)\Big|
&\leq c \totlen^2 \Big( \sqrt{\frac{\log \brac{ \cN_{\Parspace} \cdot \totlen/\delta } }{n} } + \sqrt{\geneps} \Big). 
\end{align*}
If the context algorithm $\sAlg_0$ does not perform well, we cannot expect the learned algorithm $\sAlg_\EstPar$ to have good performance, regardless of the number of offline trajectories. 


\paragraph{Decision Pretrained Transformer} When we set $\sAlg_\shortexp^t = \sAlg_\shortexp^t(\state_t,\inst)=\action^*_t$ to be the optimal action at time $t$, the supervised pretraining approach corresponds to Decision-Pretrained Transformers (DPT) proposed in \cite{lee2023supervised}. In this case, the conditional expectation of the expert algorithm $\osAlg_\shortexp(\cdot|\dset_{t-1},\state_t)=\E[\sAlg_{\shortexp}(\cdot|\state_t,\inst)|\dset_{t-1},\state_t]=\sAlg_{\TS}(\cdot|\dset_{t-1},\state_t)$ is the Thompson sampling algorithm \citep[Theorem 1]{lee2023supervised}, which samples from the posterior distribution of the optimal action $\action^*_t$ given by $\P(a^*_t(\inst) |\dset_{t-1},\state_t)\propto \prior(\inst)\cdot\P_\inst^{\sAlg_0}(\dset_{t-1},\state_t)$. This implies that learning from optimal actions effectively learns to imitate Thompson sampling. Furthermore, the context algorithm is not required to perform well for the learned algorithm to be consistent with Thompson sampling. However, a high-quality context algorithm $\sAlg_0$ may help reduce the distribution ratio $\distratio$, thereby learning Thompson sampling with fewer samples.


\paragraph{Approximate DPT} In practical scenarios, the learner may not have access to the optimal action $\action^*_t$ of the environment $\inst$ during pretraining. Instead, they might rely on an estimated optimal action $\widehat\action_t^* \sim \sAlg_{\shortexp}^t(\cdot | \dset_\totlen)$, derived from the entire trajectory $\dset_\totlen$. We can offer a guarantee analogous to Theorem~\ref{thm:diff_reward}, provided the distribution of the estimated action closely aligns with its posterior distribution: 
\begin{align}\E_{\dset_\totlen\sim\P_{\prior}^{\sAlg_0}}\KL{\sAlg_{\shortexp}^t(\cdot | \dset_\totlen)}{\P_{\TS,t}(\cdot|\dset_\totlen)}\leq\appeps,~~~ \forall t \in [\totlen]. 
\label{eq:app_opt_cond}
\end{align}
Here, $\P_{\TS,t}(\cdot|\dset_\totlen)$ represents the posterior distribution of the optimal action $\action^*_t=\action^*_t(\inst)$ at time $t$, given the observation $\dset_\totlen$, where $(\inst, \dset_\totlen) \sim \P_\prior^{\sAlg_0}$. 


\begin{proposition}\label{prop:app_opt_diff_reward} Let Assumption~\ref{asp:realizability} hold and let $\EstPar$ be the solution to Eq.~\eqref{eq:general_mle}. Take $\distratio = \distratio_{\sAlg_\TS,\sAlg_0}$ as defined in Definition~\ref{def:dist_ratio}, and $\cN_{\Parspace} = \cN_{\Parspace}((\Numobs\totlen)^{-2})$  as defined in Definition~\ref{def:cover_number_general}. Assume that for each trajectory, an estimated optimal action is provided $\widehat\action_t^* \sim \sAlg_{\shortexp}^t(\cdot | \dset_\totlen)$ at each time $t\in[\totlen]$ satisfying Eq.~\eqref{eq:app_opt_cond}. 
Assume that the rewards $|\reward_t|\leq 1$  almost surely. Then for some universal constant $c>0$, with probability at least $1-\delta$, the difference of the expected cumulative rewards between $\sAlg_\EstPar$ and $\sAlg_\TS$ satisfies 
\begin{align*}
|\totreward_{\prior,\sAlg_\EstPar}(\totlen)-\totreward_{\prior,\sAlg_\TS}(\totlen)|
&\leq 
c \sqrt{\distratio}\cdot\totlen^2 \Big( \sqrt{\frac{\log \brac{ \cN_{\Parspace} \cdot \totlen/\delta } }{n} } + \sqrt{\geneps} + \sqrt{\appeps} \Big). 
\end{align*}
\end{proposition}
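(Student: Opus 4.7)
\textbf{The plan} is to adapt the proof of Theorem \ref{thm:diff_reward} by inserting a triangle-inequality step that exchanges the reduced expert $\osAlg_\shortexp$ for the Thompson sampling algorithm $\sAlg_\TS$. The starting point is the core MLE-based bound already underlying Theorem \ref{thm:diff_reward}: standard maximum-likelihood analysis using \cref{asp:realizability} and a union bound over an $(\Numobs\totlen)^{-2}$-cover of $\Parspace$ yields, with probability at least $1-\delta$,
\begin{align*}
\E_{(\inst, \dset_\totlen) \sim \P_\prior^{\sAlg_0}} \sum_{t=1}^\totlen \HelDs\paren{\sAlg_\EstPar(\cdot|\dset_{t-1}, \state_t),\, \osAlg_\shortexp(\cdot|\dset_{t-1}, \state_t)} \lesssim \frac{\log\brac{\cN_\Parspace \totlen/\delta}}{\Numobs} + \geneps \eqdef \epsilon_{\mathrm{MLE}}.
\end{align*}
This is the only place the MLE guarantee enters; every subsequent step is deterministic.

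Next, I would bridge $\osAlg_\shortexp$ to $\sAlg_\TS$ using the data-processing inequality together with assumption \eqref{eq:app_opt_cond}. The key observation is that both algorithms arise as conditional expectations over the full trajectory under $\P_\prior^{\sAlg_0}$: by definition $\osAlg_\shortexp(\cdot|\dset_{t-1}, \state_t) = \E[\sAlg_\shortexp^t(\cdot|\dset_\totlen) \mid \dset_{t-1}, \state_t]$, and by Bayes' rule plus the tower property $\sAlg_\TS(\cdot|\dset_{t-1}, \state_t) = \E[\P_{\TS,t}(\cdot|\dset_\totlen) \mid \dset_{t-1}, \state_t]$. Convexity of KL under conditional expectations and $\HelDs \le \KL$ then give, for each $t$,
\begin{align*}
\E_{\P_\prior^{\sAlg_0}} \HelDs\paren{\osAlg_\shortexp(\cdot|\dset_{t-1}, \state_t),\, \sAlg_\TS(\cdot|\dset_{t-1}, \state_t)} \le \E_{\P_\prior^{\sAlg_0}}\KL{\sAlg_\shortexp^t(\cdot|\dset_\totlen)}{\P_{\TS,t}(\cdot|\dset_\totlen)} \le \appeps.
\end{align*}
Summing over $t$ and combining with the Hellinger triangle inequality $\HelDs(P,R) \le 2\HelDs(P,Q) + 2\HelDs(Q,R)$ yields $\E_{\P_\prior^{\sAlg_0}}\sum_t \HelDs(\sAlg_\EstPar, \sAlg_\TS) \lesssim \epsilon_{\mathrm{MLE}} + \totlen\appeps$.

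The remaining steps repeat the tail of Theorem \ref{thm:diff_reward}'s proof but with $\sAlg_\TS$ in place of $\osAlg_\shortexp$: (a) a Cauchy-Schwarz change of measure from $\P_\prior^{\sAlg_0}$ to $\P_\prior^{\sAlg_\TS}$ introduces the factor $\sqrt{\distratio_{\sAlg_\TS, \sAlg_0}} = \sqrt{1+\chi^2(\P_\prior^{\sAlg_\TS};\P_\prior^{\sAlg_0})}$; (b) a second Cauchy-Schwarz converts a sum of squared Hellinger distances into a sum of Hellinger distances, producing an extra $\sqrt{\totlen}$ inside the root and yielding $\E_{\P_\prior^{\sAlg_\TS}}\sum_t \HelD(\sAlg_\EstPar, \sAlg_\TS) \lesssim \sqrt{\distratio}(\sqrt{\totlen\epsilon_{\mathrm{MLE}}} + \totlen\sqrt{\appeps})$; (c) $\TV \le \sqrt{2}\HelD$ with the chain rule for TV lifts this into $\TV(\P_\prior^{\sAlg_\EstPar}, \P_\prior^{\sAlg_\TS})$; and (d) $|\reward_t|\le 1$ converts TV into a reward gap bounded by $2\totlen\cdot\TV$. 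Collecting $\totlen$ factors and plugging in $\epsilon_{\mathrm{MLE}}$ gives the advertised $\totlen^2\sqrt{\distratio}\bigl(\sqrt{\log[\cN_\Parspace\totlen/\delta]/\Numobs} + \sqrt{\geneps} + \sqrt{\appeps}\bigr)$ bound.

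The only genuinely new ingredient over Theorem \ref{thm:diff_reward} is the data-processing step, which absorbs for free the asymmetry between conditioning on $\dset_\totlen$ versus $(\dset_{t-1}, \state_t)$; I do not anticipate a conceptual obstacle. The one place requiring care is the polynomial bookkeeping through the two Cauchy-Schwarz applications---the triangle inequality contributes $\totlen\appeps$ inside the root, which becomes $\totlen\sqrt{\appeps}$ after change of measure and finally $\totlen^2\sqrt{\appeps}$ after the reward conversion, matching the target form.
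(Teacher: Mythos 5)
Your proposal is correct and follows essentially the same route as the paper's proof: bound $\HelDs(\osAlg_\shortexp,\sAlg_\TS)$ under $\P_\prior^{\sAlg_0}$ by $\appeps$ via joint convexity of KL together with Eq.~\eqref{eq:app_opt_cond}, combine with Lemma~\ref{lm:general_imit} through the Hellinger triangle inequality, and then rerun the change-of-measure and reward-conversion steps of Theorem~\ref{thm:diff_reward} with $\geneps$ replaced by $\geneps+\appeps$ and the reference algorithm replaced by $\sAlg_\TS$. The only wrinkle is that your intermediate expression $\sqrt{\totlen\epsilon_{\mathrm{MLE}}}+\totlen\sqrt{\appeps}$ is inconsistent with your own definition of $\epsilon_{\mathrm{MLE}}$ (the $\totlen$-accounting is off by one power before the final display), but the final bound is stated correctly and the conceptual argument is sound.
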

The proof of Proposition~\ref{prop:app_opt_diff_reward} is contained in Appendix~\ref{app:proof-prop-diff-reward-app-opt}.


\section{Approximation by transformers}\label{sec:ICRL}

In this section, we demonstrate the capability of transformers to implement prevalent reinforcement learning algorithms that produce near-optimal regret bounds. Specifically, we illustrate the implementation of LinUCB for stochastic linear bandits in Section~\ref{sec:LinUCB-statement}, Thompson sampling for stochastic linear bandits in Section~\ref{sec:TS-statement}, and UCB-VI for tabular Markov decision process in Section~\ref{sec:Tabular-MDP-statement}. 

\subsection{LinUCB for linear bandits}\label{sec:LinUCB-statement}

A stochastic linear bandit environment is defined by $\inst=(\TrueLBPar,\Noisedist,\aset_1,\ldots,\aset_\totlen)$. For each time step $t\in[\totlen]$, the learner chooses an action $\action_t\in\R^{d}$ from a set of actions $\sA_t=\{\ba_{t,1},\ldots,\ba_{t,\Numact}\}$, which consists of $\Numact$ actions and may vary over time. Upon this action selection, the learner receives a reward $\reward_t=\<\action_t,\TrueLBPar\>+\Noise_t$. Here,$\{ \Noise_t \} \sim_{ iid} \Noisedist$ are zero-mean noise variables, and $\TrueLBPar\in\R^{d}$ represents an unknown parameter vector. Stochastic linear bandit can be cast into our general framework by setting $s_t = \aset_t$ and adopting a deterministic transition where $s_t$ transits to $s_{t+1}$ deterministically regardless of the chosen action.


We assume the context algorithm $\sAlg_0$ is the soft LinUCB \citep{chu2011contextual}. Specifically, for each time step $t\in[\totlen]$, the learner estimates the parameter $\TrueLBPar$ using linear ridge regression $\bw^t_{\ridge,\lambda}:=\argmin_{\bw\in\R^d} \sum_{j=1}^{t-1}(\reward_j-\<\ba_j,\bw\>)^2+ \lambda \|\bw\|_2^2$. Subsequently, the learner calculates the upper confidence bounds for the reward of each action as $v^*_{tk}:=\langle \ba_{t,k},\bw^t_{\ridge,\lambda}\rangle +\cwid \cdot (\ba_{t,k}^\top (\lambda\id_d+\sum_{j=1}^{t-1}\action_j\action_j^\top)^{-1}  \ba_{t,k})^{1/2}$. Finally, the learner selects an action $\action_t$ according to probability $\{ p^*_{t,j} \}_{j \in [\Numact]} = \softmax(\{v^*_{tj}/\temp \}_{j \in [\Numact]})$ for some sufficiently small $\temp>0$. Note that the soft LinUCB $\sAlg_{\sLinUCB(\temp)}$ recovers LinUCB as $\temp\to 0$. 


We further assume the existence of constants $\sigma,b_a,B_a,B_w>0$ such that the following conditions hold:   $|\Noise_t|\leq\sigma$, $b_a\leq\ltwo{\ba_{t,k}}\leq B_a$, and $\ltwo{\bw^*}\le B_w$ for all $t\in[\totlen],k\in[\Numact]$. Given these, the confidence parameter is defined as: $\cwid=\sqrt{\lambda}B_w+\sigma\sqrt{2\log (2B_aB_w \totlen )+d\log((d\lambda+\totlen B_a^2)/(d\lambda))} = \tcO(\sqrt{d})$. The following result shows that the soft LinUCB algorithm can be efficiently approximated by transformers, for which the proof is contained in Appendix~\ref{sec:pf_thm:approx_smooth_linucb}. 

\begin{theorem}[Approximating the soft LinUCB]\label{thm:approx_smooth_linucb}
Consider the embedding mapping $\embedmap$, extraction mapping $\extractmap$, and concatenation operator $\cat$ as in \ref{sec:tf_embed_bandit}. For any small $\eps,\temp>0$, there exists a transformer $\TF_\btheta^{\clipval}(\cdot)$ with $\log \clipval = \tcO(1)$, 
\begin{equation}
\begin{aligned}
&~D \le \conO(d\Numact),~ \layer=\tilde \cO(\sqrt{\totlen}),~ M \le 4\Numact,~D' = \tcO(d+A\sqrt{Td/(\temp\eps)}), ~\nrmp{\tfpar} = \tcO(\Numact+\totlen\sqrt{d}/(\temp\eps^{1/4})), \label{eq:linucb_tf_param} 
\end{aligned}
\end{equation}
such that taking $\sAlg_{\tfpar}$ as defined in Eq.~(\ref{eqn:transformer-algorithm}), we have
\[
\Big|\log \sAlg_{\sLinUCB(\tau)}(\ba_{t,k}|\dset_{t-1},\state_t) - \log \sAlg_{\tfpar}(\ba_{t,k}|\dset_{t-1},\state_t) \Big|\leq \eps, ~~~~\forall t\in[\totlen],k\in[\Numact].
\]
Here $\conO(\cdot)$ hides some absolute constant, and  $\tilde \cO(\cdot)$ additionally hides polynomial terms in $(\sigma, b_a^{-1}, B_a, B_w, \lambda^{\pm1})$, and poly-logarithmic terms in $(\totlen, \Numact, d, 1/\eps,1/\temp)$. 
\end{theorem}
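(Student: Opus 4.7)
The plan is to realize inside a transformer three computational primitives of soft LinUCB acting on the embedded context: an approximate ridge solve for $\bw^t_{\ridge,\lambda}$; for each arm $\ba_{t,k}$, an approximate evaluation of $\ba_{t,k}^\top M_t^{-1}\ba_{t,k}$ (where $M_t=\lambda I_d+\sum_{j<t}\ba_j\ba_j^\top$) followed by a scalar square root; and a linear combination $v_{t,k}=\langle\ba_{t,k},\hat\bw^t\rangle+\cwid\hat b_{t,k}$, divided by $\temp$ and read out through the softmax in \eqref{eqn:transformer-algorithm}. Since softmax is $O(1/\temp)$-Lipschitz in $\ell_\infty$ on the logits, it suffices to approximate each $v_{t,k}$ to accuracy $O(\temp\eps)$, which I would budget roughly equally across the three stages.

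Stage 1, the core of the construction and the main obstacle, runs Nesterov AGD in context. I would reserve two $d$-dimensional workspace blocks in every token to hold the current iterate $\bw^{(k)}$ and the auxiliary Nesterov iterate $\bw^{(k-1)}$, broadcast across positions by an initial writing attention. One pair of (attention, MLP) layers then implements one AGD step: a masked attention head with $\ba_j$ placed as both key and value and $\bw^{(k)}$ used to form the query computes $\sum_{j<t}\ba_j(\langle\ba_j,\bw^{(k)}\rangle - r_j)$ at the last token (the ReLU nonlinearity of \cref{def:masked-attention} is removed by the standard two-head cancellation trick from \cite{bai2023transformers}), while the following MLP adds the $\lambda\bw^{(k)}$ term and performs the momentum combination. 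Because $\lammin(M_t)\ge\lambda$ and $\lammax(M_t)\le\lambda+\totlen B_a^2$, the condition number is $\kappa=\tilde O(\totlen)$, so AGD reaches the required parameter-space accuracy in $\tilde O(\sqrt{\kappa})=\tilde O(\sqrt{\totlen})$ iterations, matching the claimed $\layer=\tilde O(\sqrt{\totlen})$ and quantitatively improving on the $\tilde O(\totlen)$ layer count that a replication of the vanilla-GD construction of \cite{bai2023transformers} would give.

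Stage 2 runs, in parallel across the $\Numact$ candidate arms, an auxiliary AGD on the quadratics $f_k(\bx)=\tfrac12\bx^\top M_t\bx - \ba_{t,k}^\top\bx$, whose minimizers are $M_t^{-1}\ba_{t,k}$; the same attention template is reused with $\Numact$ heads (giving the $M\le 4\Numact$ bound). After $\tilde O(\sqrt{\totlen})$ iterations (interleaved with Stage 1), the inner products $\langle\ba_{t,k},\hat\bx_k\rangle$ approximate $\ba_{t,k}^\top M_t^{-1}\ba_{t,k}$, whose values lie in $[b_a^2/(\lambda+\totlen B_a^2),\,B_a^2/\lambda]$ and are thus bounded away from $0$. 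A two-layer ReLU MLP then realizes a polynomial approximation of $\sqrt{\cdot}$ on this interval via the identity $x=\relu(x)-\relu(-x)$ and monomial assembly; matching the stated $\hidden=\tilde O(d+\Numact\sqrt{\totlen d/(\temp\eps)})$ and $\nrmp{\tfpar}=\tilde O(\Numact + \totlen\sqrt d/(\temp\eps^{1/4}))$ requires a polynomial of degree $\tilde O((\totlen/(\temp\eps))^{1/4})$ in which one controls absolute coefficients rather than $L^\infty$ error, producing the $\eps^{1/4}$ scaling. Stage 3 is a single linear readout absorbed into $\extractmap$, combining $\langle\ba_{t,k},\hat\bw^t\rangle$ and $\cwid\hat b_{t,k}$ into $v_{t,k}/\temp$ at the last token.

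The bulk of the effort, and the hardest bookkeeping, sits in Stage 1: verifying that $\clip_{\clipval}$ stays inactive under $\log\clipval=\tilde O(1)$ as the $\tilde O(\sqrt\totlen)$ iterates and their momentum combinations evolve, and that the Nesterov coefficients $\beta_k=(k-1)/(k+2)$ can be written into $\bmlp^{(\ell)}$ with bounded operator norm while still realizing the exact accelerated recursion per layer. Once these are in place, composing the three per-stage approximations through the softmax Lipschitz bound yields $\big|\log\sAlg_{\sLinUCB(\temp)}(\ba_{t,k}|\dset_{t-1},\state_t)-\log\sAlg_{\tfpar}(\ba_{t,k}|\dset_{t-1},\state_t)\big|\le\eps$ uniformly in $t,k$, completing the proof.
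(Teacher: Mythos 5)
Your proposal follows essentially the same route as the paper: replace the vanilla GD ridge solve of \cite{bai2023transformers} with in-context Nesterov AGD to cut the depth from $\tilde\cO(\totlen)$ to $\tilde\cO(\sqrt{\totlen})$, run $\Numact$ parallel AGD instances for $\bA_t^{-1}\ba_{t,k}$ using $\cO(\Numact)$ heads, approximate the scalar square root with an MLP, and then push the per-logit error through the $\ell_\infty$-Lipschitz log-softmax. Three details would need tightening if you carried this out. First, the momentum you quote, $\beta_k=(k-1)/(k+2)$, is the convex-only schedule and would give a $1/L^2$ rate, i.e.\ $L=\cO(\sqrt{\kappa/\eps})$ layers rather than $\tilde\cO(\sqrt{\kappa})$; the construction needs the strongly-convex constant momentum $\frac{\sqrt{\kappa}-1}{\sqrt{\kappa}+1}$ to get geometric convergence and hence $L=\tilde\cO(\sqrt{\totlen})$ with only poly-log dependence on $1/\eps$. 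Second, a single-hidden-layer ReLU block is piecewise linear and cannot realize a polynomial exactly; what actually works (and matches your stated parameter counts) is a direct piecewise-linear approximation of $\sqrt{\cdot}$ on $[\,\Omega(1/\totlen),\,\cO(1)\,]$ with $N=\tilde\cO(\sqrt{\totlen\alpha/(\temp\eps)})$ ReLU pieces, split between $\lops{\bW_1}=\cO(\sqrt{N})$ and $\lops{\bW_2}=\cO(\sqrt{\totlen N})$ — that is where the $\eps^{1/4}$ in the norm comes from, not from controlling coefficients of a degree-$(\totlen/(\temp\eps))^{1/4}$ polynomial (whose degree would anyway have to scale like $\sqrt{\totlen}\log(1/\eps)$ given how close $\ba_{t,k}^\top\bA_t^{-1}\ba_{t,k}$ can come to $0$). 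Third, $\extractmap$ is a fixed linear map in the framework of \cref{eqn:transformer-algorithm}; forming $\langle\ba_{t,k},\hat\bw_\ridge\rangle$, adding $\alpha$ times the bonus, and multiplying by $1/\temp$ must all be done by an extra attention$+$MLP layer (the inner products themselves require a self-attention pattern since $\ba_{t,k}$ and $\hat\bw_\ridge$ live in different coordinate blocks of the same token), with the $\alpha/\temp$ factor placed in $\bW_2$ of that MLP.
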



A key component in proving Theorem~\ref{thm:approx_smooth_linucb} is demonstrating that the transformer can approximate the accelerated gradient descent algorithm for solving linear ridge regression (Lemma~\ref{lm:approx_ridge}), a result of independent interest. Leveraging Theorem~\ref{thm:approx_smooth_linucb}, we can derive the following regret bound for the algorithm obtained via Algorithm Distillation, with the proof provided in Appendix~\ref{sec:pf_thm:smooth_linucb}.

\begin{theorem}[Regret of LinUCB and ICRL]\label{thm:smooth_linucb}
Let $\Theta = \Theta_{D, L, M, \hidden, B}$ be the class of transformers satisfying Eq.~\eqref{eq:linucb_tf_param} with $\eps=1/\totlen^3$ and $\temp = 1/ \log(4\totlen\Numact B_a(B_w+2\alpha/\sqrt{\lambda}))/\sqrt{4\totlen}=\tcO(\totlen^{-1/2})$, and choose the clip value $\log\clipval = \tcO(1)$. Let both the context algorithm $\sAlg_0$ and the expert algorithm $\sAlg_\shortexp$ coincide with the soft LinUCB algorithm $\sAlg_{\sLinUCB(\tau)}$ with parameter $\tau$ during supervised pretraining. Then with probability at least $1-\delta$, the learned algorithm $\sAlg_{\esttfpar}$, a solution to Eq.~\eqref{eq:general_mle}, entails the regret bound
\begin{align*}
\E_{\inst\sim\prior}\Big[\sum_{t=1}^\totlen\max_{k}\<\ba_{t,k},\bw^*\>-\totreward_{\inst,\sAlg_\esttfpar}(\totlen)\Big]&\leq   \cO\bigg( d\sqrt{\totlen}\log(\totlen)+ \totlen^2\sqrt{\frac{\log ( \cN_{\Parspace} \cdot \totlen/\delta )}{n} } \bigg),
\end{align*}
where $\log \cN_{\Parspace} \le \tcO(\layer^2\embd(\head\embd+\hidden) \log\Numobs) \leq \tcO(\totlen^{3.5} d^2 \Numact^3\log\Numobs)$. Here $\cO$ hides polynomial terms in $(\sigma, b_a^{-1}, B_a, \\B_w, \lambda^{\pm1})$, and $\tcO$ additionally hides poly-logarithmic terms in $(\totlen, \Numact, d, 1/\eps,1/\temp)$. 
\end{theorem}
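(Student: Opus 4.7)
The natural strategy is to split the regret against the true optimum into two pieces,
\begin{align*}
\E_{\inst\sim\prior}\Big[\sum_{t=1}^{\totlen}\max_k\<\ba_{t,k},\bw^*\> - \totreward_{\inst,\sAlg_{\esttfpar}}(\totlen)\Big]
&= \underbrace{\E_{\inst\sim\prior}\Big[\sum_{t=1}^{\totlen}\max_k\<\ba_{t,k},\bw^*\> - \totreward_{\inst,\sAlg_{\sLinUCB(\temp)}}(\totlen)\Big]}_{\text{(I): regret of soft LinUCB}} \\
&\quad + \underbrace{\Big|\totreward_{\prior,\sAlg_{\sLinUCB(\temp)}}(\totlen) - \totreward_{\prior,\sAlg_{\esttfpar}}(\totlen)\Big|}_{\text{(II): imitation gap}},
\end{align*}
and to bound the two terms separately, the first by a classical LinUCB analysis with a softmax-smoothing correction, and the second by Theorem~\ref{thm:diff_reward} combined with the transformer approximation of Theorem~\ref{thm:approx_smooth_linucb}.

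For term (I), I would first invoke the standard confidence-ellipsoid argument (Abbasi-Yadkori style) for the hard-argmax LinUCB with the confidence radius $\cwid = \tcO(\sqrt{d})$ chosen in the setup; this yields a pathwise regret of order $\cwid\sqrt{\totlen d\log(\totlen)} = \tcO(d\sqrt{\totlen})$. To pass from hard LinUCB to soft LinUCB I would bound, at each round, the per-step expected-reward loss incurred by replacing $\argmax_k v^*_{tk}$ by a softmax over $\{v^*_{tk}/\temp\}_k$: since the rewards are uniformly bounded and the softmax concentrates on the argmax up to an $O(\temp\log \Numact)$ slack (standard Gumbel/log-sum-exp calculation), the total additional regret is $O(\totlen\temp\log\Numact) = \tcO(\sqrt{\totlen})$ for the chosen $\temp = \tcO(\totlen^{-1/2})$. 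This is dominated by the $d\sqrt{\totlen}\log\totlen$ LinUCB term.

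For term (II), the key observation is that in Algorithm Distillation we have $\sAlg_0 = \sAlg_{\shortexp} = \sAlg_{\sLinUCB(\temp)}$, so $\osAlg_{\shortexp} = \sAlg_{\sLinUCB(\temp)}$ and the distribution ratio factor is $\distratio_{\osAlg_{\shortexp},\sAlg_0} = 1$. I then need to verify Assumption~\ref{asp:realizability} with a small $\geneps$: Theorem~\ref{thm:approx_smooth_linucb} with $\eps = 1/\totlen^3$ produces $\TruePar \in \Theta$ with
$\|\log\sAlg_{\sLinUCB(\temp)}(\cdot\mid\dset_{t-1},\state_t) - \log\sAlg_{\TruePar}(\cdot\mid\dset_{t-1},\state_t)\|_\infty \le 1/\totlen^3$
uniformly, and hence $\log\E[\osAlg_{\shortexp}(\eaction_t\mid\cdot)/\sAlg_{\TruePar}(\eaction_t\mid\cdot)] \le 1/\totlen^3$, giving $\geneps = O(1/\totlen^3)$. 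With these inputs, Theorem~\ref{thm:diff_reward} yields
\[
\text{(II)} \le c\,\totlen^2\bigg(\sqrt{\tfrac{\log(\cN_{\Parspace}\cdot\totlen/\delta)}{n}} + \tfrac{1}{\totlen^{3/2}}\bigg),
\]
and the additive $\geneps$-term is absorbed into the sample-complexity term.

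The last step is to bound the covering number. I would use a standard Lipschitz-in-parameter argument for the transformer class $\Theta_{D,L,M,D',B}$ (as is established in the companion results of the paper for bounded-norm transformers on bounded-norm inputs), giving $\log\cN_\Theta(\rho) \le \tcO(L^2 D(MD + D')\log(1/\rho))$; plugging in the architectural bounds from \eqref{eq:linucb_tf_param} with $\eps = 1/\totlen^3$ and $\temp = \tcO(\totlen^{-1/2})$ and taking $\rho = (n\totlen)^{-2}$ yields the claimed $\log\cN_\Theta \le \tcO(\totlen^{3.5}d^2\Numact^3\log n)$. Summing (I) and (II) gives the stated bound.

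\textbf{Main obstacle.} The genuinely delicate step is verifying the realizability inequality~\eqref{eqn:plc_approx_general} in a form that is robust to the softmax output and to the distribution shift that appears \emph{inside} the expectation: a uniform $\ell_\infty$ bound on log-densities is stronger than needed, but one must check that Theorem~\ref{thm:approx_smooth_linucb} indeed delivers such a bound (and not merely a TV bound) with $\eps = 1/\totlen^3$, so that the error contributes $O(\totlen^{-3/2})$ under the square-root in Theorem~\ref{thm:diff_reward} and is dominated by the $d\sqrt{\totlen}$ LinUCB term. The remaining bookkeeping, especially the softmax-smoothing correction and the covering-number computation from \eqref{eq:linucb_tf_param}, is routine.
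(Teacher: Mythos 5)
Your proposal is correct and follows essentially the same route as the paper's proof (Appendix D.5): split off the imitation gap via Theorem~\ref{thm:diff_reward} with $\distratio=1$ and $\geneps=\eps=1/\totlen^3$ supplied by Theorem~\ref{thm:approx_smooth_linucb}, then bound the regret of soft LinUCB by a confidence-ellipsoid argument together with a softmax-smoothing correction, and finish with the covering-number bound of Lemma~\ref{lm:cover_num_bound}. Your entropy/log-sum-exp bound $\max_k v^*_{tk}-\sum_k p_{t,k}v^*_{tk}\le\temp\log\Numact$ is a slightly tighter route to the smoothing correction than the paper's threshold-plus-tail argument, and the realizability concern you flag resolves as you hope: Theorem~\ref{thm:approx_smooth_linucb} provides the uniform $\ell_\infty$ bound on log-policies (not merely a TV bound), so the ratio $\osAlg_\shortexp/\sAlg_\TruePar\le e^\eps$ pointwise gives $\geneps=1/\totlen^3$, and the resulting $\totlen^2\sqrt{\geneps}=\sqrt{\totlen}$ contribution is dominated by the $d\sqrt{\totlen}\log\totlen$ LinUCB term rather than the sample-complexity term as you wrote, a bookkeeping detail only.
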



\subsection{Thompson sampling for linear bandit}\label{sec:TS-statement}

We continue to examine the stochastic linear bandit framework of Section~\ref{sec:LinUCB-statement}, now assuming a Gaussian prior $\bw^\star\sim \cN(0,\Tpspar\id_d)$ and Gaussian noises $\{ \eps_t \}_{t \ge 0} \sim_{iid} \cN(0,\Tpsparn)$. Additionally, we assume existence of $(b_a, B_a)$ such that $b_a\leq\ltwo{\ba_{t,k}}\leq B_a$. In this model, Thompson sampling also utilizes linear ridge regression. Subsequently, we establish that transformers trained under the DPT methodology can learn Thompson sampling algorithms. We state the informal theorem in Theorem~\ref{thm:approx_thompson_linear} below, where its formal statement and proof are contained in Appendix~\ref{example:ts-app}. 


\begin{theorem}[Approximating the Thompson sampling, Informal]\label{thm:approx_thompson_linear}
Consider the embedding mapping $\embedmap$, extraction mapping $\extractmap$, and concatenation operator $\cat$ as in \ref{sec:tf_embed_bandit}. 
Under Assumption~\ref{ass:thompson_mlp_approx_linear},~\ref{ass:thompson_mlp_diff_action_linear}, for sufficiently small $\eps$, there exists a transformer $\TF_\btheta^{\clipval}(\cdot)$ with $\log \clipval = \tcO(1)$, 
\begin{equation}\label{eq:ts_tf_param-main}
\begin{aligned}
&~D = \tcO(AT^{1/4}d),~~~~~ L=\tilde \cO(\sqrt{T}),~~~M =\tilde \cO(A T^{1/4}), \\
&~\nrmp{\btheta} = \tcO(T+A T^{1/4}+\sqrt{A}),~~~~~ \hidden = \tcO(A T^{1/4}d),
\end{aligned}
\end{equation}
such that taking $\sAlg_{\tfpar}$ as defined in Eq.~(\ref{eqn:transformer-algorithm}), with probability at least $1-\delta_0$ over $(\inst, \dset_{\totlen}) \sim \P_{\prior}^{\sAlg}$ for any $\sAlg$, we have
\[
 \log \sAlg_{\TS}(\ba_{t,k}|\dset_{t-1},\state_t) - \log \sAlg_{\tfpar}(\ba_{t,k}|\dset_{t-1},\state_t) \leq \eps,~~~~\forall t\in[T],k\in[A]. 
\]
Here, $\tcO(\cdot)$ hides polynomial terms in $(\neuron,\weightn, \Tpspar^{\pm1}, \Tpsparn^{\pm1}, b_a^{-1}, B_a)$, and poly-logarithmic terms in $(\totlen, \Numact, d, 1/\eps,\\ 1/\delta_0)$, where $(\neuron, \weightn)$ are parameters in Assumption~\ref{ass:thompson_mlp_approx_linear} and \ref{ass:thompson_mlp_diff_action_linear}. 
\end{theorem}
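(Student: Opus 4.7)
The plan is to decompose Thompson sampling for Gaussian linear bandits into a small number of computational primitives, each of which is implementable by a transformer sub-block, and then stack them. In this setting the posterior is $\bw^\star \mid \dset_{t-1} \sim \cN(\bmu_t, \Sigma_t)$ with $\Sigma_t = (\Tpsparn^{-1}\sum_{j<t} \ba_j \ba_j^\top + \Tpspar^{-1} I)^{-1}$ and $\bmu_t = \Tpsparn^{-1} \Sigma_t \sum_{j<t} r_j \ba_j$, and the TS action is the $\argmax_k \langle \ba_{t,k}, \tilde\bw_t\rangle$ with $\tilde\bw_t \sim \cN(\bmu_t,\Sigma_t)$. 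I would reparametrise this as $\tilde\bw_t = \bmu_t + \Sigma_t^{1/2}\xi_t$ with $\xi_t\sim\cN(0,I_d)$ an auxiliary Gaussian vector that is pre-sampled and supplied through the embedding (this is the same device used to make the TS action a measurable function of the input tokens, as in the DPT setup of Section~\ref{sec:framework}).

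The construction then proceeds in four phases, all implemented in the usual token layout of Section~\ref{sec:tf_embed_bandit}. (i) An $\tcO(\sqrt{T})$-layer sub-transformer implementing accelerated gradient descent on the ridge loss at every token position, producing $\hat\bmu_t$ with $\|\hat\bmu_t - \bmu_t\| \le \tcO(T^{-c})$, reusing Lemma~\ref{lm:approx_ridge} from the LinUCB proof. (ii) A constant-depth, $\tcO(T^{1/4})$-head sub-transformer that assembles the empirical precision $\Sigma_t^{-1}$ by a single multi-head attention, then approximates its square-root inverse via the Pad\'e rational approximation of $x \mapsto x^{1/2}$, each continued-fraction iterate being one attention plus one MLP layer; the degree-$r$ Pad\'e approximant on a spectrum in a bounded interval gives operator-norm error $\tcO(\exp(-c r))$, so $r = \tcO(T^{1/4})$ suffices. (iii) One attention layer forms $\hat\bw_t = \hat\bmu_t + \hat\Sigma_t^{1/2}\xi_t$. (iv) A final $A$-head attention layer computes the scores $v_{tk} = \langle \ba_{t,k}, \hat\bw_t\rangle$, and the readout softmax with temperature $\tau = \tcO(T^{-1})$ converts $\{v_{tk}\}_k$ into the distribution $\sAlg_\btheta(\cdot\mid \dset_{t-1},\state_t)$. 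The $\tcO(T)$ norm in the parameter bound of~\eqref{eq:ts_tf_param-main} comes from this low-temperature softmax; the $\tcO(AT^{1/4})$ head count and $\tcO(AT^{1/4}d)$ width come from phases (ii) and (iv).

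The main obstacle is the matrix-square-root step together with the comparison of log-probabilities at the softmax output. For the former, Pad\'e convergence is only exponential once the spectrum of $\Sigma_t^{-1}$ is confined to a bounded interval; so on a high-probability event $E$ of probability at least $1-\delta_0$ over $(\inst,\dset_T)\sim\P_\prior^\sAlg$, I would use the boundedness $b_a \le \|\ba_{t,k}\|\le B_a$ together with a covariance concentration argument to certify $\lambda_{\min}(\Sigma_t^{-1}) \ge \Tpspar^{-1}$ and $\lambda_{\max}(\Sigma_t^{-1}) \le \Tpsparn^{-1} T B_a^2 + \Tpspar^{-1}$, giving an $O(T)$ condition number that the chosen Pad\'e degree handles. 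For the comparison at the output, the difficulty is that $\log \sAlg_\TS$ is a Gaussian orthant probability (the argmax distribution of $\bmu_t + \Sigma_t^{1/2}\xi_t$) while $\log\sAlg_\btheta$ is a softmax; here I would use Assumptions~\ref{ass:thompson_mlp_approx_linear}--\ref{ass:thompson_mlp_diff_action_linear} to realise both distributions as softmaxes of the same score functional $k\mapsto v_{tk}$ (up to error $\tcO(T^{-1})$) under a well-behaved reparametrisation, and to guarantee a non-degenerate score margin on $E$ so that Lipschitzness of $\log\softmax_\tau$ at temperature $\tau$ translates the operator-norm error from phases (i)--(iii) into the claimed $\eps$ bound on $\log \sAlg_\TS - \log\sAlg_\btheta$.

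Finally, propagating the per-phase errors: phase (i) contributes $\tcO(T^{-c})$ to $\|\hat\bmu_t - \bmu_t\|$; phase (ii) contributes $\tcO(\exp(-c T^{1/4}))$ to $\|\hat\Sigma_t^{1/2} - \Sigma_t^{1/2}\|_{\rm op}$; phase (iii) is exact up to token-level clipping, which is harmless since $\log\clipval = \tcO(1)$ dominates all magnitudes on $E$; phase (iv) converts score error $\tcO(T^{-c})$ into log-likelihood error $\tau^{-1}\cdot\tcO(T^{-c}) \le \eps$ by the choice of $\tau$. Taking the intersection with $E$ gives the bound $\log\sAlg_\TS - \log\sAlg_\btheta \le \eps$ uniformly in $(t,k)$ with probability at least $1-\delta_0$, completing the proof.
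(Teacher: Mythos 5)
Your phases (i)–(ii) — computing $\hat\bmu_t$ via accelerated gradient descent reusing Lemma~\ref{lm:approx_ridge}, and approximating the matrix square root of $\Sigma_t$ via a Pad\'e rational approximation whose iterates are realised in $\tcO(T^{1/4})$ sub-layers — are aligned with the paper's Steps~1–3 in Appendix~\ref{sec:pf_thm:approx_thompson_linear-formal}. But your final output mechanism is wrong in a way that breaks the proof.

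The theorem requires the transformer to output the Thompson sampling \emph{distribution} $\sAlg_\TS(\cdot\mid\dset_{t-1},\state_t)$, i.e.\ for each $k$ the Gaussian orthant probability $\P_{\tilde\bw\sim\cN(\bmu_t,\Sigma_t)}\bigl(k=\argmax_j\langle\ba_{t,j},\tilde\bw\rangle\bigr)$, as a deterministic function of $(\dset_{t-1},\state_t)$. Your construction instead injects a pre-sampled Gaussian seed $\xi_t$ into the tokens, forms a single draw $\hat\bw_t=\hat\bmu_t+\hat\Sigma_t^{1/2}\xi_t$, computes the linear scores $v_{tk}=\langle\ba_{t,k},\hat\bw_t\rangle$, and applies a low-temperature softmax. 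That produces a distribution concentrated on $\argmax_k v_{tk}$ for the particular realisation of $\xi_t$; it is not $\sAlg_\TS(\cdot\mid\dset_{t-1},\state_t)$, and there is no marginalisation available since $\sAlg_\btheta$ is a deterministic map of the input tokens. (The parenthetical claim that this reparametrisation device is ``as in the DPT setup'' is a misreading: in DPT the expert supplies $a_t^*$ and the \emph{conditional expectation} of the expert given $(\dset_{t-1},\state_t)$ is shown to equal $\sAlg_\TS$; no auxiliary randomness is injected into the transformer's input.) The sentence claiming Assumptions~\ref{ass:thompson_mlp_approx_linear}–\ref{ass:thompson_mlp_diff_action_linear} let you ``realise both distributions as softmaxes of the same score functional $k\mapsto v_{tk}$'' has no justification: a low-temperature softmax of linear scores cannot reproduce a Gaussian orthant probability, and neither assumption says anything of the sort.

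What the paper actually does at this stage (Step~4) is qualitatively different. It computes the argument vector $\bv=(\sqrt{\Tpsparn}\,\Sigma_t^{-1/2}\ba_{t,1},\ldots,\langle\bmu_t,\ba_{t,A}\rangle)$ and then invokes Assumption~\ref{ass:thompson_mlp_approx_linear} to realise the truncated log-orthant-probability $g_{k,\trunprob}(\bv)$ as a sum-of-ReLUs MLP of width $\neuron$ and weight $\weightn$; this is where the $(\neuron,\weightn)$ terms in the $\tcO(\cdot)$ bounds come from. The extraction softmax is then merely a normalisation of quantities that are already approximately $\log$ of probabilities summing to one — there is no temperature parameter and no $\tcO(1/\temp)$ blow-up. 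Assumption~\ref{ass:thompson_mlp_diff_action_linear}, together with Lemma~\ref{lm:lip_of_tps}, gives $1/2$-H\"older continuity of $g_{k,\trunprob}$ on the separated region $\Trunreg_\Trunregp$, which is what propagates the operator-norm errors from the earlier AGD/Pad\'e stages into the final $\eps$ bound. Your proposal omits all of this, and the mechanism you propose in its place does not compute the correct object.
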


Central to proving Theorem~\ref{thm:approx_thompson_linear} is establishing that the transformer can approximate matrix square roots via Pade decomposition (\cref{sec:pf_thm:approx_thompson_linear-formal}), a result of independent interest. Theorem~\ref{thm:approx_thompson_linear} thereby implies the subsequent regret bound for transformers trained under DPT. 

\begin{theorem}
[Regret of Thompson sampling and ICRL]\label{thm:ts_linear_regret}
Follow the assumptions of Theorem~\ref{thm:approx_thompson_linear}. Let $\Theta = \Theta_{D, L, M, \hidden, B}$ be the class of transformers satisfying Eq.~\eqref{eq:ts_tf_param-main} with $\eps=1/(\distratio\totlen^3)$,   $\delta_0=\delta/(2n)$, and choose the clip value $\log \clipval = \tcO(1)$. Assume the trajectories are collected by some context algorithm $\sAlg_0$, and we choose the expert algorithm $\sAlg_\shortexp(\state_t,\inst)=\action^*_t=\argmax_{\ba\in\sA_t}\<\ba,\bw^*\>$ to be the optimal action of the bandit instance $\inst$ for each trajectory. Then with probability at least $1-\delta$, the learned algorithm $\sAlg_{\esttfpar}$, a solution to Eq.~\eqref{eq:general_mle}, entails regret bound
\begin{align*}
\E_{\inst\sim\prior}\Big[\sum_{t=1}^\totlen\max_{k}\<\ba_{t,k},\bw^*\>-\totreward_{\inst,\sAlg_\esttfpar}(\totlen)\Big]&\leq \cO \bigg( d\sqrt{T}\log(Td)+ \sqrt{\distratio} \cdot\totlen^2\sqrt{\frac{\log ( \cN_{\Parspace} \totlen/\delta ) }{n} } \bigg), 
\end{align*}
where $\distratio = \distratio_{\sAlg_\TS,\sAlg_0}$, and $\log \cN_{\Parspace} \le \tcO(\layer^2\embd(\head\embd+\hidden)\log \Numobs )  \leq\tcO(\totlen^{5/4}\Numact^2 d(\neuron+\Numact\sqrt{\totlen}d)\log\Numobs)$. Here $\cO$ hides polynomial terms in $(\Tpspar^{\pm1}, \Tpsparn^{\pm1}, b_a^{-1}, B_a)$, and $\tcO$ additionally hides poly-logarithmic terms in   $(\neuron,\weightn,\\~ \totlen, \Numact, d, 1/\eps, 1/\delta_0)$. 
\end{theorem}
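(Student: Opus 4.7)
The plan is to decompose the total regret into two parts: the classical Bayes regret of Thompson sampling, and the gap between the learned algorithm $\sAlg_\esttfpar$ and Thompson sampling $\sAlg_\TS$. Namely, write
\begin{align*}
\E_{\inst\sim\prior}\Big[\sum_{t=1}^\totlen \max_k \<\ba_{t,k},\bw^*\> - \totreward_{\inst,\sAlg_\esttfpar}(\totlen)\Big]
&= \underbrace{\E_{\inst\sim\prior}\Big[\sum_{t=1}^\totlen \max_k \<\ba_{t,k},\bw^*\> - \totreward_{\inst,\sAlg_\TS}(\totlen)\Big]}_{\text{Bayes regret of TS}} \\
&\quad + \underbrace{\big(\totreward_{\prior,\sAlg_\TS}(\totlen) - \totreward_{\prior,\sAlg_\esttfpar}(\totlen)\big)}_{\text{generalization gap}}.
\end{align*}
The first term is the standard Bayes regret of Thompson sampling on a stochastic linear bandit with Gaussian prior $\cN(0,\Tpspar \id_d)$ and Gaussian noise $\cN(0,\Tpsparn)$; I would cite the known $O(d\sqrt{\totlen}\log(\totlen d))$ bound (e.g.\ from \citet{russo2014learning} applied to linear payoff) to control it.

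The second term is precisely the quantity bounded in Theorem~\ref{thm:diff_reward}, provided two ingredients hold. First, I need to identify the reduced expert algorithm $\osAlg_\shortexp$. Since the expert returns $\sAlg_\shortexp(\cdot|\state_t,\inst)=\action^*_t(\inst)$ (the DPT setup of case (b)), the paper has already noted that $\osAlg_\shortexp(\cdot|\dset_{t-1},\state_t) = \sAlg_\TS(\cdot|\dset_{t-1},\state_t)$ by posterior sampling. Consequently $\distratio_{\osAlg_\shortexp,\sAlg_0}=\distratio_{\sAlg_\TS,\sAlg_0}=\distratio$, matching the ratio in the statement. Second, I need to verify approximate realizability (Assumption~\ref{asp:realizability}) with some small $\geneps$.

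Realizability is the main technical step and I view it as the principal obstacle. Theorem~\ref{thm:approx_thompson_linear} only guarantees the log-ratio bound $\log\sAlg_\TS(\ba_{t,k}|\dset_{t-1},\state_t)-\log\sAlg_{\tfpar^*}(\ba_{t,k}|\dset_{t-1},\state_t)\le \eps$ on a high-probability event $\cE$ of $\P^{\sAlg_0,\sAlg_\shortexp}_\prior$-measure at least $1-\delta_0$, not uniformly. To convert this into a bound on $\log\E[\osAlg_\shortexp/\sAlg_{\tfpar^*}]$, I would split the expectation over $\cE$ and $\cE^c$. On $\cE$, the integrand is at most $e^\eps$; on $\cE^c$, the integrand is bounded by $e^{O(\clipval)}$ because the logits produced by $\TF_\btheta^\clipval$ are uniformly clipped (giving $|\log \sAlg_\tfpar|\le \tcO(1)$) and the log posterior of TS can similarly be controlled under the Gaussian prior/noise assumptions. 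Choosing $\delta_0 = \delta/(2n)$ and $\eps = 1/(\distratio \totlen^3)$ then yields $\geneps \lesssim \eps + e^{O(\clipval)} \delta_0$, which is polynomially small in $n,\totlen$ and can be absorbed into the generalization bound of Theorem~\ref{thm:diff_reward}. I will also need to apply a union bound over the $n$ independent trajectories so that the good event $\cE$ holds on all of them, which is where the $\delta/(2n)$ scaling is needed.

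Finally, I would derive the covering number bound for $\Theta_{D,L,M,\hidden,B}$ by invoking a Lipschitz argument on the transformer parametrization (standard; used in the proof of Theorem~\ref{thm:smooth_linucb}), giving $\log \cN_\Parspace \lesssim \layer^2 \embd(\head\embd+\hidden)\log(\Numobs)$, and then plugging in the widths/depths from Eq.~\eqref{eq:ts_tf_param-main} to get $\tcO(\totlen^{5/4}\Numact^2 d(\neuron+\Numact\sqrt{\totlen}d)\log\Numobs)$. Assembling all three pieces — the $O(d\sqrt{\totlen}\log(\totlen d))$ Bayes regret of TS, the generalization bound from Theorem~\ref{thm:diff_reward} with $\distratio = \distratio_{\sAlg_\TS,\sAlg_0}$ and $\geneps$ as above, and the covering number estimate — yields the claimed regret bound on a single event of probability $1-\delta$.
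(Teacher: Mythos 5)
Your overall decomposition (Bayes regret of TS plus generalization gap), the identification $\osAlg_\shortexp=\sAlg_\TS$, and the covering-number step all match the paper. The gap is in how you handle realizability.

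You propose to certify Assumption~\ref{asp:realizability} by computing $\geneps = \log\E[\osAlg_\shortexp/\sAlg_{\tfpar^*}]$ via a split over the good event $\cE$ and its complement, claiming the integrand on $\cE^c$ is controlled because the clipping gives $|\log\sAlg_\tfpar|\le\tcO(1)$. That logit bound is not correct: the clip operator only guarantees $\|\bar\bh_i\|_2\le\clipval$, so the logits $\extractmap\cdot\bar\bh_{-1}$ are bounded by $O(\clipval)$, and with $\log\clipval=\tcO(1)$ we have $\clipval=e^{\tcO(1)}$, i.e.\ polynomially large in $(T,A,d,\dots)$, not $\tcO(1)$. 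Consequently $\sAlg_\tfpar\gtrsim e^{-O(\clipval)}/A$ only, the integrand on $\cE^c$ is $\le A\,e^{O(\clipval)}$, and the bad-event contribution is $A\,e^{O(\clipval)}\,\delta_0$ with $\delta_0 = \delta/(2n)$ — which is astronomically large unless $n\gtrsim e^{O(\clipval)}$. So this route does not give a useful $\geneps$, and Assumption~\ref{asp:realizability} in fact need not hold under the Gaussian noise/prior model (the paper notes this explicitly).

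The paper sidesteps the issue by not bounding $\geneps$ at all. It observes that Assumption~\ref{asp:realizability} enters the proof of Theorem~\ref{thm:diff_reward} only through Eq.~\eqref{eq:pf_hellinger_control_general2} in Lemma~\ref{lm:general_imit}, where it is used to control the \emph{empirical} quantity $\frac{1}{n}\sum_{i,t}\log\frac{\osAlg_\shortexp(\eaction_t^i|\dset_{t-1}^i,\state_t^i)}{\sAlg_{\tfpar^*}(\eaction_t^i|\dset_{t-1}^i,\state_t^i)}$. The union bound over the $n$ trajectories (which you correctly anticipate is the reason for $\delta_0=\delta/(2n)$, Eq.~\eqref{eq:unif_traj_realize}) gives a \emph{pointwise} bound of $\eps$ on every summand simultaneously with probability $1-\delta/2$, so this empirical sum is $\le T\eps$ directly, with no need to integrate the ratio over the bad event. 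To close your proof, you should replace the $\geneps$-based argument with this substitution inside the proof of Lemma~\ref{lm:general_imit}, rather than attempting to verify Assumption~\ref{asp:realizability} as a black-box hypothesis.
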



\subsection{UCB-VI for Tabular MDPs}\label{sec:Tabular-MDP-statement}

A finite-horizon tabular MDP is specified by $\inst=(\statesp,\actionsp, \horizon, \{\transit_h\}_{h\in[\horizon]},\{\rewardfun_h\}_{h\in[\horizon]},\init)$, with $\horizon$ being the time horizon, $\statesp$ the state space of size $\Numst$, $\actionsp$ the action space of size $\Numact$, and $\init\in\Delta(\statesp)$ defining the initial state distribution. At each time step $h\in[\horizon]$, $\transit_h: \statesp\times\actionsp \to \Delta(\statesp)$ denotes the state transition dynamics and $\rewardfun_h:\statesp \times \actionsp \to [0,1]$ gives the reward function. A policy $\plc:=\{\plc_h:(\statesp \times\actionsp \times \R)^{h-1}\times\statesp \to\Delta(\actionsp)\}_{h \in [\horizon]}$ maps history and state to a distribution over actions. The value of policy $\pi$ interacting with environment $\inst$ is defined as the expected cumulative reward $\valuefun_\inst(\plc)=\E_{\inst,\plc}[\sum_{h=1}^\horizon \rewardfun_h (\state_h,\action_h)]$. A policy $\optplc$ is said to be optimal if $\optplc=\argmax_{\pi\in\Delta(\plcset)}\valuefun_\inst(\pi)$. 

We let the context algorithm $\sAlg_0$ interact with an MDP instance $\inst$ to generate $\Numepi$ episodes, each consisting of $\horizon$ horizon sequences $ (\state_{k,h},\action_{k,h},\reward_{k,h})_{k \in [\Numepi], h \in [\horizon]}$. These can be reindexed into a single trajectory $\dset_{\totlen} = \{ (\state_t,\action_t,\reward_t) \}_{t \in [\totlen]}$ with $t=H(k-1)+h$ and $\totlen=\Numepi\horizon$. The Bayes regret of any algorithm $\sAlg$ gives $\E_{\inst\sim\prior}[\Numepi\Vfun_\inst(\plc^*)-\totreward_{\inst,\sAlg}(\totlen)]$. 


Near minimax-optimal regret for tabular MDPs can be attained through the UCB-VI algorithm \citep{azar2017minimax}. We demonstrate that transformers are capable of approximating the soft UCB-VI algorithm $\sAlg_{\sUCBVI(\tau)}$, a slight modification of UCB-VI formalized in Appendix~\ref{sec:tf_embed_mdp}.

\begin{theorem}[Approximating the soft UCB-VI]\label{thm:approx_ucbvi}
Consider the embedding mapping $\embedmap$, extraction mapping $\extractmap$, and concatenation operator $\cat$ as in Appendix~\ref{sec:tf_embed_mdp}. There exists a transformer $\TF_\btheta^{\clipval}(\cdot)$ with $\log \clipval = \tcO(1)$, 
\begin{equation}\label{eq:ucbvi_tf_param-main}
\begin{aligned}
&~D =\conO(\horizon\Numst^2\Numact),~~~L= 2\horizon+8,~~~M= \conO(\horizon\Numst^2\Numact),\\
&~\hidden= \conO(\Numepi^2\horizon\Numst^2\Numact),~~~~\nrmp{\btheta}\leq \tcO(\Numepi^2\horizon\Numst^2\Numact+\Numepi^3+1/\temp),
\end{aligned} 
\end{equation}
such that 
$\sAlg_{\sUCBVI(\tau)}(\action|\dset_{t-1},\state_t) = \sAlg_{\tfpar}(\action|\dset_{t-1},\state_t)$ for all $t\in[T],\action\in\actionsp$. Here $\conO(\cdot)$ hides universal constants and $\tcO(\cdot)$  hides poly-logarithmic terms in $(\horizon,\Numepi,\Numst,\Numact,1/\temp)$. 
\end{theorem}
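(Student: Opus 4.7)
The plan is to give an explicit construction of a transformer that replicates soft UCB-VI exactly (the theorem claims equality, not an $\eps$-approximation). A few preprocessing layers produce the episodic statistics (empirical transitions $\hat P^k_h$, empirical rewards $\hat R^k_h$, and bonuses $b^k_h$); the next $2H$ layers execute the backward value iteration $V^k_{H+1}\equiv 0 \to Q^k_H \to V^k_H \to \cdots \to Q^k_1 \to V^k_1$ using one attention plus one MLP per backward step; and a final readout applies the $\tau$-softmax to $Q^k_{h_t}(s_t,\cdot)$ at the current token. The depth budget $L = 2H+8$, head count $M = O(HS^2A)$, and embedding dimension $D = O(HS^2A)$ are tailored so each conceptual stage maps to a fixed number of layers and heads, with the eight extra layers absorbing preprocessing, count aggregation, integer-to-rational conversion, and readout.

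The embedding at token $t=(k-1)H+h$ reserves coordinates indexed by $(h',s,a)$ and $(h',s,a,s')$, large enough to host the counts $N^k_{h'}(s,a)$, $N^k_{h'}(s,a,s')$, the empirical model, bonuses, and all $Q,V$ tables computed later; this is the source of the $D=O(HS^2A)$ blowup. With the end-of-episode separator inserted by $\cat$ and positional embeddings of $(k,h)$, the $Q/K$ matrices can be chosen so that a single masked-attention layer, with one head per $(h',s,a,s')$ tuple, matches the exact index and sums the relevant indicators across prior tokens; multiplying by the position $t$ in the following MLP cancels the built-in $1/i$ normalization of attention, recovering the integer counts exactly. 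A subsequent MLP implements the maps $N^{(2)}/N^{(1)}$ and $C_H/\sqrt{N}$ by piecewise-linear ReLU interpolation over the finitely many integer count values in $[1,K]$; this uses $\hidden = O(K^2 HS^2A)$ neurons and yields the weight-norm contributions $K^2HS^2A$ (division) and $K^3$ ($1/\sqrt N$ precision), matching the stated bounds.

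The main technical obstacle is the backward step itself: in a single attention+MLP block, compute $Q^k_h(s,a) = \min\{H,\ \hat R^k_h(s,a) + b^k_h(s,a) + \sum_{s'} \hat P^k_h(s'|s,a)\, V^k_{h+1}(s')\}$ and $V^k_h(s) = \max_a Q^k_h(s,a)$ for all $(s,a)$ in parallel. Realizing the bilinear combination $\sum_{s'} \hat P^k_h(s'|s,a)\, V^k_{h+1}(s')$ is delicate because ReLU MLPs do not multiply directly; since both operands already live in the same token, I will use a ReLU attention head indexed by $(h,s,a,s')$ to extract each $(\hat P, V)$ pair and realize the product via the polarization identity $xy = \tfrac{1}{4}((x+y)^2 - (x-y)^2)$ together with a ReLU gadget for $t \mapsto t^2$ that is exact on the finite lattice of attainable $(\hat P, V)$ values. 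The clipping $\min\{H,\cdot\}$ uses the gadget $H-\mathrm{ReLU}(H-x)$, and $\max_a$ uses the standard ReLU max gadget; both are exact. After iterating this block $H$ times, the current token holds $Q^k_{h_t}(s_t,\cdot)$; the readout selects these $A$ coordinates, scales by $1/\tau$ (explaining the $1/\tau$ term in $\nrmp{\btheta}$), and the softmax reproduces $\sAlg_{\sUCBVI(\tau)}$ identically. Because every primitive is realized exactly on the discrete-count lattice, no layer-wise error accumulates, which is why the claim is equality rather than approximation.
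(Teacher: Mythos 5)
Your high-level decomposition (preprocessing $\to$ $2\horizon$-layer backward value iteration $\to$ softmax readout, with an eight-layer overhead for counts and normalization) matches the paper's proof of Theorem~\ref{thm:approx_ucbvi}, but two of your primitive steps push work onto the MLP that the MLP cannot do exactly, and the paper handles both with an attention trick you did not use.

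First, you propose to recover integer counts by ``multiplying by the position $t$ in the following MLP'' to cancel the $1/i$ normalization built into masked attention. A ReLU MLP computes $\bW_2\sigma(\bW_1\bh)$ and cannot exactly realize the bilinear map $(x,i)\mapsto i\cdot x$ on a continuum; it can only memorize it on a small finite grid. Here the pre-normalized count $N/i$ and the index $i\le(2\horizon+1)\Numepi$ jointly range over far more pairs than the budgeted $\hidden=\conO(\Numepi^2\horizon\Numst^2\Numact)$ can memorize with the stated weight norms. The paper instead burns a \emph{second attention} layer for this: placing $i^2$ and $-Bi^2$ in the query and the positional $j^2$ in the key forces $\sigma(\langle\bQ\bh_i,\bK\bh_j\rangle)$ to vanish for $j<i$ and equal $i^2$ at $j=i$, so after the automatic $1/i$ the head contributes a net factor of $i$ exactly (Eq.~\eqref{eq:tf_constrcut_ucbvi_multi} in the paper). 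Multiplication of two token coordinates is bilinear, so it belongs in attention, not in the MLP.

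Second and more seriously, you propose to compute $\sum_{\state'}\esttransit_h(\state'\mid\state,\action)\estVfun_{h+1}(\state')$ via the polarization identity $xy=\tfrac14((x+y)^2-(x-y)^2)$ and a ReLU squaring gadget that is ``exact on the finite lattice of attainable $(\hat P,V)$ values.'' This lattice claim fails: $\esttransit_h$ is indeed a small rational lattice, but $\estVfun_{h+1}$ is not. Already $\estVfun_{\horizon}$ contains the irrational bonus $2\horizon\sqrt{\log(\Numst\Numact\totlen/\delta)/N}$, and each further backward step takes a sum over $\Numst$ terms of products with rationals, so the set of attainable $\estVfun_{h+1}(\state')$ values grows combinatorially in $\horizon$ and $\Numst$ and cannot be memorized within $\hidden=\conO(\Numepi^2\horizon\Numst^2\Numact)$ neurons. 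A ReLU network cannot square a real input exactly; only a lookup over a controllably small finite set can, and you do not have one. The paper avoids multiplication inside the MLP altogether: it places $\estVfun_{h+1}(\cdot)$ in the query slot and $\esttransit_h(\cdot\mid\state,\action)$ in the key slot of a self-attention head, so at $j=i$ the score $\langle\bQ\bh_i,\bK\bh_i\rangle$ is exactly $\langle\estVfun_{h+1},\esttransit_h(\cdot\mid\state,\action)\rangle$, and a paired head with the opposite sign removes the ReLU via $\sigma(x)-\sigma(-x)=x$. This gives exact inner products for arbitrary real $\estVfun_{h+1}$ at a cost of two heads per $(\state,\action)$, which is precisely what makes an \emph{equality} statement (rather than an $\eps$-approximation) achievable. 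Unless you replace your polarization step with attention inner products, your construction is neither exact nor within the stated parameter budget.
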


Leveraging Theorem~\ref{thm:approx_ucbvi}, we can derive the following regret bound for the algorithm obtained via Algorithm Distillation. 

\begin{theorem}[Regret of UCB-VI and ICRL]\label{thm:ucbvi_icrl-main}
Let $\Theta = \Theta_{D, L, M, \hidden, B}$ be the class of transformers satisfying Eq.~\eqref{eq:ucbvi_tf_param-main} with $\temp = 1/\Numepi$, and choose the clip value $\log \clipval = \tcO(1)$. Let both the context algorithm $\sAlg_0$ and the expert algorithm $\sAlg_\shortexp$ coincide with the soft UCB-VI algorithm $\sAlg_{\sUCBVI(\tau)}$ during supervised pretraining.  Then with probability at least $1-\delta$, the learned algorithm $\sAlg_{\esttfpar}$, a solution to Eq.~\eqref{eq:general_mle}, entails regret bound
\begin{align*}
\E_{\inst\sim\prior}[\Numepi\Vfun_\inst(\plc^*)-\totreward_{\inst,\sAlg_\esttfpar}(\totlen)]\leq \tcO \bigg(\horizon^2\sqrt{\Numst\Numact\Numepi}+\horizon^3\Numst^2\Numact+\totlen^2\sqrt{\frac{\log ( \cN_{\Parspace} \totlen/\delta ) }{n} } \bigg),
\end{align*}
where $\log \cN_{\Parspace} \le \tcO(\layer^2\embd(\head\embd+\hidden) \log\Numobs) = \tcO(\horizon^4\Numst^4\Numact^3(\Numepi^2+\horizon\Numst^2\Numact)\log\Numobs)$, and $\tcO(\cdot)$ hides poly-logarithmic terms in $(\horizon,\Numepi,\Numst,\Numact)$. 
\end{theorem}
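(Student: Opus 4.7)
The plan is to combine three ingredients: (i) the exact transformer representation of soft UCB-VI from Theorem~\ref{thm:approx_ucbvi}, (ii) the supervised pretraining generalization bound of Theorem~\ref{thm:diff_reward} specialized to algorithm distillation, and (iii) the Bayes regret of soft UCB-VI, which reduces to the standard UCB-VI regret plus a negligible softmax-smoothing cost.

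First, I would invoke Theorem~\ref{thm:approx_ucbvi} with $\temp = 1/\Numepi$ to produce a parameter $\TruePar \in \Theta_{D,L,M,\hidden,B}$ such that $\sAlg_\TruePar = \sAlg_{\sUCBVI(\temp)}$ pointwise. In the algorithm-distillation setting (case (a) in Section~\ref{sec:framework}), the reduced expert satisfies $\osAlg_\shortexp = \sAlg_\shortexp = \sAlg_{\sUCBVI(\temp)} = \sAlg_0$, so the distribution ratio $\distratio_{\osAlg_\shortexp,\sAlg_0} = 1$ and Assumption~\ref{asp:realizability} holds with $\geneps = 0$. Since $\rewardfun_h\in[0,1]$ ensures $|\reward_t|\leq 1$, Theorem~\ref{thm:diff_reward} gives, with probability at least $1-\delta$,
\begin{align*}
\bigl|\totreward_{\prior,\sAlg_\esttfpar}(\totlen) - \totreward_{\prior,\sAlg_{\sUCBVI(\temp)}}(\totlen)\bigr| \leq c\,\totlen^2 \sqrt{\log(\cN_\Parspace \totlen/\delta)/n}.
\end{align*}

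Second, I would decompose the Bayes regret of $\sAlg_\esttfpar$ into the Bayes regret of hard UCB-VI, the softmax-smoothing penalty of moving from UCB-VI to $\sAlg_{\sUCBVI(\temp)}$, and the imitation error above. The minimax regret bound for UCB-VI~\citep{azar2017minimax}, integrated against $\prior$, contributes $\tcO(\horizon^2\sqrt{\Numst\Numact\Numepi}+\horizon^3\Numst^2\Numact)$. For the softmax-smoothing penalty, the optimistic $Q$-functions that UCB-VI constructs upper-bound the optimal value function, so at each step the expected value loss of a temperature-$\temp$ softmax sample relative to the argmax is at most $\temp\log\Numact$ (a standard LogSumExp gap bound); summing over $\totlen = \Numepi\horizon$ steps with $\temp = 1/\Numepi$ yields $\horizon\log\Numact$, which is absorbed into $\tcO$.

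Third, I would bound $\log\cN_\Parspace$ at precision $(\Numobs\totlen)^{-2}$ by the standard Lipschitz-chain argument for transformers with clipping value $\clipval$ and the norm $\nrmp{\cdot}$ from Eq.~\eqref{eqn:tf-norm}, producing $\log\cN_\Parspace = \tcO(\layer^2\embd(\head\embd + \hidden)\log\Numobs)$; substituting the parameter scales from Eq.~\eqref{eq:ucbvi_tf_param-main} yields the claimed covering bound. Assembling the three pieces produces the theorem. The main obstacle is making the soft-vs-hard UCB-VI passage rigorous: one must verify that replacing the greedy rule by a low-temperature softmax sampler does not break the optimism-based telescoping analysis of UCB-VI beyond the additive $\temp\log\Numact$ per-step inflation. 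Because the UCB-VI $Q$-upper bounds hold independently of the behavior policy, this is a short extension of the standard analysis but is the one genuinely new piece of work; the covering-number calculation and the $\distratio = 1$ specialization of Theorem~\ref{thm:diff_reward} are routine book-keeping.
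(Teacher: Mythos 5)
Your proposal is correct and follows essentially the same strategy as the paper: invoke Theorem~\ref{thm:approx_ucbvi} to get exact realizability ($\geneps = 0$, $\distratio = 1$ in the algorithm-distillation case), apply Theorem~\ref{thm:diff_reward} to bound the imitation error by $\tcO(\totlen^2\sqrt{\log(\cN_\Parspace\totlen/\delta)/n})$, and then show that soft UCB-VI retains the hard UCB-VI Bayes regret up to a low-order additive smoothing term. The one technical difference is in how you bound the per-step value gap between the greedy and softmax policies applied to the optimistic $Q$-function: you invoke the entropy/LogSumExp inequality $\max_a Q(a) - \E_{a\sim\mathrm{softmax}_\temp}[Q(a)] \leq \temp\log\Numact$, while the paper bounds it by $\sum_a \exp(-(\max_{a'}Q(a')-Q(a))/\temp)(\max_{a'}Q(a')-Q(a)) \leq \Numact\sup_{t\geq 0} t e^{-t/\temp} \leq \Numact\temp$ (their $\epstemp \leq \Numact\temp$ claim). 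Your bound is slightly sharper but both are absorbed into $\tcO$ after multiplying by $\Numepi\horizon$ and substituting $\temp = 1/\Numepi$. You have also correctly identified the only nontrivial step — verifying that the optimism-based telescoping analysis survives the switch from greedy to softmax action selection — and correctly observed that this works because the high-probability bound $\trestVfun^k_\sh \geq V^*_\sh$ is a property of the bonus/count construction alone, independent of how the trajectory was generated; the paper carries out exactly this adaptation by inserting the $\epstemp$ term inside the telescoped recursion.
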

\section{Experiments}
\label{sec:experiments}



In this section, we perform preliminary simulations to demonstrate the ICRL capabilities of transformers and validate our theoretical findings. We remark that while similar experiments have been conducted in existing works~\citep{laskin2022context,lee2023supervised}, our setting differs in several aspects such as imitating the entire interaction trajectory in our pretrain loss~\eqref{eq:general_mle} as opposed to on the last (query) state only as in~\citet{lee2023supervised}. The code is available at~\href{https://github.com/licong-lin/in-context-rl}{https://github.com/licong-lin/in-context-rl}.


We compare pretrained transformers against empirical average, LinUCB (or UCB), and Thompson sampling. We use a GPT-2 model~\cite{garg2022can,lee2023supervised} with $L = 8$ layers, $M=4$ heads, and embedding dimension $D=32$. We utilize ReLU attention layers, aligning with our theoretical construction. We pretrain the transformer with two setups: (1) Both context algorithm $\sAlg_0$ and expert algorithm $\sAlg_{\shortexp}$ use LinUCB (the Algorithm Distillation approach); (2) Context algorithms $\sAlg_0$ mixes uniform policy and Thompson sampling, while expert $\sAlg_{\shortexp} = \action_t^*$ provides optimal actions (DPT). See Appendix \ref{sec:exp_details} for further experimental details. 


In the first setup, we consider stochastic linear bandits with $d=5$ and $A=10$. At each $t \in [200]$, the agent chooses an action $\action_t$ and receives reward $\reward_t=\<\action_t,\bw^*\>+\eps_t$ where $\eps_t\sim\cN(0,1.5^2)$. The parameter $\bw^*$ is from ${\rm Unif}([0,1]^d)$. The action set $\sA_t=\sA$ is fixed over time with actions i.i.d. from ${\rm Unif}([-1, 1]^d)$. We generate 100K trajectories using $\sAlg_0=\sAlg_{\shortexp}=\LinUCB$ and train transformer $\TF_\EstPar(\cdot)$ via Eq.~\eqref{eq:general_mle}. Figure~\ref{fig:regret_1} (left) shows regrets of the transformer (TF), empirical average (Emp), LinUCB, and Thompson sampling (TS). The transformer outperforms Thompson sampling and empirical average, and is comparable to LinUCB, agreeing with Theorem~\ref{thm:smooth_linucb}. The small regret gap between TF and LinUCB may stem from the limited capacity of the GPT2 model.


In the second setup, we consider multi-armed Bernoulli bandits with $d = 5$. The parameter $\bw^*$ is from ${\rm Unif}([0,1]^d)$. The fixed action set $\sA_t=\sA$ contains one-hot vectors $\{\be_i\}_{i=1}^d$ (multi-armed bandits). At each $t \in [200]$, the agent selects $\action_t$ receives reward $r_t \sim {\rm Bern}(\<\action_t,\bw^*\>)$. Let $\sAlg_{\mathrm{unif}}$ be the uniform policy. We use $\sAlg_{\mathrm{unif}}$ and $\sAlg_\TS$ as context algorithms to generate $50$K trajectories each. The expert is fixed as $\sAlg_\shortexp=\action^*$. We train transformer $\TF_\EstPar(\cdot)$ via Eq.~\eqref{eq:general_mle}. Figure~\ref{fig:regret_1} (right) shows regrets for the pretrained transformer (TF), empirical average (Emp), UCB, and Thompson sampling (TS).  The transformer aligns with Thompson sampling, validating Theorem~\ref{thm:ts_linear_regret}. However, TS underperforms UCB for Bernoulli bandits, as shown. 


\begin{figure}[t]
\centering  
\includegraphics[width=0.35\linewidth]{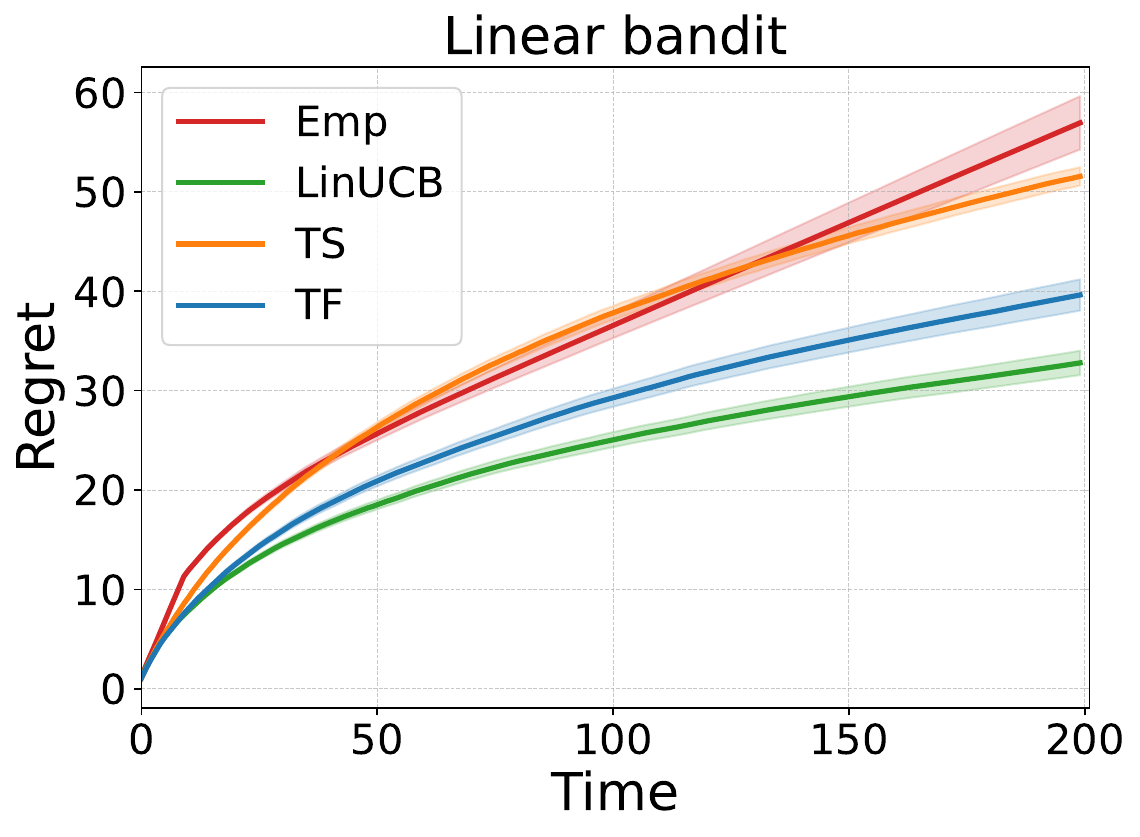}
\hspace{2em}
\includegraphics[width=0.35\linewidth]{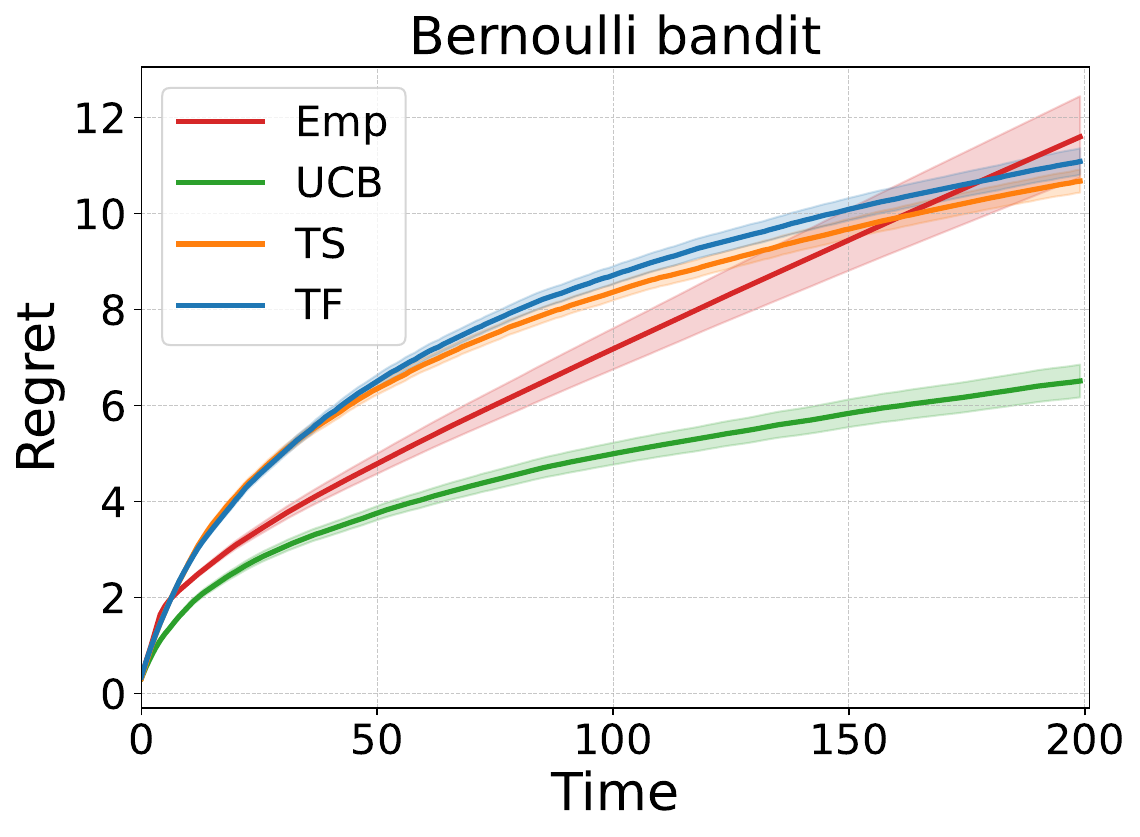}
\vspace{-1em}
\caption{Regrets of transformer (TF), empirical average (Emp), Thompson sampling (TS) and LinUCB or UCB (LinUCB reduces to UCB for Bernoulli bandits). Left: linear bandit with $d=5$, $A=10$, $\sigma=1.5$, $\sAlg_0=\sAlg_\shortexp=\LinUCB$. Right: Bernoulli bandit with $d=5$, $\sAlg_0=(\sAlg_{\mathrm{unif}}+\sAlg_{\TS})/2$ and $\sAlg_\shortexp=\action^*$. The simulation is repeated 500 times. Shading displays the standard deviation of the regret estimates. } 
\label{fig:regret_1} 
\end{figure}

\section{Conclusions}

This paper theoretically investigates the ICRL capability of supervised-pretrained transformers. We demonstrate how transformers can efficiently implement prevalent RL algorithms including LinUCB, Thompson sampling, and UCB-VI, achieving near-optimal regrets in respective settings. We also provide sample complexity guarantees for the supervised pretraining approach to learning these algorithms. The generalization error scales with the covering number of the transformer class as well as the distribution ratio between the expert and offline algorithms. Simulations validate our theoretical findings.  
Finally, we discuss the limitations of our results and provide additional discussions in Appendix~\ref{sec:limitation}.

\section*{Acknowledgement}

The authors would like to thank Peter L. Bartlett for the valuable discussions, and Jonathan Lee for the valuable discussions regarding Decision-Pretrained Transformers as well as providing an early version of its implementation. This work is supported by NSF CCF-2315725, DMS-2210827, NSF Career award DMS-2339904, and an Amazon Research Award.
\bibliography{references}
\bibliographystyle{plainnat}

\clearpage
\appendix 
\tableofcontents
\clearpage

\section{Limitation and discussion}
\label{sec:limitation}
In this section, we discuss some limitations of our work and some potential future directions. 

\paragraph{Distribution ratio} In Theorem~\ref{thm:diff_reward}, our regret bound of the learned algorithms $\sAlg_{\esttfpar}$ depends on the distribution ratio $\distratio_{\sAlg_{\shortexp},\sAlg_0}$. While in cases like algorithm distillation~\citep{laskin2022context} the distribution ratio equals one since the offline algorithm matches the expert algorithm, in the worst case, the ratio can exponentially depend on $\totlen$ or even become arbitrarily large. To control the distribution ratio in practice, one approach is to augment the offline trajectory dataset with a portion of trajectories generated by an expert algorithm or no-regret algorithms resembling the expert algorithm. On the other hand, further research could investigate structural assumptions of decision-making problems that avoid pessimistic dependence on the distribution ratio in regret bounds. 

\paragraph{Guarantee of pretrained transformer} Our statistical result (Theorem~\ref{thm:diff_reward}) only guarantees that the pretrained transformer learns an ``algorithm'' matching the expert algorithm under the pre-training distribution, even though our approximation results (Theorem~\ref{thm:approx_smooth_linucb},~\ref{thm:approx_thompson_linear},~\ref{thm:approx_ucbvi}) show the existence of a transformer approximating the expert algorithm over the entire input space. In our early experiments, we noticed the learned transformers do not generalize well on out-of-distribution instances, such as with shifted reward distributions or increased number of runs $\totlen$. Similar phenomena occur in other in-context learning problems (e.g.~\cite{garg2022can}). Understanding the actual algorithm implemented by the pretrained transformer through theoretical and empirical analysis is an interesting question for future work.

\paragraph{Alternative pretraining methods} Our theoretical results study pretraining the transformer by maximizing the log-likelihood of i.i.d. offline trajectories as in Eq.~(\ref{eq:general_mle}). This aligns with standard supervised pretraining of large language models. However, alternative pretraining methods may also be effective. For instance, one could replace the log-probability in Eq.~(\ref{eq:general_mle}) with an $\ell_2$ loss for continuous action spaces, consider other objectives like cumulative reward \citep{duan2016rl}, or explore goal-conditioned reinforcement learning \citep{chen2021decision} for in-context RL. While our work focuses on log-likelihood pretraining, theoretical investigation of alternative methods is an interesting direction for future work. 

\paragraph{Possibility of surpassing the expert algorithm by online training} Our work considers offline pretraining by imitating the expert algorithm (i.e., $\osAlg_\shortexp$), which can only learn a transformer matching the expert's performance at best. However, through online training, where the transformer interacts with the environment, the learned transformer may surpass existing experts by training to improve itself rather than imitating a specific algorithm. Investigating whether online training enables surpassing expert algorithms is an interesting direction for future work. 


\paragraph{Implications for practice} While the focus of our work is primarily theoretical, our results lead to several practical implications for in-context reinforcement learning. One key implication is the importance of training labels (i.e., expert actions $\eaction$). When the expert algorithm depends solely on past observations, we can learn ${\sAlg}_{\shortexp}$ (see Theorem~\ref{thm:smooth_linucb}). In contrast, when ${\sAlg}_{\shortexp}$ is the optimal action $a^\star$ (involving knowledge of the underlying MDP), we can learn the posterior average of this algorithm given past observations. This corresponds to the Thompson sampling algorithm, as in Decision-Pretrained Transformers (see Theorem~\ref{thm:ts_linear_regret}). 

Furthermore, as discussed previously, the distribution ratio between the offline and expert algorithms may impact the generalization of the learned algorithm. Both our theory (see Theorem~\ref{thm:approx_smooth_linucb}) and simulations (see Figure~\ref{fig:compare_ratio}) show that a small distribution ratio between the offline algorithm $\sAlg_0$ and the expert algorithm $\osAlg_{\shortexp}$ is essential, otherwise the online performance of the learned algorithm may substantially degrade. This suggests that incorporating trajectories generated purely from the expert (``on-policy ICRL'') into the offline dataset is advantageous, when feasible.

\section{Experimental details}\label{sec:exp_details}


This section provides implementation details of our experiments and some additional simulations.

\subsection{Implementation details}
\paragraph{Model and embedding}

Our experiments use a GPT-2 model \citep{radford2019language} with ReLU activation layers. The model has $L=8$ attention layers, $M=4$ attention heads, and embedding dimension $D=32$. Following standard implementations in~\cite{vaswani2017attention}, we add Layer Normalization~\citep{ba2016layer} after each attention and MLP layer to facilitate optimization. We consider the embedding and extraction mappings as described in Appendix~\ref{sec:tf_embed_bandit}, and train transformer $\TF_{\EstPar}(\cdot)$ via  maximizing Eq.~\eqref{eq:general_mle}.
\paragraph{Online algorithms}
We compare the regret of the algorithm induced by the transformer with empirical average, Thompson sampling, and LinUCB (or UCB for Bernoulli bandits).
\begin{itemize}
\item[\textbf{(Emp)}]\textbf{Empirical average}. 
For time $t\leq A$, the agent selects each action once. For time $t>A$, the agent computes the average of the historical rewards for each action and selects the action with the maximal
averaged historical rewards.
\item[\textbf{(TS)}]\textbf{Thompson sampling}. For linear bandits with Gaussian noises, we consider Thompson sampling introduced in Appendix~\ref{app:ts_algorithm_formula} with $\Tpsparn=\sigma=1.5$ and $\lambda=1$ (note that in this case TS does not correspond to posterior sampling as we assume $\bw^*$ follows the uniform distribution on $[0,1]^d$). For Bernoulli bandits,  we consider the standard TS sampling procedure (see, for example, Algorithm 3.2 in~\cite{russo2018tutorial}). 
\item[\textbf{(LinUCB)}]\textbf{Linear UCB and UCB}. For linear bandits, we use LinUCB (Appendix \ref{sec:soft-LinUCB}) with $\lambda=1$ and $\alpha=2$. For multi-armed Bernoulli bandits, LinUCB reduces to UCB, which selects $\action_t=\argmax_{\action\in\actionsp}\{\hat \mu_{t,\action}+\sqrt{1/\Numvi_t(\action)}\}$, where $\mu_{t,\action}$ is the average reward for action $\action$ up to time $t$, and $\Numvi_t(\action)$ is the number of times action $\action$ was selected up to time $t$.
\end{itemize} 
\subsection{Additional experiments and plots}
We provide additional experiments and plots in this section. In all experiments, we choose the number of samples $\Numobs=100$K.

Additional plots of suboptimality $\<\action_t^*-\action_t,\bw^*\>$ over time are shown in Figure~\ref{fig:subopt_1} for the two experiments in Section~\ref{sec:experiments}. In both cases, the transformer is able to imitate the expected expert policy $\osAlg_{\shortexp}$, as its suboptimality closely matches   $\osAlg_{\shortexp}$ (LinUCB and TS for the left and right panel, respectively).  While the empirical average (Emp) has lower suboptimality early on, its gap does not converge to zero. In contrast, both LinUCB and Thompson sampling are near-optimal up to $\tcO(1)$ factors in terms of their (long-term) regret.\lc{Is this correct?} 

\begin{figure}[ht]
\centering  
\includegraphics[width=0.46\linewidth]{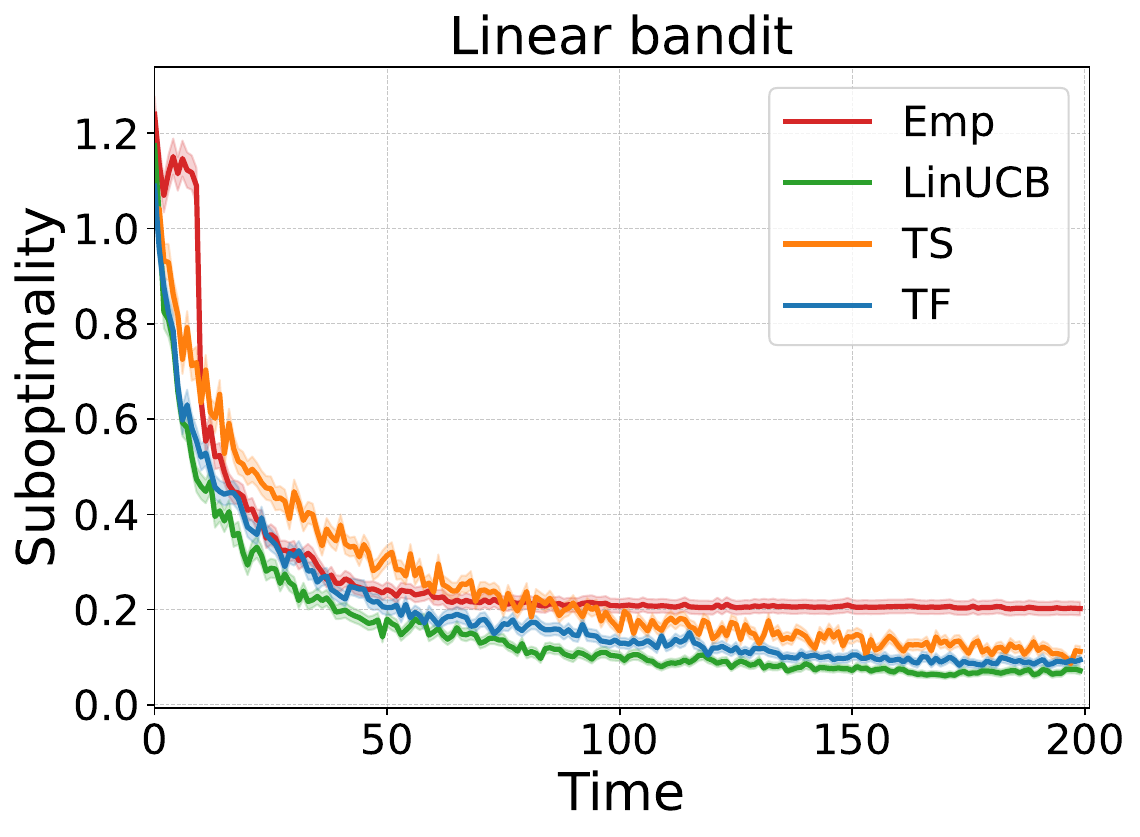}
\includegraphics[width=0.45\linewidth]{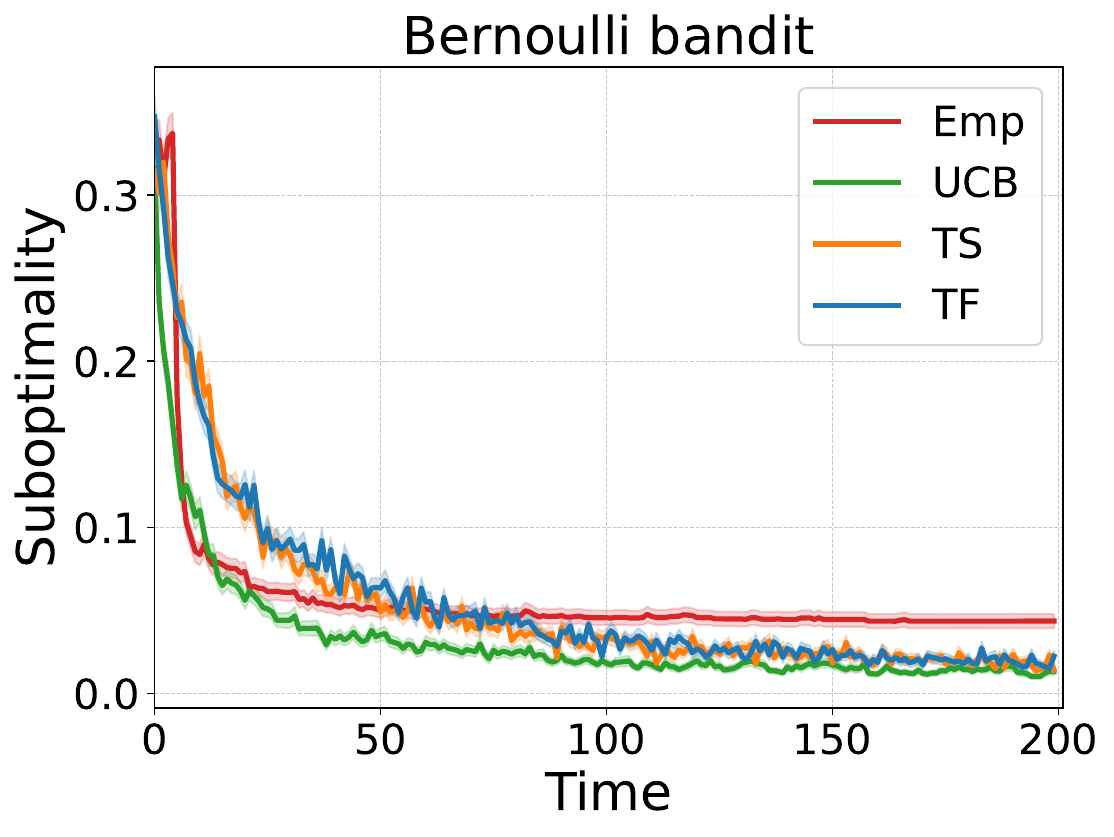}
\caption{Suboptimalities of transformer (TF), empirical average (Emp), Thompson sampling (TS), and LinUCB (or UCB). Left: linear bandit with $d=5$, $A=10$, $\sigma=1.5$, $\sAlg_0=\sAlg_\shortexp=\LinUCB$. Right: Bernoulli bandit with $d=5$, $\sAlg_0=(\sAlg_{\mathrm{unif}}+\sAlg_{\TS})/2$, and $\sAlg_\shortexp=\action_t^*$. The simulation is repeated 500 times. Shading displays the standard deviation of the sub-optimality estimates. 
} 
\label{fig:subopt_1} 
\end{figure}

Additional simulations were run with $\sAlg_0=\sAlg_{\shortexp}=\UCB$ for Bernoulli bandits, which has fewer actions ($\Numact=5$) than linear bandits ($\Numact=10$). Figure~\ref{fig:linucb_bernoulli} shows the regret and suboptimality of UCB and the transformer overlap perfectly, with both algorithms exhibiting optimal behavior. This suggests the minor gaps between LinUCB and transformer in the left panel of Figure~\ref{fig:regret_1} and \ref{fig:subopt_1} are likely due to limited model capacity. 

\begin{figure}[ht]
\centering  
\includegraphics[width=0.47\linewidth]{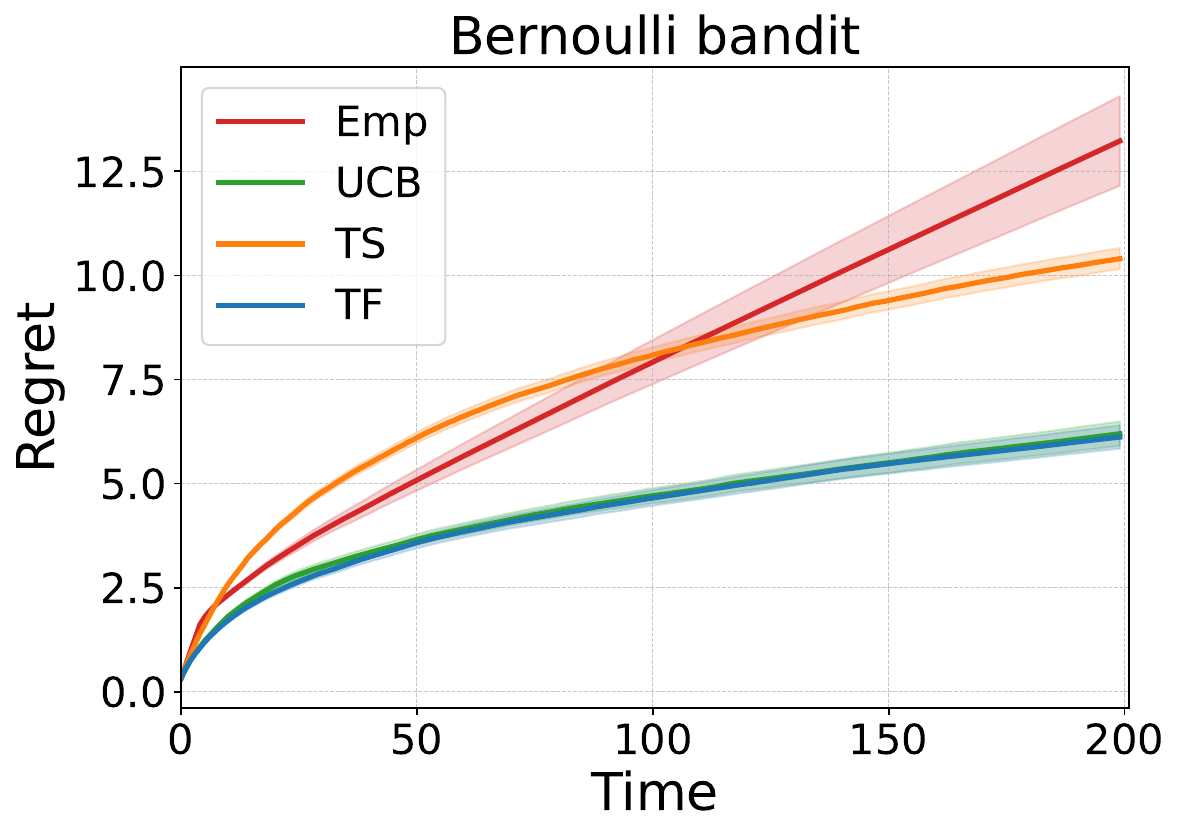}
\includegraphics[width=0.45\linewidth]{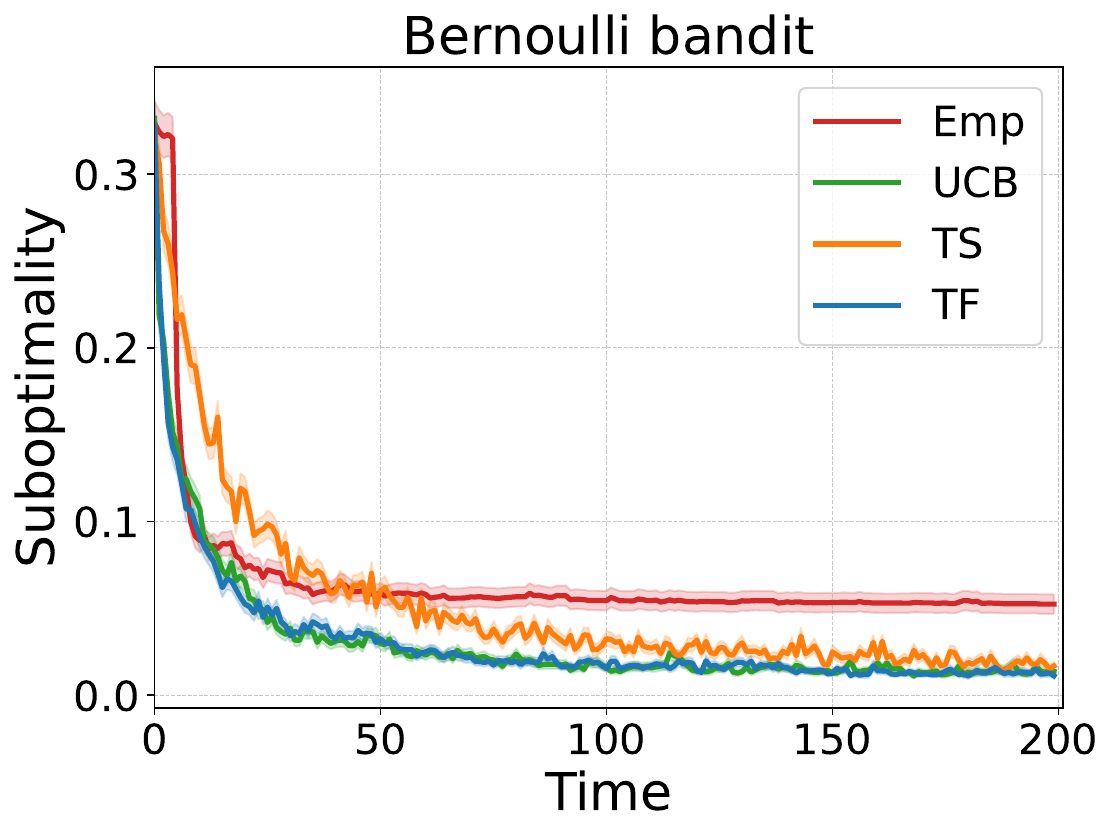}
\caption{Regrets and suboptimalities of transformer (TF), empirical average (Emp), Thompson sampling (TS), and  UCB. Settings: Bernoulli bandit with $d=5$, and $\sAlg_0=\sAlg_\shortexp=\LinUCB$. 
The simulation is repeated 500 times. Shading displays the standard deviation of the estimates. } 
\label{fig:linucb_bernoulli} 
\end{figure}

\subsection{The effect of distribution ratio}

We evaluate the effect of the distribution ratio $\distratio=\distratio_{\osAlg_{\shortexp},\sAlg_0}$ (Definition \ref{def:dist_ratio}) on transformer performance. We consider the Bernoulli bandit setting from Section~\ref{sec:experiments} with expert $\sAlg_\shortexp=\action^*$ giving optimal actions. The context algorithm is $$\sAlg_0=\alpha\sAlg_{\TS}+(1-\alpha)\sAlg_{\unif},$$
mixing uniform policy $\sAlg_{\unif}$ and Thompson sampling $\sAlg_{\TS}$, for $\alpha\in\{0,0.1,0.5,1\}$. The case $\alpha=0$ corresponds to the context algorithm being the i.i.d. uniform policy, and  $\alpha=1$ corresponds to the context algorithm being Thompson sampling. Note that the distribution ratio $\distratio$ may scale as $\cO((1/\alpha)\wedge \Numact^{\cO(\totlen)})$ in the worst case. 

Figure~\ref{fig:compare_ratio} evaluates the learned transformers against Thompson sampling for varying context algorithms.  The left plot shows cumulative regret for all algorithms. The right plot shows the regret difference between transformers and Thompson sampling. The results indicate that an increased distribution ratio impairs transformer regret, as expected. Moreover, it is observed that the transformer, even with the uniform policy (i.e., $\alpha=0$), is capable of imitating Thompson sampling in the early stages $(\mathrm{Time}\leq 30)$, exceeding theoretical predictions. This suggests the transformer can learn Thompson sampling even when the context algorithm differs significantly from the expert algorithm.

\begin{figure}[ht]
\centering  
\includegraphics[width=0.45\linewidth]{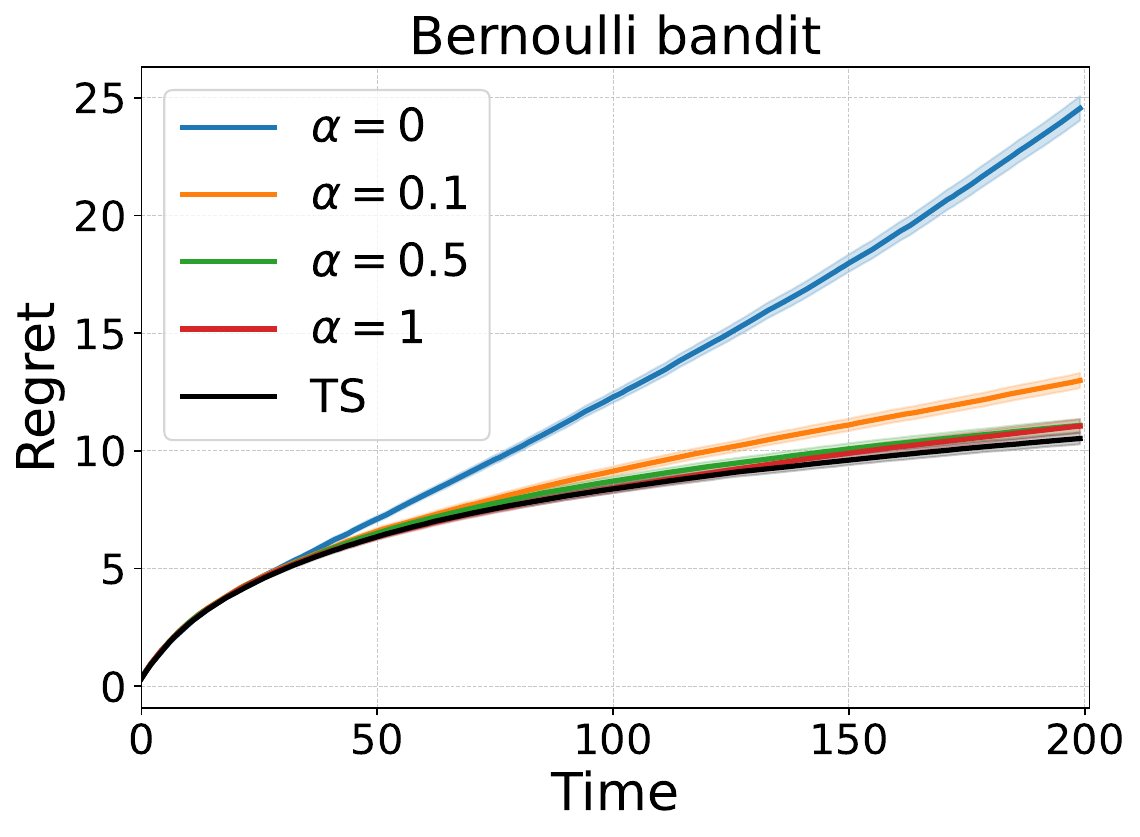}
\includegraphics[width=0.45\linewidth]{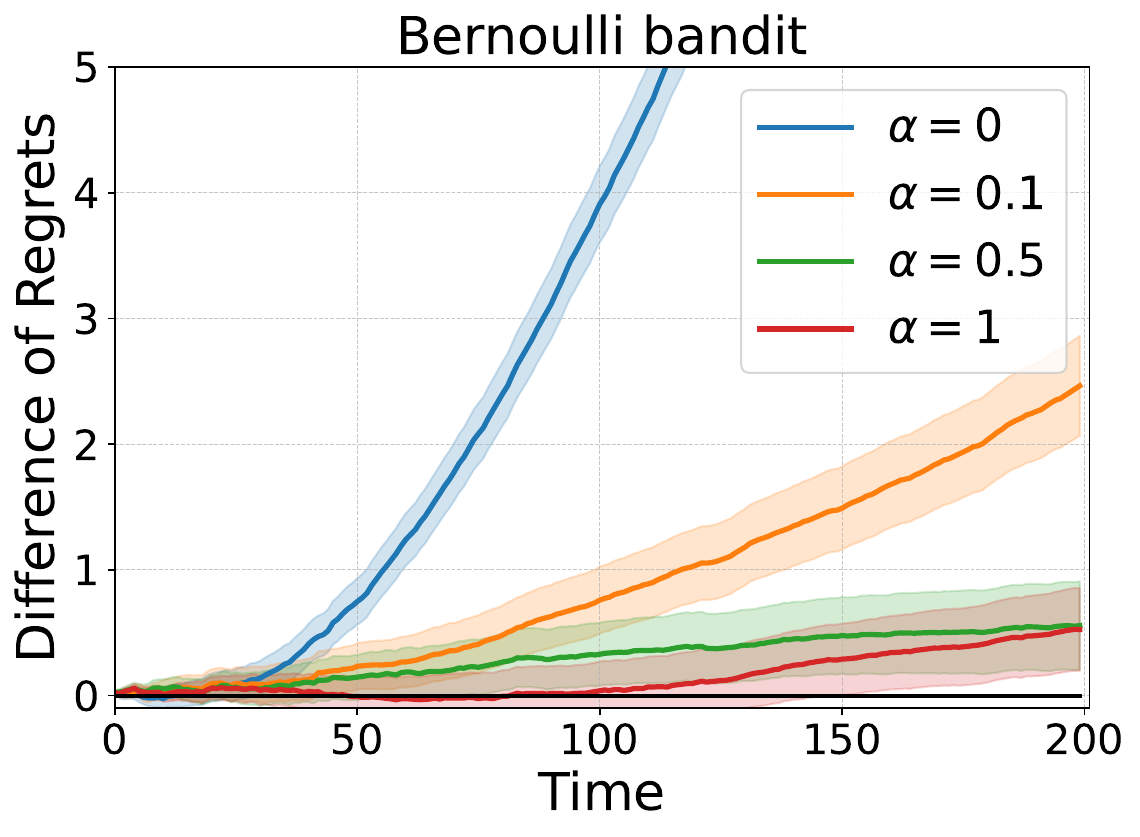}
\caption{Regrets and difference of regrets between transformers and Thompson sampling, for different context algorithms. Settings: Bernoulli bandit with $d=5$, $\sAlg_\shortexp=\action_t^*$ and $\sAlg_0=\alpha\sAlg_\TS+(1-\alpha)\sAlg_\unif$ with $\alpha \in \{0,0.1,0.5,1\}$. The simulation is repeated 500 times. Shading displays the standard deviation of the estimates. } 
\label{fig:compare_ratio} 
\end{figure}

\section{Technical preliminaries}

In this work, we will apply the following standard concentration inequality (see e.g. Lemma A.4 in~\cite{foster2021statistical}).
\begin{lemma}\label{lm:exp_concen}
    For any sequence of random variables $(X_t)_{t\leq T}$ adapted to a filtration $\{\cF_{t}\}_{t=1}^T$, we have with probability at least $1-\delta$ that
    \begin{align*}
        \sum_{s=1}^t X_s\leq \sum_{s=1}^t\log\E[\exp(X_s)\mid\cF_{s-1}]+\log(1/\delta),~~~\text{for all } t\in[T].
    \end{align*}
\end{lemma}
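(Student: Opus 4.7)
The plan is to apply the standard exponential martingale construction together with Ville's maximal inequality for nonnegative supermartingales. Define the process
\begin{equation*}
M_t := \exp\!\Big(\sum_{s=1}^{t} X_s - \sum_{s=1}^{t}\log \E[\exp(X_s)\mid \cF_{s-1}]\Big), \qquad M_0 := 1,
\end{equation*}
which is adapted to $\{\cF_t\}$ and nonnegative (implicitly assuming $\E[\exp(X_s)\mid\cF_{s-1}]$ is a.s. finite so the log is well defined). First I would verify the martingale property. Since $\log \E[\exp(X_t)\mid\cF_{t-1}]$ is $\cF_{t-1}$-measurable, it factors out of the inner conditional expectation, giving
\begin{equation*}
\E[M_t \mid \cF_{t-1}] \;=\; M_{t-1}\cdot\frac{\E[\exp(X_t)\mid\cF_{t-1}]}{\E[\exp(X_t)\mid\cF_{t-1}]} \;=\; M_{t-1}.
\end{equation*}
Hence $(M_t)_{t\ge 0}$ is a nonnegative martingale with $\E[M_0]=1$.

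Next I would invoke Ville's inequality (or equivalently Doob's maximal inequality) for nonnegative supermartingales: for any $c>0$,
\begin{equation*}
\PP\!\Big(\sup_{t\le T} M_t \ge c\Big) \;\le\; \frac{\E[M_0]}{c} \;=\; \frac{1}{c}.
\end{equation*}
Choosing $c = 1/\delta$ and taking logarithms on the complement event $\{\sup_{t\le T} M_t < 1/\delta\}$ yields, simultaneously for every $t\le T$,
\begin{equation*}
\sum_{s=1}^t X_s - \sum_{s=1}^t \log \E[\exp(X_s)\mid\cF_{s-1}] \;\le\; \log(1/\delta),
\end{equation*}
which is the stated bound, holding with probability at least $1-\delta$.

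There is no substantive obstacle; the only points requiring care are the measurability check that lets the log-MGF factor be pulled out of the conditional expectation, and the observation that the supremum over $t\in[T]$ comes for free from the maximal inequality. This last point is what makes Ville's inequality the right tool: a naive union bound applying Markov to $M_t$ for each fixed $t$ would yield a weaker $\log(T/\delta)$ constant, whereas the martingale maximal inequality delivers the uniform-in-$t$ statement at cost $\log(1/\delta)$.
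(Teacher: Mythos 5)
Your proof is correct, and it is essentially the standard argument; the paper itself does not reprove this lemma but cites Lemma A.4 of Foster et al. (2021), which uses the same exponential-martingale construction plus Ville's maximal inequality. Your remark distinguishing Ville's inequality from a per-$t$ Markov/union bound (which would cost $\log(T/\delta)$) is exactly the right observation about why the uniform-in-$t$ statement comes for free; the only minor caveat, which you already flagged, is that if $\E[\exp(X_s)\mid\cF_{s-1}]$ is infinite on a positive-probability set the right-hand side is $+\infty$ there, so the inequality is vacuously true and the argument goes through under the usual conventions.
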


\begin{lemma}\label{lm:cover_num_corr}
   Adopt the notations in Definition~\ref{def:cover_number_general}. Then for any $\Par\in\Parspace$, there exists $\Par_0\in\Parspace_0$ such that $\| \sAlg_{\Par_0}(\cdot|\dset_{t-1},\state_t)-\sAlg_{\Par}(\cdot|\dset_{t-1},\state_t)\|_{1}\leq 2\rho$. 
\end{lemma}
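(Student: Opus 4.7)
The plan is to translate the supremum-norm bound on the log-probabilities (which is how the cover in Definition~\ref{def:cover_number_general} is defined) into an $L^1$ bound on the probabilities themselves. The natural tool is the mean-value theorem applied to $\log$: if two positive numbers $p,q$ satisfy $|\log p - \log q| \le \rho$, then $|p-q|\le \max(p,q)\cdot\rho$, because $\log p - \log q = \int_q^p t^{-1}\,dt$ and the integrand is bounded below by $1/\max(p,q)$.

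Concretely, I would proceed as follows. First, fix $\Par\in\Parspace$ and use the $\rho$-covering property to pick $\Par_0\in\Parspace_0$ such that
\[
\bigl|\log \sAlg_{\Par_0}(\action|\dset_{t-1},\state_t)-\log \sAlg_{\Par}(\action|\dset_{t-1},\state_t)\bigr|\le \rho
\]
for every action $\action\in\actionsp_t$ (and for every $\dset_{t-1},\state_t,t$). Second, apply the pointwise inequality above to obtain
\[
\bigl|\sAlg_{\Par_0}(\action|\dset_{t-1},\state_t)-\sAlg_{\Par}(\action|\dset_{t-1},\state_t)\bigr|\le \rho\cdot\max\bigl\{\sAlg_{\Par_0}(\action|\dset_{t-1},\state_t),\,\sAlg_{\Par}(\action|\dset_{t-1},\state_t)\bigr\}.
\]
Third, sum over $\action$ and use $\max(x,y)\le x+y$ for nonnegative $x,y$ together with the fact that $\sAlg_{\Par_0}(\cdot|\dset_{t-1},\state_t)$ and $\sAlg_{\Par}(\cdot|\dset_{t-1},\state_t)$ are probability distributions, so their total masses are each $1$. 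This yields $\|\sAlg_{\Par_0}(\cdot|\dset_{t-1},\state_t)-\sAlg_{\Par}(\cdot|\dset_{t-1},\state_t)\|_1\le \rho\cdot 2=2\rho$, as claimed.

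There is no serious obstacle here; the lemma is essentially a translation step between two natural notions of discrepancy on a simplex. An equivalent route would be to write $\sAlg_{\Par_0}/\sAlg_{\Par}=\exp(\log \sAlg_{\Par_0}-\log \sAlg_{\Par})\in[e^{-\rho},e^{\rho}]$ and then use $|e^x-1|\le e^{|x|}|x|$, but this introduces an extra $e^\rho$ factor and gives a slightly worse constant; the MVT formulation gives the clean factor $2$ directly. The only (trivial) subtlety is that the inequality $\max(x,y)\le x+y$ loses nothing in the asymptotic regime where one of the two probabilities dominates, so the constant $2$ in the statement is tight in general.
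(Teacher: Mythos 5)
Your proof is correct and is essentially the same as the paper's. You invoke the mean-value theorem on $\log$ to get $|p-q|\le\max(p,q)\,|\log p-\log q|$, while the paper applies it to $\exp$ (writing $|p-q|=|e^{\log p}-e^{\log q}|\le e^{\max(\log p,\log q)}|\log p-\log q|$); these are the same inequality viewed from opposite directions, and both of you then sum over actions and bound $\max(p,q)\le p+q$ using that each distribution has total mass one.
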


\begin{proof}[Proof of Lemma~\ref{lm:cover_num_corr}]
For any $\Par\in\Parspace$, let $\Par_0\in\Parspace_0$ be such that $\|\log \sAlg_{\Par_0}(\cdot|\dset_{t-1},\state_t)-\log \sAlg_{\Par}(\cdot|\dset_{t-1},\state_t)\|_{\infty}\leq\rho$ for all $\dset_{t-1},\state_t$ and $t\in[\totlen]$. Then
\begin{align*}
   &~\| \sAlg_{\Par_0}(\cdot|\dset_{t-1},\state_t)-\sAlg_{\Par}(\cdot|\dset_{t-1},\state_t)\|_{1}\\
   =&~
   \sum_{\action\in\actionsp_t}|\sAlg_{\Par_0}(\action|\dset_{t-1},\state_t)-\sAlg_{\Par}(\action|\dset_{t-1},\state_t)|\\
   \leq &~
   \sum_{\action\in\actionsp_t}e^{\max\{\log\sAlg_{\Par_0}(\cdot|\dset_{t-1},\state_t),\log\sAlg_{\Par}(\cdot|\dset_{t-1},\state_t)\}}\\
   &~\cdot|\log\sAlg_{\Par_0}(\cdot|\dset_{t-1},\state_t)-\log\sAlg_{\Par}(\cdot|\dset_{t-1},\state_t)|\\
   \leq&~\rho \sum_{\action\in\actionsp_t}e^{\max\{\log\sAlg_{\Par_0}(\cdot|\dset_{t-1},\state_t),\log\sAlg_{\Par}(\cdot|\dset_{t-1},\state_t)\}}\\
   \leq&~ \rho \sum_{\action\in\actionsp_t} (\sAlg_{\Par_0}(\cdot|\dset_{t-1},\state_t)+\sAlg_{\Par}(\cdot|\dset_{t-1},\state_t))\leq 2\rho,
\end{align*}
where the second line uses a Taylor expansion of $e^x$, the fourth line uses the assumption on $\Par_0$, the last line uses $e^{\max\{x,y\}}\leq e^x+e^y$ and the fact that $\sAlg_{\Par_0}(\cdot|\dset_{t-1},\state_t),\sAlg_{\Par}(\cdot|\dset_{t-1},\state_t)$ are probability functions.
\end{proof}

We have the following upper bound on the covering number of the transformer class $\{\TF^\clipval_{\tfpar}:\tfpar\in\tfparspace_{D, \layer,\head,\hidden,\normb}\}$.  
\begin{lemma}\label{lm:cover_num_bound}
For the space of transformers $\{\TF^\clipval_{\tfpar}:\tfpar\in\bar{\tfparspace}_{D, \layer,\head,\hidden,\normb}\}$ with 
\begin{align*}
\bar{\tfparspace}_{D, \layer,\head,\hidden,\normb}:= \Big\{\tfpar=(\tfpar^{[\layer]}_\attn,\tfpar^{[\layer]}_\mlp):\max_{\ell\in[\layer]}\head^\lth\leq \head, \max_{\ell\in[\layer]}\hidden^\lth\leq \hidden, \nrmp{\tfpar}\leq\normb \Big\}, 
\end{align*}where $\head^\lth,\hidden^\lth$ denote the number  of heads and hidden neurons in the $\ell$-th layer respectively, the covering number of the set of induced algorithms $\{\sAlg_\tfpar,\tfpar\in\bar{\tfparspace}_{D, \layer,\head,\hidden,\normb}\}$ (c.f. Eq.~\ref{eqn:transformer-algorithm}) satisfies
\begin{align*}
    \log \cN_{\bar{\tfparspace}_{D, \layer,\head,\hidden,\normb}}(\rho)
    &\leq c\layer^2\embd(\head\embd+\hidden)\log\Big(2+\frac{\max\{\normb,\layer,\clipval\}}{\rho}\Big)
\end{align*} for some universal constant $c>0$. 


\end{lemma}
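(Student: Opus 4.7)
The plan is to reduce the covering of the induced algorithm class to a Euclidean covering of the parameter space $\bar{\tfparspace}_{D,\layer,\head,\hidden,\normb}$, via a uniform Lipschitz bound of $\log \sAlg_\tfpar(\cdot\mid\dset_{t-1},\state_t)$ in $\tfpar$ with respect to the norm $\nrmp{\cdot}$.

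First I would establish per-layer Lipschitzness: for any input $\bH \in \R^{D\times N}$ with $\ltwoinf{\bH} \le \clipval$ and parameters bounded as in $\bar{\tfparspace}$, each of the maps $\tfpar_\attn \mapsto \Attn_{\tfpar_\attn}(\bH)$ and $\tfpar_\mlp \mapsto \MLP_{\tfpar_\mlp}(\bH)$ is Lipschitz in the $(2,\infty)$-norm on the output. This follows from the $1$-Lipschitzness of $\sigma=\relu$ and the bilinearity identity $\langle \bQ\bh,\bK\bh'\rangle - \langle \bQ'\bh,\bK'\bh'\rangle = \langle(\bQ-\bQ')\bh,\bK\bh'\rangle + \langle \bQ'\bh,(\bK-\bK')\bh'\rangle$, with resulting per-layer constants polynomial in $\clipval,\normb$.

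Next I would propagate these bounds through all $\layer$ layers by induction. The $\clip_\clipval$ operator is $1$-Lipschitz and guarantees $\ltwoinf{\bH^{(\ell)}} \le \clipval$ at every layer, so each subsequent layer operates on bounded inputs and is itself Lipschitz in its input. A perturbation at layer $\ell$ is amplified by at most $C(\clipval,\normb)^{\layer-\ell}$ through the remaining layers; summing over $\ell$ gives, for any valid embedded input $\bH$ and any $\tfpar,\tfpar'\in\bar{\tfparspace}$,
\begin{align*}
\ltwoinf{\TF^\clipval_\tfpar(\bH) - \TF^\clipval_{\tfpar'}(\bH)} \le \layer\,C(\clipval,\normb)^\layer\,\nrmp{\tfpar - \tfpar'}.
\end{align*}
Composing with the fixed linear extraction $\extractmap$ and using that $\log\softmax$ is $O(1)$-Lipschitz in the $\infty$-norm on bounded inputs transfers this to a Lipschitz bound on $\log \sAlg_\tfpar(\cdot\mid \dset_{t-1},\state_t)$, uniform in $(\dset_{t-1},\state_t)$ and $t\in[\totlen]$.

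Finally, $\bar{\tfparspace}$ embeds into a Euclidean space of dimension $p \le c_0\layer\embd(\head\embd + \hidden)$ (three $D\times D$ matrices per head, two $D\times D'$ matrices per MLP, across $\layer$ layers), in which $\nrmp{\cdot}$ dominates the per-matrix operator norms. A standard volumetric cover of the $\{\nrmp{\tfpar}\le\normb\}$ ball at scale $\rho/(\layer\,C(\clipval,\normb)^\layer)$ has log-cardinality at most $p\log\paren{2 + \layer\,C(\clipval,\normb)^\layer\,\normb/\rho}$. Since $\log C(\clipval,\normb)^\layer = \layer\cdot O(\log(\clipval\normb))$, this simplifies to the claimed bound $c\,\layer^2\embd(\head\embd+\hidden)\log\paren{2+\max\{\normb,\layer,\clipval\}/\rho}$.

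The main obstacle is the depth-compounding step: a naive product of per-layer Lipschitz constants produces an exponential factor $C(\clipval,\normb)^\layer$ that must not appear outside the logarithm. The key point is that the clip radius $\clipval$ and the norm bound $\normb$ control all intermediate activations and weight matrices uniformly, so that this exponential factor enters only inside the $\log$ in the final covering bound — converting one factor of $\layer$ in the exponent into the extra factor of $\layer$ multiplying $\embd(\head\embd+\hidden)$, which is the source of $\layer^2$ rather than $\layer$ in the stated bound. The detailed bookkeeping of the polynomial $C(\clipval,\normb)$ proceeds along lines analogous to the transformer Lipschitzness arguments in \citet{bai2023transformers}.
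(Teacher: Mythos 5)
Your proposal matches the paper's proof: it too reduces the covering of the induced algorithm class to a volumetric cover of the parameter ball under $\nrmp{\cdot}$, composed with a parameter-Lipschitz bound on $\TF^\clipval_\tfpar$ (which the paper invokes as Proposition~J.1 of \citet{bai2023transformers}, precisely the per-layer propagation argument you sketch) and a Lipschitz bound on log-softmax (the paper's Lemma~\ref{lm:log_softmax}). In both treatments the exponential-in-$\layer$ Lipschitz constant $C(\clipval,\normb)^\layer$ is absorbed inside the logarithm of the covering scale, producing the extra factor of $\layer$ that yields $\layer^2$ in the final bound.
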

\textbf{Remark of Lemma~\ref{lm:cover_num_bound}.} Note that the transformer classes ${\tfparspace}_{D, \layer,\head,\hidden,\normb},\bar{\tfparspace}_{D, \layer,\head,\hidden,\normb}$ have the same expressivity as one can augment any $\TF_\tfpar\in \bar{\tfparspace}_{D, \layer,\head,\hidden,\normb}$ such that the resulting $\TF_{\tfpar,\mathrm{aug}}\in{\tfparspace}_{D, \layer,\head,\hidden,\normb}$ by adding heads or hidden neurons with fixed zero weights. Therefore, the same bound in Lemma~\ref{lm:cover_num_bound} follows for ${\tfparspace}_{D, \layer,\head,\hidden,\normb}$, and  throughout the paper we do not distinguish ${\tfparspace}_{D, \layer,\head,\hidden,\normb}$ and $\bar{\tfparspace}_{D, \layer,\head,\hidden,\normb}$ and use them interchangeably. We also use $\head^\lth,\hidden^\lth$ to  represent the number  of heads and hidden neurons in the $\ell$-th layer of transformers, respectively. 

\begin{proof}[Proof of Lemma~\ref{lm:cover_num_bound}]
We start with introducing Proposition J.1 in~\cite{bai2023transformers}. 
\begin{proposition}[Proposition J.1 in~\cite{bai2023transformers}]\label{prop:bai_j1}
The function $\TF^\clipval$ is $(\layer\normb^{\layer}_{H}\normb_{\Theta})$-Lipschitz w.r.t. $\tfpar\in\tfparspace_{D, \layer,\head,\hidden,\normb}$ for any fixed input $\tfmat$. Namely, for any $\tfpar_1,\tfpar_2\in\tfparspace_{D, \layer,\head,\hidden,\normb}$, we have
\begin{align*}
    \ltwopbig{\TF^\clipval_{\tfpar_1}(\tfmat)-\TF^\clipval_{\tfpar_2}(\tfmat)}{\infty}\leq \layer\normb^{\layer}_{H}\normb_{\Theta}\nrmp{\tfpar_1-\tfpar_2},
\end{align*}
where $\ltwop{\bA}{\infty}:=\sup_{t\in[T]}\ltwo{\bA_{\cdot t}}$  for any matrix $\bA\in\R^{K\times T}$, and $\normb_\Theta:=\normb\clipval(1+\normb\clipval^2+\normb^3\clipval^2),\normb_H:=(1+\normb^2)(1+\normb^2\clipval^3)$. 
\end{proposition}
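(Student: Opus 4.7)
The plan is to prove the parameter-Lipschitz property of the full transformer by a telescoping decomposition across the $\layer$ layers, combined with two per-layer estimates: one controlling how a single layer's output changes under perturbation of its own parameters, and one controlling how it changes under perturbation of its input. The crucial enabling fact is that the clip operator guarantees $\ltwoinf{\bH^\lth}\le \clipval$ uniformly in $\ell$ and in the parameters $\tfpar$, so all norm bounds on intermediate activations are a priori $\clipval$, independent of the layer index. This converts the otherwise exponentially compounding activation norms into a clean polynomial dependence on $\clipval$.

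First I would bound, for a single layer $F^\tfpar_\ell := \clip_\clipval \circ \MLP_{\bmlp^\lth} \circ \Attn_{\bAtt^\lth}$ acting on an input $\bH$ with $\ltwoinf{\bH}\le \clipval$, two quantities. (i) \textbf{Parameter-Lipschitz}: for any two parameter choices of the same layer,
\[
\ltwopbig{F^{\tfpar_1}_\ell(\bH)-F^{\tfpar_2}_\ell(\bH)}{\infty}\le \normb_\Theta\,\nrmp{\tfpar_{1,\ell}-\tfpar_{2,\ell}}.
\]
The attention output is $\sum_m \frac{1}{i}\sum_{j\le i}\sigma(\iprod{Q_m \bh_i}{K_m\bh_j}) V_m\bh_j$, a sum of trilinear expressions in $(Q,K,V)$ passed through the $1$-Lipschitz ReLU. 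Expanding the difference and using $\ltwoinf{\bH}\le\clipval$, $\lops{Q_m},\lops{K_m}\le \normb$, $\sum_m\lops{V_m}\le \normb$ yields contributions of the orders $\normb\clipval\cdot \normb\clipval^2$ (from perturbing $Q$ or $K$ inside the activation) and $\clipval\cdot\normb\clipval^2\cdot 1$ (from perturbing $V$ and from the $\sigma$-argument magnitude). Adding the MLP contribution $\lops{W_2}\cdot\sigma(\lops{W_1}\clipval)\le \normb\cdot\normb\clipval$ and using $1$-Lipschitzness of $\clip_\clipval$ gives exactly $\normb_\Theta=\normb\clipval(1+\normb\clipval^2+\normb^3\clipval^2)$. (ii) \textbf{Input-Lipschitz}: for any fixed parameter,
\[
\ltwopbig{F^{\tfpar}_\ell(\bH_1)-F^{\tfpar}_\ell(\bH_2)}{\infty}\le \normb_H\,\ltwopbig{\bH_1-\bH_2}{\infty}.
\]
Here the residual connection contributes $1$, the $V$-branch contributes $\normb^2\clipval$ (one factor of $\normb$ from $V_m$ and one from the activation value $\sigma(\iprod{Q\bh_i}{K\bh_j})\le \normb^2\clipval^2$), and differentiating the $\sigma$-argument in $\bh_i,\bh_j$ via ReLU's $1$-Lipschitzness gives $\normb^2\clipval\cdot \normb\clipval=\normb^3\clipval^2$ through $V$; the MLP residual block contributes a further factor $1+\normb^2$, and $\clip_\clipval$ is $1$-Lipschitz. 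Collecting terms reproduces $\normb_H=(1+\normb^2)(1+\normb^2\clipval^3)$.

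Next I would telescope across depth. Writing $\bH^\lth_k := F^{\tfpar_k}_\ell\circ\cdots\circ F^{\tfpar_k}_1(\clip_\clipval(\bH))$ for $k\in\{1,2\}$ and a hybrid $\widetilde{\bH}^\lth := F^{\tfpar_2}_\ell(\bH^{(\ell-1)}_1)$, we obtain
\[
\TF^\clipval_{\tfpar_1}(\bH)-\TF^\clipval_{\tfpar_2}(\bH)=\sum_{\ell=1}^\layer \Bigl[\,F^{\tfpar_2}_\layer\circ\cdots\circ F^{\tfpar_2}_{\ell+1}\bigl(F^{\tfpar_1}_\ell(\bH^{(\ell-1)}_1)\bigr) - F^{\tfpar_2}_\layer\circ\cdots\circ F^{\tfpar_2}_{\ell+1}\bigl(F^{\tfpar_2}_\ell(\bH^{(\ell-1)}_1)\bigr)\,\Bigr].
\]
For the $\ell$-th summand I apply the parameter-Lipschitz bound (i) on layer $\ell$ at input $\bH^{(\ell-1)}_1$ (which has $\ltwoinf{\cdot}\le\clipval$), producing a factor $\normb_\Theta\,\nrmp{\tfpar_{1,\ell}-\tfpar_{2,\ell}}$, then iterate the input-Lipschitz bound (ii) through layers $\ell+1,\dots,\layer$ to accumulate a factor $\normb_H^{\layer-\ell}\le\normb_H^\layer$. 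Summing and using $\sum_\ell\nrmp{\tfpar_{1,\ell}-\tfpar_{2,\ell}}\le \nrmp{\tfpar_1-\tfpar_2}$ yields the claimed Lipschitz constant $\layer\,\normb_H^\layer\,\normb_\Theta$.

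The main obstacle is the accurate bookkeeping in step (i)–(ii): the attention map is not linear but rather a trilinear form passed through ReLU with the $\frac{1}{i}\sum_{j\le i}$ normalization. In particular, one must verify that the inner sum and the multi-head sum (the latter bounded via $\sum_m\lops{V_m}\le \normb$) do not introduce extra factors of $M$ or the context length, and that perturbations of the pre-activation argument inside $\sigma$ compose correctly with perturbations of $V_m \bh_j$ outside it to yield the $\normb^3\clipval^2$ term rather than something larger. Once these trilinear-in-perturbation cross terms are handled, the telescoping and the appearance of $\normb_H^\layer$ are routine, so the technical heart of the argument lies in these per-layer norm computations.
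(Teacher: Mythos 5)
This proposition is not proved in the paper at all: it is imported verbatim as Proposition J.1 of \cite{bai2023transformers} and used as a black box inside the proof of Lemma~\ref{lm:cover_num_bound}. Your argument is essentially the proof given in that reference — telescope over the $\layer$ layers, bound each hybrid difference by a per-layer parameter-Lipschitz constant $\normb_{\Theta}$ evaluated at an input with $\ltwoinf{\cdot}\le\clipval$ (guaranteed by the clip operation), and push it through the remaining layers using the per-layer input-Lipschitz constant $\normb_H$, giving $\layer\,\normb_H^{\layer}\,\normb_{\Theta}$ — so the structure is correct and matches the source. The only caveat is in your per-layer bookkeeping: the clip is applied only \emph{after} the MLP, so the MLP sees the raw attention output, whose $\ltwoinf{\cdot}$ norm can be as large as $\clipval+\normb^3\clipval^3$ rather than $\clipval$, and a perturbation of the attention parameters must additionally be propagated through the MLP block (an extra factor up to $1+\normb^2$); these two effects are precisely where the $\normb^4\clipval^3$ term in $\normb_{\Theta}=\normb\clipval(1+\normb\clipval^2+\normb^3\clipval^2)$ comes from, and your additive tally at activation level $\clipval$ would not by itself reproduce (or justify) that constant. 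This is a constants-level repair within an otherwise sound argument, not a flaw in the method.
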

As in the Proof of Theorem 20 in~\cite{bai2023transformers}, we can verify using Example 5.8 in~\cite{wainwright2019high} that the $\delta$-covering number 
\begin{align}\log N(\delta;\normb_{\nrmp{\cdot}}(r),\nrmp{\cdot})\leq \layer(3\head\embd^2+2\embd\hidden)\log(1+2r/\delta),\label{eq:cover_norm_ball}
\end{align}where $\normb_{\nrmp{\cdot}}(r)$ denotes any ball of radius $r$ under the norm $\nrmp{\cdot}$.  Moreover, we have the following continuity result on the log-softmax function
\begin{lemma}[Continuity of log-softmax]\label{lm:log_softmax}
    For any $\bu,\bv\in\R^d$, we have 
    \begin{align*}
    \linf{\log \Big(\frac{e^\bu}{\lone{e^\bu}}\Big)-\log \Big(\frac{e^\bv}{\lone{e^\bv}}\Big)}\leq 2\linf{\bu-\bv}
    \end{align*}
\end{lemma}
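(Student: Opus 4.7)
The plan is to write the log-softmax componentwise as $[\log\mathrm{softmax}(\bu)]_i = u_i - \mathrm{LSE}(\bu)$, where $\mathrm{LSE}(\bu) := \log\sum_{j=1}^d e^{u_j}$, and then bound the two resulting pieces separately by $\linf{\bu-\bv}$. The triangle inequality immediately gives
\[
\bigl|[\log\mathrm{softmax}(\bu)]_i - [\log\mathrm{softmax}(\bv)]_i\bigr| \le |u_i - v_i| + |\mathrm{LSE}(\bu) - \mathrm{LSE}(\bv)|.
\]
The first term is trivially at most $\linf{\bu-\bv}$, so the whole content of the lemma reduces to showing that $\mathrm{LSE}$ is $1$-Lipschitz with respect to the $\ell_\infty$ norm.

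To prove the Lipschitz bound on $\mathrm{LSE}$, I would use a direct monotonicity argument rather than invoking the gradient. Let $\epsilon := \linf{\bu-\bv}$. Then $v_j - \epsilon \le u_j \le v_j + \epsilon$ for every $j$, so after exponentiating and summing,
\[
e^{-\epsilon}\sum_{j=1}^d e^{v_j} \;\le\; \sum_{j=1}^d e^{u_j} \;\le\; e^{\epsilon}\sum_{j=1}^d e^{v_j}.
\]
Taking logarithms yields $|\mathrm{LSE}(\bu) - \mathrm{LSE}(\bv)| \le \epsilon = \linf{\bu-\bv}$. Combining with the componentwise bound and taking the maximum over $i$ gives $\linf{\log\mathrm{softmax}(\bu) - \log\mathrm{softmax}(\bv)} \le 2\linf{\bu-\bv}$, which is the claim.

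There is no real obstacle here; the result is a standard and elementary Lipschitz estimate. The only thing to be a little careful about is to avoid invoking a gradient bound that would naturally give the $\ell_1$-dual (so $\|\nabla\mathrm{LSE}\|_1 = 1$) and then mis-pairing norms — the clean elementary route via exponentiate-and-sandwich sidesteps that issue and makes the constant $2$ (one from the diagonal $u_i - v_i$ term, one from the shared $\mathrm{LSE}$ term) transparent.
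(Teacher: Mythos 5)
Your proof is correct. The decomposition $[\log\mathrm{softmax}(\bu)]_i = u_i - \mathrm{LSE}(\bu)$ and the resulting split of the constant $2 = 1 + 1$ are exactly what the paper does. The only difference is in how you show $\mathrm{LSE}$ is $1$-Lipschitz in $\ell_\infty$: the paper writes $\mathrm{LSE}(\bu)-\mathrm{LSE}(\bv)=\int_0^1\langle \mathrm{softmax}(\bv+t(\bu-\bv)),\,\bu-\bv\rangle\,dt$ via Newton--Leibniz and then applies H\"older with the $\ell_1$--$\ell_\infty$ pairing (using $\lone{\mathrm{softmax}(\cdot)}=1$), whereas you sandwich $e^{-\epsilon}\sum_j e^{v_j}\le \sum_j e^{u_j}\le e^{\epsilon}\sum_j e^{v_j}$ and take logs. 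Your route is more elementary (no differentiability needed, no integral along the segment) and gets the same constant; one small caveat is that the paper's gradient-plus-H\"older argument is not actually a ``mis-pairing'' of norms as you suggest one might worry---the dual of $\ell_\infty$ is $\ell_1$, and $\|\nabla\mathrm{LSE}\|_1=1$ pairs correctly---so both routes are clean, and yours simply replaces one calculus step with a monotonicity argument.
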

We defer the proof of Lemma~\ref{lm:log_softmax} to the end of this section.

Note that  $\sAlg_{\tfpar}(\cdot|\dset_{t-1},\state_t)$ corresponds to $\actnum$ entries in one column of $\tfmat^{(\layer)}$ applied  through the softmax function. Therefore, combining Proposition~\ref{prop:bai_j1},~Lemma~\ref{lm:log_softmax}~and~Eq.~\eqref{eq:cover_norm_ball}, we conclude that for any $r>0$, there exists a subset $\tfparspace_0\in\tfparspace_{D, \layer,\head,\hidden,\normb}$ with size $\layer(3\head\embd^2+2\embd\hidden)\log(1+2r/\delta)$ such that for any $\tfpar\in\tfparspace_{D, \layer,\head,\hidden,\normb}$, there exists $\tfpar_0\in\tfparspace_0$ with
\begin{align*}
    \linf{\log \sAlg_{\tfpar}(\cdot|\dset_{t-1},\state_t)-\log \sAlg_{\tfpar_0}(\cdot|\dset_{t-1},\state_t)}
     \leq 2\layer\normb_{H}^\layer\normb_\Theta \delta
\end{align*} for all $\dset_\totlen$. Substituting $r=\normb$ and letting $ \delta=\rho/( 2\layer\normb_{H}^\layer\normb_\Theta)$ yields the upper bound on $\cN_{\tfparspace_{D, \layer,\head,\hidden,\normb}}(\rho)$ in Lemma~\ref{lm:cover_num_bound}.

\begin{proof}[Proof of Lemma~\ref{lm:log_softmax}]
Define $\bw:=\bu-\bv$. Then
  \begin{align*}
    &\quad\linf{\log \Big(\frac{e^\bu}{\lone{e^\bu}}\Big)-\log \Big(\frac{e^\bv}{\lone{e^\bv}}\Big)}\\
    &\leq
    \linf{\bu-\bv}+|{\log \lone{e^\bu}-\log \lone{e^\bv}}|\\
    &= 
    \linf{\bu-\bv}+\int_{0}^1 \<\frac{e^{\bv+t\bw}}{\lone{e^{\bv+t\bw}}},\bw\> dt\\
     &\leq 
    \linf{\bu-\bv}+\int_{0}^1 \lone{\frac{e^{\bv+t\bw}}{\lone{e^{\bv+t\bw}}}}\cdot\linf{\bw} dt\\
    &=2\linf{\bu-\bv},
    \end{align*} where the third line uses the Newton-Leibniz formula.
\end{proof}

\end{proof}


We present the following standard results on the convergence of GD and AGD. We refer the reader to~\cite{nesterov2003introductory} for the proof of these results. 

\begin{proposition}[Convergence guarantee of GD and AGD]\label{prop:conv_gd_agd}
Suppose $L(\bw)$ is a $\alpha$-strongly convex and $\beta$-smooth function on $\R^d$. Denote the condition number $\kappa:=\beta/\alpha$ and $\bw^*:=\argmin_{\bw}L(\bw)$.
\begin{enumerate}
\item[(a).]
The gradient descent iterates $\bw^{\sst+1}_{\GD}:=\bw^{\sst}_{\GD}-\eta\nabla L(\bw^{\sst}_{\GD})$ with stepsize $\eta=1/\beta$ and initial point $\bw^{0}_{\GD}=\bzero_d$ satisfies
\begin{align*}
    \|\bw^{\sst}_{\GD}-\bw^*\|_2^2
    &\leq\exp(-\frac{\sst}{\kappa}) \|\bw^{0}_{\GD}-\bw^*\|_2^2,
    \\
    L(\bw^{\sst}_{\GD})-L(\bw^*)
    &\leq \frac{\beta}{2} \exp(-\frac{\sst}{\kappa})\|\bw^{0}_{\GD}-\bw^*\|_2^2.
\end{align*}
    \item [(b).] 
    The accelerated gradient descent (AGD,~\cite{nesterov2003introductory}) iterates $\bw^{\sst+1}_{\AGD}:=\bv^{\sst}_{\GD}-\frac{1}{\beta} L(\bv^{\sst}_{\AGD}),~~ \bv^{\sst+1}_{\AGD}:=\bw^{\sst+1}_{\AGD}+\frac{\sqrt{\kappa}-1}{\sqrt{\kappa}+1}(\bw^{\sst+1}_{\AGD}-\bw^{\sst}_{\AGD})$ with $\bw^{0}_{\AGD}=\bv^{0}_{\AGD}=\bzero_d$ satisfies
    \begin{align*}
    \|\bw^{\sst}_{\AGD}-\bw^*\|_2^2
    &\leq(1+\kappa)\exp(-\frac{\sst}{\sqrt{\kappa}}) \|\bw^{0}_{\AGD}-\bw^*\|_2^2,
    \\
    L(\bw^{\sst}_{\AGD})-L(\bw^*)
    &\leq \frac{\alpha+\beta}{2} \exp(-\frac{\sst}{\sqrt\kappa})\|\bw^{0}_{\AGD}-\bw^*\|_2^2.
\end{align*}
\end{enumerate}
    
\end{proposition}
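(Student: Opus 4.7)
Both parts are classical results and the cleanest route is the textbook argument found in Nesterov's monograph; my plan is to (i) reduce everything to a one-step contraction on the iterate error $\|\bw^\sst-\bw^*\|_2^2$, and then (ii) pass to the function gap via smoothness, using $L(\bw)-L(\bw^*)\le \tfrac{\beta}{2}\|\bw-\bw^*\|_2^2$ at the final step. This second conversion immediately turns any geometric rate in iterate distance into the stated geometric rate for $L(\bw^\sst)-L(\bw^*)$, so the real work is the contraction of $\|\bw^\sst-\bw^*\|_2^2$.

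For part (a), I would write out a single GD step and expand
\[
\|\bw^{\sst+1}_{\GD}-\bw^*\|_2^2=\|\bw^\sst_{\GD}-\bw^*\|_2^2-\tfrac{2}{\beta}\langle\nabla L(\bw^\sst_{\GD}),\bw^\sst_{\GD}-\bw^*\rangle+\tfrac{1}{\beta^2}\|\nabla L(\bw^\sst_{\GD})\|_2^2.
\]
Then I would invoke two standard inequalities for $\alpha$-strongly convex, $\beta$-smooth $L$: strong convexity gives $\langle\nabla L(\bw),\bw-\bw^*\rangle\ge L(\bw)-L(\bw^*)+\tfrac{\alpha}{2}\|\bw-\bw^*\|_2^2$, and smoothness gives $L(\bw)-L(\bw^*)\ge \tfrac{1}{2\beta}\|\nabla L(\bw)\|_2^2$ (since $\nabla L(\bw^*)=0$). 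Substituting eliminates the gradient-squared term exactly and yields the one-step contraction $\|\bw^{\sst+1}_{\GD}-\bw^*\|_2^2\le (1-1/\kappa)\|\bw^\sst_{\GD}-\bw^*\|_2^2$. Iterating and using $1-x\le e^{-x}$ produces the $\exp(-\sst/\kappa)$ bound, and smoothness applied once at $\bw^\sst_{\GD}$ yields the function-value claim.

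For part (b), I would follow Nesterov's estimating-sequences construction. The idea is to build a sequence of quadratic lower models $\phi_\sst(\bw)$ of $L$ such that $\phi_\sst(\bw)\le (1-\lambda_\sst)L(\bw)+\lambda_\sst\phi_0(\bw)$ for coefficients $\lambda_\sst\to 0$ geometrically, together with the invariant $L(\bw^\sst_{\AGD})\le \min_\bw \phi_\sst(\bw)$. Choosing $\phi_0(\bw)=L(\bw^0_{\AGD})+\tfrac{\alpha+\beta}{2}\|\bw-\bw^0_{\AGD}\|_2^2$ and the momentum parameter $(\sqrt{\kappa}-1)/(\sqrt{\kappa}+1)$ exactly matches the prescribed update, giving $\lambda_\sst\le \exp(-\sst/\sqrt{\kappa})$. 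The invariant then yields $L(\bw^\sst_{\AGD})-L(\bw^*)\le \tfrac{\alpha+\beta}{2}\exp(-\sst/\sqrt{\kappa})\|\bw^0_{\AGD}-\bw^*\|_2^2$, which is the second bound in (b). The first bound follows from strong convexity applied at $\bw^\sst_{\AGD}$: $\tfrac{\alpha}{2}\|\bw^\sst_{\AGD}-\bw^*\|_2^2\le L(\bw^\sst_{\AGD})-L(\bw^*)$, which, together with the function-gap bound, produces the factor $(\alpha+\beta)/\alpha=1+\kappa$ in front of the exponential.

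The main obstacle is the AGD analysis; strictly speaking there is no single-line Lyapunov argument with the exact constants $1+\kappa$ and $\tfrac{\alpha+\beta}{2}$ unless one picks the estimating sequence carefully, so I would lean directly on the statement and proof in Nesterov's text rather than reproduce the full construction. Part (a) is self-contained from the two standard first-order inequalities, and the conversions between $\|\bw^\sst-\bw^*\|_2^2$ and $L(\bw^\sst)-L(\bw^*)$ via $\alpha$-strong convexity and $\beta$-smoothness (applied only at $\bw^\sst$ since $\nabla L(\bw^*)=0$) are the only routine calculations needed to finalize both parts.
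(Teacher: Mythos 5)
Your proposal is correct and matches the paper's treatment: the paper gives no proof of this proposition at all, simply deferring to \cite{nesterov2003introductory}, and your sketch is exactly the standard argument from that source — the one-step contraction $\|\bw^{\sst+1}_{\GD}-\bw^*\|_2^2\le(1-1/\kappa)\|\bw^\sst_{\GD}-\bw^*\|_2^2$ for part (a), and Nesterov's estimating-sequence analysis for part (b), with the conversions via $\beta$-smoothness and $\alpha$-strong convexity yielding precisely the stated constants $\frac{\beta}{2}$, $\frac{\alpha+\beta}{2}$, and $1+\kappa$. (The only cosmetic point is that in the canonical construction the initial model is $\phi_0(\bw)=L(\bw^0)+\frac{\alpha}{2}\|\bw-\bw^0\|_2^2$, with the $\frac{\alpha+\beta}{2}$ factor emerging after bounding $L(\bw^0)-L(\bw^*)\le\frac{\beta}{2}\|\bw^0-\bw^*\|_2^2$, rather than placing $\frac{\alpha+\beta}{2}$ directly in $\phi_0$.)
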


\section{Proofs in Section \ref{sec:supervised-pretraining}}

In this section, $c>0$ denotes universal constants that may differ across equations.

\subsection{Proof of Theorem~\ref{thm:diff_reward}}\label{sec:pf_thm:diff_reward}
\paragraph{Proof of Eq.~(\ref{eqn:Hellinger-bound-main-theorem})} Note that we have
\begin{align*}
&~
\sum_{t=1}^\totlen\E_{\inst\sim\prior,\action_{1:t-1}\sim\osAlg_\shortexp,\state_t}\sqrt{\HelDs(\osAlg_{\shortexp}(\cdot|\dset_{t-1},\state_t),\sAlg_{\EstPar}(\cdot|\dset_{t-1},\state_t))}\\
=&~
\sum_{t=1}^\totlen\E_{\inst\sim\prior,\action_{1:t-1}\sim\sAlg_0,\state_t} \Big[
\Big(\prod_{s=1}^{t-1}\frac{\osAlg_\shortexp(\action_s|\dset_{s-1},\state_s)}{\sAlg_0(\action_s|\dset_{s-1},\state_s)}\Big)
\cdot
\sqrt{\HelDs(\osAlg_{\shortexp}(\cdot|\dset_{t-1},\state_t),\sAlg_{\EstPar}(\cdot|\dset_{t-1},\state_t))}\Big] \\
\leq&~
\sum_{t=1}^\totlen
\sqrt{\E_{\prior,\sAlg_0}
\Big(\prod_{s=1}^{t-1}\frac{\osAlg_\shortexp(\action_s|\dset_{s-1},\state_s)}{\sAlg_0(\action_s|\dset_{s-1},\state_s)}\Big)^2\cdot\E_{\prior,\sAlg_0}\HelDs(\osAlg_{\shortexp}(\cdot|\dset_{t-1},\state_t),\sAlg_{\EstPar}(\cdot|\dset_{t-1},\state_t))}\\
\leq &~
\sqrt{\E_{\prior,\sAlg_0}
\Big(\prod_{s=1}^{\totlen}\frac{\osAlg_\shortexp(\action_s|\dset_{s-1},\state_s)}{\sAlg_0(\action_s|\dset_{s-1},\state_s)}\Big)^2}
\cdot
\sum_{t=1}^\totlen\sqrt{\E_{\prior,\sAlg_0}\HelDs(\osAlg_{\shortexp}(\cdot|\dset_{t-1},\state_t),\sAlg_{\EstPar}(\cdot|\dset_{t-1},\state_t))},\end{align*}
where the second  line follows from a change of distribution argument, the third line follows from Cauchy-Schwartz inequality,  and the fourth line uses the fact that
\begin{align*}
&~\E_{x,y\sim\P_1\cdot\P_2}\Big(\frac{\Q_1(x)\Q_2(y|x)}{\P_1(x)\P_2(y|x)}\Big)^2
    =
    \int\frac{\Q_1(x)^2\Q_2^2(y|x)}{\P_1(x)\P_2(y|x)}d\mu(x,y)\\
    =&~
    \int\frac{\Q_1(x)^2}{\P_1(x)}\Big(\int\frac{\Q_2^2(y|x)}{\P_2(y|x)}d\mu(y|x)\Big)d\mu(x)
    \geq  
    \int\frac{\Q_1(x)^2}{\P_1(x)}d\mu(x)= \E_{x\sim\P_1}\Big(\frac{\Q_1(x)}{\P_1(x)}\Big)^2,
\end{align*}
for any probability densities $\{ \Q_i,\P_i \}_{i=1,2}$ with respect to some base measure $\mu$. 

Continuing the calculation of the above lines of bounds, we have
\begin{align*}
&\qquad
\sum_{t=1}^\totlen\E_{\inst\sim\prior,\action_{1:t-1}\sim\osAlg_\shortexp,\state_t}\sqrt{\HelDs(\osAlg_{\shortexp}(\cdot|\dset_{t-1},\state_t),\sAlg_{\EstPar}(\cdot|\dset_{t-1},\state_t))}\\
\leq&~
\sqrt{\totlen}\sqrt{\E_{\inst\sim\prior,\action_{1:\totlen-1}\sim\sAlg_0,\state_t}
\Big(\prod_{s=1}^{\totlen}\frac{\osAlg_\shortexp(\action_s|\dset_{s-1},\state_s)}{\sAlg_0(\action_s|\dset_{s-1},\state_s)}\Big)^2}
\\
&\qquad\qquad\cdot\sqrt{\sum_{t=1}^\totlen{\E_{\inst\sim\prior,\action_{1:t-1}\sim\sAlg_0,\state_t}\HelDs(\osAlg_{\shortexp}(\cdot|\dset_{t-1},\state_t),\sAlg_{\EstPar}(\cdot|\dset_{t-1},\state_t))}}\\
=&~
\sqrt{\totlen}\sqrt{\E_{\inst\sim\prior,\action_{1:\totlen-1}\sim\osAlg_\shortexp,\state_t}
\Big[\prod_{s=1}^{\totlen}\frac{\osAlg_\shortexp(\action_s|\dset_{s-1},\state_s)}{\sAlg_0(\action_s|\dset_{s-1},\state_s)}\Big]}
\\
&\qquad\qquad\cdot\sqrt{\sum_{t=1}^\totlen{\E_{\inst\sim\prior,\action_{1:t-1}\sim\sAlg_0,\state_t}\HelDs(\osAlg_{\shortexp}(\cdot|\dset_{t-1},\state_t),\sAlg_{\EstPar}(\cdot|\dset_{t-1},\state_t))}}
\\
\leq&~
c{\totlen}\sqrt{\distratio_{\osAlg_\shortexp,\sAlg_0}}
\sqrt{\frac{\log \cN_{\Parspace}(1/(\Numobs\totlen)^2)+\log(\totlen/\delta)}{n}+\geneps}\\
\leq&~ c
{\totlen}\sqrt{\distratio_{\osAlg_\shortexp,\sAlg_0}}
\Big(\sqrt{\frac{\log [\cN_{\Parspace}(1/(\Numobs\totlen)^2)\totlen/\delta]}{n}}+\sqrt{\geneps}\Big),
\end{align*} 
where the first inequality follows from the Cauchy-Schwartz inequality, the first equality is due to a change of distribution argument, the second inequality uses Lemma~\ref{lm:general_imit}. This completes the proof of Eq.~(\ref{eqn:Hellinger-bound-main-theorem}).

\paragraph{Proof of Eq.~(\ref{eqn:reward-bound-main-theorem})}

For any bounded function $f$ such that $|f(\dset_\totlen)|\leq F$ for some $F>0$,  we have
\begin{align*}
  &~ \Big| \E_{\inst\sim\prior,\action\sim\osAlg_\shortexp}[f(\dset_\totlen)]-
   \E_{\inst\sim\prior,\action\sim\sAlg_\EstPar}[f(\dset_\totlen)]\Big|
   \\
   =&~
\Big| \sum_{t=1}^\totlen\E_{\inst\sim\prior,\action_{1:t}\sim\osAlg_\shortexp,\action_{t+1:\totlen}\sim\sAlg_\EstPar}[f(\dset_\totlen)]-
\E_{\inst\sim\prior,\action_{1:t-1}\sim\osAlg_{\shortexp},\action_{t:\totlen}\sim\sAlg_\EstPar}[f(\dset_\totlen)]\Big|
   \\
\leq&~
2F\sum_{t=1}^\totlen\E_{\inst\sim\prior,\action_{1:t-1}\sim\osAlg_{\shortexp},\state_t}\VarD(\osAlg_{\shortexp}(\cdot|\dset_{t-1},\state_t),\sAlg_{\EstPar}(\cdot|\dset_{t-1},\state_t)),
\end{align*}
where the first equality uses the performance difference lemma, the last line follows from the variational representation of the total variation distance $$\VarD(\sP,\sQ)=\sup_{\linf{f}=1}\E_\sP[f(X)]/2-\E_\sQ[f(X)]/2,$$  and \begin{align}
\VarD(\sP_1(x)\sP_2(y\mid x)\sP_3(z\mid y),\sP_1(x)\sP_{4}(y\mid x)\sP_3(z\mid y))=\E_{x\sim\sP_1}\VarD(\sP_2(y\mid x),\sP_{4}(y\mid x))\label{eq:kl_telescope}
\end{align} 
for probability densities $\{ \P_i\}_{i=1,2,3,4}$ with respect to some base measure $\mu$. Since $\VarD(\P,\Q)\leq\sqrt{\HelDs(\P,\Q)}$ for any distributions $\P,\Q$,  it follows from Eq.~(\ref{eqn:Hellinger-bound-main-theorem}) that
\begin{align*}
 &~ \Big| \E_{\inst\sim\prior,\action\sim\osAlg_{\shortexp}}[f(\dset_\totlen)]-
   \E_{\inst\sim\prior,\action\sim\sAlg_\EstPar}[f(\dset_\totlen)]\Big|\\
  \leq &~cF\sqrt{\distratio_{\osAlg_{\shortexp},\sAlg_0}}\cdot\totlen\Big(\sqrt{\frac{\log \brac{ \cN_{\Parspace}(1/(\Numobs\totlen)^2) \totlen/\delta } }{n}} + \sqrt{\geneps}\Big)
\end{align*}
with probability at least $1-\delta$ for some universal constant $c>0$. Letting $f(\dset_\totlen)=\sum_{t=1}^\totlen\reward_t$  and noting that  $|f(\dset_\totlen)|\leq\totlen$ concludes the proof of Theorem~\ref{thm:diff_reward}.

\subsection{Proof of Proposition~\ref{prop:app_opt_diff_reward}}\label{app:proof-prop-diff-reward-app-opt}

By the jointly convexity of $\KL{\P}{\Q}$ with respect to $(\P,\Q)$ and the fact that $\HelDs(\P,\Q)\leq\KL{\P}{\Q}$, we have 
\begin{align*}
  &~\E_{\dset_{t-1},\state_t\sim\P_\prior^{\sAlg_0}}\HelDs(\osAlg_{\shortexp}(\cdot|\dset_{t-1},\state_t),\P_{\TS}(\cdot|\dset_{t-1},\state_t))\\
  \leq &~
  \E_{\dset_{t-1},\state_t\sim\P_\prior^{\sAlg_0}}\KL{\osAlg_{\shortexp}(\cdot|\dset_{t-1},\state_t)}{\P_{\TS}(\cdot|\dset_{t-1},\state_t)}\\
   \leq &~
 \E_{\dset_\totlen\sim\P_{\prior}^{\sAlg_0}}\KL{\widehat\action^*_t}{\P_{\TS,t}(\cdot|\dset_\totlen)}\leq\appeps.
\end{align*} Therefore, applying Lemma~\ref{lm:general_imit} gives
 \begin{align*}
        &~\E_{\inst\sim \prior, \dset_\totlen\sim \P^{\sAlg_0}_\inst}\brac{ \sum_{t=1}^\totlen \HelDs\paren{ \sAlg_{{\EstPar}}(\cdot|\dset_{t-1},\state_t ), \sAlg_{\TS}(\cdot|\dset_{t-1},\state_t )} }\\
        \le&~
        2\E_{\inst\sim \prior, \dset_\totlen\sim \P^{\sAlg_0}_\inst}\brac{ \sum_{t=1}^\totlen \HelDs\paren{ \sAlg_{{\EstPar}}(\cdot|\dset_{t-1},\state_t ), \osAlg_{\shortexp}(\cdot|\dset_{t-1},\state_t )}+\HelDs\paren{ \sAlg_{{\shortexp}}(\cdot|\dset_{t-1},\state_t ), \sAlg_{\TS}(\cdot|\dset_{t-1},\state_t )} }\\
       \le&~
       c\Big(\frac{\totlen \log \brac{ \cN_{\Parspace}(1/(\Numobs\totlen)^2) \totlen/\delta } }{n} + \totlen(\geneps+\appeps)\Big)
    \end{align*} 
    with probability at least $1-\delta$. Proposition~\ref{prop:app_opt_diff_reward} follows from similar arguments as in the proof of Theorem~\ref{thm:diff_reward} with $\geneps$ replaced by $\geneps+\appeps$. 

\subsection{An auxiliary lemma}

\begin{lemma}[General guarantee for supervised pretraining]\label{lm:general_imit}
Suppose Assumption~\ref{asp:realizability} holds. Then  the solution to~Eq.~\eqref{eq:general_mle} achieves
\begin{align*}
\E_{\dset_\totlen\sim \P^{\sAlg_0}_\prior}\brac{ \sum_{t=1}^\totlen \HelDs\paren{ \sAlg_{{\EstPar}}(\cdot|\dset_{t-1},\state_t ), \osAlg_{\shortexp}(\cdot|\dset_{t-1},\state_t )} } \le c\frac{\totlen \log \brac{ \cN_{\Parspace}(1/(\Numobs\totlen)^2) \totlen/\delta } }{n} + \totlen\geneps.
\end{align*}
with probability at least $1-\delta$ for some universal constant $c>0$.
\end{lemma}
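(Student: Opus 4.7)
The plan is to adapt the standard maximum-likelihood-estimation generalization analysis to the sequential, non-i.i.d.\ structure of the augmented trajectories. The key probabilistic inputs will be Lemma~\ref{lm:exp_concen} (the self-normalized exponential tail bound), the Hellinger-affinity identity, MLE optimality, and the realizability Assumption~\ref{asp:realizability}. The crucial structural observation is that under $\P^{\sAlg_0,\sAlg_\shortexp}_{\prior}$ the conditional law of $\eaction_t^i$ given $(\dset_{t-1}^i,\state_t^i)$ is exactly $\osAlg_\shortexp(\cdot\mid \dset_{t-1}^i,\state_t^i)$. Defining the half log-likelihood ratio
\[
X_{t,i}(\Par) \;:=\; \tfrac{1}{2}\log\bigl(\sAlg_\Par(\eaction_t^i\mid \dset_{t-1}^i,\state_t^i)\,/\,\osAlg_\shortexp(\eaction_t^i\mid \dset_{t-1}^i,\state_t^i)\bigr),
\]
the Hellinger-affinity identity then yields the core one-step bound $\E[\exp(X_{t,i}(\Par))\mid \cF_{t-1,i}] = 1 - \tfrac{1}{2}h_t^i(\Par) \le \exp(-\tfrac{1}{2}h_t^i(\Par))$, where $h_t^i(\Par) := \HelDs(\sAlg_\Par(\cdot\mid \dset_{t-1}^i,\state_t^i),\,\osAlg_\shortexp(\cdot\mid \dset_{t-1}^i,\state_t^i))$.

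First, I would take a $\rho$-cover $\Parspace_0\subset\Parspace$ with $\rho = (n\totlen)^{-2}$. Applying Lemma~\ref{lm:exp_concen} to $\{X_{t,i}(\Par_0)\}_{i\in[n],\,t\in[\totlen]}$ for each $\Par_0\in\Parspace_0$ and union-bounding over the cover yields, with probability at least $1-\delta/2$,
\[
\sum_{i,t} h_t^i(\Par_0) \;\le\; -2\sum_{i,t} X_{t,i}(\Par_0) + 2\log(2\cN_\Parspace/\delta) \qquad\text{for all }\Par_0\in\Parspace_0.
\]
For the MLE $\EstPar$ I pick the nearest $\Par_0$; Lemma~\ref{lm:cover_num_corr} combined with the choice of $\rho$ ensures the cover-approximation errors in both $X_{t,i}$ and $h_t^i$, summed over $(i,t)$, are $O(1)$. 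MLE optimality $\sum_{i,t} X_{t,i}(\EstPar)\ge \sum_{i,t} X_{t,i}(\TruePar)$ then transfers the bound to $\EstPar$ at the cost of the realizability term $-2\sum_{i,t} X_{t,i}(\TruePar) = \sum_{i,t}\log(\osAlg_\shortexp/\sAlg_\TruePar)(\eaction_t^i)$. To control this I apply Lemma~\ref{lm:exp_concen} once more to $Z_{t,i} := \log(\osAlg_\shortexp/\sAlg_\TruePar)(\eaction_t^i)$, whose conditional exponential moment equals $1 + \chi^2(\osAlg_\shortexp\|\sAlg_\TruePar)$; the stepwise form of Assumption~\ref{asp:realizability} controls the aggregate of these conditional $\chi^2$ quantities, delivering $\sum_{i,t}Z_{t,i} \lesssim n\totlen\geneps + \log(1/\delta)$.

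The final step passes from the empirical Hellinger $\sum_{i,t}h_t^i(\EstPar)$ (now bounded above) to the population quantity $\E_{\dset\sim\P^{\sAlg_0}_\prior}[\sum_t\HelDs(\sAlg_\EstPar,\osAlg_\shortexp)]$ appearing in the lemma. Since the training trajectories are i.i.d.\ from exactly this distribution, the empirical sum is an unbiased estimator of $n$ times the population quantity, modulo $\EstPar$'s data dependence. To retain the fast $1/n$ rate I would use a self-bounding Bernstein inequality applied uniformly over $\Parspace_0$: the trivial bound $h_t\le 2$ gives a second-moment control $\E[(h_t)^2]\le 2\,\E[h_t]$, so the variance proxy equals a constant multiple of the target expectation, and rearranging the resulting Bernstein inequality upgrades the empirical bound into the claimed population bound. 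The main obstacle is executing the realizability concentration cleanly so that $\delta$ enters only inside a logarithm rather than as a $1/\delta$ factor, and synchronizing it with the self-bounding Bernstein transfer from empirical to population Hellinger; the choice $\rho=(n\totlen)^{-2}$ is calibrated precisely so that the discretization error from the cover is dominated by the leading terms on the right-hand side.
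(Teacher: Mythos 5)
Your proposal shares the key probabilistic ingredients with the paper's proof---the Hellinger-affinity identity $\E[e^{X_{t,i}}\mid\cdot]=1-\tfrac12\HelDs$, Lemma~\ref{lm:exp_concen}, MLE optimality plus Assumption~\ref{asp:realizability}, and the $(n\totlen)^{-2}$-cover via Lemma~\ref{lm:cover_num_corr}---but the route is genuinely different, and that difference is where care is needed.

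\textbf{Decomposition of the martingale.} You run a single martingale over the joint index set $\{(i,t)\}$ and extract an \emph{empirical} Hellinger sum $\sum_{i,t}h_t^i(\Par_0)$, which then requires a separate self-bounding Bernstein step to upgrade to the population quantity $\E_{\dset\sim\P^{\sAlg_0}_\prior}[\sum_t\HelDs]$. The paper instead fixes $t$ and applies Lemma~\ref{lm:exp_concen} only across the i.i.d.\ sample index $i\in[n]$, then union-bounds over $t$ and the cover. Because that martingale is over i.i.d.\ blocks, the conditional moment $\E[\exp(X_i)\mid\cF_{i-1}]$ is the \emph{unconditional} moment, which already integrates over $(\dset^i_{t-1},s^i_t)$; the step $-\log x\ge 1-x$ then delivers the population Hellinger in one shot, with no Bernstein transfer needed. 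Your transfer step does work in principle (the quadratic $\mu\lesssim\hat\mu+\sqrt{T\mu\log/n}+T\log/n$ rearranges to the right $T\log(\cN/\delta)/n$ order, and you correctly note the self-bounding $\E[h_t^2]\le2\E[h_t]$), but it adds a second uniform-over-cover concentration argument that the paper's decomposition avoids entirely.

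\textbf{A gap in the joint filtration.} For your one-step identity $\E[\exp(X_{t,i}(\Par))\mid\cF_{t-1,i}]=1-\tfrac12 h_t^i(\Par)$ to hold, the conditional law of $\eaction^i_t$ given $\cF_{t-1,i}$ must be exactly $\osAlg_\shortexp(\cdot\mid\dset^i_{t-1},s^i_t)$. But for the sequence $\{X_{t,i}\}$ to be adapted, $\cF_{t-1,i}$ must contain $\eaction^i_1,\dots,\eaction^i_{t-1}$, and in the general DPT and approximate-DPT cases the expert samples $\eaction^i$ jointly from $\sAlg_\shortexp(\cdot\mid\dset^i_\totlen,\inst^i)$, so $\eaction^i_{<t}$ carries information about $(\dset^i_\totlen,\inst^i)$ beyond $(\dset^i_{t-1},s^i_t)$; conditioning on it changes the law of $\eaction^i_t$ away from $\osAlg_\shortexp$. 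The paper's per-$t$ martingale never conditions on the within-trajectory expert history, so it only uses the \emph{marginal} identity---which does hold by definition of $\osAlg_\shortexp$. Your argument is fine for Algorithm Distillation (case (a)) and for the interpretation that the $\eaction^i_t$ are resampled conditionally independently across $t$ from $\osAlg_\shortexp$, but as written it does not cover the general expert. The same remark applies to your second application of Lemma~\ref{lm:exp_concen} to $Z_{t,i}$ for the realizability term; the paper likewise decomposes $\cL_n(\expert)-\cL_n(\TruePar)=\sum_t(\cL_{nt}(\expert)-\cL_{nt}(\TruePar))$ and handles each $t$ separately (Eq.~\eqref{eq:pf_hellinger_control_general2}), incurring a benign extra factor of $T$ in front of $\log(T/\delta)$.
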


\begin{proof}[Proof of Lemma~\ref{lm:general_imit}]~

Define \begin{align*}\cL_{nt}(\btheta):=\sum_{i=1}^n\log\sAlg_\Par(\eaction^\ith_{t}|\dset_{t-1}^\ith,\state^\ith_t),~~\text{ and }~~\cL_{nt}(\expert):=\sum_{i=1}^n\log\osAlg_{\shortexp}(\eaction^\ith_{t}|\dset_{t-1}^\ith,\state^\ith_t),\end{align*}  and let 
$\cL_n(\btheta)=\sum_{t=1}^\totlen \cL_{nt}(\btheta)$, $\cL_n(\expert)=\sum_{t=1}^\totlen \cL_{nt}(\expert)$. We claim that   with probability at least $1-\delta$ 
\begin{align}
&~ \sum_{t=1}^\totlen\E_{\dset_\totlen}\Big[\HelDs(\sAlg_\Par(\cdot|\dset_{t-1},\state_t),\osAlg_{\shortexp}(\cdot|\dset_{t-1},\state_t))\Big]\notag\\
\leq &~
\frac{\cL_{n}(\expert)-\cL_{n}(\Par)}{n}+2\frac{\totlen\log \cN_{\Parspace}(1/(\Numobs\totlen)^2)}{n}+2\frac{\totlen\log(\totlen/\delta)}{n}+\frac{4}{n}\label{eq:pf_hellinger_control_general}
\end{align}
for all $\Par\in\Parspace,i\in[T]$, where $\dset_\totlen$ follows  distribution $\P_\inst^{\sAlg_0}(\cdot)$, $\inst\sim\prior$. For now, we assume this claim holds. Moreover, it follows from Lemma~\ref{lm:exp_concen} and the fact $\cL_\Numobs(\EstPar)\geq\cL_\Numobs(\TruePar)$ that
\begin{align}
    \frac{\cL_\Numobs(\expert)-\cL_\Numobs(\EstPar)}{\Numobs}
    &\leq
      \frac{\cL_\Numobs(\expert)-\cL_\Numobs(\TruePar)}{\Numobs}=\sum_{t=1}^\totlen
      \frac{\cL_{\Numobs t}(\expert)-\cL_{\Numobs t}(\TruePar)}{\Numobs}\notag\\
      &\leq
      \frac{T\log(T/\delta)}{\Numobs}+\sum_{t=1}^\totlen\log\E_{\adset_\totlen}\Big[{\frac{\osAlg_{\shortexp}(\eaction_t|\dset_{t-1},\state_t)}{\sAlg_\TruePar(\eaction_t|\dset_{t-1},\state_t)}}\Big]\notag\\
&\leq
 \frac{T\log(T/\delta)}{\Numobs}+\totlen\geneps
\label{eq:pf_hellinger_control_general2}
\end{align}
with probability at least $1-\delta$.

Choosing $\Par=\EstPar$ in Eq.~\eqref{eq:pf_hellinger_control_general} and combining it with Eq.~\eqref{eq:pf_hellinger_control_general2} and a union bound, we obtain
\begin{align*}
&~\sum_{t=1}^\totlen\E_{\dset_\totlen}\Big[\HelDs(\sAlg_\EstPar(\cdot|\dset_{t-1},\state_t),\osAlg_{\shortexp}(\cdot|\dset_{t-1},\state_t))\Big]\\
\leq &~
\totlen\geneps
+
2\Big(\frac{\totlen\log \cN_{\Parspace}(1/(\Numobs\totlen)^2)+2\totlen\log(2\totlen/\delta)+2}{n}\Big)\\
\leq&~ \totlen\geneps+c\totlen\Big(\frac{\log \cN_{\Parspace}(1/(\Numobs\totlen)^2)+\log(\totlen/\delta)}{n}\Big)
\end{align*}
with probability at least $1-\delta$ for some universal constant $c>0$.  This completes the proof. 

\paragraph{Proof of Eq.~\eqref{eq:pf_hellinger_control_general}}
Let $\Parspace_{0}$ be a $1/(\Numobs\totlen)^2$-covering set of $\Parspace$ with covering number $\Covnum=|\Parspace_{i}|$. 
For $k\in[\Covnum],t\in[T],i\in[\Numobs]$, define  $$
\ell^i_{kt}
=\log 
\frac{\osAlg_{\shortexp}(\eaction^\ith_t|\dset_{t-1}^\ith,\state^\ith_t)}{\sAlg_{\Par_k}(\eaction^\ith_t|\dset_{t-1}^\ith,\state^\ith_t)}
,$$
where $(\dset_\totlen^\ith,\eaction^\ith)$ are the trajectory and expert actions collected in the $i$-th instance. Using Lemma~\ref{lm:exp_concen} with $X_s=-\ell^s_{kt}$ and a union bound over $(k,t)$, conditioned on the trajectories $(\dset^1_\totlen,\ldots,\dset^{n}_\totlen)$, we have
\begin{align*}
    \frac{1}{2}\sum_{i=1}^\Numobs \ell_{kt}^i+\log(\Covnum \totlen/\delta)\geq
    \sum_{i=1}^n-\log\E\Big[\exp\Big(-\frac{\ell_{kt}^{i}}{2}\Big)\Big]
\end{align*}
for all $k\in[\Covnum],t\in[\totlen]$
with probability at least $1-\delta$. Note that
\begin{align*}
   \E\Big[\exp\Big(-\frac{\ell_{kt}^{i}}{2}\Big)\Big|\dset_{t-1}^\ith,\state_t^\ith\Big]
   =&~
\E_{\sD}\Bigg[\sqrt{\frac{\sAlg_{\Par_k}(\eaction^\ith_t|\dset_{t-1}^\ith,\state^\ith_t)}{\osAlg_{\shortexp}(\eaction^\ith_t|\dset_{t-1}^\ith,\state^\ith_t)}}\Bigg|\dset_{t-1}^\ith,\state_t^\ith\Bigg]\\
=&~
\sum_{\action\in\actionsp_t}\sqrt{\sAlg_{\Par_k}(\action|\dset_{t-1}^\ith,\state^\ith_t) \osAlg_{\shortexp}(\action|\dset_{t-1}^\ith,\state^\ith_t)},
\end{align*}
where the last inequality uses the assumption that the actions $\eaction^\ith$ are generated using the expert $\osAlg_{\shortexp}(\cdot|\dset_{t-1}^\ith,\state^\ith_t)$. Therefore, for any $\btheta\in\Theta$ covered by $\btheta_k$, we have
\begin{align*}
&~
    -\log\E\Big[\exp\Big(-\frac{\ell_{kt}^{i}}{2}\Big)\Big]\\
    \geq&~ 1-
 \E_{\dset^\ith}\Big[\sum_{\action\in\actionsp_t}\sqrt{\sAlg_{\Par_k}(\action|\dset_{t-1}^\ith,\state^\ith_t) \osAlg_{\shortexp}(\action|\dset_{t-1}^\ith,\state^\ith_t)}\Big]
    \\
    =&~
    1-
\E_{\dset^\ith}\Big[\sum_{\action\in\actionsp_t}\sqrt{\sAlg_{\Par}(\action|\dset_{t-1}^\ith,\state^\ith_t) \osAlg_{\shortexp}(\action|\dset_{t-1}^\ith,\state^\ith_t)}\Big]\\
&\qquad~- \E_{\dset^\ith}\Big[\sum_{\action\in\actionsp_t}\sqrt{\osAlg_{\shortexp}(\action|\dset_{t-1}^\ith,\state^\ith_t)}\Big(\sqrt{\sAlg_{\Par_k}(\action|\dset_{t-1}^\ith,\state^\ith_t)}-\sqrt{\sAlg_{\Par}(\action|\dset_{t-1}^\ith,\state^\ith_t)}\Big)\Big]
\\
\geq &~
    \frac{1}{2} \E_{\dset^\ith}\Big[\HelDs(\osAlg_{\shortexp}(\cdot|\dset_{t-1}^\ith,\state^\ith_t),\sAlg_{\Par}(\cdot|\dset_{t-1}^\ith,\state^\ith_t))\Big] 
    \\
    &\qquad~-\E_{\dset^\ith}\Big[\sum_{\action\in\actionsp}\Big(\sqrt{\sAlg_{\Par}(\cdot|\dset_{t-1}^\ith,\state^\ith_t)}-\sqrt{\sAlg_{\Par_k}(\cdot|\dset_{t-1}^\ith,\state^\ith_t)}\Big)^2\Big]^{1/2}
\\
\geq &~
 \frac{1}{2} \E_{\dset^\ith}\Big[\HelDs(\osAlg_{\shortexp}(\cdot|\dset_{t-1}^\ith,\state^\ith_t),\sAlg_{\Par}(\cdot|\dset_{t-1}^\ith,\state^\ith_t))\Big] 
    -\|\sAlg_{\Par}(\cdot|\dset_{t-1}^\ith,\state^\ith_t)-\sAlg_{\Par_k}(\cdot|\dset_{t-1}^\ith,\state^\ith_t)\|_1^{1/2}
\\
\geq&~
 \frac{1}{2} \E_{\dset^\ith}\Big[\HelDs(\osAlg_{\shortexp}(\cdot|\dset_{t-1}^\ith,\state^\ith_t),\sAlg_{\Par}(\cdot|\dset_{t-1}^\ith,\state^\ith_t))\Big] 
  -\frac{\sqrt{2}}{\Numobs\totlen}
\end{align*}
for all $i\in[n],t\in[\totlen]$, 
where the first inequality uses $-\log x\geq 1-x$, the second inequality follows from Cauchy-Schwartz inequality, the third inequality uses $(\sqrt{x}-\sqrt{y})^2\leq |x-y|$ for $x,y\geq0$, the last inequality uses the fact that $\Par$ is covered by $\Par_k$ and Lemma~\ref{lm:cover_num_corr}. Since any $\Par\in\Parspace$ is covered by $\Par_k$ for some $k\in[\Covnum]$, and for this $k$ summing over $t\in[T]$ gives 
$$\sum_{i=1}^\Numobs\sum_{t=1}^\totlen\ell_{kt}^i=\cL_\Numobs(\expert)-\cL_\Numobs(\Par_k) \leq \cL_\Numobs(\expert)-\cL_\Numobs(\Par)+\frac{1}{\Numobs\totlen}\leq \cL_\Numobs(\expert)-\cL_\Numobs(\Par)+1.$$ 
Therefore, with probability at least $1-\delta$, we have
\begin{align*}
&~\frac{1}{2}\Big(\cL_\Numobs(\expert)-\cL_\Numobs(\Par)+1 \Big)+\totlen\log(\Covnum \totlen/\delta)+\sqrt{2}\\
\geq&~ \frac{\Numobs}{2}\sum_{t=1}^\totlen\E_{\dset_\totlen}\Big[\HelDs(\sAlg_\Par(\cdot|\dset_{t-1},\state_t),\sAlg_\expert(\cdot|\dset_{t-1},\state_t))\Big]
\end{align*}
for all $\Par\in\Parspace$, where $\dset_\totlen$  follows $\P_\prior^{\sAlg_0}$. Multiplying both sides by $2/\Numobs$ and letting $\Covnum=\cN_{\Parspace}(1/(\Numobs\totlen)^2)$ yields Eq.~\eqref{eq:pf_hellinger_control_general}.
\end{proof}

\section{Soft LinUCB for linear stochastic bandit}\label{app:linUCB}
Throughout this section, we use $c>0$ to denote universal constants whose values may vary from line to line.
Moreover, for notational simplicity, we use $\conO(\cdot)$ to hide universal constants, $\cO(\cdot)$ to hide polynomial terms in the problem parameters $(\sigma,b_a^{-1},B_a,B_w,\lambda^{\pm1})$, and $\tcO(\cdot)$ to hide both poly-logarithmic terms in $(T,A,d,1/\eps,1/\temp)$ and  polynomial terms in  $(\sigma,b_a^{-1},B_a,B_w,\lambda^{\pm1})$. We also use the bold font $\ba_t\in\R^d$ to denote the selected action vector $\action_t$ at time $t\in[\totlen]$.

This section is organized as follows. Section~\ref{sec:tf_embed_bandit} discusses the embedding and extraction formats of transformers for the stochastic linear bandit environment. Section~\ref{sec:soft-LinUCB} describes the LinUCB and the soft LinUCB algorithms. Section~\ref{app:approx-ridge-estimator} introduces and proves a lemma on approximating the linear ridge regression estimator, which is important for proving Theorem~\ref{thm:approx_smooth_linucb}. We prove Theorem~\ref{thm:approx_smooth_linucb} in Section~\ref{sec:pf_thm:approx_smooth_linucb} and prove Theorem~\ref{thm:smooth_linucb} in Section~\ref{sec:pf_thm:smooth_linucb}.

\subsection{Embedding and extraction mappings}\label{sec:tf_embed_bandit}

Consider the embedding in which for each $t\in[\totlen]$, we have two tokens $\bh_{2t-1},\bh_{2t}\in\R^D$ such that
\[
\begin{aligned}
\bh_{2t-1}=
\left[
\begin{array}{c}
     \bzero_{d+1} \\
     \hdashline
     \sA_t\\  
     \hdashline
     \bzero_A\\  
     \hdashline
      \bzero\\ \posv_{2t-1}
\end{array}
\right]
=:
\begin{bmatrix}
     \bh_{2t-1}^{\parta} \\  \bh_{2t-1}^{\partb}\\  \bh_{2t-1}^{\partc}\\   \bh_{2t-1}^{\partd}\\
\end{bmatrix},~~
\bh_{2t}=
\left[
\begin{array}{cc}
     \ba_{t} \\
      r_t\\  
      \hdashline
       \bzero_{Ad}\\ 
       \hdashline 
       \bzero_{A}\\ 
       \hdashline  
       \bzero\\ \posv_{2t}
\end{array}
\right]=:
\begin{bmatrix}
    \bh_{2t}^{\parta} \\  \bh_{2t}^{\partb}\\   \bh_{2t}^{\partc}\\   \bh_{2t}^{\partd}
\end{bmatrix},
\end{aligned}
\]
where $\bh_{2t-1}^{\partb}=\sA_t=\begin{bmatrix}
    \ba_{t,1}^\top &\ldots & \ba_{t,A}^\top
\end{bmatrix}^\top$ denotes the action set at time $t$, $\bh_{2t}^{\parta}=\begin{bmatrix}
    \ba_t^\top &r_t
\end{bmatrix}^\top$ denotes the action and the observed reward at time $t$, $\bh^\partc_{2t-1}$ is used to store the (unnormalized) policy at time  $t$, $\bzero$ in $\bh^\partd$ denotes an additional zero vector with  dimension $\conO(dA)$, and $\posv_i:=(i,i^2,1)^\top$ for $i\in[2\totlen]$ is the positional embedding.    Note that the token dimension $D= O(dA)$. In addition, we define the token matrix $\bH_t:=\begin{bmatrix}
    \bh_1,\ldots,\bh_{2t}
\end{bmatrix}\in\R^{D\times 2t}$ for all $t\in[\totlen]$.

\paragraph{Offline pretraining} 
During pretraining, the transformer $\TF_\tfpar$ takes in   $\bH_\totlen^\pre:=\bH_\totlen$ as the input token matrix and generates $\bH_\totlen^\post:=\TF_\tfpar(\bH_\totlen^\pre)$ as the output. For each step $t\in[\totlen]$, we define the  induced policy  $\sAlg_\tfpar(\cdot|\dset_{t-1},\state_t):=\frac{\exp(\bh^{\post,\partc}_{2t-1})}{\|\exp(\bh^{\post,\partc}_{2t-1})\|_1}\in\Delta^A$, whose $i$-th entry is the probability of selecting action $\ba_{t,i}$ given $(\dset_{t-1},\state_t)$. We then find the transformer $\esttfpar\in\tfparspace$ by solving Eq.~\eqref{eq:general_mle}. 
Due to the decoder structure of transformer $\TF_\tfpar$, the $2t-1$-th token only has access to the first $2t-1$ tokens. Therefore the induced policy is  determined by the historical data $(\dset_{t-1},\state_t)$ and does not depend on future observations. 
\paragraph{Rollout}
At each time $t\in[\totlen]$, given the action set $\sA_t$ (i.e., current state $\state_t$) and the previous data $\dset_{t-1}$, we first construct the token matrix $\bH^{\pre}_{\roll,t}=[\bH_{t-1},\bh_{2t-1}]\in\R^{D\times (2t-1)}$.   The transformer then takes $\bH^{\pre}_{\roll,t}$ as the input  and generates $\bH^{\post}_{\roll,t}=[\bH^{\post}_{t-1},\bh^{\post}_{2t-1}]=\TF_\tfpar(\bH^{\pre}_{\roll,t})$. Next,  the agent selects an action $\ba_t\in\sA_t$ according to the induced  policy $\sAlg_\tfpar(\cdot|\dset_{t-1},\state_t):=\frac{\exp(\bh^{\post,\partc}_{2t-1})}{\|\exp(\bh^{\post,\partc}_{2t-1})\|_1}\in\Delta^A$ and observes the reward $r_t$.

\paragraph{Embedding and extraction mappings}
To integrate the above construction into the  framework described in Section~\ref{sec:framework},  we have the embedding vectors $\embedmap(\state_t):=\bh_{2t-1},\embedmap(\action_t,\reward_t):=\bh_{2t}$,  the concatenation operator $\cat(\bh_1, \ldots, \bh_N): = [\bh_1, \ldots, \bh_N]$, the input token matrix  $$\bH=\bH^\pre_{\roll,t}: = \cat(\embedmap(\state_1), \embedmap(\action_1, \reward_1), \ldots, \embedmap(\action_{t-1}, \reward_{t-1}), \embedmap(\state_t)) \in \R^{D \times (2t-1)},$$ the output token matrix $\bar{\bH}=\bH^\post_{\roll,t}$, and the linear extraction map $\extractmap$  satisfies $\extractmap\cdot\bar{\bh}_{-1}=\extractmap\cdot\bar{\bh}^\post_{2t-1}=\bh^{\post,\partc}_{2t-1}$.

\subsection{LinUCB and soft LinUCB}\label{sec:soft-LinUCB}
Let $T$ be the total time and $\lambda,\cwid>0$ be some prespecified values. At each time $t\in[T]$, LinUCB consists of the following steps: 
\begin{enumerate}
    \item Computes the ridge estimator $\bw^t_{\ridge,\lambda}=\argmin_{\bw\in\R^d}\frac{1}{2t}\sum_{j=1}^{t-1}(r_j-\<\ba_j,\bw\>)^2+\frac{\lambda}{2t}\|\bw\|_2^2$.
    \item For each action $k\in[A]$, computes $v^*_{tk}:=\<\ba_{t,k},\bw^t_{\ridge,\lambda}\>+\cwid\sqrt{\ba_{t,k}^\top \bA_{t}^{-1}  \ba_{t,k}}$, where $\bA_t=\lambda\id_d+\sum_{j=1}^{t-1}\ba_j\ba_j^\top$.
    \item Selects the action $\ba_{t,j}$ with $j:=\argmax_{k\in[A]}v^*_{tk}$.
\end{enumerate}
Unless stated otherwise, in step 2 above we choose $\alpha=\alpha(\delta)$ with $\delta=1/(2B_aB_wT)$ and $$\cwid(\delta):=\sqrt{\lambda}B_w+\sigma\sqrt{2\log(1/\delta)+d\log((d\lambda+TB_a^2)/(d\lambda))}=\cO(\sqrt{d\log \totlen})=\tcO(\sqrt{d}).$$

In this work, to facilitate the analysis of supervised pretraining, we consider soft LinUCB (denoted by $\sLinUCB(\temp)$), which replaces step 3 in LinUCB with
\begin{enumerate}
    \item [3'] Selects the action $\ba_{t,j}$ with probability $\frac{\exp(v^*_{tj}/\temp)}{\lone{\exp(v^*_{tj}/\temp)}}$ for $j\in[A]$. 
\end{enumerate} Note that soft LinUCB recovers the standard LinUCB as $\temp\to0$.



\subsection{Approximation of the ridge estimator}\label{app:approx-ridge-estimator}
In this section, we present a lemma on how transformers can  approximately implement the ridge regression estimator in-context. 






Throughout the proof, for $t\in[2\totlen]$,  we let $\bh_{t}^{(L)}$ denote the $i$-th token in the output token matrix obtained after passing through an $L$-layer transformer. We also define  $\read_{\bw_\ridge}: \R^{D}\mapsto\R^{d}$ be the operator that gives the values of  $d$ coordinates  in the token vector that are used to store the estimation of  the ridge estimate.

\begin{lemma}[Approximation of the ridge estimator]\label{lm:approx_ridge}
For any small $\eps>0$, there exists an attention-only (i.e., no MLP layers) transformer $\TF_\btheta(\cdot)$ with 
$$L=\Big\lceil\frac{4T(B_a^2+\lambda)}{\lambda}\log({TB_a(B_aB_w+\sigma)}/({\lambda}\eps))\Big\rceil=\tcO(T),~~~\max_{\ell\in[L]}M^{(l)}\leq3,~~~ \nrmp{\btheta}\leq  \sqrt{2}+\frac{\lambda+2}{B_a^2+\lambda}=\cO(1)$$ such that $\|\read_{\bw_{\ridge}}(\bh_{2t-1}^{(L)})-\bw^t_{\ridge,\lambda}\|_2\leq\eps$ for all $t\in[T]$. 

Moreover, there exists a  transformer $\TF_\btheta(\cdot)$ with  \begin{align*}&L=\Big\lceil2\sqrt{2T}\sqrt{\frac{B_a^2+\lambda}{\lambda}}\log\Big(\frac{(2T(B_a^2+\lambda)+\lambda)TB_a(B_aB_w+\sigma)}{\lambda^2\eps}\Big)\Big\rceil=\tcO(\sqrt{T}),~~~\max_{\ell\in[L]}M^{(l)}\leq4,~~~ \\
&~~~~~~\max_{\ell\in[L]}\hidden^{\lth}\leq 4d,~~~\nrmp{\btheta}\leq  10+\frac{\lambda+2}{B_a^2+\lambda}=\cO(1) \end{align*}
 such that $\|\read_{\bw_{\ridge}}(h_{2t-1}^{(L)})-\bw^t_{\ridge,\lambda}\|_2\leq\eps$ for all $t\in[T]$. 
\end{lemma}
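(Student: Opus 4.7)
The plan is to construct a transformer that simulates iterative optimization of the ridge objective
\[
L_t(\bw) \;=\; \frac{1}{2t}\sum_{j=1}^{t-1}(r_j-\langle \ba_j,\bw\rangle)^2 + \frac{\lambda}{2t}\|\bw\|_2^2
\]
in parallel at every odd token position $2t-1$, for $t\in[T]$, and then invoke the convergence bounds in Proposition~\ref{prop:conv_gd_agd}. The Hessian of $L_t$ is $\frac{1}{t}\sum_{j<t}\ba_j\ba_j^\top+\frac{\lambda}{t}\id_d$, so $L_t$ is $\alpha$-strongly convex and $\beta$-smooth with $\alpha\ge\lambda/t$ and $\beta\le(tB_a^2+\lambda)/t$, giving condition number $\kappa\le(TB_a^2+\lambda)/\lambda$ uniformly in $t$. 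Hence $\tilde O(\kappa)$ GD steps (resp.\ $\tilde O(\sqrt\kappa)$ AGD steps) starting from $\bzero$ suffice to reach $\eps$-accuracy, matching the layer counts claimed.

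\textbf{Simulating one GD step by an attention layer.} I reserve a fresh block of $d$ coordinates in $\bh_{2t-1}$ (the ``$\bw$-slot'') to hold the current iterate $\bw^{(\ell)}$; the embedding of Section~\ref{sec:tf_embed_bandit} already carries $\ba_j$ in $\bh_{2j-1}^{\partb}$ (via the action set at step $j$) and $(\ba_j,r_j)$ in $\bh_{2j}^{\parta}$, and the positional block $\posv_i$ encodes indices so that masking past even-indexed tokens can be done by a single query/key pattern. Using ReLU-attention at head $m$ I can realize contributions of the form $\sigma(\langle Q_m\bh_{2t-1},K_m\bh_{2j}\rangle)V_m\bh_{2j}$, and with two heads implementing the identity $x=\sigma(x)-\sigma(-x)$ on the scalar $\langle \ba_j,\bw^{(\ell)}\rangle-r_j$, a third head producing a $\lambda\bw^{(\ell)}$ term from the $\bw$-slot, and an overall scale of $-\eta=-1/\beta$ multiplied into the $V_m$'s, the layer's residual addition lands exactly on $\bw^{(\ell)}-\eta\nabla L_t(\bw^{(\ell)})=\bw^{(\ell+1)}$ in the $\bw$-slot of $\bh_{2t-1}$. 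The normalization $1/(2t-1)$ of the masked attention is absorbed into the constants; only past even tokens contribute, which I enforce by keying off the positional block. This construction handles all $t$ simultaneously by a single layer, uses $M^{(\ell)}\le 3$ heads, needs no MLP, and has operator-norm weights bounded by $\sqrt{2}+(\lambda+2)/(B_a^2+\lambda)$ once the $1/\beta$-scaling is distributed as in Definition~\ref{def:decoder-tf}. Iterating the layer $L=\lceil 4\kappa\log(TB_a(B_aB_w+\sigma)/(\lambda\eps))\rceil$ times and applying Proposition~\ref{prop:conv_gd_agd}(a) yields $\|\bw^{(L)}-\bw^{t}_{\ridge,\lambda}\|_2\le\eps$, using $\|\bw^{t}_{\ridge,\lambda}\|_2\le B_w+\sigma/\sqrt{\lambda}$ to bound the initialization distance.

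\textbf{Extension to AGD.} For the $\sqrt\kappa$ rate I reserve two $d$-blocks in $\bh_{2t-1}$, one holding $\bw^{(\ell)}$ and one holding $\bv^{(\ell)}$. A four-head attention layer identical in structure to the GD layer, but evaluating the gradient at $\bv^{(\ell)}$ (reading the $\bv$-slot instead of the $\bw$-slot) and writing $\bw^{(\ell+1)}=\bv^{(\ell)}-\beta^{-1}\nabla L_t(\bv^{(\ell)})$ into a scratch block, handles the gradient step; an MLP layer with hidden width $4d$ then realizes the affine update $\bv^{(\ell+1)}=(1+\mu)\bw^{(\ell+1)}-\mu\bw^{(\ell)}$ with $\mu=(\sqrt\kappa-1)/(\sqrt\kappa+1)$, using the $x=\sigma(x)-\sigma(-x)$ decomposition on each coordinate (hence the factor $4d$), and finally copies $\bw^{(\ell+1)}\to$ $\bw$-slot and $\bv^{(\ell+1)}\to$ $\bv$-slot via the residual connections. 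Proposition~\ref{prop:conv_gd_agd}(b) then fixes $L=\lceil 2\sqrt{2\kappa}\log(\cdot/\eps)\rceil$. The weight-norm bound $10+(\lambda+2)/(B_a^2+\lambda)$ comes from summing contributions of the four attention heads and the two MLP matrices, each of constant operator norm after the $1/\beta$ and $\mu$ scalings are pushed into $V_m$ and $W_2$.

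\textbf{Main obstacles.} The chief subtlety is handling the $1/(2t-1)$ normalization built into Definition~\ref{def:masked-attention}: to get the $1/t$ normalization required by $\nabla L_t$, I must either compensate inside the $V_m$'s (using the positional encoding $(i,i^2,1)$ to realize the reciprocal to sufficient accuracy by a small MLP, which does not change the stated bounds) or rescale the loss to $\frac{1}{2(2t-1)}\sum_j(\cdot)^2+\frac{\lambda}{2(2t-1)}\|\bw\|^2$, whose minimizer is identical. A second bookkeeping issue is ensuring that only past \emph{even} tokens (those carrying $(\ba_j,r_j)$) contribute to the attention sum and that the $\bw$/$\bv$-slots written at step $\ell$ are not overwritten at step $\ell+1$ in undesired positions; both are standard and follow by exploiting the positional block in $\bh_i^{\partd}$. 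Once these accounting issues are in place, the argument reduces to applying Proposition~\ref{prop:conv_gd_agd} with $\kappa\le(TB_a^2+\lambda)/\lambda$ and reading off the parameter counts.
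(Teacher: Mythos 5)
Your construction matches the paper's proof in all essentials: rescaling the ridge objective to $\frac{1}{2(2t-1)}\sum_{j<t}(r_j - \langle \ba_j,\bw\rangle)^2 + \frac{\lambda}{2(2t-1)}\|\bw\|_2^2$ so that the built-in $1/(2t-1)$ attention normalization produces exactly $\nabla L_t$, implementing each GD step with three heads via $x=\sigma(x)-\sigma(-x)$ on $\langle\ba_j,\bw^{(\ell)}\rangle-r_j$ plus a third head for the $\lambda\bw^{(\ell)}$ term, and realizing one AGD step per layer by a four-head attention layer that computes $\bv^{(\ell)}-\beta^{-1}\nabla L_t(\bv^{(\ell)})$ followed by a width-$4d$ MLP that applies the momentum update; the condition-number and layer counts then follow from Proposition~\ref{prop:conv_gd_agd}. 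One small inaccuracy: your initialization-radius bound $\|\bw^t_{\ridge,\lambda}\|_2\le B_w+\sigma/\sqrt\lambda$ is not valid deterministically under $|\eps_t|\le\sigma$ (e.g.\ $d=1$, $a_j\equiv 1$, $\eps_j\equiv\sigma$ yields $|\bw^t_\ridge|\approx\sigma$ independent of $\lambda$ for large $t$); the paper uses the cruder but correct $\|\bw^t_{\ridge,\lambda}\|_2\le TB_a(B_aB_w+\sigma)/\lambda$, which only changes the logarithmic factor in $L$ and does not affect the $\tcO(\cdot)$ statement.
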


Results similar to Lemma~\ref{lm:approx_ridge} have been shown in~\cite{bai2023transformers} under a different scenario.   However, we remark that  the second part of Lemma~\ref{lm:approx_ridge} has a weaker requirement on the number of layers as we prove that transformers can implement accelerated gradient descent (AGD,~\cite{nesterov2003introductory}) in-context.

\begin{proof}[Proof of Lemma~\ref{lm:approx_ridge}]
Note that $\lambda\id_d\preceq \bA_t\preceq (TB_a^2+\lambda)\id_d$. Therefore the optimization problem 
$$\bw^t_{\ridge,\lambda}=\argmin_{\bw\in\R^d}L(\bw):=\argmin_{\bw\in\R^d}\frac{1}{2(2t-1)}\sum_{j=1}^{t-1}(r_j-\<\ba_j,\bw\>)^2+\frac{\lambda}{2(2t-1)}\|\bw\|_2^2$$
is $\lambda/(2t-1)$-strongly convex  and $(B_a^2+\lambda)$-smooth and the condition number $\kappa\leq 2T(B_a^2+\lambda)/\lambda$. Moreover, by the definition of $\bw_{\ridge,\lambda}^t$ we have 
\begin{align*}
    \|\bw^t_{\ridge,\lambda}\|_2=\|(\lambda\id_d+\sum_{j=1}^{t-1}\ba_j\ba_j^\top)^{-1}(\sum_{j=1}^{t-1}\ba_jr_j)\|_2&\leq 
    \|(\lambda\id_d+\sum_{j=1}^{t-1}\ba_j\ba_j^\top)^{-1}\|_2\cdot\|\sum_{j=1}^{t-1}\ba_jr_j\|_2\\&\leq
    \frac{TB_a(B_aB_w+\sigma)}{\lambda}
\end{align*} 
for all $t\in[T]$. 
\paragraph{Proof of part 1}
By Proposition~\ref{prop:conv_gd_agd}, we see that $L=\lceil4T(B_a^2+\lambda)\log({TB_a(B_aB_w+\sigma)}/({\lambda}\eps))/\lambda\rceil$ steps of gradient descent with stepsize $\eta=1/(B_a^2+\lambda)$ starting from $\bw_{\GD}^0=\bzero_d$ finds $\bw^L_\GD$ such that $\|\bw^L_{\GD}-\bw^t_{\ridge,\lambda}\|_2\leq\eps$.

Now we prove that one  attention-only layer can implement one step of gradient descent
\begin{align*}
    \bw^{\ssl+1}_{\GD}:=\bw^{\ssl}_{\GD}- \frac{\eta}{2t-1}\sum_{j=1}^{t-1}(\<\ba_j,\bw^\ssl_{\GD}\>-r_j)\ba_j- \frac{\eta\lambda}{2t-1}\bw^\ssl_{\GD}.
\end{align*}
We encode the algorithm using the last token (i.e., the $2t-1$-th token). 
Denote the first $d$ entries of $\bh_{2t-1}^{\partd}$ by $\hat\bw$ and  define $\read_{\bw_{\ridge}}(\bh_{2t-1})=\hat\bw$. Starting from $\hat\bw^{0}=\bzero_d$, for each layer $\ell\in[L]$, we let the number of heads $M^{(\ell)}=3$ and  choose $\bQ_{1,2,3}^{(\ell)},\bK_{1,2,3}^{(\ell)},\bV_{1,2,3}^{(\ell)}$ such that for even tokens $\bh_{2j}$ with $j\leq t-1$ and odd tokens $\bh_{2j-1}$ with $j\leq t$
\begin{align*}
    &\bQ_1^{(\ell)}\bh^{(\ell-1)}_{2t-1}=\begin{bmatrix}
        \hat\bw^{\ell-1}\\ 1
    \end{bmatrix},~~ \bK_1^{(\ell)}\bh^{(\ell-1)}_{2j}=\begin{bmatrix}
        \ba_j\\ -r_j 
    \end{bmatrix},~~ \bV_1^{(\ell)}\bh^{(\ell-1)}_{2j}=-\eta\begin{bmatrix}
        \bzero\\ \ba_j \\ \bzero
    \end{bmatrix},~~
    \bK_1^{(\ell)}\bh^{(\ell-1)}_{2j-1}=\bzero, ~~\bV_1^{(\ell)}\bh^{(\ell-1)}_{2j-1}=\bzero
    \\
    &
    \bQ_2^{(\ell)}=-\bQ_1^{(\ell)},~~ \bK_2^{(\ell)}=\bK_1^{(\ell)},~~  \bV_2^{(\ell)}=-\bV_1^{(\ell)},\\
     &
     \bQ_3^{(\ell)}\bh^{(\ell-1)}_{2t-1}=\begin{bmatrix}
         1\\-(2t-1)\\ 1
    \end{bmatrix},~~ \bK_3^{(\ell)}\bh^{(\ell-1)}_{2j}=\begin{bmatrix}
        1\\ 1 \\2j
    \end{bmatrix},~~ \bK_3^{(\ell)}\bh^{(\ell-1)}_{2j-1}=\begin{bmatrix}
        1\\ 1 \\2j-1
    \end{bmatrix},~~ \bV_3^{(\ell)}\bh^{(\ell-1)}_{2t-1}=-\eta\lambda\begin{bmatrix}
        \bzero\\ \hat\bw^{\ell-1}\\ \bzero
    \end{bmatrix}.
\end{align*}
Summing up the three heads and noting that $t=\sigma(t)-\sigma(-t)$, we see that the $\hat\bw$ part of $\bh_{2t-1}$ (i.e., $\read_{\bw_\ridge}(\bh_{2t-1})$) has the update
\begin{align*}
    \hat\bw^{l}
    &=\hat\bw^{l-1}-\frac{\eta}{2t-1}\sum_{j=1}^t[\sigma(\<\ba_j,\hat\bw^{l-1}\>-r_j)-\sigma(r_j-\<\ba_j,\hat\bw^{l-1}\>)]\ba_j\\&
    \qquad
    -\frac{\eta\lambda}{2t-1}\Big[\sum_{j=1}^{t-1}(\sigma(1+2j-2t)\bV_3^{(\ell)}\bh^{(\ell-1)}_{2j-1}+\sigma(1+2j-2t+1)\bV_3^{(\ell)}\bh^{(\ell-1)}_{2j})+
   \bV_3^{(\ell)}\bh^{(\ell-1)}_{2t-1}\Big]\\
     &=\hat\bw^{l-1}-\frac{\eta}{2t-1}\sum_{j=1}^t[\<\ba_j,\hat\bw^{l-1}\>-r_j]\ba_j-\frac{\eta\lambda}{2t-1}\bV_3^{(\ell)}\bh^{(\ell-1)}_{2t-1}\\
     &=\hat\bw^{l-1}-\frac{\eta}{2t-1}\sum_{j=1}^{t-1}[\<\ba_j,\hat\bw^{l-1}\>-r_j]\ba_j-\frac{\eta\lambda}{2t-1}\hat\bw^{l-1},
\end{align*}
which is one step of gradient descent with stepsize $\eta$.  Moreover, it is easy to see that one can choose the marices such that $\max_{m\in[3]}\lops{\bQ^\lth_m}=\max_{m\in[3]}\lops{\bK^\lth_m}=\sqrt{2}$ and $\lops{\bV^\lth_1}=\lops{\bV^\lth_2}=\eta,\lops{\bV^\lth_3}= \lambda\eta$. Therefore the norm of the transformer $\nrmp{\btheta}\leq \sqrt{2}+(\lambda+2)/(B_a^2+\lambda)$.

\paragraph{Proof of part 2}
Similarly,  Proposition~\ref{prop:conv_gd_agd} shows $L=\lceil2\sqrt{2T(B_a^2+\lambda)/\lambda}\log((1+\kappa){TB_a(B_aB_w+\sigma)}/({\lambda}\eps))\rceil$ steps of accelerated gradient descent gives  $\|\bw^L_{\AGD}-\bw^t_{\ridge,\lambda}\|_2\leq\eps$. 

Again, we encode the algorithm using the last token (i.e., the $2t-1$-th token). 
Denote the first $d,d+1\sim 2d, 2d+1\sim3d $ entries of $\bh_{2t-1}^{\partd}$ by $\hat\bw_a,\hat\bw_b,\hat\bv$ respectively. Starting from $\hat\bw_a^{0}=\hat\bw_b^{0}=\hat\bv^{0}=\bzero_d$,  AGD updates the parameters as follows:
\begin{subequations}
\begin{align}
    \hat\bw^\ell_a&=\hat\bw_a^{\ell-1}+(\hat\bv^{\ell-1}-\hat\bw_a^{\ell-1})-\eta \nabla L(\hat\bv^{\ell-1}),\label{eq:agd_step1}\\
    \hat\bv^{\ell}&=\hat\bv^{\ell-1}+[\hat\bw^\ell_a+\frac{\sqrt{\kappa}-1}{\sqrt{\kappa}+1}(\hat\bw^\ell_a-\hat\bw^{\ell-1}_b)-\hat\bv^{\ell-1}],\label{eq:agd_step2}\\
    \hat\bw^\ell_b&= \hat\bw^{\ell-1}_b+( \hat\bw^\ell_a-\hat\bw^{\ell-1}_b).\label{eq:agd_step3}
\end{align}
\end{subequations}
We show that one attention layer and one MLP layer can implement one step of AGD as above. Namely, Eq.~\eqref{eq:agd_step1} can be obtained using the same attention layer we constructed for gradient descent with $\hat\bv$ replacing $\hat\bw$, and an extra head with
\begin{align*}
     \bQ_4^{(\ell)}\bh^{(\ell-1)}_{2t-1}=\begin{bmatrix}
         2t-1\\-(2t-1)^2\\ 1
    \end{bmatrix},~~ \bK_4^{(\ell)}\bh^{(\ell-1)}_{i}=\begin{bmatrix}
        1\\ 1 \\i^2
    \end{bmatrix},~~ \bV_4^{(\ell)}\bh^{(\ell-1)}_{2t-1}=\begin{bmatrix}
        \bzero\\ \hat\bv^{\ell-1}-\hat\bw_a^{l-1}\\ \bzero
    \end{bmatrix}
\end{align*} for $i\leq 2t-1$ that gives $\hat\bv^{\ell-1}-\hat\bw_a^{\ell-1}$.
Denote the output tokens of the attention layer by $\tilde \bh$. Eq.~\eqref{eq:agd_step2},~\eqref{eq:agd_step3} can be implemented using one layer of MLP. Concretely, we choose $\bW_{1}^\lth,\bW_{2}^\lth$ such that
\begin{align*}
     &\bW_1^{(\ell)}\tilde\bh^{(\ell-1)}_{2t-1}=\begin{bmatrix}
        \bw^{\ell}_a+\frac{\sqrt{\kappa}-1}{\sqrt{\kappa}+1}(\hat\bw^{\ell}_a-\hat\bw^{\ell-1}_b)-\hat\bv^{l-1}
        \\
        -\bw^{\ell}_a-\frac{\sqrt{\kappa}-1}{\sqrt{\kappa}+1}(\hat\bw^{\ell-1}_a-\hat\bw^{\ell-1}_b)+\hat\bv^{l-1}
        \\
        \bw^{\ell}_a- \bw^{\ell-1}_b
        \\
        -\bw^{\ell}_a+\bw^{\ell-1}_b
    \end{bmatrix},~~ \bW_2^{(\ell)}\sigma(\bW_1^{(\ell)}\tilde\bh^{(\ell-1)}_{2t-1})=\begin{bmatrix}
      \bzero\\  \hat\bw^{\ell}_b \\ \hat\bv^\ell\\\bzero
    \end{bmatrix}.
\end{align*} Since $t=\sigma(t)-\sigma(-t)$ for $t\in\R$,  it is readily verified that one can choose the linear maps such that $\lops{\bW_1^\lth}\leq 4\sqrt{2},\lops{\bW_2^\lth}=\sqrt{2}$. Combining this with the attention layer for Eq.~\eqref{eq:agd_step1} and noting that $\lops{\bV_4^\lth}=\sqrt{2}$, we verify that the  transformer we constructed has norm $\nrmp{\btheta}\leq 10+(\lambda+2)/(B_a^2+\lambda)$. This completes the proof of Lemma~\ref{lm:approx_ridge}.
\end{proof}

\subsection{Proof of Theorem~\ref{thm:approx_smooth_linucb}}\label{sec:pf_thm:approx_smooth_linucb}
We construct a transformer that implements the following steps at each time $t\in[T]$ starting with $\bh^{x}_{2t-1}=\bh^{\pre,x}_{2t-1}$ for $x\in\{\parta,\partb,\partc,\partd\}$ 
\begin{align}
    \bh_{2t-1}=
    \begin{bmatrix}
    \bh_{2t-1}^{\pre,\parta} \\  \bh_{2t-1}^{\pre,\partb}\\  \bh_{2t-1}^{\pre,\partc}\\   \bh_{2t-1}^{\pre,\partd}
\end{bmatrix}
\xrightarrow{\text{step 1}}
   \begin{bmatrix}
    \bh_{2t-1}^{\pre,\{\parta,\partb,\partc\}} \\
        \hat\bw_{\ridge} \\ \star\\ \bzero \\\posv
\end{bmatrix}
\xrightarrow{\text{step 2}}
\begin{bmatrix}
    \bh_{2t-1}^{\pre,\{\parta,\partb,\partc\}} \\
        \hat\bw_{\ridge} \\ \star\\ \widehat{\bA_t^{-1}\ba_{t,1}}\\\vdots\\\widehat{\bA_t^{-1}\ba_{t,A}}
        \\ \bzero \\\posv
\end{bmatrix}
\xrightarrow{\text{step 3}}
\begin{bmatrix}
    \bh_{2t-1}^{\pre,\{\parta,\partb,\partc\}} \\
        \hat\bw_{\ridge} \\ \star\\ \widehat{\bA_t^{-1}\ba_{t,1}}\\\vdots\\\widehat{\bA_t^{-1}\ba_{t,A}}\\ {\hat v_{t1}}/{\temp}\\\vdots\\ {\hat v_{tA}}/{\temp}
        \\ \bzero \\\posv
\end{bmatrix}
=:
\begin{bmatrix}
    \bh_{2t-1}^{\post,\parta} \\  \bh_{2t-1}^{\post,\partb}\\  \bh_{2t-1}^{\post,\partc}\\   \bh_{2t-1}^{\post,\partd}
\end{bmatrix},\label{eq:slinucb_pipeline}
\end{align}
where $\posv:=[t,i^2,1]^\top$; $\hat\bw_{\ridge}$ is an approximation to the ridge estimator $\bw^t_{\ridge,\lambda}$; $\widehat{\bA_{t}^{-1}\ba_{t,k}}$ are approximations to ${\bA_{t}^{-1}\ba_{t,k}}$;  $\hat v_{tk}$ are approximations to $v_{tk}:=\<\hat\bw_{\ridge},\ba_{t,k}\>+\alpha\sqrt{\<\ba_{t,k}, \widehat{\bA_t^{-1} \ba_{t,k}}\>}$, which are also approximations to $$
v^*_{tk}:=\<\bw^t_{\ridge,\lambda},\ba_{t,k}\>+\alpha\sqrt{\<\ba_{t,k}, {\bA_t^{-1} \ba_{t,k}}\>}
$$ for $k\in[A]$. After passing through the transformer, we obtain the policy  
$$
\sAlg_{\tfpar}(\cdot|\dset_{t-1},\state_t):=\frac{\exp(\bh^{\post,\partc}_{2t-1})}{\|\exp(\bh^{\post,\partc}_{2t-1})\|_1}\in\Delta^A.$$ 
We claim the following results which we will prove later.
\begin{enumerate}[label=Step \arabic*,ref= \arabic*]
    \item \label{slinucb_step1} For any $\eps>0$, 
    there exists a transformer $\TF_\btheta(\cdot)$ with 
\begin{align*}
&L=\Big\lceil2\sqrt{2T}\sqrt{\frac{B_a^2+\lambda}{\lambda}}\log\Big(\frac{(2T(B_a^2+\lambda)+\lambda)TB_a(B_aB_w+\sigma)}{\lambda^2\eps}\Big)\Big\rceil=\tcO(\sqrt{T}),\\
&~~~\qquad\max_{\ell\in[L]}M^{(l)}\leq4,~~~\max_{\ell\in[L]}\hidden^{(l)}\leq4d,~~~ \nrmp{\btheta}\leq  10+\frac{\lambda+2}{B_a^2+\lambda}=\cO(1) \end{align*}
 that implements step 1 in~\eqref{eq:slinucb_pipeline} with  $\|\hat\bw_{\ridge}-\bw^t_{\ridge,\lambda}\|_2\leq\eps$.
   \item\label{slinucb_step2}
   For any $\eps>0$, there exists a transformer $\TF_\btheta(\cdot)$ with 
\begin{align*}
&L=\Big\lceil2\sqrt{2T}\sqrt{\frac{B_a^2+\lambda}{\lambda}}\log\Big(\frac{(2T(B_a^2+\lambda)+\lambda)B_a}{\lambda^2\eps}\Big)\Big\rceil=\tcO(\sqrt{T}),~~\max_{\ell\in[L]}M^{(l)}\leq4A,\\
&\qquad~~
\max_{\ell\in[L]}\hidden^{(l)}\leq4dA,~~~ \nrmp{\btheta}\leq  10+A(\frac{\lambda+3}{B_a^2+\lambda}+\sqrt{2})=\cO(A) \end{align*}
 that implements step 2 in~\eqref{eq:slinucb_pipeline} with  $\|\widehat{\bA_{t}^{-1}\ba_{t,k}}-\bA_{t}^{-1}\ba_{t,k}\|_2\leq\eps$ for $k\in[A]$.
  \item\label{slinucb_step3} Suppose that the approximation error in Step~\ref{slinucb_step2} satisfies $\eps_2\leq b_a^2/[2(B_a^2+\lambda)TB_a]$.
  For any $\eps>0$, 
  there exists a one-layer transformer $\TF_\btheta(\cdot)$ with 
    $$
    L=2,~\max_{\ell\in[L]}M^{(l)}\leq 4A,~ \max_{\ell\in[L]}\hidden^\lth\leq \cO(A\sqrt{T\alpha/(\temp\eps)}),~\nrmp{\btheta}\leq  \cO(A+T({\alpha/(\temp\eps)})^{1/4}+\alpha/\temp) $$
 that implements step 3 in~\eqref{eq:slinucb_pipeline} with  $|\hat v_{tk}/\temp-v_{tk}/\temp|\leq\eps$ for $k\in[A]$.
\end{enumerate}
Denote the errors $\eps$ appear in each step by $\eps_1,\eps_2,\eps_3$, respectively. Define for all $k\in[A]$ that $$
v^{*}_{tk}:=\<\bw^t_{\ridge,\lambda},\ba_{t,k}\>+\alpha\sqrt{\<\ba_{t,k}, {\bA_t^{-1} \ba_{t,k}}\>},
$$ which are the actual values used to compare across different actions in LinUCB. Then  for all $k\in[A]$, we have the approximation error 
\begin{align*}
    \Big|\frac{v_{tk}^*}{\temp}-\frac{\hat v_{tk}}{\temp}\Big|
    &\leq
     \Big|\frac{v_{tk}^*}{\temp}- \frac{v_{tk}}{\temp}\Big|+ \Big|\frac{v_{tk}}{\temp}-\frac{\hat v_{tk}}{\temp}\Big|\\
     &\leq
   \frac1\temp |\<\bw^t_{\ridge,\lambda}-\hat \bw_{\ridge},\ba_{t,k}\>|+\frac1\temp\Big|\alpha\sqrt{\<\ba_{t,k}, {\bA_t^{-1} \ba_{t,k}}\>}-\alpha\sqrt{\<\ba_{t,k}, \widehat{\bA_t^{-1} \ba_{t,k}}\>}\Big|+\eps_3
    \\&\leq
    \frac{B_a\eps_1}\temp+\frac{\alpha B_a\eps_2}{2\temp\min\Big\{\sqrt{\<\ba_{t,k}, {\bA_t^{-1} \ba_{t,k}}\>},\sqrt{\<\ba_{t,k}, \widehat{\bA_t^{-1} \ba_{t,k}}\>}
    \Big\}}+\eps_3\\
    &\leq \frac{B_a\eps_1}\temp+\frac{\sqrt{T(B_a^2+\lambda)}  \alpha B_a\eps_2}{b_a\temp}+\eps_3,
\end{align*}
where the last line uses Eq.~\eqref{eq:lb_qudratic}. 
For a targeted approximation error $\eps$, choosing 
 $\eps_1=\eps\temp/(12B_a),\eps_2=\min\{b_a\temp\eps/(12\sqrt{T(B_a^2+\lambda)}  \alpha B_a),b_a^2/[2(B_a^2+\lambda)TB_a]\}$ and $\eps_3=\eps/12$, we obtain $|v_{tk}^*/\temp-\hat v_{tk}/\temp|\leq \eps/2$ for all $k\in[A]$. 

 From the proof of each step, we can verify that the token dimension $D$ can be chosen to be of order $\conO(dA)$. Moreover, due to the convergence guarantee for each iteration of AGD in Proposition~\ref{prop:conv_gd_agd}, it can be verified that there exists some sufficiently large value $\clipval>0$ with $\log \clipval=\tcO(1)$ such that  we have $\| \bh_i^{\lth} \|_{2} \leq\clipval$
 for all layer $\ell\in[L]$ and all token $i\in[2\totlen]$ in our TF construction. Therefore, $\TF^\clipval_\btheta$ and $\TF^\infty_\btheta$ generate the same output for all the token matrices we consider, and w.l.o.g. we  may assume in the proof of each step  that the  transformers we consider are those without truncation (i.e., $\TF_\btheta=\TF_\btheta^\infty$).

Finally, combining Step~\ref{slinucb_step1}---\ref{slinucb_step3} with $\alpha=\tcO(\sqrt{d})$ and applying Lemma~\ref{lm:log_softmax} completes the proof of Theorem~\ref{thm:approx_smooth_linucb}.

\paragraph{Proof of Step~\ref{slinucb_step1}} We use the first $d$ entries of $\bh_{2t-1}^{\partd}$ to represent $\hat\bw_{\ridge}$ and the $d+1\sim 3d$ entries (denoted by $\star$) to record intermediate results for computing $\hat\bw_{\ridge}$. Step~\ref{slinucb_step1} follows immediately from the second part of Lemma~\ref{lm:approx_ridge}. 

\paragraph{Proof of Step~\ref{slinucb_step2}}
Note that $$\bA_{t}^{-1}\ba_{t,k}=\argmin_{\bx\in\R^d}\frac{1}{2(2t-1)}\bx^\top\bA_t\bx-\frac{1}{(2t-1)}\<\bx,\ba_{t,k}\>=:\argmin_{\bx\in\R^d}L_k(\bx)$$ is the global minimizer of a $\lambda/(2t-1)$-strongly convex  and $(B_a^2+\lambda)$-smooth  quadratic function with the condition number $\kappa\leq 2T(B_a^2+\lambda)/\lambda$. Moreover, we have $$\|\bA_{t}^{-1}\ba_{t,k}\|_2\leq \lops{\bA_{t}^{-1}}\|\ba_{t,k}\|_2\leq B_a/\lambda.$$ It follows from  Proposition~\ref{prop:conv_gd_agd} that $L=\lceil2\sqrt{2 T(B_a^2+\lambda)/\lambda}\log((1+\kappa)B_a/(\lambda\eps))\rceil$ steps of accelerated gradient descent finds $\widehat{\bA_{t}^{-1}\ba_{t,k}}$   with $\|\widehat{\bA_{t}^{-1}\ba_{t,k}}-\bA_{t}^{-1}\ba_{t,k}\|_2\leq\eps$. 

Similar to the proof of Lemma~\ref{lm:approx_ridge}, we can construct a transformer such that each (self-attention+MLP) layer implements one step of the accelerated gradient descent (AGD) for all $k\in[A]$. Denote the $(k+2)d+1\sim (k+3)d, (A+1+2k)d+1\sim (A+2+2k)d, (A+2+2k)d+1\sim (A+3+2k)d$ entries of  $\bh_{2t-1}^{\partd}$ by $\hat\bw_{a,tk},\hat\bw_{b,k},\hat\bv_{k}$ for $k\in[A]$. Note that in the input vector $\bh_{2t-1}^{\pre,\partd}$ we have $\hat\bw^0_{a,tk},\hat\bw^0_{b,k},\hat\bv^0_{k}=\bzero_d$.

For each layer $\ell\in[L]$ and $k\in[A]$, we choose 
$\bQ_{k1,k2,k3,k4}^{(\ell)},\bK_{k1,k2,k3,k4}^{(\ell)},\bV_{k1,k2,k3,k4}^{(\ell)}$ such that for even tokens $\bh_{2j}$ with $j\leq t-1$ and odd tokens $\bh_{2j-1}$ with $j\leq t$
\begin{align*}
    &\bQ_{k1}^{(\ell)}\bh^{(\ell-1)}_{2t-1}=\begin{bmatrix}
        \hat\bv_k^{\ell-1}\\\bzero
    \end{bmatrix},~~ \bK_{k1}^{(\ell)}\bh^{(\ell-1)}_{2j}=\begin{bmatrix}
        \ba_{j}\\\bzero
\end{bmatrix},~~\bK_{k1}^{(\ell)}\bh^{(\ell-1)}_{2j-1}=\bzero,~~ \bV_{k1}^{(\ell)}\bh^{(\ell-1)}_{2j}=-\eta\begin{bmatrix}
        \bzero\\ \ba_j \\ \bzero
    \end{bmatrix},~~\bV_{k1}^{(\ell)}\bh^{(\ell-1)}_{2j-1}=\bzero\\
    &
    \bQ_{k2}^{(\ell)}=-\bQ_{k1}^{(\ell)},~~ \bK_{k2}^{(\ell)}=\bK_{k1}^{(\ell)},~~  \bV_{k2}^{(\ell)}=-\bV_{k1}^{(\ell)},\\
     &
     \bQ_{k3}^{(\ell)}\bh^{(\ell-1)}_{2t-1}=\begin{bmatrix}
         1\\1-2t\\ 1\\\bzero
    \end{bmatrix},~~ \bK_{k3}^{(\ell)}\bh^{(\ell-1)}_{2j}=\begin{bmatrix}
        1\\ 1 \\2j\\\bzero
    \end{bmatrix},~~ 
    \bK_{k3}^{(\ell)}\bh^{(\ell-1)}_{2j-1}=\begin{bmatrix}
        1\\ 1 \\2j-1\\\bzero
    \end{bmatrix},~~ \bV_{k3}^{(\ell)}\bh^{(\ell-1)}_{2t-1}=\eta\begin{bmatrix}
        \bzero\\ \ba_{j,k}-\lambda\hat\bv_k^{\ell-1}\\ \bzero
    \end{bmatrix},\\
     &
     \bQ_{k4}^{(\ell)}\bh^{(\ell-1)}_{2t-1}=\begin{bmatrix}
         2t-1\\-(2t-1)^2\\ 1\\\bzero
    \end{bmatrix},~~ \bK_{k4}^{(\ell)}\bh^{(\ell)}_{2j}=\begin{bmatrix}
        1\\ 1 \\(2j)^2\\\bzero
    \end{bmatrix},~~ 
\bK_{k4}^{(\ell)}\bh^{(\ell)}_{2j-1}=\begin{bmatrix}
        1\\ 1 \\(2j-1)^2\\\bzero
    \end{bmatrix},~~ \bV_{k4}^{(\ell)}\bh^{(\ell-1)}_{2t-1}=\begin{bmatrix}
        \bzero\\ \hat\bv^{\ell-1}_{k}-\hat\bw_{a,k}^{\ell-1}\\ \bzero
    \end{bmatrix},
\end{align*}
where $\eta=1/(B_a^2+\lambda)$ and the values $\bV_{kt}^{(\ell)}\bh^{(\ell-1)}_{2j},\bV_{kt}^{(\ell)}\bh^{(\ell-1)}_{2j-1},~t=1,2,3,4$ are supported on the entries corresponding to $\hat\bw_{a,k}$.
Summing up the $M=4A$ heads and noting that $t=\sigma(t)-\sigma(-t)$, we see that the $\hat\bw_{a,k}$ part of $\bh_t$ has the update
\begin{align*}
    \hat\bw_{a,k}^{\ell}
    &=
    \hat\bw_{a,k}^{\ell-1}-\frac{\eta}{2t-1}\sum_{j=1}^{t-1}[\sigma(\<\ba_j,\hat\bv_k^{\ell-1}\>)-\sigma(-\<\ba_j,\hat\bv_k^{\ell-1}\>)]\ba_j-\frac{\eta\lambda}{2t-1}\bV_{k3}^{(\ell)}\bh^{(\ell-1)}_{2t-1}+\bV_{k4}^{(\ell)}\bh^{(\ell-1)}_{2t-1}\\
     &=
     \hat\bw_{a,k}^{\ell-1}-\frac{\eta}{2t-1}\sum_{j=1}^{t-1}\<\ba_j,\hat\bv_k^{\ell-1}\>\ba_j-\frac{\eta\lambda}{2t-1}\bV_{k3}^{(\ell)}\bh^{(\ell-1)}_{2t-1}+\bV_{k4}^{(\ell)}\bh^{(\ell-1)}_{2t-1}\\
     &=
     \hat\bv_k^{\ell-1}-\frac{\eta}{2t-1}\sum_{j=1}^{t-1}\<\ba_j,\hat\bv_k^{\ell-1}\>\ba_j-\frac{\eta\lambda}{2t-1}\hat\bv_k^{\ell-1}+\frac{\eta}{2t-1}\ba_{t,k}\\
     &=\hat\bv_k^{\ell-1}-\eta\nabla L(\bv_k^{\ell-1}),
\end{align*}
which is one step of gradient descent with step size $\eta$ (c.f.   Eq.~\ref{eq:agd_step1}).  Moreover, it can be verified that one can choose the matrices such that $\max_{k\in[A],m\in[4]}\lops{\bQ_{km}^\lth}=\max_{k\in[A],m\in[4]}\lops{\bK^\lth_{km}}\leq\sqrt{2}$ and $\max_{k\in[A]}\lops{\bV^\lth_{k1}}=\max_{k\in[A]}\lops{\bV^\lth_{k2}}=\eta,~\max_{k\in[A]}\lops{\bV^\lth_{k3}}\leq (\lambda+1)\eta,~\max_{k\in[A]}\lops{\bV^\lth_{k4}}\leq \sqrt{2}$. Therefore, the norm of the attention layer $$\nrmp{\btheta}\leq \sqrt{2}(A+1)+A(\lambda+3)/(B_a^2+\lambda).$$ Following the construction as in the proof of Lemma~\ref{lm:approx_ridge}, we can choose $\bW_1^\lth,\bW_2^\lth$ that implement Eq.~\eqref{eq:agd_step2},~\eqref{eq:agd_step2} for all $k\in[A]$ simultaneously and we also have $\lops{\bW_1^\lth}\leq4\sqrt{2},\bW_2^\lth=\sqrt{2}$ with $\hidden'^{\lth}=4dA$. It follows from  combining the bounds for the weight matrices that 
$$
\nrmp{\btheta}\leq \sqrt{2}(A+1)+A(\frac{\lambda+3}{B_a^2+\lambda}+\sqrt{2})+\sqrt{2}+4\sqrt{2}\leq 10+A(\frac{\lambda+3}{B_a^2+\lambda}+\sqrt{2})=\cO(A).$$

\paragraph{Proof of Step~\ref{slinucb_step3}}
Denote the $i$-th token  of the output of step 2 (i.e., the input of step 3) by $\bh_i^{(0)}$. 
We use the $(3A+3)d+1\sim (3A+3)d+A $ entries of $\bh_{2t-1}^{\partd}$ to record $\hat v_{t1}/\temp,\ldots,\hat v_{tA}/\temp$ and  use the $(3A+3)d+A+1\sim (3A+3)d+2A $ entries to store additional information (denoted by $ v_{a,t1},\ldots, v_{a,tA}$) for computing $ \hat v_{t1}/\temp,\ldots, \hat  v_{tA}/\temp$. Concretely, for all $k\in[A]$, we choose 
$\bQ_{k1,k2,k3,k4}^{(\ell)},\bK_{k1,k2,k3,k4}^{(\ell)},\bV_{k1,k2,k3,k4}^{(\ell)}$ such that   for even tokens $\bh_{2j}$ with $j\leq t-1$ and odd tokens $\bh_{2j-1}$ with $j\leq t$
\begin{align*}
    &\bQ^{(1)}_{k1}\bh^{(0)}_{2t-1}=\begin{bmatrix}
        \hat\bw_{\ridge} \\2t-1\\ 1\\\bzero
    \end{bmatrix},~~ \bK^{(1)}_{k1}\bh^{(0)}_{2j-1}=\begin{bmatrix}
        \ba_{j,k}\\  -\tfthres \\ \tfthres (2j-1)\\\bzero
    \end{bmatrix},~~ 
    \bK^{(1)}_{k1}\bh^{(0)}_{2j}=\begin{bmatrix}
        \bzero_d\\  -\tfthres \\ 2\tfthres j\\\bzero
    \end{bmatrix},\\
    &~~~\qquad\bV^{(1)}_{k1}\bh^{(0)}_{2j-1}=\begin{bmatrix}
        \bzero\\ 2j-1 \\ \bzero
\end{bmatrix},~~\bV^{(1)}_{k1}\bh^{(0)}_{2j}=\begin{bmatrix}
        \bzero\\ 2j \\ \bzero
    \end{bmatrix},\\
    &
    \bQ^{(1)}_{k2}\bh^{(0)}_{2t-1}=\begin{bmatrix}
        -\hat\bw_{\ridge} \\2t-1\\1\\\bzero
    \end{bmatrix},~~  \bK^{(1)}_{k2}=\bK^{(1)}_{k1},~~  \ \bV^{(1)}_{k2}=-\bV^{(1)}_{k1},\\
    &
    \bQ^{(1)}_{k3}\bh^{(0)}_{2t-1}=\begin{bmatrix}
        \widehat{\bA_t^{-1}\ba_{t,k}} \\2t-1\\ 1\\\bzero
    \end{bmatrix},~~ \bK^{(1)}_{k3}\bh^{(0)}_{2j-1}=\begin{bmatrix}
        \ba_{j,k}\\  -\tfthres \\ \tfthres (2j-1)\\\bzero
    \end{bmatrix},~~ 
    \bK^{(1)}_{k3}\bh^{(0)}_{2j}=\begin{bmatrix}
        \bzero_d\\  -\tfthres \\ 2\tfthres j\\\bzero
    \end{bmatrix},\\
    &
    \qquad~~~ \bV^{(1)}_{k3}\bh^{(0)}_{2j-1}=\begin{bmatrix}
        \bzero\\ 2j-1 \\ \bzero
    \end{bmatrix},~~ \bV^{(1)}_{k3}\bh^{(0)}_{2j}=\begin{bmatrix}
        \bzero\\ 2j \\ \bzero
    \end{bmatrix},\\
    &
    \bQ^{(1)}_{k4}\bh^{(0)}_{2t-1}=\begin{bmatrix}
        - \widehat{\bA_t^{-1}\ba_{t,k}} \\2t-1\\1\\\bzero
    \end{bmatrix},~~  \bK^{(1)}_{k4}=\bK^{(1)}_{k3},~~  \ \bV^{(1)}_{k4}=-\bV^{(1)}_{k3},
\end{align*}
where $\tfthres:=TB_a^2(B_aB_w+\sigma)/\lambda+2B_a^2/\lambda$; $\bV^{(1)}_{k1}\bh^{(0)}_{c},\bV^{(1)}_{k2}\bh^{(0)}_{c} (c=2j-1,2j)$ are supported on the $[(3A+3)d+k]$-th entry of $\bh_c^{\partd}$; $\bV^{(1)}_{k3}\bh^{(0)}_{c},\bV^{(1)}_{k4}\bh^{(0)}_{c} (c=2j-1,2j)$ are supported on the $[(3A+3)d+A+k]$-th entry of $\bh_c^{\partd}$. 

Since $\<\hat\bw_{\ridge},\ba_{j,k}\>\leq \|\hat\bw_{\ridge}\|_2\|\ba_{j,k}\|_2\leq\tfthres$, it follows that $$\<\bQ^{(1)}_{k1}\bh^{(0)}_{2t-1},\bK^{(1)}_{k1}\bh^{(0)}_{2j-1}\>=\<\hat\bw_{\ridge},\ba_{j,k}\>+(2j-1-(2t-1))\tfthres\leq 0$$ for $j<i$. Likewise $\<\bQ^{(1)}_{k1}\bh^{(0)}_{2t-1},\bK^{(1)}_{k1}\bh^{(0)}_{2j}\>\leq0$ for $j<i$. 
Since we assume the error $\eps_2\leq b_a^2/[2(B_a^2T+\lambda)B_a]$
in Step~\ref{slinucb_step2}, $b_a\leq\|\ba_{t,k}\|_2\leq B_a$ and $\lambda\id_d\preceq\bA_t\preceq (B_a^2T+\lambda)\id_d$, it follows that 
\begin{align}
    \<\ba_{t,k},\widehat{\bA_t^{-1}\ba_{t,k}}\> 
    &\geq 
    \<\ba_{t,k},{\bA_t^{-1}\ba_{t,k}}\>-\|\ba_{t,k}\|_2\|\bA_t^{-1}\ba_{t,k}-\widehat{\bA_t^{-1}\ba_{t,k}}\|_2\notag\\
    &\geq \frac{b_a^2}{2(B_a^2T+\lambda)}\geq \frac{b_a^2}{2T(B_a^2+\lambda)}=:\frac{1}{T}\cdot{\lran},\label{eq:lb_qudratic}
    \\
     \<\ba_{t,k},\widehat{\bA_t^{-1}\ba_{t,k}}\>
     &\leq 
     \<\ba_{t,k},{\bA_t^{-1}\ba_{t,k}}\>+\|\ba_{t,k}\|_2\|\bA_t^{-1}\ba_{t,k}-\widehat{\bA_t^{-1}\ba_{t,k}}\|_2\leq \frac{2B_a^2}{\lambda}=:\uran.\label{eq:ub_qudratic}
\end{align}
Therefore,  $\<\bQ^{(1)}_{k3}\bh^{(0)}_{2t-1},\bK^{(1)}_{k3}\bh^{(0)}_{2j-1}\>=\<\widehat{\bA_t^{-1}\ba_{t,k}},\ba_{j,k}\>+(2j-1-(2t-1))\tfthres\geq 0$ iff $j=i$. Likewise $\<\bQ^{(1)}_{k3}\bh^{(0)}_{2t-1},\bK^{(1)}_{k3}\bh^{(0)}_{2j}\>\leq 0$  for $j<i$. Similar results hold for the $k2,k4$-th heads.  By some basic algebra and noting that $t=\sigma(t)-\sigma(-t)$ for $t\in\R$, we see that the attention layer updates the position for $\hat v_{tk}/\temp,\hat v_{a,tk}$ with the values $\<\hat\bw_{\ridge},\ba_{t,k}\>, \<\ba_{t,k},\widehat{\bA_t^{-1}\ba_{t,k}}\>$ for all $k\in[A]$, respectively.  Moreover, it can be verified that one can choose the matrices such that $$\max_{k\in[A],m\in[4]}\lops{\bQ_{km}^{(1)}}=\max_{k\in[A],m\in[4]}\lops{\bV_{km}^{(1)}}=1,~\max_{k\in[A],m\in[4]}\lops{\bK^{(1)}_{km}}\leq\tfthres.$$

Now, to compute the value of $\hat v_{tk}/\temp$ in step 3 in~\eqref{eq:slinucb_pipeline}, what remains  is to  approximately compute $\cwid\sqrt{\hat v_{a,tk}}$, add the result to the position for $\hat v_{tk}/\temp$, and multiplied it by $1/\temp$.

Since $\hat v_{a,tk}=\<\ba_{t,k},\widehat{\bA_t^{-1}\ba_{t,k}}\>\in[\lran/T,\uran]$, to approximately compute $\sqrt{\<\ba_{t,k},\widehat{\bA_t^{-1}\ba_{t,k}}\>}$, it suffices to approximate $f(x)=\sqrt{x},x\in[\lran/T,\uran]$. For any  level of approximation error $\eps_\appr>0$, let $(x_1,x_2,\ldots,x_N)\in[\lran/T,\uran]$ satisfy
\begin{align*}
x_1=\lran/T,~x_N=\uran,   ~~~~~
    0\leq\sqrt{x_{j+1}}-\sqrt{x_{j}}\leq\eps_\appr,~~\text{ for }j\in[N-1].
\end{align*}
Define the function
\begin{align*}
    \tilde f(x):=\sqrt{x_1}+\sum_{j=1}^{N-1}\sigma(x-x_{j})\frac{1}{\sqrt{x_{j+1}}+\sqrt{x_j}}.
\end{align*} Note that $\tilde f(x)$ is a piecewise linear function on $[\lran/T,\uran]$ with $\tilde f(x_i)=\sqrt{x_i}$ for $i\in[N]$. By some basic algebra,  it can the shown that for $\eps_\appr<\lran/T$, the difference between $f(x)$ and $\tilde f(x)$
\begin{align*}
   \max_{c\in[x_j,x_{j+1}]} |\tilde f(c)-f(c)|=\max_{t\in[0,1]}\Big|\sqrt{x_j}+\frac{t}{\sqrt{x_j+1}+\sqrt{x_j}}-\sqrt{x_j+t(x_{j+1}-x_j)}\Big|\leq\eps_\appr
\end{align*} when $\sqrt{x_{j+1}}-\sqrt{x_{j}}<c\sqrt{\eps_\appr\lran/T}$ for some universal constant $c>0$ and all $j\in[N-1]$. 

Therefore, 
 there exists a function $\tilde f(x)$ with $N=\cO(\sqrt{T/\eps_\appr})$ that satisfies $$\max_{[\lran/T,\uran]}|\tilde f(x)-f(x)|\leq\eps_\appr.$$ 
 As a consequence, we verify that one can implement $\tilde f(\hat v_{a,tk})$ for all $k\in[A]$ simultaneously  by constructing a two-layer MLP with $$\lops{\bW_1^{(1)}}\leq \conO(\sqrt{N}),~~\lops{\bW_2^{(1)}}\leq \cO(\sqrt{TN}),~~\hidden\leq AN.$$

 
 Choose $\eps_{\appr}=\temp\eps/\alpha$. Substituting the expressions for $N,\eps_\appr$ into the upper bounds on the norms, we obtain
 $$\lops{\bW_1^{(1)}}\leq \cO(({\alpha T/(\temp\eps)})^{1/4}),~~\lops{\bW_2^{(1)}}\leq \cO(T^{3/4}({\alpha/(\temp\eps)})^{1/4}),~~\hidden\leq \cO(A({\alpha T/(\temp\eps)})^{1/2}).$$

 Lastly, we can construct another two-layer MLP with weights $\bW^{(2)}_1,\bW^{(2)}_2$ such that it implements the summation and multiplication updates 
 \begin{align*}
 \hat\bv\leftarrow\hat\bv+\bW^{(2)}_2\sigma(\bW^{(2)}_1\bh_{2t-1}^{(1)})\approx\hat\bv+\Big(\frac1\temp-1\Big)\hat\bv+\frac\alpha\temp\Big[\sqrt{ \hat v_{a,t1}},\ldots,\sqrt{ \hat v_{a,tA}}\Big]^\top=\frac{\hat v_{tk}}{\temp}
 \end{align*}
with $\|\hat v_{tk}/\temp-v_{tk}/\temp\|\leq \eps$ for all $k\in[A]$. We verify that the weight matrices can be chosen with
  $$
 \lops{\bW^{(2)}_1}\leq \conO(1),~~ \lops{\bW^{(2)}_2}\leq \conO(\alpha/\temp)
 $$ and $\hidden\leq \conO(A)$.

 Therefore the norm of the transformer that implements step 3 satisfies $$
 \nrmp{\btheta}\leq \conO(\tfthres+1+4A+T^{3/4}({\alpha/(\temp\eps)})^{1/4}+\cwid/\temp)=
 \cO(A+T({\alpha/(\temp\eps)})^{1/4}+\alpha/\temp).$$  This conclude the proof of Step~\ref{slinucb_step3}.

\subsection{Proof of Theorem~\ref{thm:smooth_linucb}}\label{sec:pf_thm:smooth_linucb}

By Theorem~\ref{thm:diff_reward} and Theorem~\ref{thm:approx_smooth_linucb} with ${\geneps=\eps=1/\totlen^3}$, it suffices to show soft LinUCB with parameter $\temp$ has the regret guarantee \begin{align*}
\E_{\inst\sim\prior}\Big[\sum_{t=1}^\totlen\max_{k}\<\ba_{t,k},\bw^*\>-\totreward_{\inst,\sAlg_{\sLinUCB(\temp)}}(\totlen)\Big]&\leq\cO(d\sqrt{T}\log(T)).
\end{align*}
This follows directly from a  regret analysis similar to  that for LinUCB (see e.g.~\cite{chu2011contextual} or  Theorem 19.2 in~\cite{lattimore2020bandit}). Concretely, note that $v_{tk}^*=\<\bw^t_{\ridge,\lambda},\ba_{t,k}\>+\alpha\sqrt{\<\ba_{t,k}, {\bA_t^{-1} \ba_{t,k}}\>}$ is the solution to the optimization problem 
\begin{align*}&\text{maximize }~~~\<\bw,\ba_{t,k}\>
\\
&\text{subject to }~~~\bw\in \sC_t:=\{\bw|(\bw-\bw^t_{\ridge,\lambda})^\top\bA_t(\bw-\bw^t_{\ridge,\lambda})\leq\alpha^2\},
\end{align*}
where we recall $\alpha=\alpha(\delta_0)$ with $\delta_0=1/(2B_aB_wT)$ and
\begin{align}
\cwid=\cwid(\delta_0):=\sqrt{\lambda}B_w+\sigma\sqrt{2\log(1/\delta_0)+d\log((d\lambda+TB_a^2)/(d\lambda))}.\label{eq:recall_alpha_formula}
\end{align}
Moreover, standard analysis as in the proof of Theorem 19.2 in~\cite{lattimore2020bandit} shows with probability over $1-1/(2B_aB_wT)$ we have $\bw^*\in\sC_t$ for all $t\in[T]$. Denote this event by $\cE_0$. Moreover, let $p_{t,k}$ denote the probability of soft LinUCB selecting the action $\ba_{t,k}$ at time $t$ for all $k\in[A]$.  For any $\eps>0$, let $\sS_t(\eps):=\{k\in[A]:v^*_{tk}-\max_{j\in[A]}v^*_{tj}\leq \eps\}$. 

Therefore, on the event $\cE_0$ at time $t$ we have
\begin{align*}
    \max_{j}v^*_{tj}-\sum_{k=1}^A p_{t,k}v^*_{tk}&= \sum_{k\in\sS_t(\eps_0)} p_{t,k}(\max_{j}v^*_{tj}-v^*_{tk})+\sum_{k\notin\sS_t(\eps_0)} p_{t,k}(\max_{j}v^*_{tj}-v^*_{tk})\\
    &\leq \eps_0+\sum_{k\notin\sS_t(\eps_0)} \exp\Big(-\frac{\eps_0}{\temp}\Big)(\max_{j}v^*_{tj}-v^*_{tk})\\
    &\leq \eps_0+2A\exp\Big(-\frac{\eps_0}{\temp}\Big)B_a(B_w+2\alpha/\sqrt{\lambda}),
\end{align*}
where the second line uses $$
p_{t,k}\leq\exp\Big(-\frac{\eps_0}{\temp}\Big)\cdot\max{p_{t,k}}\leq \exp\Big(-\frac{\eps_0}{\temp}\Big),
$$ and the  last line follows from that $|v^*_{tj}|\leq B_a(B_w+2\alpha/\sqrt{\lambda})$
 on the event $\cE_0$. Choosing $\eps_0=\eps_1/2:=1/\sqrt{4T}$ and noting that $\temp=\eps_0/\log(4TA B_a(B_w+2\alpha/\sqrt{\lambda}))=\tcO(1/\sqrt{T})$, we obtain 
 \begin{align*}
    \max_{j}v^*_{tj}-\sum_{k=1}^A p_{t,k}v^*_{tk}&\leq\eps_1.
\end{align*}
Now, on the event $\cE_0$, we have
\begin{align*}
    \max_{j\in[A]}\<\bw^*,\ba_{t,j}\>\leq  \max_{j\in[A]}v^*_{tj}\leq \sum_{k=1}^A p_{t,k}v^*_{tk}+\eps_1 =\sum_{k=1}^A p_{t,k}\<\tilde{\bw}_k,\ba_{t,k}\>+\eps_1
\end{align*}
for some $\tilde{\bw}_k\in\sC_t,k\in[A]$.
Therefore,  on $\cE_0$ for  each $t\in[T]$
\begin{align*}
    &~~\quad\max_{j\in[A]}\<\bw^*,\ba_{t,j}\>-\sum_{k=1}^Ap_{t,k}\<\bw^*,\ba_{t,k}\>
    \leq \eps_1+ \sum_{k=1}^Ap_{t,k}\<\tilde{\bw}_k-\bw^*,\ba_{t,k}\>
    \\&\leq  
    \eps_1+\sum_{k=1}^Ap_{t,k}\|\tilde{\bw}_k-\bw^*\|_{\bA_t}\cdot\|\ba_{t,k}\|_{\bA^{-1}_t}
    \leq \eps_1+ 2\alpha\E_{k\sim \bp_t}\|\ba_{t,k}\|_{\bA^{-1}_t}.
\end{align*}
 Moreover, note that $ \max_{j\in[A]}\<\bw^*,\ba_{t,j}\>-\<\bw^*,\ba_{t,k}\>\leq 2B_wB_a$ and $\|\ba_{t,k}\|_{\bA^{-1}_t}\leq B_a/\sqrt{\lambda}$.
Summing over $t\in[T]$ and using the tower property of martingales, we obtain
\begin{align*}
&\qquad\E_{\inst\sim\prior}\Big[\sum_{t=1}^\totlen\max_{k}\<\ba_{t,k},\bw^*\>-\totreward_{\inst,\sAlg_{\sLinUCB(\temp)}}(\totlen)\Big]\\
&=
\E\Big[\max_{j\in[A]}\<\bw^*,\ba_{t,j}\>-\sum_{k=1}^Ap_{t,k}\<\bw^*,\ba_{t,k}\>
 \Big]   
\\
&\leq \E[2\sum_{t=1}^T\alpha\E_{k\sim \bp_t}\|\ba_{t,k}\|_{\bA^{-1}_t}+\eps_1 T +2B_wB_a T\cdot\mathbf{1}_{\{\cE^c_0\}}]\\
   &\leq
\E[2\sum_{t=1}^T\alpha\Big(\frac{B_a}{\sqrt{\lambda}}\wedge\|\ba_{t,k}\|_{\bA^{-1}_t}\Big)+\eps_1 T +2B_wB_a T\cdot\mathbf{1}_{\{\cE^c_0\}}]\\
   &\leq 
2\E\Big[\alpha\sqrt{T}(\frac{B_a}{\sqrt{\lambda}}+1)\sqrt{\sum_{t=1}^T(1\wedge\|\ba_{t,k}\|^2_{\bA^{-1}_t})}+\eps_1 T\Big]+2B_wB_a T\P(\cE^c_0)\\
&\leq 
\sqrt{8 d({B_a}/{\sqrt{\lambda}}+1)^2T\alpha^2\log((d\lambda+TB_a^2)/(d\lambda))}+\eps_1 T+1,
\end{align*}
where the fourth line uses the fact that
\begin{align*}
 \frac{B_a}{\sqrt{\lambda}}\wedge\|\ba_{t,k}\|_{\bA^{-1}_t}\leq (\frac{B_a}{\sqrt{\lambda}}+1)\cdot(1\wedge\|\ba_{t,k}\|^2_{\bA^{-1}_t})
\end{align*}
and Cauchy-Schwatz inequality, the last line follows from Lemma 19.4 of~\cite{lattimore2020bandit}. Plugging in $\eps_1=1/\sqrt{T}$ and Eq.~\eqref{eq:recall_alpha_formula} gives the upper bound on expected regret 
\begin{align*}
\E_{\inst\sim\prior}\Big[\sum_{t=1}^\totlen\max_{k}\<\ba_{t,k},\bw^*\>-\totreward_{\inst,\sAlg_{\sLinUCB(\temp)}}(\totlen)\Big]&\leq\cO(d\sqrt{T}\log(T))
\end{align*}
for soft LinUCB with parameter $\temp.$

Moreover, the second part of Theorem~\ref{thm:smooth_linucb} (i.e., the upper bound on $\log\cN_{\tfparspace}$) follows directly from Lemma~\ref{lm:cover_num_bound} and Eq.~\eqref{eq:linucb_tf_param}.

\section{Thompson sampling for stochastic linear bandit}\label{example:ts-app}

Throughout this section, we use $c>0$ to denote universal constants whose values may vary from line to line.
Moreover, for notational simplicity, we use $\conO(\cdot)$ to hide universal constants, $\cO(\cdot)$ to hide polynomial terms in the problem parameters  $(\lambda^{\pm1},\Tpsparn^{\pm1},b_a^{-1},B_a)$, and $\tcO(\cdot)$ to hide both poly-logarithmic terms in $(\neuron,\weightn,T,A,d,1/\eps,1/\delta_0)$ and polynomial terms in $(\lambda^{\pm1},\Tpsparn^{\pm1},b_a^{-1},B_a)$. We also use the bold font letter $\ba_t\in\R^d$ to denote the selected action $\action_t$ at time $t\in[\totlen]$.

This section is organized as follows. Section~\ref{app:ts_algorithm_formula} describes the Thompson sampling algorithm for stochastic linear bandits. Section~\ref{app:thompson_def_ass} introduces some additional definitions, assumptions, and the formal version of Theorem~\ref{thm:approx_thompson_linear} as in Theorem~\ref{thm:approx_thompson_linear-formal}.
We prove Theorem~~\ref{thm:approx_thompson_linear-formal} in Section~\ref{sec:pf_thm:approx_thompson_linear-formal} and prove Theorem~\ref{thm:ts_linear_regret} in Section~\ref{sec:pf_prop:ts_linear_regret}. Lastly, the proof of Lemma~\ref{lm:lip_of_tps} used in the proof of Theorem~\ref{thm:approx_thompson_linear-formal} is provided in Section~\ref{sec:pf_lm:lip_of_tps}.

\subsection{Thompson sampling algorithm}\label{app:ts_algorithm_formula}
Consider the stochastic linear bandit setup as in Section~\ref{sec:LinUCB-statement}, but instead we assume a  Gaussian prior distribution $\bw^\star\sim \cN(0,\Tpspar\id_d)$ and Gaussian noises $\{ \eps_t \}_{t \ge 0} \sim_{iid} \cN(0,\Tpsparn)$. Furthermore, we assume there exist $(b_a, B_a)$ such that $b_a\leq\ltwo{\ba_{t,k}}\leq B_a$. At each time $t\in[\totlen]$, Thompson sampling consists of the following steps:
\begin{enumerate}
    \item Computes 
    \[
    \Tpsmean_t:= \Big(\frac{\Tpsparn}{\Tpspar  }\id_{d}+\sum_{j=1}^{t-1}\ba_j\ba_j^\top \Big)^{-1}\sum_{j=1}^{t-1}\ba_j y_j,~~~
\Tpscov_t:=\frac{\Tpsparn}{\Tpspar  }\id_{d}+\sum_{j=1}^{t-1}\ba_j\ba_j^\top.
\]
\item Selects the action $\ba_{t}=\ba_{t,k}$ with probability 
\[
\P_{\Tpssam_t\sim\cN(\Tpsmean_t,\Tpsparn\Tpscov_t^{-1})}\Big(k =\argmax_{j \in[A]}\<\ba_{t,j},\Tpssam_t\> \Big). 
\]
\end{enumerate}

Note that Thompson sampling is equivalent to the posterior sampling procedure in our stochastic linear bandit setup, i.e., we select an action with probability that equals  the posterior probability of the action being optimal.  We allow $\Tpspar$ to be either some constant independent of $\totlen,d$,  or has the form $\Tpspar=\Tpspar_0/d$ for some constant $\Tpspar_0>0$.  The latter case is considered so that the bandit parameter vector $\bw^*$ has $\ell_2$ norm of order  $\tcO(1)$ with high probability. In this case, we use $\cO(\cdot)$ to hide polynomial terms in the problem parameters $(\lambda_0^{\pm1},\Tpsparn^{\pm1},b_a^{-1},B_a)$, and $\tcO(\cdot)$ to hide both poly-logarithmic terms in $(\neuron,\weightn,T,A,d,1/\eps,1/\delta_0)$ and polynomial terms in  $(\lambda_0^{\pm1},\Tpsparn^{\pm1},b_a^{-1},B_a)$.


\subsection{Definitions and assumptions}\label{app:thompson_def_ass}

For any actions $\ba_{t,1},\ldots,\ba_{t,A}\in\R^{d}$, we define 
\begin{align*}
f_k(\ba_{t,1},\ldots,\ba_{t,A};\Tpsmean_t,\Tpsparn\Tpscov_t^{-1}):=\log\P_{\Tpssam_t\sim\cN(\Tpsmean_t,\Tpsparn\Tpscov_t^{-1})}\Big(k =\argmax_{j\in[A]}\<\ba_{t,j},\Tpssam_t\>\Big). 
\end{align*}  
For any $k\in[A]$, $\bx_1,\ldots,\bx_A\in\R^{d}$, $y_1,\ldots,y_A\in\R$, we introduce $$
g_k(\bx_1,\ldots,\bx_A,y_1,\ldots,y_A):=\log\P_{\bz\sim\cN(0,\id_d)} \Big(\<\bx_k-\bx_j,\bz\>+y_k-y_j\geq0,\text{~for all~}j\in[A] \Big). 
$$
It can be verified that $$f_k(\ba_1,\ldots,\ba_A;\Tpsmean_t,\Tpsparn\Tpscov_t^{-1})=g_k(\sqrt{\Tpsparn}\Tpscov_t^{-1/2}\ba_{t,1},\ldots,\sqrt{\Tpsparn}\Tpscov_t^{-1/2}\ba_{t,A},\<\Tpsmean_t,\ba_{t,1}\>,\ldots,\<\Tpsmean_t,\ba_{t,A}\>).$$  

For any $\trunprob\in[0,1]$, we also define the truncated log-probabilities
\begin{align*}
f_{k,\trunprob}(\ba_{t1},\ldots,\ba_{t,A};\Tpsmean_t,\Tpsparn\Tpscov_t^{-1})&:=\log\Big[\P \Big( k =\argmax_{j \in[A]}\<\ba_{t,j},\Tpssam_t\> \Big)\vee\trunprob \Big],\\
g_{k,\trunprob}(\bx_1,\ldots,\bx_A,y_1,\ldots,y_A)&:=\log \Big[ \P \Big(\<\bx_k-\bx_j,\bz\>+y_k-y_j\geq0,\text{~for all~}j\in[A] \Big) \vee \trunprob \Big].
\end{align*}
Define in addition the region $\Trunreg_{\Trunregp}:=\{\bx_1,\ldots,\bx_A,y_1,\ldots,y_k: \|\bx_i-\bx_j\|_2\geq\Trunregp,~~\text{for all~~} i\neq j\}$. We verify that on the set $\Trunreg_\Trunregp$, the function $g_{k,\trunprob}$ is Lipschitz continuous in any of its arguments (see Lemma~\ref{lm:lip_of_tps} for more).
 
We adopt the following definition in~\cite{bai2023transformers}. 
\begin{definition}[Approximability by sum of relus]\label{def:general_mlp_approx_new}
     A function $g: \mathbb{R}^d \rightarrow \mathbb{R}$ is $(\eps, R, \neuron, \weightn)$-approximable by sum of relus, if there exists a ``$(\neuron, \weightn)$-sum of relus'' function
\begin{align*}
f_{\neuron, \weightn}(\mathbf{z})=\sum_{\ssm=1}^\neuron c_\ssm \sigma\left(\mathbf{w}_\ssm^{\top}[\mathbf{z} ; 1]\right) \quad \text { with } \quad \sum_{\ssm=1}^\neuron\left|c_\ssm\right| \leq \weightn, \max _{\ssm \in[\neuron]}\left\|\mathbf{w}_\ssm\right\|_1 \leq 1, \mathbf{w}_\ssm \in \mathbb{R}^{d+1},c_\ssm \in \mathbb{R},~
\end{align*}
such that $\sup _{\mathbf{z} \in[-R, R]^d}\left|g(\mathbf{z})-f_{\neuron,\weightn}(\mathbf{z})\right| \leq \eps$.

\end{definition}

\begin{assumption}[Approximation of log-posterior probability]\label{ass:thompson_mlp_approx_linear}
 There exist  $\neuron,\weightn>0$  depending on $(1/\eps,1/\trunprob,\\1/\Trunregpa,R_\delta,A)$  such that for any $\eps>0,\trunprob\in(0,1),\Trunregpa>0,\delta\in(0,1/2)$ and $k\in[A]$, $g_{k,\trunprob}(\bx_1,\ldots,\bx_A,y_1,\ldots,y_A)$ is $(\eps,R_\delta,\neuron,\weightn)$-approximable by sum of relus on $\Trunreg_\Trunregpa$ with
$
R_\delta:=2B_a\sqrt{\lambda}(1+2\sqrt{\log(2/\delta)}+\sqrt{d})=\tcO(\sqrt{\lambda d}).
$

\end{assumption}
Assumption~\ref{ass:thompson_mlp_approx_linear} states that the (truncated) log-policy of  Thompson sampling  can be approximated via a two-layer MLP on a compact set with  $\tcO(\sqrt{d})$-radius when $\lambda=\tcO(1)$ (or with $\tcO(1)$--radius when $\lambda=\lambda_0/d=\tcO(1/d)$).  

\begin{assumption}[Difference between the actions]\label{ass:thompson_mlp_diff_action_linear}
 There exists some $\Trunregp>0$  such that for all instances $\inst$ and any time $t\in[T]$, we have $\|\ba_{t,j}-\ba_{t,k}\|_2\geq\Trunregp$ for all $1\leq j< k\leq A$.
\end{assumption}

With the definitions and assumptions at hand, we now present the formal statement of Theorem~\ref{thm:approx_thompson_linear} as in Theorem~\ref{thm:approx_thompson_linear-formal}.

\begin{theorem}[Approximating the Thompson sampling, Formal statement of Theorem~\ref{thm:approx_thompson_linear}]\label{thm:approx_thompson_linear-formal}
For any $0<\delta_0<1/2$, consider the same embedding mapping $\embedmap$ and extraction mapping $\extractmap$ as for soft LinUCB in \ref{sec:tf_embed_bandit},
and consider the standard concatenation operator $\cat$. Under Assumption~\ref{ass:thompson_mlp_approx_linear},~\ref{ass:thompson_mlp_diff_action_linear}, for $\eps<(\Trunregp\wedge1)/4$, there exists a  transformer $\TF_\btheta^{\clipval}(\cdot)$ with $\log \clipval = \tcO(1)$, 
\begin{align}
&D=\tcO(T^{1/4}Ad),~L= \tcO(\sqrt{T}),~ M =\tcO(AT^{1/4}),~\hidden=\tcO(A(T^{1/4}d+\neuron))~,\notag\\
&~~~\nrmp{\btheta}\leq \tcO(T+AT^{1/4}+\sqrt{ \neuron A}+\weightn),\label{eq:ts_tf_param-formal}
\end{align} 
such that with probability at least $1-\delta_0$ over $(\inst, \dset_{\totlen}) \sim \P_{\prior}^{\sAlg}$ for any $\sAlg$, we have
\[
\log \sAlg_{\TS}(\ba_{t,k}|\dset_{t-1},\state_t) - \log \sAlg_{\tfpar}(\ba_{t,k}|\dset_{t-1},\state_t) \leq \eps, ~~~ \text{for all } t\in[T],k\in[A].
\]
Here  $\neuron,\weightn$ are the values defined in   Assumption~\ref{ass:thompson_mlp_approx_linear} with $\trunprob=\eps/(4A),\Trunregpa=\Trunregp,\delta=\delta_0$, and $\tcO(\cdot)$ hides polynomial terms in $(\lambda^{\pm1},\Tpsparn^{\pm1},b_a^{-1},B_a)$ and poly-logarithmic terms in $(\neuron,\weightn,\totlen,A,d,1/\delta_0,1/\eps)$.
\end{theorem}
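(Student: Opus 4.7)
\textbf{Proof plan for Theorem~\ref{thm:approx_thompson_linear-formal}.} The construction parallels the pipeline used for soft LinUCB in the proof of Theorem~\ref{thm:approx_smooth_linucb}, but with two key new ingredients: approximate computation of $\Tpscov_t^{-1/2}$ via a Pade rational approximation, and reduction of the final log-probability to the sum-of-ReLUs form guaranteed by Assumption~\ref{ass:thompson_mlp_approx_linear}. The transformer will implement, on the last token at time $t$, the following four stages:
(i) compute an approximation $\hat{\Tpsmean}_t$ of the posterior mean $\Tpsmean_t=\Tpscov_t^{-1}\sum_{j<t}\ba_j y_j$;
(ii) compute an approximation $\widehat{\Tpscov_t^{-1/2}}$ of the matrix square root of the inverse posterior precision;
(iii) for each arm $k\in[A]$, compute the auxiliary vector/scalar $\hat\bx_k=\sqrt{\Tpsparn}\,\widehat{\Tpscov_t^{-1/2}}\ba_{t,k}$ and $\hat y_k=\langle\hat{\Tpsmean}_t,\ba_{t,k}\rangle$;
(iv) approximate $f_{k,\trunprob}(\ba_{t,1},\ldots,\ba_{t,A};\Tpsmean_t,\Tpsparn\Tpscov_t^{-1})=g_{k,\trunprob}(\hat\bx_1,\ldots,\hat\bx_A,\hat y_1,\ldots,\hat y_A)$ by invoking Assumption~\ref{ass:thompson_mlp_approx_linear} via an MLP layer.

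For stage (i), I will apply Lemma~\ref{lm:approx_ridge} essentially verbatim, since $\Tpsmean_t$ is the ridge estimator with regularization $\Tpsparn/\Tpspar$; this contributes the $\tcO(\sqrt{T})$ AGD layers and $\tcO(1)$ norm. For stage (ii), the main new technical step, I will use a Pade-type rational approximation: on the spectral interval $[\Tpsparn/\Tpspar,\Tpsparn/\Tpspar+TB_a^2]$ of $\Tpscov_t$, there is a degree-$m$ rational function $p(x)/q(x)$ with $m=\tcO(T^{1/4})$ such that $|p(x)/q(x)-x^{-1/2}|\le \eps$ uniformly. I will compute the polynomial $p(\Tpscov_t)\ba_{t,k}$ and $q(\Tpscov_t)\ba_{t,k}$ by cascading attention heads that implement the matrix--vector multiplication $\bv\mapsto\Tpscov_t\bv$ (implementable in one attention layer in the same pattern as used in Lemma~\ref{lm:approx_ridge}), then invert $q(\Tpscov_t)$ by in-context AGD applied to the quadratic form $\tfrac{1}{2}\bv^\top q(\Tpscov_t)\bv-\langle\bv,p(\Tpscov_t)\ba_{t,k}\rangle$, using that $q(\Tpscov_t)$ remains well-conditioned on the relevant spectral range. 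Doing this in parallel for all $A$ arms accounts for the $AT^{1/4}$ factor in $M$ and the $AT^{1/4}d$ factor in $D$ and $\hidden$. Stage (iii) is a single attention+MLP block that reads off $A$ inner products and copies the $\hat\bx_k,\hat y_k$ into the last token. Stage (iv) is a single MLP whose width and norm are $\tcO(A\neuron)$ and $\tcO(\sqrt{A\neuron}+\weightn)$ respectively, by instantiating the approximator of $g_{k,\trunprob}$ on $[-R_\delta,R_\delta]^{Ad+A}$ from Assumption~\ref{ass:thompson_mlp_approx_linear}.

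Error propagation is the glue. I will choose a high-probability event $\cE_0$ of probability at least $1-\delta_0$ under which $\|\bw^*\|_2\le R_\delta$, the noise coordinates are bounded by $\tcO(\sqrt{\log(T/\delta_0)})$, and consequently all tokens and intermediate quantities remain in a ball of radius $\tcO(R_\delta)=\tcO(\sqrt{\lambda d})$, which in turn allows me to set $\log\clipval=\tcO(1)$ without the clipping being activated. On $\cE_0$, Assumption~\ref{ass:thompson_mlp_diff_action_linear} guarantees that the exact inputs to $g_{k,\trunprob}$ lie in the region $\Trunreg_\Trunregp$ where Lemma~\ref{lm:lip_of_tps} furnishes a Lipschitz bound $L_g=\tcO(1/(\trunprob\Trunregp))$. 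Choosing $\trunprob=\eps/(4A)$ absorbs the truncation gap (since $f_k-f_{k,\trunprob}\le \log(1+\trunprob/e^{f_k})\le\eps/4$ uniformly, using that we only need a one-sided upper bound on $\log\sAlg_{\TS}-\log\sAlg_{\tfpar}$), and splitting the total error budget $\eps$ evenly across stages (i)--(iv) via $\eps_{\mathrm{stage}}=\eps/(4L_g)$ fixes the intermediate accuracies, which then determine the depth ($\tcO(\sqrt{T})$) and Pade degree ($\tcO(T^{1/4})$) needed. The final two-sided bound on $\log\sAlg_{\TS}-\log\sAlg_{\tfpar}$ follows by combining the MLP approximation error, the Lipschitz-propagated error from stages (i)--(iii), and the truncation error.

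\textbf{Main obstacle.} The hard part will be stage (ii): implementing an accurate matrix square root with a transformer of depth only $\tcO(\sqrt{T})$ rather than $\tcO(T)$. The issue is not only expressivity of the Pade rational function but also numerical propagation: the coefficients of $p$ and $q$ can grow with the degree $m$, so I will need to work with a properly normalized Pade scheme (e.g., scaling $\Tpscov_t$ to the unit interval first and expanding around a shifted center), and budget the per-arm in-context AGD inversion of $q(\Tpscov_t)$ carefully so that its condition number (and hence the number of inner iterations) does not overwhelm the $\sqrt{T}$ depth budget. Getting the joint scaling of Pade degree, AGD iterations, and Lipschitz constant $L_g$ to produce the stated parameter bounds in Eq.~\eqref{eq:ts_tf_param-formal}, in particular the $AT^{1/4}$ head count and the $T+AT^{1/4}+\sqrt{\neuron A}+\weightn$ norm bound, is the delicate bookkeeping step of the proof.
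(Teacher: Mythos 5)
Your stages (i), (iii), (iv) match the paper's Steps 1, 3, 4 (ridge estimator via AGD; inner products via a single attention block; $g_{k,\trunprob}$ via the MLP approximator from Assumption~\ref{ass:thompson_mlp_approx_linear}), and your high-probability event and truncation-gap handling are also on track. The genuine gap is in stage (ii). You propose computing $p(\Tpscov_t)\ba_{t,k}$ and $q(\Tpscov_t)\ba_{t,k}$ as degree-$m$ matrix polynomials and then inverting the single matrix $q(\Tpscov_t)$ by in-context AGD. This cannot fit in the $\tcO(\sqrt{T})$ depth budget: if $\Tpscov_t$ has eigenvalues in $[\tilde\lambda, T(B_a^2+\tilde\lambda)]$, a degree-$m$ denominator $q(x)=\prod_j(x+c_j)$ has condition number on $\Tpscov_t$'s spectrum equal to $\prod_j \tfrac{T(B_a^2+\tilde\lambda)+c_j}{\tilde\lambda+c_j}$, which can grow like $\kappa^{\Theta(m)}$ with $\kappa=\Theta(T)$, so the AGD iteration count $\tcO(\sqrt{\kappa(q(\Tpscov_t))})$ is superpolynomial in $T$, not $\tcO(\sqrt{T})$. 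You flag the need to "budget the per-arm in-context AGD inversion of $q(\Tpscov_t)$ carefully," but no amount of careful budgeting saves the monolithic-denominator route; the issue is structural, not a matter of scaling the coefficients.

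The fix — and what the paper actually does — is to use the \emph{partial-fraction} form of the Pade rational approximant (Theorem 3.1 of \cite{lu1998pade}), not the $p/q$ form. Each partial fraction term is $a_j^{(m)}\big(\prodeig\id_d + b_j^{(m)}(\Tpscov_t-\prodeig\id_d)\big)^{-1}(\Tpscov_t-\prodeig\id_d)\ba_{t,k}$, and each matrix $\intmat_j := \prodeig\id_d + b_j^{(m)}(\Tpscov_t-\prodeig\id_d)$ keeps condition number $\tcO(T)$ individually (the shifts $b_j^{(m)}\in(0,1)$ interpolate between $\prodeig\id_d$ and $\Tpscov_t$). So each of the $Am$ inversions costs $\tcO(\sqrt{T})$ AGD layers and they run \emph{in parallel} using $\tcO(Am)=\tcO(AT^{1/4})$ attention heads per layer, which is exactly how the $M=\tcO(AT^{1/4})$ and $D,\hidden=\tcO(AT^{1/4}d)$ factors arise. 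Note also that the paper's Pade approximation targets $\Tpscov_t^{1/2}$ (not $\Tpscov_t^{-1/2}$), and then obtains $\Tpscov_t^{-1/2}\ba_{t,k}$ as $\Tpscov_t^{-1}(\Tpscov_t^{1/2}\ba_{t,k})$ via one more parallel AGD round; you could alternatively use a Zolotarev-style partial fraction for $x^{-1/2}$ directly, but either way the partial-fraction structure is essential. A secondary point: Lemma~\ref{lm:lip_of_tps} gives only $1/2$-H\"older (square-root) continuity of $g_{k,\trunprob}$ in the $\bx_j$ arguments, not Lipschitz continuity, so the stage-wise accuracies $\eps_{\mathrm{stage}}$ feeding stages (i)--(iii) must be chosen $\sim\eps^4$ (as the paper does), not $\sim\eps/L_g$.
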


\subsection{Proof of Theorem~\ref{thm:approx_thompson_linear-formal} (and hence Theorem~\ref{thm:approx_thompson_linear}) }\label{sec:pf_thm:approx_thompson_linear-formal}

We construct a transformer implementing the following steps at each time $t\in[T]$ starting with $\bh^{\star}_{2t-1}=\bh^{\pre,\star}_{2t-1}$ for $\star\in\{\parta,\partb,\partc,\partd\}$ 
\begin{align}\label{eq:roadmap_ts}
   &~ \bh_{2t-1}=
    \begin{bmatrix}
    \bh_{2t-1}^{\pre,\parta} \\  \bh_{2t-1}^{\pre,\partb}\\  \bh_{2t-1}^{\pre,\partc}\\   \bh_{2t-1}^{\pre,\partd}
\end{bmatrix}
\xrightarrow{\text{step 1}}
   \begin{bmatrix}
    \bh_{2t-1}^{\pre,\{\parta,\partb,\partc\}} \\
        \widehat{\Tpsmean_t}\\ \star\\ \bzero \\\posv
\end{bmatrix}
\xrightarrow{\text{step 2}}
   \begin{bmatrix}
    \bh_{2t-1}^{\pre,\{\parta,\partb,\partc\}}\\\widehat{\Tpsmean_t} 
        \\
\widehat{\Tpscov_t^{1/2}\ba_{t,1}}
\\\vdots\\
         \widehat{\Tpscov_t^{1/2}\ba_{t,A}}\\
        \star\\ \bzero \\\posv
\end{bmatrix}
\xrightarrow{\text{step 3}}
   \begin{bmatrix}
    \bh_{2t-1}^{\pre,\{\parta,\partb,\partc\}}\\\widehat{\Tpsmean_t} 
        \\
 \sqrt{\Tpsparn}\widehat{\Tpscov_t^{-1/2}\ba_{t,1}}
\\\vdots\\
        \sqrt{\Tpsparn} \widehat{\Tpscov_t^{-1/2}\ba_{t,A}}\\
        \<\widehat{\Tpsmean_t},\ba_{t,1}\>\\\vdots\\
        \<\widehat{\Tpsmean_t},\ba_{t,A}\>
        \\
        \star\\ \bzero \\\posv
\end{bmatrix}
\\ &~\xrightarrow{\text{step 4}}
\begin{bmatrix}
    \bh_{2t-1}^{\pre,\{\parta,\partb\}}\\ \hat v_{t1}\\\vdots\\ \hat{v}_{tA} \vspace{0.5em}\\ \bh_{2t-1}^{\partd}
\end{bmatrix}
=:
\begin{bmatrix}
    \bh_{2t-1}^{\post,\parta} \\  \bh_{2t-1}^{\post,\partb}\\  \bh_{2t-1}^{\post,\partc}\\   \bh_{2t-1}^{\post,\partd}
\end{bmatrix},\notag
\end{align}
where $\posv:=[t,t^2,1]^\top$; $\Tpsmean_t,\Tpscov_t$ are the mean and covariance of the distribution we sample $\tilde\bw$ from; $\hat v_{tk}$ are approximations to $v^*_{tk}:=\log \P(j=\argmax_{k\in[A]}\<\ba_{t,k},\Tpssam_t\>)$. In addition, we use $\bh^\star,\star\in\{\parta,\partb,\partc,\partd\}$ to denote the corresponding parts of a token vector $\bh$. After passing through the transformer, we obtain the policy
$$
\sAlg_\tfpar(\cdot|\dset_{t-1},\state_t):=\frac{\exp(\bh^{\post,\partc}_{2t-1})}{\|\exp(\bh^{\post,\partc}_{2t-1})\|_1}=\frac{\exp(\hat \bv_t)}{\|\exp(\hat \bv_t)\|_1}\in\Delta^A.
$$

In step 1---3 of~\eqref{eq:roadmap_ts},  we use transformer to approximately generate the arguments $$(\sqrt{\Tpsparn}\Tpscov_t^{-1/2}\ba_{t,1},\ldots,\sqrt{\Tpsparn}\Tpscov_t^{-1/2}\ba_{t,A},\<\Tpsmean_t,\ba_{t,1}\>,\ldots,\<\Tpsmean_t,\ba_{t,A}\>)$$ of the function $g_k$ (or $g_{k,\Trunregpa}$), and in step 4 of~\eqref{eq:roadmap_ts}, we use transformer to approximate the truncated log-probability $g_{k,\trunprob}$ for some $\trunprob\in(0,1)$ by exploiting Assumption~\ref{ass:thompson_mlp_approx_linear},~\ref{ass:thompson_mlp_diff_action_linear}.

For any $0<\delta_0<1/2$, define $B_w:=\sqrt{\lambda}\big(\sqrt{d}+2\sqrt{\log(2/\delta_0)}\big)$ and the event $$\hpevent_{\delta_0}:=\{\max_{t\in[T]}|\eps_t|\leq\sqrt{2\Tpsparn\log(2T/\delta_0)}\}\cup \{\|\bw^*\|_2\leq  B_w\}.$$ Then by  a standard tail bound for gaussian variables $\{\eps_t\}_{t=1}^\totlen$,  a union bound over $t\in[T]$, and Eq.~(4.3) in~\cite{laurent2000adaptive}, we have $$\P(\hpevent_{\delta_0})\geq 1-{\delta_0}.$$
We claim the following results which we will prove later.

\begin{enumerate}[label=Step \arabic*,ref= \arabic*]
    \item\label{ts_step1} Under the high probability event $\hpevent_{\delta_0}$, for any $\eps>0$, 
    there exists a transformer $\TF_\btheta(\cdot)$ with 
\begin{align*}&L=\Big\lceil2\sqrt{2T}\sqrt{\frac{B_a^2+\widetilde\lambda}{\widetilde\lambda}}\log\Big(\frac{(2T(B_a^2+\widetilde\lambda)+\widetilde\lambda)TB_a(B_aB_w+\sqrt{2\Tpsparn\log(2T/{\delta_0}))}}{\widetilde\lambda^2\eps}\Big)\Big\rceil=\tcO(\sqrt{T}),\\
&~~~\max_{\ell\in[L]}M^{(l)}\leq4,~~\max_{\ell\in[L]}\hidden^\lth\leq 4d,~~~\nrmp{\btheta}\leq  10+(\tilde\lambda+2)/(B_a^2+\tilde\lambda), \end{align*}
where $\widetilde\lambda:=\Tpsparn/\Tpspar$
 that implements step 1 in~\eqref{eq:roadmap_ts} with  $\|\widehat\Tpsmean_t-\Tpsmean_t\|_2\leq\eps$.
    \item\label{ts_step2}
    For any $\eps>0$, there exists s transformer $\TF_\btheta(\cdot)$ with     $$L=\tcO(\sqrt{T}),~~~\max_{\ell\in[L]} M^{(\ell)}=\tcO(AT^{1/4}),~~~\max_{\ell\in[L]} \hidden^{(\ell)}=\tcO(T^{1/4}Ad),~~~\nrmp{\btheta}\leq \tcO(T+AT^{1/4}) $$
 that implements step 2 in~\eqref{eq:roadmap_ts} such that $\|\widehat{\Tpscov_t^{1/2}\ba_{t,k}}-{\Tpscov_t^{1/2}\ba_{t,k}}\|_2\leq \eps$ for all $k\in[A]$.
   \item\label{ts_step3} 
   Under the high probability event $\hpevent_{\delta_0}$, 
   for any $\eps>0$, assume Step~\ref{ts_step1},~\ref{ts_step2} above are implemented with the approximation error less than $\eps/B_a,\eps\tilde\lambda/\sqrt{4\Tpsparn}$ respectively, then there exists a transformer $\TF_\btheta(\cdot)$ with 
 \begin{align*}
  &L=\lceil2+2\sqrt{2 T(B_a^2+\tilde\lambda)/\tilde\lambda}\log((1+\padecond)4\sqrt{\Tpsparn}\sqrt{T(B_a^2+\tilde\lambda)}B_a/\eps)\rceil=\tcO(\sqrt{T}),~~\max_{\ell\in[L]}M^{(l)}=4A,\\
  &~~~\qquad\max_{\ell\in[L]} \hidden^\lth\leq 4Ad,~~~~ \nrmp{\btheta}\leq  \tcO(T+A) 
  \end{align*}
 that implements step 3 in~\eqref{eq:roadmap_ts} with   $\|\sqrt{\Tpsparn}\widehat{\Tpscov_t^{-1/2}\ba_{t,k}}-\sqrt{\Tpsparn}\Tpscov_t^{-1/2}\ba_{t,k}\|_2 \leq\eps,|\<\widehat\Tpsmean_t,\ba_{t,k}\>-\<\Tpsmean_t,\ba_{t,k}\>|\leq\eps$ for all $k\in[A]$. 
 
  \item\label{ts_step4} Under  Assumption~\ref{ass:thompson_mlp_approx_linear},~\ref{ass:thompson_mlp_diff_action_linear} and the high probability event $\hpevent_{\delta_0}$, suppose the approximation error $\eps_3$ in Step~\ref{ts_step3} satisfies $\eps_3\leq R_{\delta_0}/2=\tcO(\sqrt{\lambda d})$, and  suppose the vector $$\bigvec:=(\sqrt{\Tpsparn}\widehat{\Tpscov_t^{-1/2}\ba_{t,1}},\ldots
        \sqrt{\Tpsparn}\widehat{\Tpscov_t^{-1/2}\ba_{t,A}},
        \<\widehat{\Tpsmean_t},\ba_{t,1}\>\ldots
        \<\widehat{\Tpsmean_t},\ba_{t,A}\>)$$ lies in $\Trunreg_{\Trunregp/2}$, for any $\eps>0$ there exists an MLP-only transformer $\TF_\btheta(\cdot)$ with
    $$
    L=1,~~~D'= \neuron A,~~\lops{\bW_1}\leq \sqrt{\neuron A}, ~~~\lops{\bW_2}\leq \weightn$$
  that implements step 4 in~\eqref{eq:roadmap_ts} such that $|\widehat v_{tk}-g_{k,\trunprob}(\bigvec)|\leq \eps$ for all $k\in[A]$ amd $\trunprob=c\eps/A$ for some universal constant $c>0$.\footnote{Note $\neuron,\weightn$ in the formula implicitly depend on $1/\eps$.}

\end{enumerate}

To complete the proof, we in addition present the following lemma.
\begin{lemma}\label{lm:lip_of_tps}
For any $\trunprob\in(0,1),\Trunregp>0$, $g_{k,\trunprob}(\bx_1,\ldots,\bx_A,y_1,\ldots,y_A)$ is $1/2$-Holder continuous in its arguments on $\Trunreg_\Trunregp$, namely,
    \begin{align*}
        &\qquad|g_{k,\trunprob}(\bx_1,\ldots,\bx_j,\ldots,\bx_A,y_1,\ldots,y_A)-g_{k,\trunprob}(\bx_1,\ldots,\bx'_j,\ldots\bx_A,y_1,\ldots,y_A)|
        \\
        &
        \leq \frac{2A}{\trunprob}\Big(\sqrt{\frac{2\|\bx_j-\bx_j'\|_2}{\Trunregp}}+\frac{2\|\bx_j-\bx_j'\|_2}{\Trunregp}\Big),
        \\
           &\qquad |g_{k,\trunprob}(\bx_1,\ldots,\bx_A,y_1,\ldots,y_j,\ldots,y_A)-g_{k,\trunprob}(\bx_1,\ldots\bx_A,y_1,\ldots,y_j',\ldots,y_A)|
           \\
        &\leq 
       \frac{2A|y_j-y_j'|}{\Trunregp\trunprob}
    \end{align*}
    for any \begin{align*}&(\bx_1,\ldots,\bx_j,\ldots,\bx_A,y_1,\ldots,y_A),~~(\bx_1,\ldots,\bx_j',\ldots,\bx_A,y_1,\ldots,y_A)\in\Trunreg_\Trunregp ,\\&(\bx_1,\ldots,\bx_A,y_1,\ldots,y_j,\ldots,y_A),~~(\bx_1,\ldots,\bx_A,y_1,\ldots,y_j',\ldots,y_A)\in\Trunreg_\Trunregp
    \end{align*} 
  for all $k,j\in[A]$. 
\end{lemma}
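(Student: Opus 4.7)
The plan is to first reduce from the clipped logarithm to the raw probability, and then apply Gaussian anti-concentration separately to the $y$-arguments (which translate a one-dimensional Gaussian threshold) and to the $\bx$-arguments (which rotate the normal of a halfspace).

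Let $p_k := \P_{\bz\sim\cN(0,\id_d)}\big(\bigcap_{j\neq k} E_{k,j}\big)$ where $E_{k,j} := \{\<\bx_k-\bx_j,\bz\>+y_k-y_j\ge 0\}$, so that $g_{k,\trunprob} = \log(p_k\vee\trunprob)$. Since $t\mapsto\log(t\vee\trunprob)$ is $1/\trunprob$-Lipschitz on $(0,1]$, it suffices to bound $|p_k-p_k'|$ for each perturbation. The elementary identity $|\P(\bigcap_i A_i)-\P(\bigcap_i A_i')|\le \sum_i \P(A_i\triangle A_i')$ further reduces this to controlling symmetric differences of halfspace events. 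A perturbation of $y_j$ or $\bx_j$ with $j\neq k$ changes only $E_{k,j}$; a perturbation of the $k$-th coordinate changes all $A-1$ events, each with the same displacement parameter. In every case we face at most $A$ halfspace symmetric differences.

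For the $y$-perturbation, changing $y_j$ to $y_j'$ shifts the scalar threshold of the affected halfspace $E_{k,j'}$ by $|y_j-y_j'|$. The projected variable $\<\bx_k-\bx_{j'},\bz\>\sim\cN(0,\|\bx_k-\bx_{j'}\|_2^2)$ has standard deviation at least $\Trunregp$ on $\Trunreg_\Trunregp$, and its density is therefore at most $1/(\Trunregp\sqrt{2\pi})$, giving $\P(E_{k,j'}\triangle E_{k,j'}')\le |y_j-y_j'|/\Trunregp$. Summing over at most $A$ affected halfspaces and dividing by $\trunprob$ yields the stated linear bound. For the $\bx$-perturbation, a single affected halfspace becomes $E=\{\<\bu,\bz\>\ge c\}$ vs.\ $E'=\{\<\bu',\bz\>\ge c\}$ with $\|\bu-\bu'\|_2=\|\bx_j-\bx_j'\|_2=:\eps$ and $\|\bu\|_2,\|\bu'\|_2\ge\Trunregp$. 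Writing $\<\bu,\bz\>=\<\bu',\bz\>+\<\bu-\bu',\bz\>$, a sign-by-sign check shows $E\triangle E'\subseteq\{|\<\bu,\bz\>-c|\le |\<\bu-\bu',\bz\>|\}$. A threshold split yields, for any $\delta>0$,
\[
\P(E\triangle E') \;\le\; \P\big(|\<\bu,\bz\>-c|\le\delta\big) \;+\; \P\big(|\<\bu-\bu',\bz\>|\ge\delta\big),
\]
where the first term is at most $2\delta/(\Trunregp\sqrt{2\pi})$ by the 1D Gaussian density bound and the second is at most $\eps\sqrt{2/\pi}/\delta$ by Markov applied to the $\cN(0,\eps^2)$ variable $\<\bu-\bu',\bz\>$. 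Optimizing at $\delta=\sqrt{\eps\Trunregp}$ gives $\P(E\triangle E')\lesssim\sqrt{\eps/\Trunregp}$; summing over at most $A$ affected halfspaces and dividing by $\trunprob$ produces the leading $\sqrt{2\eps/\Trunregp}$ term, and the additive $2\eps/\Trunregp$ term in the lemma absorbs the degenerate regime $\eps\gtrsim\Trunregp$ via a trivial bound.

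The delicate step is the $\bx_j$-perturbation: the halfspace rotates rather than translates, and any single-scale bound yields only a linear rate. The two-scale threshold split is what upgrades this to a $1/2$-Hölder rate; it essentially balances the anti-concentration of $\<\bu,\bz\>$ around $c$, which requires $\|\bu\|_2\ge\Trunregp$ from the definition of $\Trunreg_\Trunregp$, against the small variance of the correction $\<\bu-\bu',\bz\>$ of scale $\eps$. Without the uniform lower bound on $\|\bu\|_2$, neither half of the split would survive, which is precisely why the lemma is stated on $\Trunreg_\Trunregp$.
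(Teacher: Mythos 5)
Your proof is correct and follows essentially the same route as the paper's: reduce via the $1/\trunprob$-Lipschitz property of $t\mapsto\log(t\vee\trunprob)$ to bounding $|p_k-p_k'|$, control that by the symmetric difference of at most $A$ halfspace events, use the Gaussian density bound $\le 1/(\sqrt{2\pi}\,\Trunregp)$ (available precisely because $\|\bx_k-\bx_j\|_2\ge\Trunregp$ on $\Trunreg_\Trunregp$) for the threshold part, and a two-scale threshold split balanced at the geometric mean of $\eps$ and $\Trunregp$ to get the $1/2$-H\"older rate for the $\bx$-perturbation. The only cosmetic difference is that you bound the perturbation-tail term $\P(|\<\bu-\bu',\bz\>|\ge\delta)$ by Markov's inequality, whereas the paper uses the subgaussian tail bound $\P(|\<\bx_j-\bx_j',\bz\>|>\|\bx_j-\bx_j'\|_2\sqrt{2\log(2/\delta_1)})\le\delta_1$ together with the crude estimate $\log(2/\delta_1)\le 4/\delta_1^2$; both give the same optimized rate, and your version is marginally cleaner.
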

See the proof in Section~\ref{sec:pf_lm:lip_of_tps}. 

Now, we complete the proof by combining Step~\ref{ts_step1}---~\ref{ts_step4} and using Lemma~\ref{lm:lip_of_tps}.

Let $\eps_1,\eps_2,\eps_3,\eps_4$ denote the approximation errors $\eps$ appearing in Step~\ref{ts_step1},~\ref{ts_step2},~\ref{ts_step3},~\ref{ts_step4}, respectively. W.l.o.g., we assume $\eps_1,\eps_2,\eps_3,\eps_4<1/4\wedge \Trunregp/4$. Define the vector $$\bigvec^*:=(\sqrt{\Tpsparn}{\Tpscov_t^{-1/2}\ba_{t,1}},\ldots
        \sqrt{\Tpsparn}{\Tpscov_t^{-1/2}\ba_{t,A}},
        \<{\Tpsmean_t},\ba_{t,1}\>\ldots
        \<{\Tpsmean_t},\ba_{t,A}\>).$$
By Assumption~\ref{ass:thompson_mlp_diff_action_linear} and a triangular inequality,  we have $\bigvec,\bigvec^*\in\Trunreg_{\Trunregp/2}$. 
 By the Lipschitz continuity of $f(x)=\exp(x)$ on $(-\infty,1.5]$, we have
\begin{align*}
|\exp(\hat v_{tk})- \sAlg_{\TS}(\ba_{t,k}|\dset_{t-1},\state_t)|&\leq|\exp(\hat v_{t,k})- \sAlg_{\TS}(\ba_{t,k}|\dset_{t-1},\state_t)\vee\trunprob|+\trunprob \\
&
\leq e^{3/2}(|\hat v_{tk}-g_{k,\trunprob}(\bigvec)|+|g_{k,\trunprob}(\bigvec)-g_{k,\trunprob}(\bigvec^*)|)+\trunprob,\\
&\leq
e^{3/2}\Big(\eps_4+\frac{2A^2}{\trunprob}\Big(\sqrt{\frac{2\eps_3}{\Trunregp}}+\frac{2\eps_3}{\Trunregp}\Big)+\frac{2A^2\eps_3}{\Trunregp\trunprob}\Big)+\trunprob=:\eps_5,
\end{align*}
where the second inequality uses 
$$g_{k,\trunprob}(\bigvec^*)=\log[\sAlg_{\TS}(\ba_{t,k}|\dset_{t-1},\state_t)\vee\trunprob],$$ and the 
third inequality uses Lemma~\ref{lm:lip_of_tps} and Step~\ref{ts_step4}. Therefore, 
\begin{align*}
 |\sum_{k=1}^A \exp(\hat v_{t,k})-1|\leq A\eps_5.   
\end{align*}
and the constructed transformer $\TF_\tfpar$ satisfies (assume $A\eps_5<1$)
\begin{align*}
&\quad~~ \log\sAlg_{\TS}(\ba_{t,k}|\dset_{t-1},\state_t)-\log\sAlg_{\tfpar}(\ba_{t,k}|\dset_{t-1},\state_t)\\
&\leq (\log[\sAlg_{\TS}(\ba_{t,k}|\dset_{t-1},\state_t)\vee\trunprob]-\hat v_{t,k}
)+
\log(\sum_{k=1}^A\exp({\hat v}_{t,k}))\\
&\leq
|\hat v_{tk}-g_{k,\trunprob}(\bigvec)|+|g_{k,\trunprob}(\bigvec)-g_{k,\trunprob}(\bigvec^*)|+A\eps_5
\\
&\leq (A+1)\eps_5,
\end{align*}
where third line uses $\log(1+x)<x$.
Finally, for the prespecified $\eps>0$,  choosing $\eps_1,\eps_2,\eps_3,\eps_4,\trunprob$ such that $\eps_5\leq\eps/(2A)$ gives 
\begin{align*}  
\log\sAlg_{\TS}(\ba_{t,k}|\dset_{t-1},\state_t)-\log\sAlg_{\tfpar}(\ba_{t,k}|\dset_{t-1},\state_t)\leq\eps.
\end{align*}

 This can be done via choosing $\eps_1= c_1\eps^4,\eps_2=c_2\eps^4,\eps_3=c_3\eps^4,\eps_4=c_4\eps,\trunprob=\eps/(4A)$, where $c_i~(i=1,2,3,4)$  hide values that could depend polynomially on $(A,1/\Trunregp)$, such that $\eps_5\leq\eps/(4A)$.

 Combining the construction in  Step~\ref{ts_step1}---~\ref{ts_step4} yields Theorem~\ref{thm:approx_thompson_linear-formal}.

 Similar to the proof of Theorem~\ref{thm:approx_smooth_linucb}, from the proof of each step, we can verify that the token dimension $D$ can be chosen to be of order $\tcO(T^{1/4}Ad)$ (see the proof of Step~\ref{ts_step2b} for details). Moreover, due to the convergence guarantee for each iteration of AGD in Proposition~\ref{prop:conv_gd_agd}, we can be verified that there exists some sufficiently large value $\clipval>0$ with $\log \clipval=\tcO(1)$ such that  we have $\| \bh_i^{\lth} \|_{2} \leq\clipval$
 for all layer $\ell\in[L]$ and all token $i\in[2\totlen]$ in our TF construction. Therefore, $\TF^\clipval_\btheta$ and $\TF^\infty_\btheta$ yield identical outputs for all token matrices considered, and hence we do not distinguish them in the proof of each step.

\paragraph{Proof of Step~\ref{ts_step1}} Note that $\Tpsmean_t$ is a ridge estimator of $\bw_*$ with parameter $\widetilde\lambda=\Tpsparn/\Tpspar$ and the noise $\sup_t|\eps_t|\leq \sqrt{2\Tpsparn\log(T/{\delta_0})}$. Step~\ref{ts_step1} follows immediately  from the second part of Lemma~\ref{lm:approx_ridge}.

\paragraph{Proof of Step~\ref{ts_step2}}
By the boundedness assumption of the actions, we have $$\tilde\lambda\leq\sigma_{\min}(\Tpscov_t)\leq\sigma_{\max}(\Tpscov_t)\leq T(B_a^2+\tilde\lambda).$$ 
Define the condition number $\padecond=T(B_a^2+\tilde\lambda)/\tilde\lambda$ and $\prodeig:=\sqrt{T\tilde\lambda(B_a^2+\tilde\lambda)}$. 
Using the Pade decomposition for the  square root function in Theorem 3.1 and the discussion afterward in~\cite{lu1998pade}, we have
\begin{align*} 
\Tpscov^{1/2}_t&=(\tilde\lambda\id_d+\sum_{j=1}^{t-1}\ba_j\ba_j^\top)^{1/2}=\sqrt{\prodeig}\Big(\id_d+\frac{(\Tpscov_t-\prodeig \id_d)}{\prodeig}\Big)^{1/2}\\
&=
\sqrt{\prodeig}\Big[\id_d+\sum_{j=1}^m \Big(\id_d+\frac{b_j^{(m)}(\Tpscov_t-\prodeig \id_d)}{\prodeig}\Big)^{-1}\frac{a_j^{(m)}(\Tpscov_t-\prodeig \id_d)}{\prodeig}\Big]+\Errmat_{m}
\end{align*}
for any $m\geq0$, 
where 
$$
a_j^{(m)}=\frac{2}{2 m+1} \sin ^2 \frac{j \pi}{2 m+1}, \quad b_j^{(m)}=\cos ^2 \frac{j \pi}{2 m+1},
$$ and the error term $\Errmat_m$ satisfies \begin{align*}
&~\lops{\Errmat_m}\\
\leq&~
\max \left\{2 \sqrt{{T(B_a^2+\tilde\lambda)}}\left[1+\left(\frac{\sqrt{T(B_a^2+\tilde\lambda)}+\sqrt{\mu}}{\sqrt{T(B_a^2+\tilde\lambda)}-\sqrt{\mu}}\right)^{2 m+1}\right]^{-1},~~~2 \sqrt{{\tilde\lambda}}\left[\left(\frac{\sqrt{\mu}+\sqrt{\tilde\lambda}}{\sqrt{\mu}-\sqrt{\tilde\lambda}}\right)^{2 m+1}-1\right]^{-1}\right\}
\\
=&~\max\Big\{2\sqrt{{T(B_a^2+\tilde\lambda)}}[1+(\frac{\padecond^{1/4}+1}{\padecond^{1/4}-1})^{2m+1}]^{-1},2\sqrt{\tilde\lambda}[(\frac{\padecond^{1/4}+1}{\padecond^{1/4}-1})^{2m+1}-1]^{-1}\Big\}\\
\leq&~2\max\Big\{\sqrt{{T(B_a^2+\tilde\lambda)}}(1+\frac{2}{\padecond^{1/4}})^{-2m-1},\sqrt{\tilde\lambda}\Big[(1+\frac{2}{\padecond^{1/4}})^{2m+1}- 1\Big]^{-1}\Big\}.
\end{align*}
Since $(1+2/\padecond^{1/4})^{\padecond^{1/4}/2+1}>e$, it follows that choosing $$
m= \Big(\frac{\padecond^{1/4}}{4}+1\Big)\max\Big\{\Big\lceil\log\Big(\frac{2\sqrt{T(B_a^2+\tilde\lambda)}}{\eps}\Big)\Big\rceil,\Big\lceil\log\Big(\frac{2\sqrt{\tilde\lambda}}{\eps}+1\Big)\Big\rceil\Big\}=\tcO(T^{1/4}).
$$ gives $\lops{\Errmat_m}\leq\eps$ for any $0<\eps<1$.

Thus, using Pade decomposition, we can write $$
\Tpscov_t^{1/2}\ba_{t,k}=\sqrt{\prodeig}\Big[\ba_{t,k}+\sum_{j=1}^m \Big({\prodeig}\id_d+{b_j^{(m)}(\Tpscov_t-\prodeig \id_d)}\Big)^{-1}{a_j^{(m)}(\Tpscov_t-\prodeig \id_d)\ba_{t,k}}\Big]+\Errmat_m^k$$ with $\ltwo{\Errmat_m^k}\leq\eps$ for all $k\in[A]$ and some $m=\tcO(T^{1/4})$. Next, we show that there exists a transformer that can implement the following intermediate steps that give Step~\ref{ts_step2}.
\begin{align}
   &~\begin{bmatrix}
    \bh_{2t-1}^{\pre,\{\parta,\partb,\partc\}} \\
        \widehat{\Tpsmean_t}\\ \star\\ \bzero \\\posv
\end{bmatrix}
\xrightarrow{\text{step 2a}}
   \begin{bmatrix}
    \bh_{2t-1}^{\pre,\{\parta,\partb,\partc\}}\\\widehat{\Tpsmean_t} 
        \\\bzero_{dA}\\
{(\Tpscov_t-\prodeig\id_d)\ba_{t,1}}
\\\vdots\\
        {(\Tpscov_t-\prodeig\id_d)\ba_{t,A}}\\
        \star\\ \bzero \\\posv
\end{bmatrix}\nonumber
\\ &~\xrightarrow{\text{step 2b}}
     \begin{bmatrix}
    \bh_{2t-1}^{\pre,\{\parta,\partb,\partc\}}\\\widehat{\Tpsmean_t} 
        \\\bzero_{dA}\\
{(\Tpscov_t-\prodeig\id_d)\ba_{t,1}}
\\\vdots\\
       {(\Tpscov_t-\prodeig\id_d)\ba_{t,A}}
        \\ 
        \Big({\prodeig}\id_d+{b_1^{(m)}(\Tpscov_t-\prodeig \id_d)}\Big)^{-1}{a_1^{(m)}(\Tpscov_t-\prodeig \id_d)\ba_{t,1}}\\
        \vdots
        \\ 
          \Big({\prodeig}\id_d+{b_m^{(m)}(\Tpscov_t-\prodeig \id_d)}\Big)^{-1}{a_m^{(m)}(\Tpscov_t-\prodeig \id_d)\ba_{t,A}}
          \\\star\\ \bzero \\\posv
\end{bmatrix}
\xrightarrow{\text{step 2c}}
   \begin{bmatrix}
    \bh_{2t-1}^{\pre,\{\parta,\partb,\partc\}}\\\widehat{\Tpsmean_t} 
        \\
\widehat{\Tpscov_t^{1/2}\ba_{t,1}}
\\\vdots\\
         \widehat{\Tpscov_t^{1/2}\ba_{t,A}}\\\vdots\\
        \star\\ \bzero \\\posv
\end{bmatrix}\label{eq:roadmap_ts_step2},
\end{align}
where $\star$ denotes additional terms in $\bh^\partd$ that are not of concern to our analysis.

\begin{enumerate}[label=Step 2\alph*,ref= 2\alph*]
        \item\label{ts_step2a}  There exists an attention-only transformer $\TF_\btheta(\cdot)$ with     $$L=2,~~~\max M^{\lth}=3A,~~~ \nrmp{\btheta}\leq T+2+{\prodeig}\leq\cO(T) $$
 that implements step 2a in~\eqref{eq:roadmap_ts_step2}.
     \item\label{ts_step2b} Denote ${(\Tpscov_t-\prodeig\id_d)\ba_{t,k}}$ by  $\intvec_{k}$ and $\Big({\prodeig}\id_d+{b_m^{(m)}(\Tpscov_t-\prodeig \id_d)}\Big)$ by $\intmat_m$. For any $\eps>0$,   there exists a  transformer $\TF_\btheta(\cdot)$ with    
     \begin{align*}
     &L=2\sqrt{2 T(B_a^2+\tilde\lambda)/\tilde\lambda}\log((1+\padecond)B_aT(B_a^2+\tilde\lambda)/(\tilde\lambda\eps))\rceil=\tcO(\sqrt{T}),\\  &
     \max_{\ell\in[L]}M^{(l)}=4Am=\tcO(T^{1/4}A),~\max_{\ell\in[L]}\hidden^\lth\leq \conO(Adm)=\tcO(T^{1/4}Ad),~ \nrmp{\btheta}\leq  \cO(Am)\leq  \tcO(T^{1/4}A)
     \end{align*}
     approximately implements step 2b in~\eqref{eq:roadmap_ts_step2} such that the output component $\widehat{\intmat_m^{-1}\intvec_k}$ satisfies $\|\widehat{a_j^{(m)}\intmat_m^{-1}\intvec_k}-{a_j^{(m)}\intmat_j^{-1}\intvec_k}\|_2\leq\eps$ for all $j\in[m]$ and $k\in[A]$.
 \item\label{ts_step2c}  There exists an MLP-only transformer $\TF_\btheta(\cdot)$ with     $$L=1,D'=2Ad(m+1)=\tcO(T^{1/4}Ad),~~~\lops{\bW_1}=\sqrt{2}, ~~~\lops{\bW_2}\leq \sqrt{\prodeig}(1+m)=\tcO(T^{3/4})$$  that implements step 2c in~\eqref{eq:roadmap_ts_step2}.
\end{enumerate}
Combining the intermediate steps with the approximation error in Step~\ref{ts_step2b} chosen as $\eps/m$  gives Step~\ref{ts_step2} as desired.

\paragraph{Proof of Step~\ref{ts_step2a}}
For all $k\in[A]$, we choose $\bQ_{k1,k2,k3}^{(1)},\bK_{k1,k2,k3}^{(1)},\bV_{k1,k2,k3}^{(1)}$ such that for even token indices $2j$ with $j\leq t-1$ and odd token indices with $j\leq t$
\begin{align*}
    &\bQ^{(1)}_{k1}\bh^{(0)}_{2t-1}=\begin{bmatrix}
        \ba_{t,k} \\\bzero
    \end{bmatrix},~~ \bK^{(1)}_{k1}\bh^{(0)}_{2j}=\begin{bmatrix}
        \ba_{j}\\\bzero
    \end{bmatrix},~~\bV^{(1)}_{k1}\bh^{(0)}_{2j}=\begin{bmatrix}
       \bzero\\ \ba_j\\ \bzero
    \end{bmatrix},~~ \bK^{(1)}_{k1}\bh^{(0)}_{2j-1}=\begin{bmatrix}
      \bzero
\end{bmatrix},~~\bV^{(1)}_{k1}\bh^{(0)}_{2j-1}=\begin{bmatrix}
       \bzero
    \end{bmatrix}\\
    &
    \bQ^{(1)}_{k2}=-\bQ^{(1)}_{k1},~~  \bK^{(1)}_{k2}=\bK^{(1)}_{k1},~~  \ \bV^{(1)}_{k2}=-\bV^{(1)}_{k1},\\
    &
    \bQ^{(1)}_{k3}\bh^{(0)}_{2t-1}=\begin{bmatrix}
      1\\ -(2t-1)\\1\\\bzero
    \end{bmatrix},~~ \bK^{(1)}_{k3}\bh^{(0)}_{2j-1}=\begin{bmatrix}
      1\\  1\\ 2j-1\\\bzero
    \end{bmatrix},~~ 
    \bK^{(1)}_{k3}\bh^{(0)}_{2j}=\begin{bmatrix}
        1\\1\\  2j\\\bzero
    \end{bmatrix},~~ \bV^{(1)}_{k3}\bh^{(0)}_{2t-1}=\begin{bmatrix}
        \bzero\\ (\tilde\lambda-\prodeig)\ba_{t,k} \\ \bzero
    \end{bmatrix},
\end{align*}
where for each $k\in[A]$, $\bV^{(1)}_{k1}\bh^{(0)}_{2j},\bV^{(1)}_{k3}\bh^{(0)}_{2t-1}$ are supported on the same $d$ entries of $\bh^\partd$. 
It is readily verified that  summing over the attention heads and $k\in[A]$ gives the updates
$$ \bzero_d\mapsto\frac{(\Tpscov_t-\prodeig\id_d)\ba_{t,k}}{2t-1}$$ for all $k\in[A]$. We assume the updated vectors are supported on some $Ad$ coordinates of $\bh_{2t-1}^d$. Moreover, one can choose the matrices such that $\lops{\bQ^{(1)}_{k1,k2,k3}}\leq1,\lops{\bK^{(1)}_{k1,k2,k3}}\leq1,\lops{\bV^{(1)}_{k1,k2,k3}}\leq \max\{1,|\tilde\lambda-\prodeig|\}\leq1+\prodeig$. Therefore the norm of the first layer of the attention-only transformer $\nrmp{\btheta^{(1)}}\leq 2+\prodeig$. 

The second layer is used to multiply the updated vectors by a factor of $2t-1$, namely, to perform the map
$$ \frac{(\Tpscov_t-\prodeig\id_d)\ba_{t,k}}{2t-1}\mapsto{(\Tpscov_t-\prodeig\id_d)\ba_{t,k}}$$ for all $k\in[A]$, where the output vectors are supported on coordinates different from the input vectors (therefore we need $2Ad$ coordinates for embedding in step 2a). This can be achieved by choosing $\lops{\bQ^{(2)}}\leq T, \lops{\bK^{(2)}}\leq T, \lops{\bV^{(2)}}\leq1$
such that \begin{align*}
    &
    \bQ^{(2)}_{1}\bh^{(1)}_{2t-1}=\begin{bmatrix}
         (2t-1)^2\\-T(2t-1)^2\\  1\\ \bzero
    \end{bmatrix},~~ \bK^{(2)}_{1}\bh^{(1)}_{2j-1}=\begin{bmatrix}
        1\\1\\ T(2j-1)^2\\\bzero
    \end{bmatrix},~~ 
    \bK^{(1)}_{k3}\bh^{(1)}_{2j}=\begin{bmatrix}
      1\\  1\\  T(2j)^2\\\bzero
    \end{bmatrix},\\&~~ \bV^{(2)}_{1}\bh^{(1)}_{2t-1}=\begin{bmatrix}
        \bzero\\ \frac{(\Tpscov_t-\prodeig\id_d)\ba_{t,1}}{(2t-1)}
\\\vdots\\
        \frac{(\Tpscov_t-\prodeig\id_d)\ba_{t,A}}{(2t-1)} \\ \bzero
    \end{bmatrix}.
\end{align*} Therefore $\nrmp{\btheta^{(2)}}\leq T+1$ and hence the two layer transformer we constructed has norm $\nrmp{\btheta}\leq T+2+\prodeig$.

\paragraph{Proof of step~\ref{ts_step2b}}
The construction is similar to the construction in Step~\ref{slinucb_step2} of the proof of Theorem~\ref{thm:approx_smooth_linucb}. Hence we only provide a sketch of proof here. Note that 
\begin{align*}
    a_j^{(m)}\intmat_j^{-1}\intvec_k=\argmin_{\bx\in\R^d}\frac{1}{2(2t-1)}\bx^\top\intmat_j\bx-\frac{1}{2t-1}\<\bx, a_j^{(m)}\intvec_{k}\>=:\argmin_{\bx\in\R^d} L_{k,j}(\bx)
\end{align*}
is the global minimizer of a $\tilde\lambda/(2t-1)$-convex and $(B_a^2+\tilde\lambda)$-smooth function with the conditional number $\padecond\leq 2T(B_a^2+\tilde\lambda)/\tilde\lambda$. Since $$\|a_j^{(m)}\intmat_j^{-1}\intvec_k\|_2\leq|a_j^{(m)}|\lops{\intmat_j^{-1}}\|\intvec_k\|_2\leq B_aT(B_a^2+\tilde\lambda)/\tilde\lambda.$$
Therefore by Proposition~\ref{prop:conv_gd_agd} we have
$L=\lceil2\sqrt{2 T(B_a^2+\tilde\lambda)/\tilde\lambda}\log((1+\padecond)B_aT(B_a^2+\tilde\lambda)/(\tilde\lambda\eps))\rceil$ steps of accelerated gradient descent with step size $\eta=1/(B_a^2+\tilde\lambda)$ gives  $\|\widehat{a_j^{(m)}\intmat_j^{-1}\intvec_k}-a_j^{(m)}\intmat_j^{-1}\intvec_k\|_2\leq\eps$. Now, it remains to construct a transformer that can implement the accelerated gradient descent steps. Here we only provide the construction of the gradient $\nabla L_{k,j}(\bx)$ at the $l$-th iteration $\bx=\bx_{k,j}^{\ell-1}\in\R^d$ which belongs to the output after $\ell-1$ transformer layers.  The full construction of AGD steps follows from similar techniques as in Step~\ref{slinucb_step2} of  the proof of  
Theorem~\ref{thm:approx_smooth_linucb}. Concretely, for each layer $\ell\in[L]$ and $k\in[A],j\in[m]$, we choose 
$\bQ_{kj1,kj2,kj3}^{(\ell)},\bK_{kj1,kj2,kj3}^{(\ell)},\bV_{kj1,kj2,kj3}^{(\ell)}$ such that for even token indices $2j$ with $s\leq t-1$ and odd token indices with $s\leq t$
\begin{align*}
    &\bQ_{kj1}^{(\ell)}\bh^{(\ell-1)}_{2t-1}=\begin{bmatrix}
        \bx_{k,j}^{\ell-1}\\\bzero
    \end{bmatrix},~~ \bK_{kj1}^{(\ell)}\bh^{(\ell-1)}_{2s}=\begin{bmatrix}
        \ba_{s}\\\bzero
\end{bmatrix},~~\bK_{kj1}^{(\ell)}\bh^{(\ell-1)}_{2s-1}=\bzero,\\
&~~~~ ~~~\bV_{kj1}^{(\ell)}\bh^{(\ell-1)}_{2s}=-\eta\begin{bmatrix}
        \bzero\\ b_j^{(m)}\ba_s \\ \bzero
    \end{bmatrix},~~\bV_{kj1}^{(\ell)}\bh^{(\ell-1)}_{2s-1}=\bzero,\\
    &
    \bQ_{kj2}^{(\ell)}=-\bQ_{kj1}^{(\ell)},~~ \bK_{kj2}^{(\ell)}=\bK_{kj1}^{(\ell)},~~  \bV_{kj2}^{(\ell)}=-\bV_{kj1}^{(\ell)},\\
     &
     \bQ_{kj3}^{(\ell)}\bh^{(\ell-1)}_{2t-1}=\begin{bmatrix}
         1\\-(2t-1)\\ 1\\\bzero
    \end{bmatrix}, 
\bK_{kj3}^{(\ell)}\bh^{(\ell)}_{2s-1}=\begin{bmatrix}
        1\\ 1 \\(2s-1)\\\bzero
    \end{bmatrix}, \\&\qquad~\bV_{kj3}^{(\ell)}\bh^{(\ell-1)}_{2t-1}=\eta\begin{bmatrix}
        \bzero\\ a_j^{(m)}\intvec_k-[(1-b_j^{(m)})\prodeig+b_j^{(m)}\tilde\lambda]\bx_{k,j}^{\ell-1}\\ \bzero
    \end{bmatrix}.
\end{align*}
Similarly, it can be verified that the constructed attention layer generates 
\begin{align*}
  -\eta\cdot\nabla L_{k,j}(\bx_{k,j}^{\ell-1})=-\frac{\eta}{2t-1}\Big[ [(1-b_j^{(m)})\prodeig\id_d+b_j^{(m)}\tilde\lambda]+b_j^{(m)}\sum_{s=1}^{t-1}\ba_s\ba_s^\top \Big]\bx_{k,j}^{\ell-1}+\frac{\eta a_j^{(m)}\intvec_k}{2t-1}.
\end{align*} 
Therefore,  a  construction similar to  Step~\ref{slinucb_step2} of the proof of Theorem~\ref{thm:approx_smooth_linucb} yields Step~\ref{ts_step2b}. Moreover, note that for the construction to exist  we need the embedding dimension  $D=\conO(Adm)=\tcO(T^{1/4}Ad)$ and the number of hidden neurons $\hidden=\conO(Adm)=\tcO(T^{1/4}Ad)$.

\paragraph{Proof of Step~\ref{ts_step2c}}
Note that step 2c is a linear transformation from $\ba_{t,k},\widehat{a_j^{(m)}\intmat_j^{-1}\intvec_k}$, $k\in[A],j\in[m]$ to  $\sqrt{\prodeig} [\ba_{t,k}+\sum_{j=1}^m ({\prodeig}\id_d+{b_j^{(m)}(\Tpscov_t-\prodeig \id_d)})^{-1}{a_j^{(m)}(\Tpscov_t-\prodeig \id_d)\ba_{t,k}} ]$ and we have the fact $x=\sigma(x)-\sigma(-x)$. One can thus choose $\bW_1=\begin{bmatrix}
    \id_{A(m+1)d} & -\id_{A(m+1)d}&\bzero
\end{bmatrix}$ with $\hidden=2A(m+1)d$ and $\bW_2$ with $\lops{\bW_2}\leq \sqrt{\prodeig}(1+m)$ that implements the linear map.

\paragraph{Proof of Step~\ref{ts_step3}}
Similar to the proof of Step~\ref{ts_step2b}, given $\widehat{\Tpscov_t^{1/2}\ba_{t,k}}$ we can apprxoimate $\Tpscov_t^{-1}\widehat{\Tpscov_t^{1/2}\ba_{t,k}}\approx \Tpscov_t^{-1/2}\ba_{t,k}$ using accelerated gradient descent. Concretely, note that 
\begin{align*}
   \sqrt{\Tpsparn}\Tpscov_t^{-1}\widehat{\Tpscov_t^{1/2}\ba_{t,k}} =\argmin_{\bx\in\R^d}\frac{1}{2(2t-1)}\bx^\top\Tpscov_t\bx-\frac{1}{2t-1}\<\bx,  \sqrt{\Tpsparn}\widehat{\Tpscov_t^{1/2}\ba_{t,k}}\>=:\argmin_{\bx\in\R^d} L_{k}(\bx)
\end{align*}
is the global minimizer of a $\tilde\lambda/(2t-1)$-convex and $(B_a^2+\tilde\lambda)$-smooth function with the conditional number $\padecond\leq 2T(B_a^2+\tilde\lambda)/\tilde\lambda$. Since 
$$
\|\sqrt{\Tpsparn}\Tpscov_t^{-1}\widehat{\Tpscov_t^{1/2}\ba_{t,k}} \|_2\leq\sqrt{\Tpsparn}\lops{\intmat_j^{-1}}\|\widehat{\Tpscov_t^{1/2}\ba_{t,k}} \|_2\leq 2\sqrt{\Tpsparn}\sqrt{T(B_a^2+\tilde\lambda)}B_a,$$
where the last inequality uses the assumption in Step~\ref{ts_step3}.
Therefore for any $\eps_0>0$, it follows from Proposition~\ref{prop:conv_gd_agd} that
$L=\lceil2\sqrt{2 T(B_a^2+\tilde\lambda)/\tilde\lambda}\log((1+\padecond)2\sqrt{\Tpsparn}\sqrt{T(B_a^2+\tilde\lambda)}B_a/\eps_0)\rceil$ steps of accelerated gradient descent with step size $\eta=1/(B_a^2+\tilde\lambda)$ gives  $$\|\sqrt{\Tpsparn}\widehat{\Tpscov_t^{-1/2}\ba_{t,k}} -\sqrt{\Tpsparn}\Tpscov_t^{-1}\widehat{\Tpscov_t^{1/2}\ba_{t,k}} \|_2\leq\eps_0.$$ 

Following the construction in Step~\ref{slinucb_step2} of the proof of Theorem~\ref{thm:approx_smooth_linucb} it can be verified that there exists a transformer $\TF_\btheta(\cdot)$ with 
 \begin{align*}
  &L=\lceil2\sqrt{2 T(B_a^2+\tilde\lambda)/\tilde\lambda}\log((1+\padecond)2\sqrt{\Tpsparn}\sqrt{T(B_a^2+\tilde\lambda)}B_a/\eps_0)\rceil=\tcO(\sqrt{T}),~~\max_{\ell\in[L]}M^{(l)}=4A,~~~ \\
  &\qquad\qquad
  \max_{\ell\in[L]}\hidden^{(l)}=4Ad,~~~
  \nrmp{\btheta}\leq  \cO(A)
  \end{align*} that implements the AGD steps. Therefore, the approximation error 
  \begin{align*}
     &\quad\|\sqrt{\Tpsparn}\widehat{\Tpscov_t^{-1/2}\ba_{t,k}} 
      -
      \sqrt{\Tpsparn}{\Tpscov_t^{-1/2}\ba_{t,k}}
      \|_2\\
      &\leq
  \|\sqrt{\Tpsparn}\widehat{\Tpscov_t^{-1/2}\ba_{t,k}} -\sqrt{\Tpsparn}\Tpscov_t^{-1}\widehat{\Tpscov_t^{1/2}\ba_{t,k}} \|_2+ \|\sqrt{\Tpsparn}\Tpscov_t^{-1}\widehat{\Tpscov_t^{1/2}\ba_{t,k}}-\sqrt{\Tpsparn}\Tpscov_t^{-1}{\Tpscov_t^{1/2}\ba_{t,k}} \|_2\\
      &\leq \eps_0+\sqrt{\Tpsparn}\lops{\Tpscov_t^{-1}}\|\widehat{\Tpscov_t^{1/2}\ba_{t,k}}-\Tpscov_t^{1/2}\ba_{t,k} \|_2\\
      &\leq 
\eps_0+\frac{\sqrt{\Tpsparn}}{\tilde\lambda}\|\widehat{\Tpscov_t^{1/2}\ba_{t,k}}-\Tpscov_t^{1/2}\ba_{t,k} \|_2\leq \eps_0+\eps/2,
  \end{align*} where the last inequality uses the assumption on the approximation error in Step~\ref{ts_step2}. 
  Letting  $\eps_0=\eps/2$ yields $ \|\sqrt{\Tpsparn}\widehat{\Tpscov_t^{-1/2}\ba_{t,k}} 
      -
      \sqrt{\Tpsparn}{\Tpscov_t^{-1/2}\ba_{t,k}}
      \|_2\leq \eps$.

In addition to the calculation of  $\sqrt{\Tpsparn}\widehat{\Tpscov_t^{-1/2}\ba_{t,k}}$, we construct a two-layer attention-only transformer that computes $\<\widehat\Tpsmean_t,\ba_{t,k}\>$. Namely, we choose $\bQ^{(1)}_{k1,k2},\bK^{(1)}_{k1,k2},\bV^{(1)}_{k1,k2}$ such that
\begin{align*}
    &\bQ_{k1}^{(1)}\bh^{(0)}_{2t-1}=\begin{bmatrix}
        \widehat\Tpsmean_t\\-(2t-1)\\ \tfthres \\\bzero
    \end{bmatrix},~~ \bK_{k1}^{(1)}\bh^{(0)}_{2j}=\begin{bmatrix}
     \bzero\\  \tfthres\\ 2j\\\bzero
\end{bmatrix},~~\bK_{k1}^{(1)}\bh^{(0)}_{2j-1}=\begin{bmatrix}
    \ba_{j,k}\\  \tfthres\\ 2j-1\\\bzero
\end{bmatrix},~~ \bV_{k1}^{(1)}\bh^{(0)}_{2t-1}=\begin{bmatrix}
        \bzero\\ \be_k \\ \bzero
    \end{bmatrix}
    \\
      &\bQ_{k2}^{(1)}\bh^{(0)}_{2t-1}=\begin{bmatrix}
        -\widehat\Tpsmean_t\\-(2t-1)\\ \tfthres \\\bzero
    \end{bmatrix},~~ \bK_{k2}^{(1)}\bh^{(0)}_{2j}=\begin{bmatrix}
     \bzero\\  \tfthres\\ 2j\\\bzero
\end{bmatrix},~~\bK_{k2}^{(1)}\bh^{(0)}_{2j-1}=\begin{bmatrix}
    \ba_{j,k}\\  \tfthres\\ 2j-1\\\bzero
\end{bmatrix},~~ \bV_{k2}^{(1)}\bh^{(0)}_{2t-1}=-\begin{bmatrix}
        \bzero\\ \be_k \\ \bzero
    \end{bmatrix},
\end{align*}
where $\tfthres=2TB_a^2(B_aB_w+\sqrt{2\Tpsparn\log(T/{\delta_0}))})/\tilde\lambda=\tcO(T)$ is an upper bound of $\<\widehat\Tpsmean_t,\ba_{t,k}\>$ for all $k\in[A]$ under the event $\hpevent_{{\delta_0}}$, and $\be_k=(0,0,\ldots,1,0,\ldots,0)\in\R^{A}$ is the one-hot vector supported on the $k$-th entry. Summing up the attention heads gives the update 
$$
\bzero\mapsto\frac{\<\widehat\Tpsmean_t,\ba_{t,k}\>}{2t-1}.
$$ Note that one can choose the matrices such that 
$$\lops{\bQ^{(1)}_{k1,k2}}\leq \tfthres,~~~\lops{\bK^{(1)}_{k1,k2}}\leq \tfthres,~~~\lops{\bV^{(1)}_{k1,k2}}\leq1.$$ Thus the norm of the attention layer  $\nrmp{\btheta^{(1)}}\leq \tfthres+2$.

Finally, as in the proof of Step~\ref{ts_step2a} we can construct a single-layer  single-head  attention-only transformer with $\nrmp{\btheta^{(2)}}\leq T+1$ that performs the multiplication
$$
\frac{\<\widehat\Tpsmean_t,\ba_{t,k}\>}{2t-1}\mapsto{\<\widehat\Tpsmean_t,\ba_{t,k}\>}.
$$
 To estimate the approximation error, note that $\|\ba_{t,k}\|_2\leq B_a$ and $\|\hat\Tpsmean_t-\Tpsmean_t\|_2\leq\eps/B_a$ by our assumption in Step~\ref{ts_step3}, it follows immediately that $|\<\widehat\Tpsmean_t,\ba_{t,k}\>-\<\Tpsmean_t,\ba_{t,k}\>|\leq\eps$ for all $k\in[A]$. Combining the construction of the transformer layers above gives Step~\ref{ts_step3}.

\paragraph{Proof of Step~\ref{ts_step4}}
By Assumption~\ref{ass:thompson_mlp_approx_linear},  $g_{k,\trunprob}(\bigvec)$ are $(\eps,R_{\delta_0}, \neuron,\weightn)$-approximable by sum of relus for some $ \neuron,\weightn$ depend polynomially on $(1/\eps,1/\trunprob,1/\Trunregp,1/\delta_0,A)$. Since
\begin{align*}
\|\sqrt{\Tpsparn}\Tpscov_t^{-1/2}\ba_{t,k}\|_2\leq \sqrt{\Tpsparn}\|\Tpscov_t^{-1/2}\|_2\|\ba_{t,k}\|_2\leq\sqrt{\lambda}B_a,~~|\<\Tpsmean_t,\ba_{t,k}\>|\leq \|\Tpsmean_t\|_2\|\ba_{t,k}\|_2=B_wB_a
\end{align*}
and $R_{\delta_0}=2(B_wB_a+\sqrt{\lambda}B_a)$, it follows from 
the assumption  $\eps_3\leq R_{\delta_0}/2$ and  a triangular inequality that $\linf{\bv}\leq R_\delta$. Therefore, using Assumption~\ref{ass:thompson_mlp_approx_linear} and stacking up the approximation functions for each coordinate $k\in[A]$ we construct a two-layer MLP with $\lops{\bW_1}\leq \sqrt{A\neuron},\lops{\bW_1}\leq \weightn$, $D'= \neuron A$ such that 
\begin{align*}\bW_2\sigma(\bW_1\bh^{(1)}_{2t-1})=\begin{bmatrix}
    \bzero\\\hat \bv_{t1}\\\vdots\\\hat\bv_{tA}\\\bzero,
\end{bmatrix}
\end{align*}
where $(\hat \bv_{t1},\ldots,\hat\bv_{tA})$ is supported on $\bh^c_{2t-1}$ and $|\hat v_{tk}-g_{k,\trunprob}(\bigvec)|\leq \eps$ for all $k\in[A]$.

\subsection{Proof of Theorem~\ref{thm:ts_linear_regret}}\label{sec:pf_prop:ts_linear_regret}
Denote the transformer constructed in Theorem~\ref{thm:approx_thompson_linear-formal} by $\TF_\tfpar$. 
From the proof of Theorem~\ref{thm:approx_thompson_linear-formal}, we have 
$$\log\frac{\sAlg_{\TS}(\eaction_t|\dset_{t-1},\state_t)}{\sAlg_{\tfpar}(\eaction_t|\dset_{t-1},\state_t)}\leq \eps$$ under the event $$\hpevent_{\delta_0}:=\{\max_{t\in[T]}|\eps_t|\leq\sqrt{2\Tpsparn\log(2T/\delta_0)}\}\cup \{\|\bw^*\|_2\leq  B_w\}~~\text{ for all } t\in[\totlen]$$ with probability at least $1-\delta_0$, where  $B_w:=\sqrt{\lambda}\big(\sqrt{d}+2\sqrt{\log(2/\delta_0)}\big)$. Note that due to the unboundedness of the noise $\eps_t$ and parameter vector $\bw^*$, Assumption~\ref{asp:realizability} may not be satisfied. However, setting $\delta_0=\delta/(2n)$ and applying a union bound gives 
\begin{align}
    \log\frac{\sAlg_{\TS}(\eaction^i_t|\dset^i_{t-1},\state^i_t)}{\sAlg_{\tfpar}(\eaction^i_t|\dset^i_{t-1},\state^i_t)}\leq \eps,~~~\text{ for } t\in[\totlen],i\in[n].\label{eq:unif_traj_realize}
\end{align} 
with probability at least $1-\delta/2$. 
From the proof of Theorem~\ref{thm:diff_reward} we see that Assumption~\ref{asp:realizability} is only used in Eq.~\eqref{eq:pf_hellinger_control_general2} in the proof of Lemma~\ref{lm:general_imit}. Moreover, it can be verified that the same result holds with  Assumption~\ref{asp:realizability}  replaced by the condition in Eq.~\eqref{eq:unif_traj_realize}. Therefore, we have \begin{align*}
\Big|\totreward_{\prior,\sAlg_\EstPar}(\totlen)-\totreward_{\prior,\sAlg_\TS}(\totlen)\Big|
&\leq 
c \totlen^2 \sqrt{\distratio} \bigg(\sqrt{\frac{\log \brac{ \cN_{\Parspace} \cdot 
 \totlen/\delta } }{n}} +  \sqrt{\geneps}\bigg)\\
 &\leq 
 c \totlen^2 \sqrt{\distratio} \bigg(\sqrt{\frac{\log \brac{ \cN_{\Parspace} \cdot 
 \totlen/\delta } }{n}}  \bigg)+\sqrt{\totlen}
\end{align*}
with probability at least $1-\delta$ as in Theorem~\ref{thm:diff_reward}, where the second inequality follows as in our setting $\geneps=\eps=1/(\totlen^3\distratio)$. 
Now, it suffices to show Thompson sampling has the expected regret with 
\begin{align*}
    \E_{\inst\sim\prior}\Big[\sum_{t=1}^\totlen\max_{k}\<\ba_{t,k},\bw^*\>-\totreward_{\inst,\sAlg_\TS}(\totlen)\Big]=\cO(d\sqrt{T}\log(Td)). 
\end{align*}

The proof follows similar arguments as in  Theorem 36.4 in~\cite{lattimore2020bandit}. Define $$\tilde\lambda:=\Tpsparn/\lambda,~~~\beta:=\sqrt{\Tpsparn}\Big(\sqrt{2\Tpsparn d\log(4d/{\delta_{\TS}})}+\sqrt{2\log(2/{\delta_{\TS}})+d\log(1+TB_a^2/\tilde\lambda d)}\Big),
$$
where ${\delta_{\TS}}$ will be specified later,  and recall $\Tpscov_t=\tilde\lambda \id_d+\sum_{j=1}^{t-1}\ba_j\ba_j^\top$. Since $\|\bw^\star\|_2\leq\sqrt{2\lambda d\log(4d/{\delta_{\TS}})}$ with probability at least $1-{\delta_{\TS}}/2$ by a union bound, it  follows from Theorem 20.5 in~\cite{lattimore2020bandit} that  $$\P(\|\bw^\star\|_2\leq\sqrt{2\lambda d\log(4d/{\delta_{\TS}})},~~~\text{and~~~}\|\Tpsmean_t-\bw^*\|_{\Tpscov_t}\geq\beta,\text{for some } i\in[T])\leq{\delta_{\TS}},$$ where the probability is taken over the both randomness of the noise and of the bandit instance $\inst$.  

Let $\hpevent$ be the event where $\|\Tpsmean_t-\bw^*\|_{\Tpscov_t}\leq\beta$ for all $i\in[T]$, and let $\hpevent_0$ be the event where $\{\|\bw^\star\|_2\leq\sqrt{2\lambda d\log(4d/{\delta_{\TS}})}\}$. Then $\P(\hpevent\cap\hpevent_0)\geq 1-{\delta_{\TS}}$ and the expected regret
\begin{align*}
&\qquad\E_{\inst\sim\prior}\Big[\sum_{t=1}^\totlen\max_{k}\<\ba_{t,k},\bw^*\>-\totreward_{\inst,\sAlg_\TS}(\totlen)\Big]\\
&=\E[\sum_{t=1}^T \max_{j\in[A]}\<\bw^\star,\ba_{t,k}\>-\<\bw^\star,\ba_{t}\>]\\
  &=
  \E\Big[\sum_{t=1}^T (\max_{j\in[A]}\<\bw^\star,\ba_{t,k}\>-\<\bw^\star,\ba_{t}\>) \bone_{\hpevent\cap\hpevent_0}\Big]
  +
   \E\Big[\sum_{t=1}^T \max_{j\in[A]}(\<\bw^\star,\ba_{t,k}\>-\<\bw^\star,\ba_{t}\>)\bone_{(\hpevent\cap\hpevent_0)^c}\Big]\\
   &\leq 
   \E\Big[\sum_{t=1}^T (\max_{j\in[A]}\<\bw^\star,\ba_{t,k}\>-\<\bw^\star,\ba_{t}\>) \bone_{\hpevent\cap\hpevent_0}\Big]
+
\E[2B_aT\|\bw^*\|_2\bone_{(\hpevent\cap\hpevent_0)^c}]
  \\
  &\leq 
  \E\Big[\sum_{t=1}^T (\max_{j\in[A]}\<\bw^\star,\ba_{t,k}\>-\<\bw^\star,\ba_{t}\>) \bone_{\hpevent\cap\hpevent_0}\Big]+2B_aT\E[\|\bw^*\|_2\bone_{(\hpevent\cap\hpevent_0)^c}].
\end{align*}
Since 
\begin{align*}
 \E[\|\bw^*\|_2\bone_{(\hpevent\cap\hpevent_0)^c}]
    &\leq
\P({(\hpevent\cap\hpevent_0)^c})\sqrt{2\lambda d\log(4d/{\delta_{\TS}})}
+
\int^\infty_{\sqrt{2\lambda d\log(4d/{\delta_{\TS}})}}\P(\|\bw^*\|_2\geq t)dt\\
&\leq
\sqrt{2\lambda d\log(4d/{\delta_{\TS}})}{\delta_{\TS}}+d^{3/2}\int^\infty_{\sqrt{2\lambda \log(4d/{\delta_{\TS}})}}\P(|w_1^\star|\geq t)dt
\\
&\leq
\sqrt{2\lambda d\log(4d/{\delta_{\TS}})}{\delta_{\TS}}+2\sqrt{2}d^{3/2}\lambda^{1/2}\int^\infty_{\sqrt{ \log(4d/{\delta_{\TS}})}}\exp(-{t^2})dt\\
&\leq
\sqrt{2\lambda d\log(4d/{\delta_{\TS}})}{\delta_{\TS}}+\sqrt{2}d^{3/2}\lambda^{1/2}\int^\infty_{{ \log(4d/{\delta_{\TS}})}}\frac{1}{t^{1/2}}\exp(-{t})dt\\
&\leq 
2\sqrt{2\lambda d\log(4d/{\delta_{\TS}})}{\delta_{\TS}},
\end{align*}
where the second line follows from a union bound over $[d]$, and the third line uses properties of subgaussian variables. Therefore, choosing ${\delta_{\TS}}=1/[T\sqrt{ d}]$  gives 
\begin{align}
&\qquad\E_{\inst\sim\prior}\Big[\sum_{t=1}^\totlen\max_{k}\<\ba_{t,k},\bw^*\>-\totreward_{\inst,\sAlg_\TS}(\totlen)\Big]\notag\\
    &\leq
     \E\Big[\sum_{t=1}^T (\max_{j\in[A]}\<\bw^\star,\ba_{t,k}\>-\<\bw^\star,\ba_{t}\>) \bone_{\hpevent\cap\hpevent_0}\Big]+6B_a\sqrt{\lambda\log(4d^2T)}.\label{eq:ts_linear_regret_decomp}
\end{align}
Now define the event $\hpevent_t:=\{\|\Tpsmean_t-\bw^*\|_{\Tpscov_t}\leq\beta\}$,  then  we have $\hpevent_t\in\cF_{t-1}$  and $\cap_{t=1}^T \hpevent_t=\hpevent$. Also, we define the upper confidence bound $U_t(\ba):=\<\Tpsmean_t,\ba\>+\beta\|\ba\|_{\Tpscov_t^{-1}}$, which does not depend on the true parameter $\bw^*$. Let $(\cF_{t})_{t\geq 0}$ denote the filtration generated by the data collected up to time $t$ and the random parameter vector $\bw^*$. 

Let $\ba_t^*$ denote the optimal action at time $t$. Due to the construction of Thompson sampling, we have the distribution of $\ba_t^*$ and $\ba_t$ are the same conditioned on $\cF_{t-1}$. Therefore, $\E[U_t(\ba_t^*)|\cF_{t-1}]=\E[U_t(\ba_t)|\cF_{t-1}]$ and
\begin{align*}
    \E\Big[ (\<\bw^\star,\ba_{t}^*\>-\<\bw^\star,\ba_{t}\>) \bone_{\hpevent\cap\hpevent_0}\mid\cF_{t-1}\Big]
   &\leq
\E\Big[ \big(\<\bw^\star,\ba_{t}^*\>-U_t(\ba_t^*)+U_t(\ba_t)-\<\bw^\star,\ba_{t}\>\big) \bone_{\hpevent_t}\mid\cF_{t-1}\Big]\\
    &\leq
  \E\Big[ \big(U_t(\ba_t)-\<\bw^\star,\ba_{t}\>\big) \bone_{\hpevent_t}\mid\cF_{t-1}\Big]\\
   &\leq
  \E\Big[ \big(\|\ba_t\|_{\Tpscov_t^{-1}}\|\Tpsmean_t-\bw^*\|_{\Tpscov_t} +\beta \|\ba_t\|_{\Tpscov_t^{-1}} \big)\bone_{\hpevent_t}\mid\cF_{t-1}\Big]\\
    &\leq
   2 \beta\E[\|\ba_t\|_{\Tpscov_t^{-1}}|\cF_{t-1}].
\end{align*}
Moreover, we have $$\|\ba_t\|_{\Tpscov_t^{-1}}\leq B_a/\sqrt{\tilde\lambda}.$$ 
Combining the last two displays, we obtain
\begin{align*}
     &\qquad\E\Big[\sum_{t=1}^T (\max_{j\in[A]}\<\bw^\star,\ba_{t,k}\>-\<\bw^\star,\ba_{t}\>) \bone_{\hpevent\cap\hpevent_0}\Big]\\
    &\leq\sum_{t=1}^T \E\Big[2 \beta \|\ba_t\|_{\Tpscov_t^{-1}}\wedge({B_a}/{\sqrt{\tilde\lambda}})\Big]\\
    &\leq 
    2\big(\beta\vee ({B_a}/{\sqrt{\tilde\lambda}})\big)\E\Big[\sum_{t=1}^T (\|\ba_t\|_{\Tpscov_t^{-1}}\wedge1)\Big]\\
    &\leq 
    2\big(\beta\vee ({B_a}/{\sqrt{\tilde\lambda}})\big)\sqrt{T}\sqrt{\E\Big[\sum_{t=1}^T (\|\ba_t\|^2_{\Tpscov_t^{-1}}\wedge1)\Big]}\\
    &\leq
    2\big(\beta\vee ({B_a}/{\sqrt{\tilde\lambda}})\big)\sqrt{T}\sqrt{2d\log(1+TB_a^2/(\tilde\lambda d))}\\
    &= \cO( d\sqrt{T}\log(Td)),
\end{align*}
where the  fourth line uses  Cauchy-Schwartz inequality and the fifth line follows from Lemma 19.4 in~\cite{lattimore2020bandit}. Combining the last display with~Eq.~\eqref{eq:ts_linear_regret_decomp} completes the proof of first part of Theorem~\ref{thm:ts_linear_regret}. Moreover, the second part of Theorem~\ref{thm:ts_linear_regret}  (i.e., the upper bound on $\log\cN_{\tfparspace}$) follows directly from Lemma~\ref{lm:cover_num_bound} and Eq.~\eqref{eq:ts_tf_param-main}.

\subsection{Proof of Lemma~\ref{lm:lip_of_tps}}\label{sec:pf_lm:lip_of_tps}

For any $j\neq k$, by definition of $g_{k,\trunprob}$
\begin{align*}
     &\quad|g_{k,\trunprob}(\bx_1,\ldots,\bx_j,\ldots,\bx_A,y_1,\ldots,y_A)-g_{k,\trunprob}(\bx_1,\ldots,\bx'_j,\ldots\bx_A,y_1,\ldots,y_A)|
        \\
     &\leq
     \frac{1}{\trunprob}\Big|\P(\<\bx_k-\bx_i,\bz\>+y_k-y_i\geq0,\text{~for all~}i\in[A])\\
     &\qquad~~~ -\P(\<\bx_k-\bx_i,\bz\>+y_k-y_i\geq0,\text{~for all~}i\neq j, \<\bx_k-\bx_j',\bz\>+y_k-y_j\geq0)\Big|
     \\
      &\leq
     \frac{1}{\trunprob}\Big(\P( \<\bx_k-\bx_j',\bz\>+y_k-y_j\geq0\geq\<\bx_k-\bx_j,\bz\>+y_k-y_j )\\
     &\qquad~~~
     +\P( \<\bx_k-\bx_j',\bz\>+y_k-y_j\leq0\leq\<\bx_k-\bx_j,\bz\>+y_k-y_j)\Big)\\
     &\leq 
      \frac{1}{\trunprob}\Big(\P(\<\bx_j-\bx_j',\bz\>\geq\<\bx_k-\bx_j',\bz\>+y_k-y_j\geq0 )\\&\qquad~~~+\P( \<\bx_k-\bx_j',\bz\>+y_k-y_j\leq0\leq\<\bx_k-\bx_j,\bz\>+y_k-y_j)\Big)\\
       &\leq 
      \frac{1}{\trunprob}\Big(\P(\<\bx_j-\bx_j',\bz\>\geq\<\bx_k-\bx_j',\bz\>+y_k-y_j\geq0 )+\P(\<\bx_j-\bx_j',\bz\>\leq \<\bx_k-\bx_j',\bz\>+y_k-y_j\leq0\Big).   
\end{align*}
Note that conditioned on $\bx_j,\bx_j'$ we have
\begin{align*}
    \P(|\<\bx_j-\bx_j',\bz\>|\leq \|\bx_j-\bx_j'\|_2\sqrt{2\log(2/{\delta_1})})\geq 1-{\delta_1}
\end{align*} for any ${\delta_1}>0$. Therefore we further have
\begin{align*}
    &\quad|g_{k,\trunprob}(\bx_1,\ldots,\bx_j,\ldots,\bx_A,y_1,\ldots,y_A)-g_{k,\trunprob}(\bx_1,\ldots,\bx'_j,\ldots\bx_A,y_1,\ldots,y_A)|
    \\
    &\leq
     \frac{1}{\trunprob}\Big[\P\Big(\<\bx_k-\bx_j',\bz\>+y_k-y_j\in[-\|\bx_j-\bx_j'\|_2\sqrt{2\log(2/{\delta_1})},\|\bx_j-\bx_j'\|_2\sqrt{2\log(2/{\delta_1})}] \Big)+{\delta_1}\Big]\\
     &\leq 
    \frac{1}{\trunprob}[\sup_{A\in\cF,\mu(A)=2\|\bx_j-\bx_j'\|_2\sqrt{2\log(2/{\delta_1})}}\P(\<\bx_k-\bx_j',\bz\>\in A)+{\delta_1}]\\
    &\leq 
   \frac{1}{\trunprob}(\frac{2\|\bx_j-\bx_j'\|_2\sqrt{2\log(2/{\delta_1})}}{\sqrt{2\pi}\Trunregp}+{\delta_1})\leq \frac{1}{\trunprob}(\frac{2\|\bx_j-\bx_j'\|_2}{\Trunregp{\delta_1}}+{\delta_1})
\end{align*}
for any ${\delta_1}>0$, 
where the last inequality follows from the fact that standard Gaussian has probability density less than $1/\sqrt{2\pi}$ everywhere,  $\|\bx_k-\bx_j'\|_2\leq \Trunregp$ and $\log(2/{\delta_1})\leq 4/{\delta_1}^2$. Choosing ${\delta_1}=1\wedge\sqrt{\frac{2\|\bx_j-\bx_j'\|_2}{\Trunregp}}$ gives 
\begin{align*}
 &\quad |g_{k,\trunprob}(\bx_1,\ldots,\bx_j,\ldots,\bx_A,y_1,\ldots,y_A)-g_{k,\trunprob}(\bx_1,\ldots,\bx'_j,\ldots\bx_A,y_1,\ldots,y_A)|\\
 &\leq  
  \frac{2}{\trunprob}\Big(\sqrt{\frac{2\|\bx_j-\bx_j'\|_2}{\Trunregp}}+\frac{2\|\bx_j-\bx_j'\|_2}{\Trunregp}\Big)
\end{align*}

Similarly, for $\bx_k\neq\bx_k'$, we have 
\begin{align*}
     &\quad|g_{k,\trunprob}(\bx_1,\ldots,\bx_k,\ldots,\bx_A,y_1,\ldots,y_A)-g_{k,\trunprob}(\bx_1,\ldots,\bx'_k,\ldots\bx_A,y_1,\ldots,y_A)|
        \\
     &\leq
     \frac{1}{\trunprob}\Big(\P(\<\bx_k-\bx_i,\bz\>+y_k-y_i\geq0\geq \<\bx'_k-\bx_i,\bz\>+y_k-y_i,\text{~for some~}i\in[A])\\&\qquad~~~+\P(\<\bx_k-\bx_i,\bz\>+y_k-y_i\leq0\leq \<\bx_k'-\bx_i,\bz\>+y_k-y_i,\text{~for some~}i\in[A])\Big)
     \\
       &\leq
    \sum_{i\neq k} \frac{1}{\trunprob}\Big(\P(\<\bx_k-\bx_i,\bz\>+y_k-y_i\geq0\geq \<\bx_k'-\bx_i,\bz\>+y_k-y_i)\\&\qquad~~~+\P(\<\bx_k-\bx_i,\bz\>+y_k-y_i\leq0\leq \<\bx_k'-\bx_i,\bz\>+y_k-y_i)\Big)\\
    &\leq \frac{A}{\trunprob}\max_{i
    \neq k}\Big(\P(\<\bx_k-\bx_k',\bz\>\geq\<\bx_i-\bx_k',\bz\>+y_i-y_k\geq0 )+\P(\<\bx_k-\bx_k',\bz\>\leq \<\bx_i-\bx_k',\bz\>+y_i-y_k\leq0\Big).
\end{align*}
Following the same argument, we have 
\begin{align*}
&\quad~|g_{k,\trunprob}(\bx_1,\ldots,\bx_k,\ldots,\bx_A,y_1,\ldots,y_A)-g_{k,\trunprob}(\bx_1,\ldots,\bx'_k,\ldots\bx_A,y_1,\ldots,y_A)|\\&\leq   \frac{2A}{\trunprob}\Big(\sqrt{\frac{2\|\bx_k-\bx_k'\|_2}{\Trunregp}}+\frac{2\|\bx_k-\bx_k'\|_2}{\Trunregp}\Big).
\end{align*}

Likewise, for any $j\neq k$ we have 
\begin{align*}
    &\quad|g_{k,\trunprob}(\bx_1,\ldots,\bx_A,y_1,\ldots,y_j,\ldots,y_A)-g_{k,\trunprob}(\bx_1,\ldots,\bx_A,y_1,\ldots,y_j',\ldots,y_A)|\\
    &\leq
    \frac{1}{\Trunregp}
    \Big(\P( \<\bx_k-\bx_j,\bz\>\in[\min\{y_k-y_j,y_k-y_j'\},\max\{y_k-y_j,y_k-y_j'\}]\Big)\\
      &\leq 
      \frac{1}{\trunprob}\sup_{A\in\cF,\mu(A)=2|y_j-y_j'|}\P(\<\bx_k-\bx_j,\bz\>\in A)\\
      &\leq
   \frac{1}{\trunprob}\frac{2|y_j-y_j'|}{\sqrt{2\pi}\Trunregp}\leq \frac{2|y_j-y_j'|}{\Trunregp\trunprob} 
\end{align*} and
\begin{align*}
    &\quad|g_{k,\trunprob}(\bx_1,\ldots,\bx_A,y_1,\ldots,y_k,\ldots,y_A)-g_{k,\trunprob}(\bx_1,\ldots,\bx_A,y_1,\ldots,y_k',\ldots,y_A)|\\
    &\leq
    \sum_{j\neq k}\frac{1}{\Trunregp}
    \Big(\P( \<\bx_k-\bx_j,\bz\>\in[\min\{y_k'-y_j,y_k-y_j\},\max\{y_k'-y_j,y_k-y_j\}]\Big)\\
      &\leq 
      \frac{A}{\trunprob}\sup_{A\in\cF,\mu(A)=2|y_j-y_j'|}\P(\<\bx_k-\bx_j,\bz\>\in A)\\
      &\leq
   \frac{A}{\trunprob}\frac{2|y_j-y_j'|}{\sqrt{2\pi}\Trunregp}\leq \frac{2A|y_j-y_j'|}{\Trunregp\trunprob}.
\end{align*}

\section{Learning in-context RL in markov decision processes}

Throughout this section, we use $c>0$ to denote universal constants whose values may vary from line to line.
Moreover, for notational simplicity, we use $\conO(\cdot)$ to hide universal constants,  $\tcO(\cdot)$ to hide poly-logarithmic terms in $(\horizon,\Numepi,\Numst,\Numact,1/\temp)$.

This section is organized as follows. Section~\ref{sec:tf_embed_mdp} discusses the embedding and extraction formats of transformers for Markov decision processes. Section~\ref{sec:example_ucbvi} describes the UCB-VI and the soft UCB-VI algorithms.  We prove Theorem~\ref{thm:approx_ucbvi} in Section~\ref{sec:pf_approx_ucbvi} and prove Theorem~\ref{thm:ucbvi_icrl-main} in Section~\ref{sec:pf_thm:ucbvi_icrl-main}.

\subsection{Embedding and extraction mappings}\label{sec:tf_embed_mdp}

To embed MDP problems into transformers, we consider an embedding similar to that for linear bandits. For each episode $k\in[\Numepi]$, we construct $2\horizon+1$ tokens. Concretely, for each $t\in[\totlen]$ in the $k$-th episode, we write  $t=\horizon(k-1)+h$  and construct two tokens
\[
\begin{aligned}
\bh_{2(t-1)+k}=
\left[
\begin{array}{cc}
     \bzero_{\Numact+1} \\
     \hdashline 
     \state_{k,h}\\  
      \hdashline 
    \bzero_{\Numact}\\  
    \hdashline  
    \bzero\\
    \posv_{2(t-1)+k}
\end{array}\right]
=:
\begin{bmatrix}
     \bh_{2(t-1)+k}^{\parta} \\  \bh_{2(t-1)+k}^{\partb}\\  \bh_{2(t-1)+k}^{\partc}\\   \bh_{2(t-1)+k}^{\partd}\\
\end{bmatrix},~~
\bh_{2t-1+k}=
\left[
\begin{array}{cc}
     \action_{k,h} \\
      \reward_{k,h}\\  
      \hdashline 
      \bzero_{\Numst}\\ 
      \hdashline 
      \bzero_{\Numact}\\ 
     \hdashline  
      \bzero\\ 
      \posv_{2t-1+k}
\end{array}\right]=:
\begin{bmatrix}
    \bh_{2t-1+k}^{\parta} \\  \bh_{2t-1+k}^{\partb}\\   \bh_{2t-1+k}^{\partc}\\   \bh_{2t-1+k}^{\partd}
\end{bmatrix},
\end{aligned}
\]
where  $\state_{k,h},\action_{k,h}$ are represented using one-hot embedding (we let $\state_{k,\horizon+1}=\bzero_\Numst$), $\bh^\partc_{2(t-1)+k}$ is used to store the (unnormalized)  policy at time  $t$ given current state $\state_{k,h}$, $\bzero$ in $\bh^\partd$ denotes an additional zero vector. At the end of each episode $k$, we add an empty token 
$$
\bh_{(2\horizon+1)k}=\bh^{\emp}_{k}:=\begin{bmatrix}
    \bzero &\posv_{(2\horizon+1)k}
\end{bmatrix}^\top
$$ to store intermediate calculations. We also include in the tokens  the positional embedding $\posv_i:=(k,h,v_i,i,i^2,1)^\top$ for $i\in[2\totlen+\Numepi]$, where $\oddeven_i:=\bone_{\{\bh_i^\parta=\bzero\}}$  denote the tokens that do not embed actions and rewards.    In addition, we define  the token matrix $\bH_t:=\begin{bmatrix}
    \bh_1,\ldots,\bh_{2t-1+k}
\end{bmatrix}\in\R^{D\times (2t-1+k)}$ for all $t\in[\totlen]$.

\paragraph{Offline pretraining} 
Similar to the bandit setting, during
pretraining the transformer $\TF_\tfpar$ takes in   $\bH_\totlen^\pre:=\bH_\totlen$ as the input token matrix, and generates $\bH_\totlen^\post:=\TF_\tfpar(\bH_\totlen^\pre)$ as the output. For each time $t\in[\totlen]$, we define the  induced policy  $\sAlg_\tfpar(\cdot|\dset_{t-1},\state_t):=\frac{\exp(\bh^{\post,\partc}_{2(t-1)+k})}{\|\exp(\bh^{\post,\partc}_{2(t-1)+k})\|_1}\in\Delta^\Numact$, whose $i$-th entry is the probability of selecting the $i$-th action (denoted by the  one-hot vector $\be_i$) given $(\dset_{t-1},\state_t)$. We then find the transformer $\esttfpar\in\tfparspace$ by solving Eq.~\eqref{eq:general_mle}. 

\paragraph{Rollout}
At each time $t\in[\totlen]$, given the  current state $\state_t$ and  previous data $\dset_{t-1}$, we first construct the token matrix $\bH^{\pre}_{\roll,t}\in\R^{D\times 2(t-1)+k}$ that consists of tokens up to the first token for time $t$.   The transformer then takes $\bH^{\pre}_{\roll,t}$ as the input  and generates $\bH^{\post}_{\roll,t}=\TF_\tfpar(\bH^{\pre}_{\roll,t})$. Next,  the agent selects an action $\action_t\in\actionsp$ following  the induced  policy $\sAlg_\tfpar(\cdot|\dset_{t-1},\state_t):=\frac{\exp(\bh^{\post,\partc}_{2(t-1)+k})}{\|\exp(\bh^{\post,\partc}_{2(t-1)+k})\|_1}\in\Delta^\Numact$ and observes the reward $\reward_t$ and next state $\state_{t+1}$ ($\state_{t+1}\sim\init$ if $t$ is the last time step in an episode).

\paragraph{Embedding and extraction mappings}
To integrate the above construction into our general framework in Section~\ref{sec:framework}, for $t=(k-1)\horizon+h$,  we have the embedding vectors $$\embedmap(\state_t):=\bh_{2(t-1)+k},~~~\embedmap(\action_t,\reward_t):=\bh_{2t-1+k}.$$  For $N\geq 1$, write $$\lceil(N+1)/2\rceil=(k_N-1)\horizon+h_N$$ for some $h_N\in[\horizon]$, and define the  concatenation operator 
\begin{align*}
\cat(\bh_1, \ldots, \bh_N): = [\bh_1, \ldots,\bh_{2\horizon},\bh_{1}^{\emp}, \bh_{2\horizon+1},\ldots,\bh_{4\horizon},\bh_2^\emp,\bh_{4\horizon+1},\ldots, \bh_N]\in\R^{N+k_N-1},
\end{align*}
where we insert an empty token $\bh_k^\emp$ (i.e., a token with $\bh^{\{\parta,\partb,\partc\}}=\bzero$) at the end of  each episode $k$. 

In this case, we have the input token matrix  $$\bH=\bH^\pre_{\roll,t}: = \cat(\embedmap(\state_1), \embedmap(\action_1, \reward_1), \ldots, \embedmap(\action_{t-1}, \reward_{t-1}), \embedmap(\state_t)) \in \R^{D \times [2(t-1)+k]},$$ the output token matrix $\bar{\bH}=\bH^\post_{\roll,t}$, and the linear extraction map $\extractmap$  satisfies $$\extractmap\cdot\bar{\bh}_{-1}=\extractmap\cdot\bar{\bh}^\post_{2(t-1)+k}=\bh^{\post,\partc}_{2(t-1)+k}.$$

\subsection{UCB-VI and soft UCB-VI}\label{sec:example_ucbvi}
We show that transformers with the embedding in Section~\ref{sec:tf_embed_mdp}  can approximately implement the  UCB-VI algorithm in~\cite{azar2017minimax}. Namely, UCB-VI implements the following steps:

for each episode $k\in[\Numepi]$ and each step $h=\horizon,\ldots,1$
\begin{enumerate}
    \item Compute the estimated transition matrix $\tresttransit_h(\state'|\state,\action):=   \frac{\Numvi_h(\state,\action,\state')}{\Numvi_h(\state,\action)\vee 1}$,  where $\Numvi_h(\state,\action,\state')$ denotes the number of times the state-action-next-state tuple $(\state,\action,\state')$ has been visited in the first $k-1$ episodes, and $\Numvi_h(\state,\action)=\sum_{\state'}\Numvi_h(\state,\action,\state')$ (we assume $\Numvi_\horizon(\state,\action,\state')=0$ and let $\Numvi_\horizon(\state,\action)$  be the number of times $(\state,\action)$ is visited at timestep $\horizon$).
    \item Calculate the estimated Q-function \begin{align*}
\trestQfun_h(\state,\action)=\min\{\horizon,\reward_h(\state,\action)+\bonus_h(\state,\action)+\sum_{\state'\in\statesp}\tresttransit_h(\state'\mid\state,\action)\trestVfun_{h+1}(\state')\},\end{align*}
where the bonus $\bonus_h(\state,\action)=2\horizon\sqrt{\frac{\log(\Numst\Numact\totlen/\delta)}{\Numvi_h(\state,\action)\vee1}}$,  $\trestVfun_{\horizon+1}(\state):=0$ for all $\state\in\statesp$ and $\trestVfun_h(\state):=\max_{\action\in\actionsp}\trestQfun_h(\state,\action)$. 
\end{enumerate} Throughout this section, we choose the small probability $\delta=1/(\Numepi\horizon)$.

During policy execution, at each step $h\in[\horizon]$, UCB-VI takes the greedy action $\action_h:=\argmax_{\action}\trestQfun(\state_h,\action)$ and observes the reward and next state $(\reward_h,\state_{h+1})$. To facilitate pretraining, in this work we consider a soft version of UCB-VI, which takes action $\action_h$  following the softmax policy 
\begin{align*}
\plc_h(\action|\state_h)=\frac{\exp(\trestQfun_h(\state_h,\action)/\temp)}{\lone{\exp(\trestQfun_h(\state_h,\action)/\temp)}}
\end{align*}
using the estimated $Q$-function for some sufficiently small $\temp>0$. Note that soft UCB-VI recovers UCB-VI as $\temp\to 0$.

\subsection{Proof of Theorem~\ref{thm:approx_ucbvi}}\label{sec:pf_approx_ucbvi}
Throughout the proof, we abuse the notations $\bh_i^{\star}$ for $\star\in\{\parta,\partb,\partc,\partd\}$ to denote the corresponding positions in the token vector $\bh_i$. For any $t'\in[\totlen]$, we let $k(t'),h(t')$ be the non-negative integers such that $t'=\horizon(k(t')-1)+h(t')$ and $h(t')\in[\horizon]$. For the current time $t$, we use the shorthands $k=k(t),h=h(t)$.  For a token index $i\in[(2\horizon+1)\Numepi]$, let $\bar{k}(i),\bar{h}(i)$ be the episode and time step the $i$-th token corresponds to (for the empty tokens we set $h=\horizon+1$). 
Given the input token matrix $\bH^\pre_{\roll,t}$, we construct a transformer that implements the following steps on the last token. 
$\bh^{\star}_{2(t-1)+k}=\bh^{\pre,\star}_{2(t-1)+k}$ for $\star\in\{\parta,\partb,\partc,\partd\}$ 
\begin{align}
    \begin{bmatrix}
    \bh_{2(t-1)+k}^{\pre,\parta} \\  \bh_{2(t-1)+k}^{\pre,\partb}\\  \bh_{2(t-1)+k}^{\pre,\partc}\\   \bh_{2(t-1)+k}^{\pre,\partd}
\end{bmatrix}&
\xrightarrow{\text{step 1}}
   \begin{bmatrix}
    \bh_{2(t-1)+k}^{\pre,\{\parta,\partb,\partc\}} \\
        \Numvi_{1}(\state,\action,\state') \\ \vdots \\
         \Numvi_{\horizon}(\state,\action,\state') 
         \\ 
        \Numvi_{1}(\state,\action) \\\vdots\\
        \Numvi_{\horizon}(\state,\action,\state') 
         \\  
         \Numvi_{1}(\state,\action)\reward_{1}(\state,\action) \\\vdots\\
        \Numvi_{\horizon}(\state,\action,\state') \reward_{\horizon}(\state,\action)
         \\ 
         \star\\ \bzero \\\posv_{2(t-1)+k}
\end{bmatrix}
\xrightarrow{\text{step 2}}
\begin{bmatrix}
    \bh_{2(t-1)+k}^{\pre,\{\parta,\partb,\partc\}} \\
         \esttransit_{1}(\state,\action,\state') \\ \vdots \\
         \esttransit_{\horizon}(\state,\action,\state')  \\ \star
        \\ \bzero \\\posv_{2(t-1)+k}
\end{bmatrix}
\xrightarrow{\text{step 3}}
\begin{bmatrix}
    \bh_{2(t-1)+k}^{\pre,\{\parta,\partb,\partc\}} \\
         \estQfun_{1}(\state,\action,\state') \\ \vdots \\
         \estQfun_{\horizon}(\state,\action,\state')  \\ 
           \estVfun_{1}(\state) \\ \vdots \\
         \estVfun_{\horizon}(\state)  \\ \star
        \\ \bzero \\\posv_{2(t-1)+k}
\end{bmatrix}\notag\\
&
\xrightarrow{\text{step 4}}
\begin{bmatrix}
    \bh_{2(t-1)+k}^{\pre,\{\parta,\partb\}}\\ \frac{\estQfun_h(\state_t,\action_1)}{\temp}\\\vdots\\
\frac{\estQfun_h(\state_t,\action_1)}{\temp}
    \\ \bh_{2(t-1)+k}^{\partd}
\end{bmatrix}
=:
\begin{bmatrix}
    \bh_{2(t-1)+k}^{\post,\parta} \\  \bh_{2(t-1)+k}^{\post,\partb}\\  \bh_{2(t-1)+k}^{\post,\partc}\\   \bh_{2(t-1)+k}^{\post,\partd}
\end{bmatrix},\label{eq:ucbvi_roadmap}
\end{align}
where $\Numvi_{\sh}(\state,\action,\state'),\esttransit_{\sh}(\state,\action,\state'),\estQfun_{\sh}(\state,\action,\state')\in\R^{\Numst^2\times\Numact},~\Numvi_{\sh}(\state,\action)\in\R^{\Numst\times\Numact},\estVfun_{\sh}(\state)\in\R^\Numst$ for all $\sh\in[\horizon]$, and $\star$ denote additional quantities in $\bh^\partd_{2(t-1)+k}$. Given the current state $\state_t$, the transformer $\TF_\tfpar(\cdot)$ generates the policy $$\sAlg_\tfpar(\cdot|\dset_{t-1},\state_t):=\frac{\exp(\bh^{\post,\partc}_{2(t-1)+k})}{\|\exp(\bh^{\post,\partc}_{2(t-1)+k})\|_1}\in\Delta^\Numact.$$
We claim the following results which we will prove later.

\begin{enumerate}[label=Step \arabic*,ref= \arabic*]
    \item\label{mdp_step1}There exists an attention-only transformer $\TF_\btheta(\cdot)$ with     $$L=4,~~\max_{\ell\in[L]}M^{(l)}\leq \conO(\horizon\Numst^2\Numact),~~~ \nrmp{\btheta}\leq \conO(\horizon\Numepi+\horizon\Numst^2\Numact) $$
 that implements step 1 in~\eqref{eq:roadmap_ts}.
   \item \label{mdp_step2}
 There exists a one-layer  transformer $\TF_\btheta(\cdot)$ with 
$$
L=1,~~\head\leq \conO(\horizon\Numst^2\Numact),~~~ \hidden\leq \conO(\Numepi^2\horizon\Numst^2\Numact),~~~\nrmp{\btheta}\leq \tcO(\horizon\Numst^2\Numact+\Numepi^3+\Numepi\horizon)
$$
 that implements step 2 in~\eqref{eq:roadmap_ts}. 
  \item\label{mdp_step3}  There exists a transformer  $\TF_\tfpar(\cdot)$ with 
    $$L=2\horizon,~~\max_{\ell\in[L]}M^{(l)}\leq 2\Numst\Numact,~~\max_{\ell\in[L]}\hidden^{(l)}\leq 3\Numst\Numact,~~~ \nrmp{\btheta}\leq  \conO(\horizon+\Numst\Numact) $$
 that implements step 3 (i.e., value iteration) in~\eqref{eq:roadmap_ts}.
  \item \label{mdp_step4}
 There exists an attention-only transformer $\TF_\btheta(\cdot)$ with 
$$\layer=3,~~\max_{\ell\in[\layer]}\head^\lth=\conO(\horizon\Numact),~~\nrmp{\btheta}\leq \conO(\horizon(\Numepi+\Numact)+{1}/{\temp}) $$
 that implements step 4 in~\eqref{eq:roadmap_ts}.
\end{enumerate}
From the construction of Step~\ref{mdp_step1}---\ref{mdp_step4}, we verify that one can choose the constructed transformer to have the embedding dimension $D=\conO(\horizon\Numst^2\Numact)$. Moreover, due to the boundedness of the reward function, $Q$-function  and the fact that the bonus $\bonus(\state,\action)\leq\tcO(\horizon)$, we verify that there exists some $\clipval>0$ with $\log \clipval=\tcO(1)$ such that $\|\bh_i^{\lth}\|_2\leq \clipval$ for all layer $\ell\in[\layer]$ and all token $i\in[\Numepi(2\horizon+1)]$. Therefore, similar to what we do in the proof of Theorem~\ref{thm:approx_smooth_linucb},~\ref{thm:approx_thompson_linear}, we may w.l.o.g. consider transformers without truncation (i.e.,  $\clipval=\infty$) in our construction of step 1---4 in~\eqref{eq:ucbvi_roadmap}.


\paragraph{Proof of Step~\ref{mdp_step1}} We prove this step by constructing a transformer that implements the following two steps:
\begin{enumerate}[label= Step 1\alph*, ref= 1\alph*]
    \item\label{mdp_step1a} For each $t'< t$ with $t'=(k'-1)\horizon+h'$ for some $h'\in[\horizon]$, we add  $\state_{k',h'},(\action_{k',h'},\reward_{k',h'})$ from  $\bh^\partb_{2(t'-1)+k'}$ and $\bh^\parta_{2t'-1+k'}$  to  $\bh^\partd_{2t'+k'}$.
    \item\label{mdp_step1b} Compute $\Numvi_{\sh}(\state,\action,\state'),\Numvi_{\sh}(\state,\action)$ for $\sh\in[\horizon]$ and assign them to the current token $\bh^\partd_{2(t-1)+k}$.
\end{enumerate}
For step~\ref{mdp_step1a}, we can construct a two-layer attention-only transformer with $\bQ^{(1)}_{1,2,3},\bK^{(1)}_{1,2,3},\bV^{(1)}_{1,2,3}$ such that for all $i\leq 2(t-1)+k$
\begin{align*}
&\bQ^{(1)}_{1}\bh^{(0)}_{i}=
\begin{bmatrix}
        \bar{k}(i)+1-\oddeven_{i}\\
        \tfthres\\
        1\\
       i
    \end{bmatrix},~~ \bK^{(1)}_{1}\bh^{(0)}_{i}=\begin{bmatrix}
        -\tfthres\\\bar{k}(i)\\ i+3\\-1
\end{bmatrix},~~ \bV^{(1)}_{1}\bh^{(0)}_{2(t'-1)+k'}=
\begin{bmatrix}
\bzero\\\bzero_{\Numact+1}\\\state_{k',h'}\\\bzero\end{bmatrix},\\
&~~~
\bV^{(1)}_{1}\bh^{(0)}_{2t'-1+k'}=
\begin{bmatrix}
\bzero\\\action_{k',h'}\\
\reward_{k',h'}
\\\bzero_\Numst\\\bzero,\end{bmatrix}
\end{align*}
where we choose $\tfthres=4$ and $\bV\bh^{(0)}$ are supported on some entries in $\bh^{(0),\partd}$. Moreover, we choose $\bQ^{(1)}_{3}=\bQ^{(1)}_{2}=\bQ^{(1)}_1$, $\bV^{(1)}_2=\bV^{(1)}_3=-\bV^{(1)}_1$ and $\bK^{(1)}_2,\bK^{(1)}_3$ such that
\begin{align*}
    \bK^{(1)}_{2}\bh^{(0)}_{i}=\begin{bmatrix}
        -\tfthres\\\bar{k}(i)\\ i+2\\-1
\end{bmatrix},~~\bK^{(1)}_{3}\bh^{(0)}_{i}=\begin{bmatrix}
        -\tfthres\\\bar{k}(i)\\ i+1\\-1.
\end{bmatrix}
\end{align*}
We verify that $\lops{\bQ^{(1)}_{\star}},\lops{\bK^{(1)}_{\star}}= 4,\lops{\bV^{(1)}_{\star}}=1$ for $\star\in[3]$. 
Summing up the  heads, we obtain the following update on a subset of coordinates in $\bh^{(0),\partd}_{2t'+k'}$:
\begin{align*}
\bzero_{\Numst+\Numact+1}
    &\rightarrow 
\bzero_{\Numst+\Numact+1}+\sum_{j=1}^3\sum_{i=1}^{2t'+k'}\sigma(\<\bQ^{(1)}_{j}\bh^{(0)}_{2t'+k'},\bK^{(1)}_{j}\bh^{(0)}_{i}\>)\bV_j\bh^{(0)}_i\\
    &=\frac{1}{2t'+k'} [(\bV^{(1)}_1\bh^{(0)}_{2t'+k'-2}+2\bV^{(1)}_1\bh^{(0)}_{2t'+k'-1}+3\bV^{(1)}_1\bh^{(0)}_{2t'+k'})\\
    &\qquad-(\bV^{(1)}_1\bh^{(0)}_{2t'+k'-1}+2\bV^{(1)}_1\bh^{(0)}_{2t'+k'})-\bV^{(1)}_1\bh^{(0)}_{2t'+k'})]\\
    &=\frac{1}{2t'+k'}(\bV^{(1)}_1\bh^{(0)}_{2t'+k'-2}+\bV^{(1)}_1\bh^{(0)}_{2t'+k'-1})\\
    &=\frac{1}{2t'+k'}\begin{bmatrix}
        \action_{k',h'} \\\reward_{k',h'}\\\state_{k',h'}
    \end{bmatrix}.
\end{align*}
Note that $\<\bQ^{(1)}\bh^{(0)}_i,\bK^{(1)}\bh^{(0)}_j\>\leq0$ for $i=2t'-1+k'$ (i.e., all tokens that embed the action and reward) since $\oddeven_i=0$, it follows that no update happens on the tokens in which we embed the action and reward (i.e., the corresponding part of $\bh^\partd$ remains zero). Moreover, it should be noted that no  update happens on tokens with $h=1$. 

We then use another  attention layer to  multiply the updated vectors by a factor of $2t'+k'$, namely, to perform the map
$$\frac{1}{2t'+k'}\begin{bmatrix}
        \action_{k',h'} \\\reward_{k',h'}\\\state_{k',h'}
    \end{bmatrix}\mapsto\begin{bmatrix}
        \action_{k',h'} \\\reward_{k',h'}\\\state_{k',h'}
    \end{bmatrix},$$ where the output vector is supported on coordinates different from the input vectors. This can be achieved by choosing $\lops{\bQ_1^{(2)}}\leq (2\horizon+1)\Numepi, \lops{\bK_1^{(2)}}\leq (2\horizon+1)\Numepi, \lops{\bV_1^{(2)}}\leq1$
such that \begin{align}
    &
\bQ^{(2)}_{1}\bh^{(1)}_{i}=\begin{bmatrix}
         i^2\\-(2\horizon+1)\Numepi i^2\\  1\\ \bzero
    \end{bmatrix},~~ \bK^{(2)}_{1}\bh^{(1)}_{j}=\begin{bmatrix}
        1\\1\\ (2\horizon+1)\Numepi j^2\\\bzero
    \end{bmatrix},~~ 
\bV^{(2)}_{1}\bh^{(1)}_{2t'+k'}=\frac{1}{2t'+k'}\begin{bmatrix}
        \bzero\\ 
        \action_{k',h'} \\\reward_{k',h'}\\\state_{k',h'}
    \\ \bzero
    \end{bmatrix},\label{eq:tf_constrcut_ucbvi_multi}
\end{align}
and noting that $\<\bQ^{(2)}_1\bh_i^{(1)},\bQ^{(2)}_1\bh_j^{(1)}\>=i$ when $j=i$ and otherwise $0$.

For step~\ref{mdp_step1b},  we show that it can be implemented using a two-layer attention-only transformer. 

To compute $\Numvi_\sh(\state,\action,\state')$, in the first layer we construct $\head=10\horizon\Numst^2\Numact$ heads with the query, key, value matrices $\{\bQ^{(1)}_{\si\sj\sk\sh,s}\}_{s=1}^{10},\{\bK^{(1)}_{\si\sj\sk\sh,s}\}_{s=1}^{10},\{\bV^{(1)}_{\si\sj\sk\sh,s}\}_{s=1}^{10}$  such that for all $i\leq 2(t-1)+k$ and $\si,\sk\in[\Numst],\sj\in[\Numact],\sh\in[\horizon]$
\begin{align*}
&\bQ^{(1)}_{\si\sj\sk\sh,1}\bh^{(0)}_{i}=
\begin{bmatrix}
\tfthres(\oddeven_i-1)\\
        \tfthres\be_\si\\
         \tfthres\be_\sj\\
          \tfthres\be_\sk\\
          1\\
          1\\
         \sh
    \end{bmatrix},~~ \bK^{(1)}_{\si\sj\sk\sh,1}\bh^{(0)}_{i}=\begin{bmatrix}1\\
     \state_{\bar{k}(i),\bar{h}(i)-1}\\
        \action_{\bar{k}(i),\bar{h}(i)-1}\\
         \state_{\bar{k}(i),\bar{h}(i)}\\
        -3\tfthres\\
        1- \bar{h}(i)\\
        1
\end{bmatrix},~~ \bV^{(1)}_{\si\sj\sk\sh,1}\bh^{(0)}_{i}=-
\begin{bmatrix}
\bzero\\\be^{\Numvi_\sh}_{\si\sj\sk}
\\\bzero\end{bmatrix},
\end{align*}
where we choose $\tfthres=2\horizon$ and $\be_{\si\sj\sk}^\sh$ denotes the one-hot vector supported on the $(\si,\sj,\sk)$-entry in $\Numvi_\sh(\state,\action,\state')$.
We similarly construct 
\begin{align*}
&\bQ^{(1)}_{\si\sj\sk\sh,2}\bh^{(0)}_{i}=
\begin{bmatrix}
\tfthres(\oddeven_i-1)\\
        \tfthres\be_\si\\
         \tfthres\be_\sj\\
          \tfthres\be_\sk\\
          1\\
          1\\
         \sh
    \end{bmatrix},~~ \bK^{(1)}_{\si\sj\sk\sh,2}\bh^{(0)}_{i}=\begin{bmatrix}1\\
     \state_{\bar{k}(i),\bar{h}(i)-1}\\
        \action_{\bar{k}(i),\bar{h}(i)-1}\\
         \state_{\bar{k}(i),\bar{h}(i)}\\
        -3\tfthres\\
     - \bar{h}(i)\\
        1
\end{bmatrix},~~ \bV^{(1)}_{\si\sj\sk\sh,2}\bh^{(0)}_{i}=
\begin{bmatrix}
\bzero\\\be^{\Numvi_\sh}_{\si\sj\sk}
\\\bzero\end{bmatrix},
\\
&\bQ^{(1)}_{\si\sj\sk\sh,3}\bh^{(0)}_{i}=
\begin{bmatrix}
\tfthres(\oddeven_i-1)\\
        \tfthres\be_\si\\
         \tfthres\be_\sj\\
          \tfthres\be_\sk\\
          1\\
          1\\
         -\sh
    \end{bmatrix},~~ \bK^{(1)}_{\si\sj\sk\sh,3}\bh^{(0)}_{i}=\begin{bmatrix}1\\
     \state_{\bar{k}(i),\bar{h}(i)-1}\\
        \action_{\bar{k}(i),\bar{h}(i)-1}\\
         \state_{\bar{k}(i),\bar{h}(i)}\\
        -3\tfthres\\
      \bar{h}(i)-1\\
        1
\end{bmatrix},~~ \bV^{(1)}_{\si\sj\sk\sh,3}\bh^{(0)}_{i}=-
\begin{bmatrix}
\bzero\\\be^{\Numvi_\sh}_{\si\sj\sk}
\\\bzero\end{bmatrix},\\
&\bQ^{(1)}_{\si\sj\sk\sh,4}\bh^{(0)}_{i}=
\begin{bmatrix}
\tfthres(\oddeven_i-1)\\
        \tfthres\be_\si\\
         \tfthres\be_\sj\\
          \tfthres\be_\sk\\
          1\\
          1\\
         -\sh
    \end{bmatrix},~~ \bK^{(1)}_{\si\sj\sk\sh,4}\bh^{(0)}_{i}=\begin{bmatrix}1\\
     \state_{\bar{k}(i),\bar{h}(i)-1}\\
        \action_{\bar{k}(i),\bar{h}(i)-1}\\
         \state_{\bar{k}(i),\bar{h}(i)}\\
        -3\tfthres\\
      \bar{h}(i)-2\\
        1
\end{bmatrix},~~ \bV^{(1)}_{\si\sj\sk\sh,4}\bh^{(0)}_{i}=
\begin{bmatrix}
\bzero\\\be^{\Numvi_\sh}_{\si\sj\sk}
\\\bzero\end{bmatrix},\\
&\bQ^{(1)}_{\si\sj\sk\sh,5}\bh^{(0)}_{i}=
\begin{bmatrix}
\tfthres(\oddeven_i-1)\\
        \tfthres\be_\si\\
         \tfthres\be_\sj\\
          \tfthres\be_\sk\\
          1\\
    \end{bmatrix},~~ \bK^{(1)}_{\si\sj\sk\sh,5}\bh^{(0)}_{i}=\begin{bmatrix}1\\
     \state_{\bar{k}(i),\bar{h}(i)-1}\\
        \action_{\bar{k}(i),\bar{h}(i)-1}\\
         \state_{\bar{k}(i),\bar{h}(i)}\\
        -3\tfthres\\
\end{bmatrix},~~ \bV^{(1)}_{\si\sj\sk\sh,5}\bh^{(0)}_{i}=
\begin{bmatrix}
\bzero\\\be^{\Numvi_\sh}_{\si\sj\sk}
\\\bzero\end{bmatrix}.\\
\end{align*}
Summing up the first five heads, we verify that such attention updates the token with $\bh_i^\parta=\bzero$ and has the form
\begin{align*}
\bzero\rightarrow\bzero+\frac1i\widetilde{\Numvi}_\sh(\si,\sj,\sk)\be_{\si\sj\sk}^{\Numvi_h}
\end{align*} on  $\bh^\partd_i$, where $\widetilde{\Numvi}_\sh(\si,\sj,\sk)$ denote the number of visits to the state-action-next-state tuple $(\si,\sj,\sk)$ at time step $\sh$ before token $i$. For $\star\in[5]$, we choose $\bV^{(1)}_{\si\sj\sk\sh,\star+5}=-\bV^{(1)}_{\si\sj\sk\sh,\star+5}$ and $\bQ^{(1)}_{\si\sj\sk\sh,\star+5},\bK^{(1)}_{\si\sj\sk\sh,\star+5}$ be such that 
\begin{align*}
&\bQ^{(1)}_{\si\sj\sk\sh,\star+5}\bh^{(0)}_{i}=
\begin{bmatrix}
\bQ^{(1)}_{\si\sj\sk\sh,\star}\bh^{(0)}_{i}\\
\tfthres\\
-\bar{k}(i)
    \end{bmatrix},~~ \bK^{(1)}_{\si\sj\sk\sh,\star+5}\bh^{(0)}_{i}=\begin{bmatrix}
\bK^{(1)}_{\si\sj\sk\sh,\star}\bh^{(0)}_{i}\\ \bar{k}(i)\\\tfthres
\end{bmatrix}
\end{align*} which adds positional embedding about the current episode $\bar{k}(i)$. We verify that summing up the sixth to the tenth heads gives the update  \begin{align*}
\bzero\rightarrow\bzero+\frac{1}{i}({\Numvi}_\sh(\si,\sj,\sk)-\widetilde{\Numvi}_\sh(\si,\sj,\sk))\be_{\si\sj\sk}^{\Numvi_h}
\end{align*} on  $\bh^\partd_i$ for $i\leq 2(t-1)+k$ with $\bh_i^\parta=\bzero$. Therefore, combining all the heads together we have the update
\begin{align*}
\bzero\rightarrow\bzero+\frac{1}{i}{\Numvi}_\sh(\si,\sj,\sk)\be_{\si\sj\sk}^{\Numvi_h}\text{~~for all~}\si,\sk\in[\Numst],\sj\in[\Numact],\sh\in[\horizon]
\end{align*} on $\bh^\partd_i$ for $i\leq 2(t-1)+k$ with $\bh_i^\parta=\bzero$, in particular when $i=2(t-1)+k$. Moreover,  notice that the matrices $\{\bQ^{(1)}_{\si\sj\sk\sh,s}\}_{s=1}^{10},\{\bK^{(1)}_{\si\sj\sk\sh,s}\}_{s=1}^{10}$ can be constructed with the operator norm less than $10\tfthres=10\horizon$, and $\{\bV^{(1)}_{\si\sj\sk\sh,s}\}_{s=1}^{10}$ with the operator norm equals $1$.

Following a similar construction, we can also compute $\Numvi_\sh(\state,\action),\Numvi_\sh(\state,\action)\reward_\sh(\state,\action)$ for all $\sh,\state,\action,\state'$ on different supports of coordinates in $\bh_i^\partd$ via adding additional $\head=\conO(\horizon\Numst\Numact)$ heads to the attention-only layer.

Next, we construct the second attention layer to multiply the token vector by the index number $i$ as in the proof of Step~\ref{mdp_step1a}. The construction is similar to that in Eq.~\eqref{eq:tf_constrcut_ucbvi_multi} and we omit it here. Moreover, note that Step~\ref{mdp_step1b} can be implemented with the embedding dimension $D\leq\conO(\horizon\Numst^2\Numact)$ as we need $\conO(1)$ dimensions for each quadruple $(\si,\sj,\sk,\sh)$.  Combining Step~\ref{mdp_step1a},~\ref{mdp_step1b} concludes the proof of Step~\ref{mdp_step1}.

\paragraph{Proof of Step~\ref{mdp_step2}}
After Step~\ref{mdp_step1}, for the current token $i=2(t-1)+k$, we have $\Numvi_\sh(\state,\action,\state'),\reward_\sh(\state,\action),\Numvi_\sh(\state,\action)$, $\Numvi_\sh(\state,\action)\reward_\sh(\state,\action)$  lie in $\bh_i^\partd$ for all $\sh\in[\horizon]$. Given these vectors that store the number of visits and rewards, note that
\begin{align*}
\reward_\sh(\state,\action)&=\frac{\Numvi_\sh(\state,\action)\reward_\sh(\state,\action)}{\Numvi_\sh(\state,\action)\vee1},~~\text{ when  } \Numvi_\sh(\state,\action)\geq1,~~~\\
\bonus_\sh(\state,\action)
&=2\horizon\sqrt{\frac{\log(\Numst\Numact\totlen/\delta)}{\Numvi_\sh(\state,\action)\vee1}},
\\
\tresttransit_\sh(\state,\action,\state')
&=\frac{\Numvi_\sh(\state,\action,\state')}{\Numvi_\sh(\state,\action)\vee 1}.
\end{align*}

Therefore, we may compute $\esttransit_\sh,\estbonus_\sh$ via using a transformer layer  to implement the functions $f_1(x,y)=\frac{x}{y\vee1},f_2(y)=2\horizon\sqrt{\frac{\log(\Numst\Numact\totlen/\delta)}{y\vee 1}},f_3(x,y)=\frac{x}{y\vee1}+\horizon\bone_{y=0}$ for $x,y\in\{0\}\cup[\Numepi]$. 
We demonstrate the computation of $\esttransit_\sh(\state,\action,\state')$  (i.e., the computation of $f_1(x,y)$) here. We start with constructing an attention layer with $\head=\conO(\horizon\Numst^2\Numact)$ heads such that it implements  $x\mapsto x^2$ for $x=\Numvi_\sh(\state,\action,\state'),\Numvi_\sh(\state,\action)$. For $\Numvi_\sh(\state,\action,\state')$, this can be done by  choosing  $\lops{\bQ_{\si\sj\sk\sh}^{(1)}}\leq\Numepi,\lops{\bK_{\si\sj\sk\sh}^{(1)}}\leq\Numepi,\lops{\bV_{\si\sj\sk\sh}^{(1)}}=1$ such that
\begin{align*}
    \bQ_{\si\sj\sk\sh}^{(1)}\bh_i^{(0)}=\begin{bmatrix}
        \Numepi\\
        -i\\
        \Numvi_\sh(\be_\si,\be_\sj,\be_\sk)
    \end{bmatrix},~~\bK_{\si\sj\sk\sh}^{(1)}\bh_j^{(0)}=\begin{bmatrix}
        j\\ \Numepi\\
        \Numvi_\sh(\be_\si,\be_\sj,\be_\sk)
    \end{bmatrix},~~
    \bV_{\si\sj\sk\sh}^{(1)}\bh_j^{(0)}=\begin{bmatrix}
        \bzero\\j\\\bzero
    \end{bmatrix}, 
\end{align*} where $\be_\si,\be_\sk$ denote the $i,j$-th states and $\be_{sj}$ denotes the $k$-th action.
Similarly, we can construct $\horizon\Numst\Numact$ additional heads
to compute $\Numvi_\sh(\state,\action)^2$ for all possible $\state,\action.$

Next, we compute the exact values of $\esttransit(\state,\action,\state')$ using an MLP layer. Namely, 
we  construct $\bW^{(1)}_1=\bW^{(1)}_{12}\bW^{(1)}_{11},\bW^{(1)}_2=\bW^{(1)}_{23}\bW^{(1)}_{22}\bW^{(1)}_{21}$  such that for all $\sh,\state,\action,\state'$, on the corresponding vector component we have
\begin{align*}
   & \bW^{(1)}_{11}\bh_i^{(0)}=     \begin{bmatrix}
    1\\
       \Numvi_\sh(\state,\action,\state')^2 \\\vdots
       \\
        (\Numvi_\sh(\state,\action,\state')-\Numepi)^2\\
         \Numvi_\sh(\state,\action)^2 \\\vdots
       \\
        (\Numvi_\sh(\state,\action)-\Numepi)^2
    \end{bmatrix}= \begin{bmatrix}
    1\\
       \Numvi_\sh(\state,\action,\state')^2 \\\vdots
       \\
        \Numvi_\sh(\state,\action,\state')^2+\Numepi^2-2\Numepi\Numvi_\sh(\state,\action,\state')\\
          \Numvi_\sh(\state,\action)^2 \\\vdots
       \\
        \Numvi_\sh(\state,\action)^2+\Numepi^2-2\Numepi\Numvi_\sh(\state,\action)
    \end{bmatrix},
    \\
&\bW^{(1)}_{12}\bW^{(1)}_{11}\bh_i^{(0)}=     \begin{bmatrix}
    1-
       \Numvi_\sh(\state,\action,\state')^2 - \Numvi_\sh(\state,\action)^2 \\\vdots
       \\ 1-
       (\Numvi_\sh(\state,\action,\state')-x)^2 - (\Numvi_\sh(\state,\action)-y)^2 \\\vdots\\
       1- (\Numvi_\sh(\state,\action,\state')-\Numepi)^2-(\Numvi_\sh(\state,\action)-\Numepi)^2
    \end{bmatrix} ,
\end{align*} where $x,y\in\{0\}\cup[\Numepi]$. Moreover,  we construct $\bW^{(1)}_{21}$ so that on the entries corresponding to $\sh,\state,\action,\state'$ it implements  
\begin{align*}
\bW_{2}^{(1)}\sigma(\bW_{1}^{(1)}\bh_i^{(0)})
    =\begin{bmatrix}
        \sum_{x,y=0}^{\Numepi}\sigma(1- (\Numvi_\sh(\state,\action,\state')-x)^2 - (\Numvi_\sh(\state,\action)-y)^2)\cdot\frac{x}{y\vee1}
    \end{bmatrix}=\begin{bmatrix}
        \frac{\Numvi_\sh(\state,\action,\state')}{\Numvi_\sh(\state,\action)\vee1}.
    \end{bmatrix}
\end{align*} 

It can be verified that we can find such $\bW_1^{(1)},\bW_2^{(1)}$ with $$\lops{\bW_1^{(1)}}\leq \lops{\bW_{11}^{(1)}}\lops{\bW_{12}^{(1)}}\leq \conO(\Numepi^2)\cdot \conO(\Numepi)=\conO(\Numepi^3),$$ $\lops{\bW_2^{(1)}}\leq \conO(\Numepi)
$, and the number of hidden neurons $\hidden=\conO(\Numepi^2\horizon\Numst^2\Numact)$. Simlarly, we can compute $f_2(\cdot)$ (or $f_3(\cdot)$) exactly following the same construction but with a different $\bW_2^{(1)}$ that records all possible values of  $f_2(\cdot)$ (or $f_3(\cdot)$). Combining the upper bounds on the operator norm of the  weight matrices, we further have $\nrmp{\tfpar}\leq\tcO(\horizon\Numst^2\Numact+\Numepi^3+\Numepi\horizon)$.


\paragraph{Proof of Step~\ref{mdp_step3}} Given $\trestVfun_{\horizon+1}=\estVfun_{\horizon+1}=\bzero$, we show the there exists an transformer with
\begin{align*}
\layer=2,~~~\max_{\ell\in[\layer]}\head^\lth\leq2\Numst\Numact,~~~\max_{\ell\in[\layer]}\hidden^{\lth}\leq3\Numst\Numact,~~~\nrmp{\tfpar}\leq \conO(\horizon+\Numst\Numact)
\end{align*}
that implements one step of value iteration
\begin{align*}
    \estQfun_\sh(\state,\action)&=\max\{\min\{\horizon,\estreward_\sh(\state,\action)+\estbonus_\sh(\state,\action)+\sum_{\state'\in\statesp}\esttransit_\sh(\state'\mid\state,\action)\estVfun_{\sh+1}(\state')\},0\},\\
\estVfun_\sh(\state)&=\max_{\action\in\actionsp}\estQfun_\sh(\state,\action)
\end{align*} for some $\sh\in[\horizon]$. 
Namely, we start with constructing an-attention layer with $\head=2\Numst\Numact$ and $\{\bQ_{\si\sj\sh,s}^{(1)}\}^2_{s=1}$, $\{\bK_{\si\sj\sh,s}^{(1)}\}^2_{s=1}$, $\{\bV_{\si\sj\sh,s}^{(1)}\}^2_{s=1}$ such that for all $i\leq 2(t-1)+k$
\begin{align*}
   \bQ_{\si\sj,1}^{(1)}\bh_i^{(0)} =\begin{bmatrix}
       \tfthres\\-i\\
       \estVfun_{\sh+1}(\cdot)\\
   \end{bmatrix},~~ \bK_{\si\sj,1}^{(1)}\bh_i^{(0)} =\begin{bmatrix}
       i\\ \tfthres\\
       \esttransit_{\sh+1}(\cdot|\state,\action)\\
   \end{bmatrix},~~~
   \bV_{\si\sj,1}^{(1)}\bh_i^{(0)} =\begin{bmatrix}
       \bzero \\ i\be^{\Qfun_\sh}_{\si\sj}\\
       \bzero,
   \end{bmatrix}
   \\
    \bQ_{\si\sj,2}^{(1)}\bh_i^{(0)} =\begin{bmatrix}
       \tfthres\\-i\\
       -\estVfun_{\sh+1}(\cdot)\\
   \end{bmatrix},~~ \bK_{\si\sj,2}^{(1)}=\bK_{\si\sj,1}^{(1)},~~~
\bV_{\si\sj,2}^{(1)}=-\bV_{\si\sj,2}^{(1)}
\end{align*}
where $\tfthres=3\horizon$ and $\be^{\Qfun_\sh}_{\si\sj}\in\R^{\Numst\Numact}$ is a vector supported on some coordinates in $\bh^\partd_i$ reserved for $\estQfun_\sh$. Moreover, we have $\lops{\bQ^{(1)}_{\si\sj\sh,s}},\lops{\bK^{(1)}_{\si\sj\sh,s}}\leq \tfthres,~\lops{\bV^{(1)}_{\si\sj\sh,s}}=1$. 
Since $$\Big|\<\estVfun_{\sh+1}(\cdot),\esttransit_{\sh+1}(\cdot|\state,\action)\>\Big|\leq \linf{\estVfun_{\sh+1}(\cdot)}\cdot\lone{\esttransit_{\sh+1}(\cdot|\state,\action)}\leq\horizon$$ as $\estVfun_{\sh+1}(\state)\in[0,\horizon]$ and $\lone{\esttransit_{\sh+1}(\cdot|\state,\action)}=1$, it follows that summing up two heads gives the update for $i\leq 2(t-1)+k$
\begin{align*}
\bzero
&\mapsto\bzero+\Big[\sigma(\< \bQ_{\si\sj,1}^{(1)}\bh_i^{(0)} , \bK_{\si\sj,1}^{(1)}\bh_j^{(0)} \>)-\sigma(\< \bQ_{\si\sj,1}^{(1)}\bh_i^{(0)} , \bK_{\si\sj,1}^{(1)}\bh_j^{(0)} \>)\Big]\be_{\si\sj}^{\Qfun_\sh}\\
&=\< \bQ_{\si\sj,1}^{(1)}\bh_i^{(0)} , \bK_{\si\sj,1}^{(1)}\bh_j^{(0)} \>\be_{\si\sj}^{\Qfun_\sh}.
\end{align*}
Denote the resulting token vector by $\bh_i^{(1)}$. 
Moreover, we can construct a two-layer MLP with $$\lops{\bW^{(1)}_1}=\conO(\horizon),~~ \lops{\bW^{(1)}_2}\leq 3,~~\hidden=3\Numst\Numact$$
such that for any state-action pair $(\state,\action)\in\statesp\times\actionsp$ on the corresponding coordinates
\begin{align*}
\bW^{(1)}_1\bh_i^{(1)}=
\begin{bmatrix}
\vdots\\-[\estreward_\sh(\state,\action)+\estbonus_\sh(\state,\action)+\sum_{\state'\in\statesp}\esttransit_\sh(\state'\mid\state,\action)\estVfun_{\sh+1}(\state')]
\\
\estreward_\sh(\state,\action)+\estbonus_\sh(\state,\action)+\sum_{\state'\in\statesp}\esttransit_\sh(\state'\mid\state,\action)\estVfun_{\sh+1}(\state')-\horizon\\\estreward_\sh(\state,\action)+\estbonus_\sh(\state,\action)+\sum_{\state'\in\statesp}\esttransit_\sh(\state'\mid\state,\action)\estVfun_{\sh+1}(\state')\\\vdots
\end{bmatrix}
\end{align*}
 and \begin{align*}
  \bW^{(1)}_2\sigma(\bW^{(1)}_1\bh_i^{(1)})
  &=   \sigma(-[\estreward_\sh(\state,\action)+\estbonus_\sh(\state,\action)+\sum_{\state'\in\statesp}\esttransit_\sh(\state'\mid\state,\action)\estVfun_{\sh+1}(\state')])\\&\quad-\sigma(\estreward_\sh(\state,\action)+\estbonus_\sh(\state,\action)+\sum_{\state'\in\statesp}\esttransit_\sh(\state'\mid\state,\action)\estVfun_{\sh+1}(\state')-\horizon)\\
&\qquad+\sigma(\estreward_\sh(\state,\action)+\estbonus_\sh(\state,\action)+\sum_{\state'\in\statesp}\esttransit_\sh(\state'\mid\state,\action)\estVfun_{\sh+1}(\state'))\\
&=\max\{\min\{\horizon,\estreward_\sh(\state,\action)+\estbonus_\sh(\state,\action)+\sum_{\state'\in\statesp}\esttransit_\sh(\state'\mid\state,\action)\estVfun_{\sh+1}(\state')\},0\}=\estQfun_h(\state,\action).
 \end{align*}
 Denote the resulting token vector by $\bh_i^{(2)}$.
 Next, we construct a second MLP layer with $$\lops{\bW_1^{(2)}}\leq2,~~\lops{\bW_2^{(2)}}\leq\sqrt{\Numact},~~\hidden=\Numact\Numst
 $$ such that for any $\state\in\statesp$ on the corresponding coordinates we have
 \begin{align*}
     \bW_1^{(2)}\bh_i^{(2)}=\begin{bmatrix}
        \vdots\\ \estQfun_\sh(\state,\action_1)\\
         \estQfun_\sh(\state,\action_2)-\estQfun_\sh(\state,\action_1)\\
         \vdots\\
          \estQfun_\sh(\state,\action_\Numact)-\estQfun_\sh(\state,\action_{\Numact-1})
    \\ \vdots\end{bmatrix},
 \end{align*}
 where $\action_j$ denotes the $j-$th action, 
 and 
 \begin{align*}
  \bW^{(2)}_2\sigma(\bW^{(2)}_1\bh_i^{(2)})
 &=
 \sigma(\estQfun_\sh(\state,\action_1))+\sum_{j=2}^\Numact\sigma( \estQfun_\sh(\state,\action_j)-\estQfun_\sh(\state,\action_{j-1}))\\
 &=\max_{\action\in\actionsp}\estQfun_\sh(\state,\action)=\estVfun_\sh(\state).
 \end{align*}
 Using the upper bounds on the operator norm of the  weight matrices, we further have $\nrmp{\tfpar}\leq\conO(\Numst\Numact+\horizon)$. 
Combining the steps concludes the construction in Step~\ref{mdp_step3}.

 \paragraph{Proof of Step~\ref{mdp_step4}}
 we start with constructing an-attention layer with $\head=2\horizon\Numact$ and $\{\bQ_{\sj\sh,s}^{(1)}\}^2_{s=1}$, $\{\bK_{\sj\sh,s}^{(1)}\}^2_{s=1}$, $\{\bV_{\sj\sh,s}^{(1)}\}^2_{s=1}$ such that for all the current token $i= 2(t-1)+k$ and $j\leq i$
\begin{align*}
  & \bQ_{\sj\sh,1}^{(1)}\bh_i^{(0)} =\begin{bmatrix}
       \state_{\bar{k}(i),\bar{h}(i)}\\-i\\
       \tfthres
   \end{bmatrix},~~ \bK_{\sj\sh,1}^{(1)}\bh_j^{(0)} =\begin{bmatrix}
    \estQfun_\sh(\cdot,\action_\sj)\\  \tfthres\\ j
   \end{bmatrix},~~~
   \bV_{\sj\sh,1}^{(1)}\bh_i^{(0)} =\begin{bmatrix}
       \bzero \\ i\be_{\sj\sh}\\
       \bzero,
   \end{bmatrix}
   \\
    &\bQ_{\sj\sh,2}^{(1)}\bh_i^{(0)} =\begin{bmatrix}
       -\state_{\bar{k}(i),\bar{h}(i)}\\-i\\
       \tfthres
   \end{bmatrix},~~ \bK_{\sj\sh,2}^{(1)}=\bK_{\sj\sh,1}^{(1)},~~~
\bV_{\sj\sh,2}^{(1)}=-\bV_{\sj\sh,1}^{(1)},
\end{align*}
where we choose $\tfthres=2\horizon$ and $\bV_{\sj\sh,1}^{(1)}\bh_i^{(0)}$ is a one-hot vector supported on some entry of $\bh_i^\partd$. We verify  that summing up the heads gives the update 
\begin{align*}
    \bzero\mapsto \estQfun_{\sh}(\state_{k,h},\action_\sj)\be_{\sj\sh}
\end{align*}
 for all $\sh\in[\horizon],\sj\in[\Numact]$. Moreover, we have $\lops{\bQ_{\sj\sh,s}^{(1)}}\leq2\horizon,\lops{\bK_{\sj\sh,s}^{(1)}}\leq2\horizon,\lops{\bV_{\sj\sh,s}^{(1)}}\leq1$ for $s=1,2$. Through this attention-only layer, we extract the values $\estQfun_\sh(\state_{k,h},\action_j)$ for all $\sh\in[\horizon]$ from the Q-function.
 
Similar to the proof of Step~\ref{mdp_step1b}, we construct a second attention-only layer with attention heads   $\{\bQ_{\sj\sh,s}^{(2)}\}^2_{s=1}$, $\{\bK_{\sj\sh,s}^{(2)}\}^2_{s=1}$, $\{\bV_{\sj\sh,s}^{(2)}\}^2_{s=1}$ that
\begin{align*}
  & \bQ_{\sj\sh,1}^{(2)}\bh_i^{(1)} =\begin{bmatrix}
       1\\-\bar{h}(i)\\-i\\
       \tfthres
   \end{bmatrix},~~ \bK_{\sj\sh,1}^{(2)}\bh_j^{(1)} =\begin{bmatrix}
   \sh\\ 1 \\\tfthres\\ j
   \end{bmatrix},~~~
   \bV_{\sj\sh,1}^{(2)}\bh_i^{(1)} =-\begin{bmatrix}
       \bzero \\  \estQfun_\sh(\state_{k,h},\action_\sj)\be_{\sj}\\
       \bzero
   \end{bmatrix},
   \\
   & \bQ_{\sj\sh,2}^{(2)}=\bQ_{\sj\sh,1}^{(2)},~~ \bK_{\sj\sh,2}^{(2)}\bh_j^{(1)} =\begin{bmatrix}
   \sh-1\\ 1 \\\tfthres\\ j
   \end{bmatrix},~~~
   \bV_{\sj\sh,2}^{(2)} =-  \bV_{\sj\sh,2}^{(2)},
\end{align*} where $ \bV_{\sj\sh,1}^{(2)}\bh_i^{(1)}$ are supported on some entry of $\bh_i^\partd$ for $s=1,2$. 
Summing up the heads gives the update
\begin{align*}
    \bzero\mapsto -\sum_{s=\bar{h}(i)+1}^{\horizon}\frac{1}{i}\estQfun_{s}(\state_{k,h},\action_\sj).
\end{align*}
Similarly, we can construct attention heads   $\{\bQ_{\sj\sh,s}^{(2)}\}^4_{s=3},\{\bK_{\sj\sh,s}^{(2)}\}^4_{s=3},\{\bV_{\sj\sh,s}^{(2)}\}^4_{s=3}$ that implements
\begin{align*}
    \bzero\mapsto -\frac{1}{i}\sum_{s=1}^{\bar{h}(i)-1}{\estQfun_{s}(\state_{k,h},\action_\sj)}.
\end{align*}
Moreover, we construct 
$\bQ_{\sj\sh,5}^{(2)},\bK_{\sj\sh,5}^{(2)},\bV_{\sj\sh,5}^{(2)}$ with
\begin{align*}
  & \bQ_{\sj\sh,5}^{(2)}\bh_i^{(1)} =\begin{bmatrix}
       1\\-i\\
       \tfthres
   \end{bmatrix},~~ \bK_{\sj\sh,5}^{(2)}\bh_j^{(1)} =\begin{bmatrix}
    1 \\\tfthres\\ j
   \end{bmatrix},~~~
   \bV_{\sj\sh,1}^{(2)}\bh_i^{(1)} =\begin{bmatrix}
       \bzero \\  \estQfun_\sh(\state_{k,h},\action_\sj)\be_{\sj}\\
       \bzero,
   \end{bmatrix}
\end{align*} that implements
\begin{align*}
    \bzero\mapsto \frac{1}{i}\sum_{s=1}^{\horizon}{\estQfun_{s}(\state_{k,h},\action_\sj)}.
\end{align*}
Therefore, summing up the $\head=5\horizon\Numact$ heads we obtain the update
\begin{align*}
  \bzero_{\Numact}\mapsto \frac{1}{i}\estQfun_{h}(\state_{k,h},\cdot).
\end{align*}
Note that $\lops{\bQ_{\sj\sh,s}^{(1)}}\leq4\horizon,\lops{\bK_{\sj\sh,s}^{(1)}}\leq4\horizon,\lops{\bV_{\sj\sh,s}^{(1)}}\leq1$ for $s\in[5]$.

Finally, we apply an attention-only layer to implement the multiplication by a factor of $i/\temp$ using a similar construction as in Eq.~\eqref{eq:tf_constrcut_ucbvi_multi} with $\lops{\bQ_1^{(3)}}=\conO(\horizon\Numepi),\lops{\bK_1^{(3)}}=\conO(\horizon\Numepi),\lops{\bV_1^{(3)}}=\conO(1/\temp)$, and assign the resulting vector $\estQfun(\state_{k,h},\cdot)/\temp$ to $\bh_i^\partc$. 
Combining the three attention-only layers completes Step~\ref{mdp_step4}.

\subsection{Proof of Theorem~\ref{thm:ucbvi_icrl-main}}\label{sec:pf_thm:ucbvi_icrl-main}
By Theorem~\ref{thm:diff_reward}~and~\ref{thm:approx_ucbvi}, it suffices to show the regret of soft UCB-VI satisfies
\begin{align*}
\E[\Numepi\Vfun_\inst(\plc^*)-\totreward_{\inst,\sAlg_\sUCBVI(\temp)}(\totlen)]\leq \tcO (\horizon^2\sqrt{\Numst\Numact\Numepi}+\horizon^3\Numst^2\Numact)
\end{align*} for all MDP instances $\inst$, where $\temp=1/\Numepi$ and $\tcO(\cdot)$ hides logarithmic dependencies on $(\horizon,\Numepi,\Numst,\Numact)$.

Throughout the proof, we may drop the dependence on $\inst$ for notational simplicity when there is no confusion. For each episode $k\in[\Numepi]$, let $\Numvi^k_\sh,\tresttransit^k_\sh,\trestQfun^k_\sh,\trestVfun^k_\sh,\bonus^k_\sh$ denote the corresponding quantities $\Numvi_\sh,\tresttransit_\sh,\trestQfun_\sh,\trestVfun_\sh,\bonus_\sh$  introduced in UCB-VI (see Section~\ref{sec:example_ucbvi}). 

For a policy $\plc$ and time step $\sh\in[\horizon]$, we define the Q-function $\Qfun_\sh^\plc$ and the value function $\Vfun_\sh^\plc$
\begin{align*}
\Qfun_\sh^\plc(\state,\action)&:=\E[\sum_{t=\sh}^\horizon \reward(\state_t,\action_t)\mid \state_\sh=\state,\action_\sh=\action,\plc],\\  
\Vfun_\sh^\plc(\state)&:=\E[\sum_{t=\sh}^\horizon \reward(\state_t,\action)\mid \state_\sh=\state,\plc].
\end{align*}
 We use $\plc^{k}=(\plc^k_1,\ldots,\plc^k_\horizon),\plc^k_{\s}=(\plc^k_{\s,1},\ldots,\plc^k_{\s,\sh},\ldots\plc^k_{\s,\horizon})$ to denote the policies given by UCB-VI and soft UCB-VI in the $k$-th episode, respectively. Note that we have $\Vfun_\inst(\plc)=\E_{\state\sim\init}[\Vfun_\sh^\plc(\state)]$ and cumulative the regret 
 $$
 \E[\Numepi\Vfun_\inst(\plc^*)-\totreward_{\inst,\sAlg_\sUCBVI(\temp)}(\totlen)]]=\E\Big[\sum_{k=1}^{\Numepi}[\Vfun_1^{\optplc}(\state_{k,1})-\Vfun_1^{\plc^k_{\s}}(\state_{k,1})
 ]\Big]
 $$ 
 where the expectation is taken over the collected data $$\dset_\totlen=\{(\state_{k,\sh},\action_{k,\sh},\reward_{k,\sh})\}_{k\in[\Numepi],\sh\in[\horizon]}\sim\P_\inst^{\sUCBVI(\temp)}.$$ 
 For any function $f=f(\state,\action)$, we abuse the notation $f(\state,\plc(
\cdot)):=\E_{\action\sim\plc}[f(\state,\action)]$. Lastly, we define
\begin{align*}
\epstemp=\max_{k\in[\Numepi],\sh\in[\horizon],\state\in\statesp}[\trestQfun^k_\sh(\state,\plc^k_{\sh}(\cdot))-\trestQfun^k_\sh(\state,\plc^k_{\s,\sh}(\cdot)) ].
\end{align*}
We claim the following which we will prove later 
\begin{align}\label{eq:epstemp_control}
  \epstemp\leq \Numact\temp.  
\end{align}

The proof follows from similar arguments as in the proof of Theorem 1 in~\cite{azar2017minimax} (see also Theorem 7.6 in~\cite{agarwal2019reinforcement}). Hence we only provide a sketch of proof here.
First, from the proof of Theorem 7.6 in~\cite{agarwal2019reinforcement} , it can be shown that 
\begin{align*}
 \trestVfun_\sh^k(\state)\geq\Vfun^{\optplc}_\sh(\state)
\end{align*}
for any $k,\sh,\state$ with probability at least $1-\delta$.
Thus with probability at least $1-\delta$ for all $\sh\in[\horizon],k\in[\Numepi]$
\begin{align*}
&\qquad\Vfun_\sh^{\optplc}(\state_{k,\sh})- \Vfun^{\plc^k_\s}_\sh(\state_{k,\sh})  \\
&\leq
\trestVfun^k_\sh(\state_{k,\sh})- \Vfun^{\plc^k_\s}_\sh(\state_{k,\sh})  \\
&=\trestQfun^k_\sh(\state_{k,\sh},\plc^k_{\sh}(\cdot))-\trestQfun^k_\sh(\state_{k,\sh},\plc^k_{\s,\sh}(\cdot))+\trestQfun^k_\sh(\state_{k,\sh},\plc^k_{\s,\sh}(\cdot))-
\Qfun^{\plc^k_\s}_\sh(\state_{k,\sh},\plc^k_{\s,\sh}(\cdot))\\
&\leq 
\trestQfun^k_\sh(\state_{k,\sh},\plc^k_{\s,\sh}(\cdot))-
\Qfun^{\plc^k_\s}_\sh(\state_{k,\sh},\plc^k_{\s,\sh}(\cdot))+\epstemp\\
&=
\trestQfun^k_\sh(\state_{k,\sh},\action_{k,\sh})-
\Qfun^{\plc^k_\s}_\sh(\state_{k,\sh},\action_{k,\sh})+\MD^{(1)}_{k,\sh}+\epstemp,
\end{align*}
where the first equality uses $\trestVfun^k_\sh(\state_{k,\sh})=\argmax_{\action}\trestQfun^k_\sh(\state_{k,\sh},\action_{k,\sh})=\trestQfun^k_\sh(\state_{k,\sh},\plc_\sh^k(\cdot))$, and in the last line $$
\MD^{(1)}_{k,\sh}:=[\trestQfun^k_\sh(\state_{k,\sh},\plc^k_{\s,\sh}(\cdot))-
\Qfun^{\plc^k_\s}_\sh(\state_{k,\sh},\plc^k_{\s,\sh}(\cdot))]-[\trestQfun^k_\sh(\state_{k,\sh},\action_{k,\sh})-
\Qfun^{\plc^k_\s}_\sh(\state_{k,\sh},\action_{k,\sh})].
$$  Note that for any fixed $\sh\in[\horizon]$, $\{\MD^{(1)}_{k,\sh}\}_{k=1}^{\Numepi}$ is a bounded martingale difference sequence. Following  the proof of Theorem 7.6 in~\cite{agarwal2019reinforcement}, we further have 
\begin{align*}
&\quad\Vfun_\sh^{\optplc}(\state_{k,\sh})- \Vfun^{\plc^k_\s}_\sh(\state_{k,\sh})\\  
&\leq \trestQfun^k_\sh(\state_{k,\sh},\action_{k,\sh})-
\Qfun^{\plc^k_\s}_\sh(\state_{k,\sh},\action_{k,\sh})+
\MD^{(1)}_{k,\sh}+\epstemp\\
&\leq
\Big(1+\frac{1}\horizon\Big)\Big[\trestVfun_{\sh+1}^k(\state_{k,\sh+1})- \Vfun^{\plc^k_\s}_{\sh+1}(\state_{k,\sh+1})  \Big]+2\bonus_\sh^k(\state_{k,\sh},\action_{k,\sh})
\\
&\qquad+
\frac{c_0L_0\horizon^2\Numst}{\Numvi^k_\sh(\state_{k,\sh},\action_{k,\sh})}+\MD^{(2)}_{k,\sh}+\MD^{(1)}_{k,\sh}+\epstemp,
\end{align*}with probability at least $1-c\delta$ for some universal constant $c>0$, 
where $L_0=\log(\Numst\Numact\Numepi\horizon/\delta)$,  $c_0>0$ is some universal constant and 
\begin{align*}
 \MD^{(2)}_{k,\sh}:=\transit_\sh(\cdot\mid\state_{k,\sh},\action_{k,\sh})\cdot(\Vfun_{\sh+1}^\optplc-\Vfun_{\sh+1}^{\plc^k_\s})-  (\Vfun_{\sh+1}^\optplc(\state_{k,\sh+  1})-\Vfun_{\sh+1}^{\plc^k_\s}(\state_{k,\sh+  1})) 
\end{align*} is a bounded martingale difference sequence for any fixed $\sh\in[\horizon]$. Using the recursive formula and the fact that $(1+1/\horizon)^\horizon<e$, we obtain
\begin{align*}
 &\quad \E\Big[ \sum_{k=1}^\Numepi [\Vfun_1^{\optplc}(\state_{k,1})-\Vfun_1^{\plc^k_{\s}}(\state_{k,1})
 ]  \Big]\\
 &\leq c\E\Bigg[\sum_{k=1}^\Numepi\sum_{\sh=1}^\horizon\Big[2\bonus_\sh^k(\state_{k,\sh},\action_{k,\sh})
+
\frac{c_0L_0\horizon^2\Numst}{\Numvi^k_\sh(\state_{k,\sh},\action_{k,\sh})}+\MD^{(2)}_{k,\sh}+\MD^{(1)}_{k,\sh}\Big]\Bigg]+\E[\Numepi\sum_{\sh=0}^{\horizon-1}(1+\frac{1}\horizon)^\sh\epstemp]\\
&\leq c\E\Bigg[\sum_{k=1}^\Numepi\sum_{\sh=1}^\horizon\Big[2\bonus_\sh^k(\state_{k,\sh},\action_{k,\sh})
+
\frac{c_0L_0\horizon^2\Numst}{\Numvi^k_\sh(\state_{k,\sh},\action_{k,\sh})}+\MD^{(2)}_{k,\sh}+\MD^{(1)}_{k,\sh}\Big]\Bigg]+c\Numepi\horizon\Numact\temp\\
&\leq 
\tcO (\horizon^2\sqrt{\Numst\Numact\Numepi}+\horizon^3\Numst^2\Numact)+c\Numepi\horizon\Numact\temp\\
&\leq \tcO(\horizon^2\sqrt{\Numst\Numact\Numepi}+\horizon^3\Numst^2\Numact),
\end{align*} where $c>0$ is some universal constant,  $\tcO(\cdot)$ hides logarithmic dependencies on $(\horizon,\Numepi,\Numst,\Numact)$, and the last line follows again from the proof of Theorem~7.6 in~\cite{agarwal2019reinforcement}, and the assumption that $\temp=1/\Numepi$.
 We omit the detailed derivations here as they are similar to those in~\cite{azar2017minimax,agarwal2019reinforcement}. Therefore, we conclude the proof of the first part of Theorem~\ref{thm:ucbvi_icrl-main}. Moreover,  the second part of Theorem~\ref{thm:ucbvi_icrl-main} (i.e., the upper bound on $\log\cN_{\tfparspace}$) follows immediately from Lemma~\ref{lm:cover_num_bound} and Eq.~\eqref{eq:ucbvi_tf_param-main}.

 \paragraph{Proof of Eq.~\eqref{eq:epstemp_control}}
 By definition of $\trestQfun_\sh^k$ and $\plc_\sh^k,\plc^k_{\s,\sh}$, we have
 \begin{align*}
     \trestQfun^k_\sh(\state,\plc^k_{\sh}(\cdot))-\trestQfun^k_\sh(\state,\plc^k_{\s,\sh}(\cdot)) &=\max_{\action} \trestQfun^k_\sh(\state,\action)-
\sum_{\action}\frac{\exp(\trestQfun^k_\sh(\state,\action)/\temp)}{\sum_\action \exp(\trestQfun^k_\sh(\state,\action)/\temp)}\cdot\trestQfun^k_\sh(\state,\action)\\
&=
\sum_{\action}\frac{\exp(\trestQfun^k_\sh(\state,\action)/\temp)}{\sum_\action \exp(\trestQfun^k_\sh(\state,\action)/\temp)}\cdot[\max_{\action} \trestQfun^k_\sh(\state,\action)-\trestQfun^k_\sh(\state,\action)]\\
&\leq
\sum_{\action}\frac{\exp(\trestQfun^k_\sh(\state,\action)/\temp)}{ \exp(\max_\action\trestQfun^k_\sh(\state,\action)/\temp)}\cdot[\max_{\action} \trestQfun^k_\sh(\state,\action)-\trestQfun^k_\sh(\state,\action)]\\
&\leq \Numact \cdot [\sup_{t\geq 0}t\exp(-t/\temp)]\leq\Numact\temp.
 \end{align*}

\end{document}